\documentclass{article}

\usepackage{iclr2027_conference,times}
\usepackage[T1]{fontenc}
\usepackage{float}
\usepackage{amssymb}
\usepackage{mathtools}
\usepackage{mathrsfs}
\usepackage{graphicx}
\usepackage{subcaption}
\usepackage{url}
\usepackage{booktabs}
\usepackage{amsthm}
\usepackage{booktabs,graphicx,multirow,bigstrut}
\usepackage{xcolor}
\usepackage{wrapfig}
\usepackage{hyperref}
\usepackage{changepage}
\newsavebox{\banditupperbox}

\newtheorem{proposition}{Proposition}

\newtheorem{corollary}{Corollary}

\newtheorem{definition}{Definition}

\newcommand{\piTF}{\pi_{\mathrm{TF}}}

\newcommand{\E}{\mathbb{E}}

\renewcommand{\headrulewidth}{0pt}

\title{Is In-Context Reinforcement Learning A Posterior Sampler? An Empirical Study}

\author{Weiqin Chen\thanks{Equal contribution}$~~$, Xinlong Du$^{*}$, Gesualdo Scutari, Harsha Honnappa \\
Edwardson School of Industrial Engineering, Purdue University\\
West Lafayette, IN 47907, USA \\
\texttt{\{chen6439, du339, gscutari, honnappa\}@purdue.edu} \\
}

\iclrfinalcopy 

\begin{document}
\maketitle

\fancyhead{}

\begin{abstract}
In-context reinforcement learning (ICRL) exploits the impressive in-context learning ability of large language models to zero-shot generalize to new, unseen tasks at inference time without any parameter updates. Among many approaches, the Decision-Pretrained Transformer (DPT) is particularly appealing for its theoretical grounding: under idealized assumptions such as a sufficiently expressive model and infinite pretraining data, DPT can perform Bayesian posterior sampling. Whether this characterization survives in practice remains open. Therefore, we ask: \textit{is ICRL, specifically DPT and our proposed DPT variants, a Bayesian posterior sampler in practice?} To approach this question, we establish two necessary conditions that a Bayesian posterior sampler must satisfy: \textit{sufficient-statistic invariance} and the \textit{martingale property}. We then construct statistical hypothesis tests to statistically assess if practically-trained ICRL models satisfy these conditions. Applying these tests in Gaussian, Bernoulli, and linear bandit settings, we find that both conditions are consistently violated, suggesting that ICRL may not behave as Bayesian posterior samplers in practice. Beyond this negative result, our tests provide a principled toolkit for testing the Bayesian behavior of ICRL.
\end{abstract}

\section{Introduction}
\label{sec:intro}
Reinforcement learning (RL) has demonstrated remarkable performance in a wide range of applications, such as strategy games~\citep{mnih2013playing, mnih2015human}, robotics~\citep{levine2016end, duan2016benchmarking}, and recommender systems~\citep{afsar2022reinforcement, lin2023survey}. However, a fundamental challenge in RL is the limited capability to few-shot or zero-shot generalize to new unseen tasks, as traditional RL methods learn a policy for a single task through an ocean of environment interactions and extensive trials-and-errors, and thus typically requires retraining a policy from scratch for each new task. In-context RL (ICRL) offers a compelling alternative attributed to the impressive capability of in-context learning by large language models (LLMs) \citep{dong2024survey, li2023transformers, wei2023larger}. Specifically, a transformer \citep{vaswani2017attention} pretrained on a diverse set of tasks/environments can adapt to and reinforcement learn in new, unseen tasks at test time, purely by conditioning on interactions with the new, unseen tasks (a.k.a. context) without any parameter updates. Several ICRL frameworks have been proposed to realize this vision. \emph{Algorithm Distillation} (AD) pretrains a transformer on the complete learning histories of an RL algorithm (e.g., PPO \citep{schulman2017proximal} or SAC \citep{haarnoja2018soft}), where the context consists of a sequence of episodes exhibiting progressive improvement and dynamics \citep{laskin2022context}. The transformer learns to extrapolate this improvement operator, effectively distilling the source RL algorithm's learning dynamics into its in-context behavior. \emph{Decision-Pretrained Transformer} (DPT) adopts a fundamentally different approach \citep{lee2024supervised}. Rather than imitating an evolving algorithm, DPT pretrains a transformer via supervised learning to predict optimal actions conditioned on contexts of varying lengths. Moreover, DPT can employ arbitrary context data, such as interactions collected under a random policy \citep{chen2024random}, by leveraging the optimal action labels.

Notably, a central theoretical grounding underpins DPT's appeal. Prior works \citep{lee2024supervised,lin2024transformers} establish a theoretical understanding that ``DPT is an efficient implementation of \textit{Bayesian posterior sampling}'' that achieves rate-optimal Bayesian regret \citep{russo2016information}. This, however, relies on strong assumptions such as a sufficiently expressive model and infinite pretraining data to exactly match or closely approximate the pretraining data distribution \citep{lee2024supervised,lin2024transformers}, which may not hold in practice.
\begin{wrapfigure}{r}{0.5\textwidth}
\centering
\begin{subfigure}{0.245\textwidth}
\includegraphics[width=\textwidth]{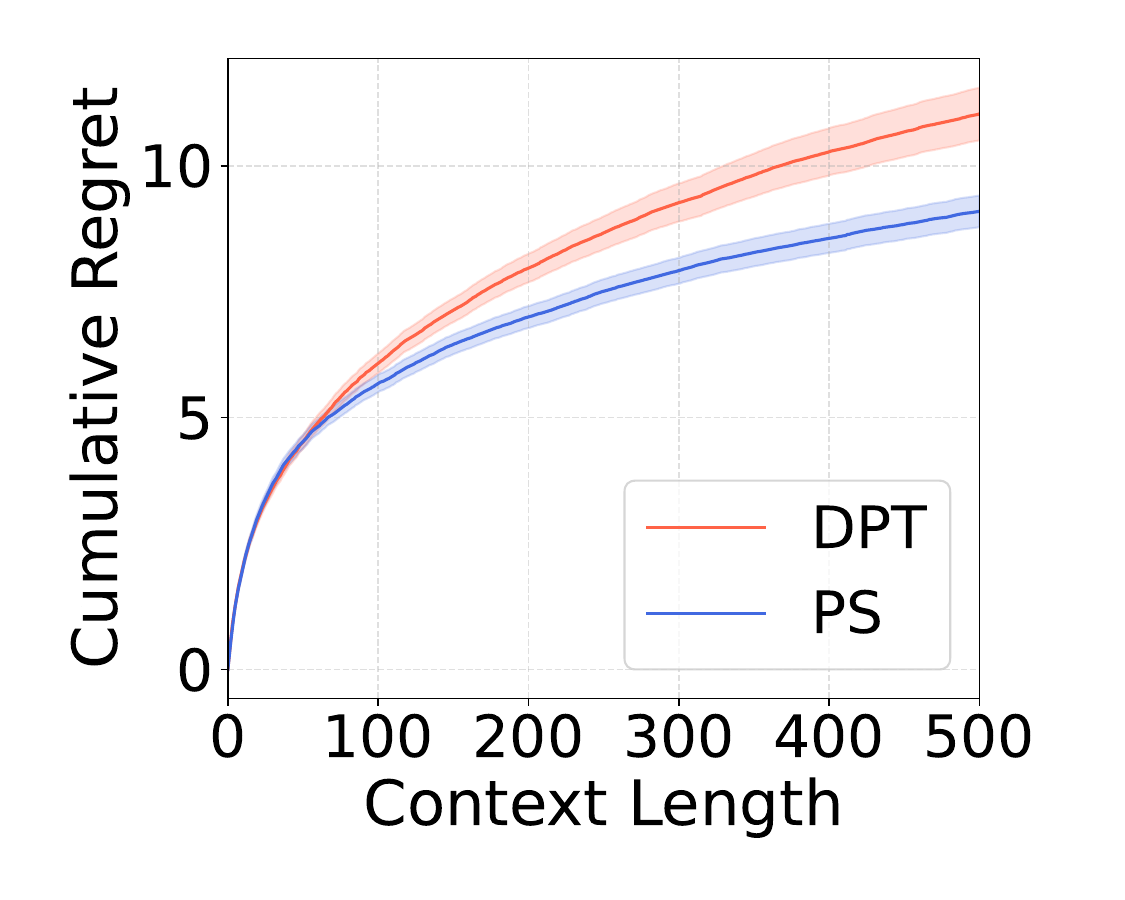}
\caption{Cumulative regret}
\end{subfigure}
\begin{subfigure}{0.245\textwidth}
\includegraphics[width=\textwidth]{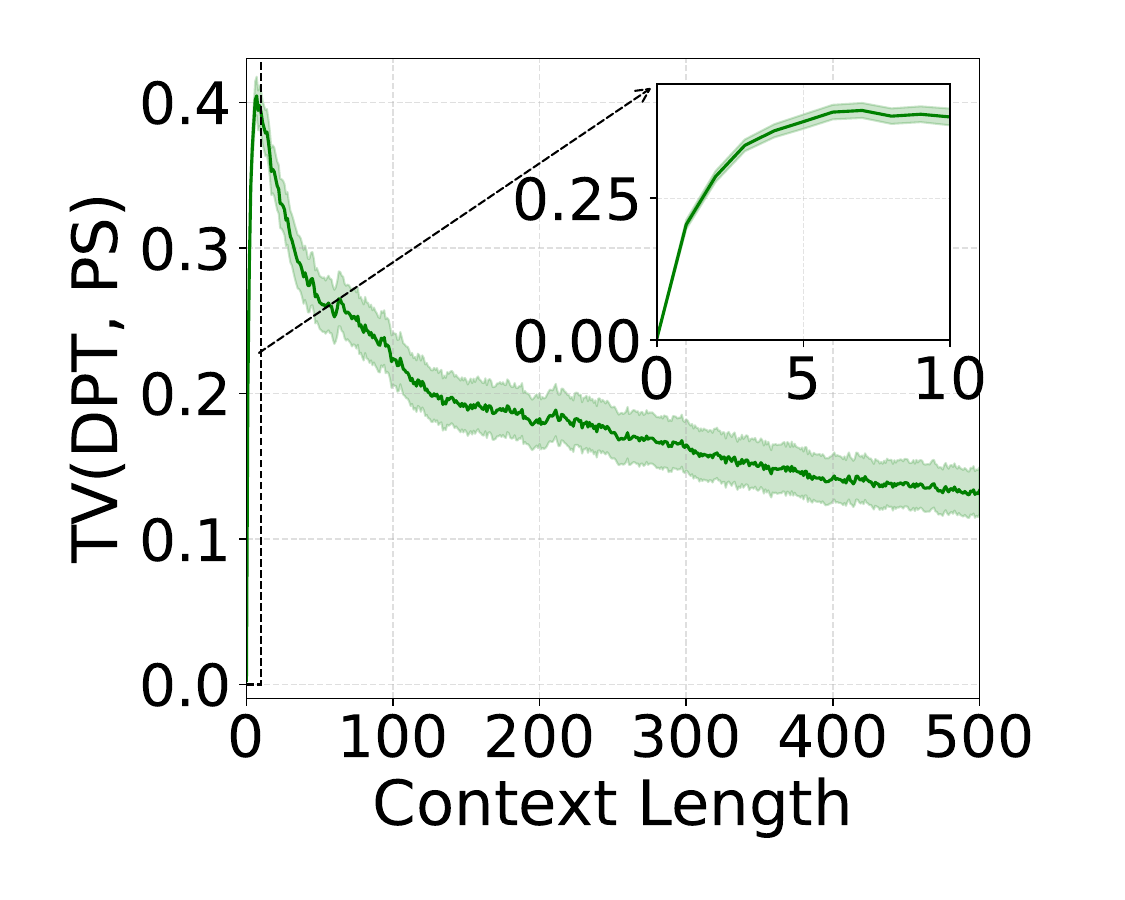}
\caption{TV distance}
\end{subfigure}
\caption{DPT versus Thompson Sampling in Gaussian bandits: cumulative regret and total variation (TV) distance between action distributions.}
\label{fig_motivation}
\end{wrapfigure}
In addition, an empirical gap between DPT and a Bayesian posterior sampler (PS) is observed from experiments (see Figure~\ref{fig_motivation}), where we compare the default setting of DPT as in \citep{lee2024supervised} and Thompson sampling \citep{thompson1933likelihood} (a Bayesian posterior sampler in multi-armed bandits) in Gaussian (multi-armed) bandits. Accordingly, this paper aims to explore and answer the following research question:

\begin{adjustwidth}{0.35cm}{0.35cm}
\emph{Is ICRL, including DPT and our proposed DPT variants, a Bayesian posterior sampler in practice?}
\end{adjustwidth}

Furthermore, it is worth highlighting that the research question above is also motivated by an empirical study \citep{falck2024bayesian}, where in-context learning of common LLMs like GPT-3.5 \citep{openaigpt35}, GPT-4 \citep{openai2023gpt4}, Llama-2 \citep{touvron2023llama2}, and Mistral \citep{jiang2023mistral} is demonstrated non-Bayesian from a martingale perspective (a necessary condition for Bayesian systems). Inspired by \citep{falck2024bayesian}, we take a statistical hypothesis testing approach to the research question above. Our \textbf{main contributions} are as follows.
\begin{itemize}
    \item \textbf{Necessary conditions.} We provide a behavioral characterization of a transformer as a Bayesian posterior sampler in terms of its action distribution, and derive two necessary conditions from it, namely \emph{sufficient-statistic invariance} and the \emph{martingale property}, which we instantiate for Gaussian, Bernoulli, and linear bandits.

    \item \textbf{Testing framework and DPT variants.} We turn each condition into a hypothesis test on action probabilities. Since DPT predicts only actions, we propose DPT$'$ and GDPT, which augment DPT with a reward head so that the martingale property becomes testable.

    \item \textbf{Empirical findings.} Across all bandit settings, deployment modes, context transformations, and context lengths, DPT, DPT$'$, and GDPT consistently violate sufficient-statistic invariance, and DPT$'$ and GDPT additionally violate the martingale property, even when their decision performance closely tracks Thompson Sampling. Ablation studies that enforce permutation invariance do not remove the violations. Beyond this negative result, our statistical tests provide a principled toolkit for auditing Bayesian claims about ICRL.
\end{itemize}

A more detailed discussion of related work is provided in Appendix~\ref{sec:related}.

\section{In-Context Reinforcement Learning Under DPT}
\label{sec:dpt}
This section describes the three main phases of ICRL under the DPT framework \citep{lee2024supervised}: pretraining dataset collection, pretraining, and deployment. We consider a finite-horizon decision-making setting where each environment (or task) $\tau = \langle \mathcal{A}, \mathcal{R}, H \rangle$ consists of an action space $\mathcal{A}$, reward kernel $\mathcal{R}: \mathcal{A} \to \Delta(\mathbb{R})$ evaluating the quality of the decision, and the horizon $H$.
\subsection{Pretraining Dataset Collection}
The pretraining dataset is collected by repeatedly sampling environments, collecting context data, and labeling optimal actions. More concretely, for each instance $i \in \{1,\dots,N\}$:
\begin{enumerate}
    \item[$i)$] \textbf{Sample an environment:} Draw $\tau_i \sim \mathcal{T}_{\mathrm{env}}$ from a distribution $\mathcal{T}_{\mathrm{env}}$ over environments.
    \item[$ii)$] \textbf{Collect context data:} Generate context data with $n$ transitions $D_n^{(i)} = \big((A_1^{(i)}, R_1^{(i)}),\ldots,(A_n^{(i)}, R_n^{(i)})\big)$ by deploying a context-collection policy in environment $\tau_i$, i.e., $D_n^{(i)} \sim \mathcal{D}_{\mathrm{context}}(\cdot\,; \tau_i) \in \Delta((\mathcal{A} \times \mathbb{R})^n)$. The context-collection policy may be arbitrary, ranging from a simple random policy to more sophisticated approaches such as PPO or posterior sampling. Throughout the paper, a context is an \emph{ordered} sequence of action--reward pairs. We use uppercase $(A_t, R_t)$ to denote the random action and reward at position $t$, and lowercase $(a, r)$ to denote their realizations.
    \item[$iii)$] \textbf{Label the optimal action and construct the pretraining dataset:}  Let $a^{*,(i)} \sim \pi^*_{\tau_i}$, where $\pi^*_{\tau_i}$ denotes the optimal policy for the environment $\tau_i$. Select the optimal action $a^{*,(i)} $ as the label, and add tuple $(D_n^{(i)}, a^{*,(i)})$ to the pretraining dataset $\mathcal{B}$.
\end{enumerate}
Together, the joint pretraining distribution is given by
\begin{equation}
\mathcal{P}_{\mathrm{pre}}(\tau, D, a^*) = \mathcal{T}_{\mathrm{env}}(\tau) \cdot \mathcal{D}_{\mathrm{context}}(D; \tau) \cdot \pi^*_\tau(a^*).
\end{equation}
\subsection{Pretraining}
Having collected the pretraining dataset $\mathcal{B}$, we use it to pretrain a decoder-only transformer such as GPT-2 \citep{radford2019language}, with $L$ layers, $H_{\mathrm{attn}}$ attention heads, and embedding dimension $d_{\mathrm{emb}}$.

\textbf{Architecture.}
Consider a generic instance $(D_n, a^*) \in \mathcal{B}$, where we drop the index $i$ for brevity. Following \citet{lee2024supervised}, each transition in the context $D_n=\big((A_1,R_1),\ldots,(A_n,R_n)\big)$ is stacked into a single token $\xi_t=(A_t,R_t)\in\mathbb{R}^{d_\xi}$ with $d_\xi=d_{\mathcal{A}}+1$, where the action $A_t$ is identified with its encoding in $\mathbb{R}^{d_{\mathcal{A}}}$. A fixed start token $\xi_0=\mathbf{1}\in\mathbb{R}^{d_\xi}$, which plays the role of the query-state token in \citep{lee2024supervised} (a constant in the multi-armed bandit setting), is prepended, giving the input sequence $\xi_{0:n}=(\xi_0,\xi_1,\ldots,\xi_n)\in\mathbb{R}^{d_\xi\times(n+1)}.$ Each token of $\xi_{0:n}$ is first mapped to a $d_{\mathrm{emb}}$-dimensional embedding by a linear layer, and a learned positional encoding is added, yielding $h^{(0)}_t\in\mathbb{R}^{d_{\mathrm{emb}}}$ for position $t=0,\ldots,n$. The embedded sequence then passes through $L$ standard transformer blocks. Each block performs two operations, each wrapped in a residual connection: $(i)$ causal multi-head self-attention, which updates the representation at position $t$ by aggregating information from positions $\le t$, and $(ii)$ a position-wise feed-forward network, e.g., an MLP applied independently to each position with weights shared across positions. A linear head $W_{\mathrm{out}}$ and a Softmax operation map the final-layer representation $h^{(L)}_t$ at each position to a distribution over actions
\begin{equation}
\pi_{\mathrm{TF}}(\cdot\mid D_t):=\mathrm{Softmax}\big(W_{\mathrm{out}}\,h^{(L)}_t\big)\in\Delta(\mathcal{A}),\qquad t=0,\ldots,n,
\end{equation}
where $D_t:=\big((A_1,R_1),\ldots,(A_t,R_t)\big)$ is the length-$t$ prefix of the context and $D_0$ is the empty context. Due to the causal attention, $\pi_{\mathrm{TF}}(\cdot\mid D_t)$ depends only on the first $t$ transitions, not on future context tokens. In particular, $\pi_{\mathrm{TF}}(\cdot\mid D_0)$ acts without any information about the environment, and can be interpreted as the model's prior.

\textbf{Loss function.}
The transformer is trained to minimize the expected cross-entropy loss over the pretraining distribution
\begin{equation}
\min_\psi \; \mathbb{E}_{\mathcal{P}_{\mathrm{pre}}}\left[\sum_{t=1}^n\ell\big(\pi_{\mathrm{TF}}(\cdot\mid D_t),a^*\big)\right],
\quad
\ell\big(\pi_{\mathrm{TF}}(\cdot\mid D_t),a^*\big):=-\log \pi_{\mathrm{TF}}(a^*\mid D_t),
\end{equation}
where $\psi$ denotes the transformer's parameters.

\subsection{Deployment}

After pretraining, the model is deployed in a new environment $\tau\sim\mathcal{T}_{\mathrm{test}}$ without any parameter updates. Given a context $D_t$, the model is queried via a forward pass on $\xi_{0:t}$, and the action distribution $\pi_{\mathrm{TF}}(\cdot\mid D_t)$ at the \emph{last position} is read out. DPT supports two types of deployment, offline and online, which differ in how the input context is constructed.

\textbf{Offline deployment.}
The model does not collect the context itself. Instead, a static context $D_n\sim\mathcal{D}_{\mathrm{test}}(\cdot\,;\tau)$ of $n$ transitions is provided, and the model acts according to $\pi_{\mathrm{TF}}(\cdot\mid D_n)$.

\textbf{Online deployment.}
The context is initialized as the empty context $D_0$, and the model collects its own context. At each round $t\in[n]$, the model samples an action from its predicted distribution, $A_t\sim \pi_{\mathrm{TF}}(\cdot\mid D_{t-1})$, observes a reward $R_t$ from the test environment, and appends the transition to the context, i.e., $D_t=\big(D_{t-1},(A_t,R_t)\big)$.

\section{Are DPT and Its Variants Bayesian Posterior Samplers?}
\label{sec:posterior_sampling_behavior}
Having introduced the DPT framework, we emphasize that the idealized assumptions under which DPT is expected to behave as a Bayesian posterior sampler, such as a sufficiently expressive model and infinite pretraining data, may not hold in practice. To investigate whether DPT exhibits Bayesian posterior sampling behavior in practice, this section establishes two necessary conditions that any Bayesian posterior sampler must satisfy and develops a statistical testing framework to quantitatively assess if DPT and its variants satisfies these conditions. All proofs in this section are provided in Appendix~\ref{sec:proposition_proofs}. We begin by characterizing the transformer as a Bayesian posterior sampler, motivated by its role as the common decision-making component across most ICRL algorithms, including DPT.

\subsection{Transformer as a Bayesian Posterior Sampler}
We first formalize what it means for the transformer to behave as a Bayesian posterior sampler.
\begin{definition}[Transformer as a Bayesian Posterior Sampler]
\label{def:transformer_bayesian_posterior}
Consider a decision-making problem with some unknown latent parameter $\theta\in\Theta$. Let $D_n=\big((A_1,R_1),\ldots,(A_n,R_n)\big)$ denote a context, and let
$\pi_{\mathrm{TF}}(a\mid D_n)$ represent the
transformer's action distribution given the context $D_n$. We say that the transformer behaves as a Bayesian posterior sampler if there exist a prior distribution $\pi_0(\theta)$ and a context-independent decision kernel $\mathcal{K}(a\mid \theta)$ such that
\begin{equation}
\pi_{\mathrm{TF}}(a\mid D_n)
=
\int_{\Theta}
\mathcal{K}(a\mid \theta) \,
p_{\pi_0}(\theta\mid D_n)
\,d\theta,
\label{eq:transformer_environment_posterior}
\end{equation}
where $p_{\pi_0}(\theta\mid D_n)$ is the Bayesian posterior induced by the prior $\pi_0(\theta)$ and the likelihood model $p(\cdot\mid \theta)$, namely $p_{\pi_0}(\theta\mid D_n)
\propto
\pi_0(\theta)\,
p(D_n\mid \theta)$.
\end{definition}
Note that Definition~\ref{def:transformer_bayesian_posterior} describes a generalized form of Bayesian posterior samplers. It reduces to Thompson sampler \citep{thompson1933likelihood} when the decision kernel $\mathcal{K}(a\mid\theta)$ is the Dirac point mass at the optimal action under the sampled $\theta$, and it represents Bayesian imitation when $\mathcal{K}(a\mid\theta)$ is a behavior policy associated with the sampled $\theta$, such as the source RL algorithm's policy at the corresponding training stage in AD \citep{laskin2022context}. Moreover, \eqref{eq:transformer_environment_posterior} of Definition~\ref{def:transformer_bayesian_posterior} encompasses the action predictions of arbitrary Bayesian posterior samplers, going beyond the prediction in the pretraining dataset required by \citep{lee2024supervised}. In addition,
Definition~\ref{def:transformer_bayesian_posterior} provides a purely behavioral characterization in the sense that it only requires the transformer policy to ``behave'' as a Bayesian posterior sampler in terms of the action distribution. In particular, the definition does not require the transformer to explicitly infer a latent parameter $\theta$ or maintain an explicit posterior distribution over $\theta$. Building on Definition~\ref{def:transformer_bayesian_posterior}, we establish the two necessary conditions that any Bayesian posterior sampler must satisfy: sufficient-statistic invariance and the martingale property.
\subsubsection{Sufficient-Statistic Invariance}
Let $\mathcal{X}=\{x_1,\ldots,x_K\}$ be the known action features and $\theta\in\Theta$ be the unknown parameter for the decision-making problem in Definition~\ref{def:transformer_bayesian_posterior}. Recall that the finite-dimensional exponential-family reward likelihoods take the form
\begin{equation}
p(r\mid x,\theta)
=
h(r,x)
\exp\left\{
\eta(\theta)^\top T(r,x)
-
\lambda(\theta)^\top \phi(x)
\right\},
\label{eq:general_exponential_family}
\end{equation}
where $h(r,x)$ is independent of $\theta$, $T(r,x)\in\mathbb{R}^{q}$ and $\phi(x)\in\mathbb{R}^{p}$ are fixed functions, and $\eta(\theta)$ and $\lambda(\theta)$ are parameter-dependent vectors. For a context $D_n=\big((A_1,R_1),\ldots,(A_n,R_n)\big)$, define
\[
S_n^{(1)}=\sum_{t=1}^n T(R_t,x_{A_t}),
\quad
S_n^{(2)}=\sum_{t=1}^n \phi(x_{A_t}),
\quad \text{and} \quad
S(D_n)=\left(S_n^{(1)},S_n^{(2)}\right).
\]
A statistic $S(D_n)$ is sufficient for $\theta$ if the distribution of the context $D_n$ is independent of $\theta$ conditional on $S(D_n)$. Hence, the posterior depends on the context only through the sufficient statistic. 
\begin{proposition}[Sufficient-Statistic Invariance]
\label{prop:posterior_sufficient_statistics}
Consider the transformer in Definition~\ref{def:transformer_bayesian_posterior} with known action features $\mathcal{X}$. At each time $t$, an action $A_t$ is selected and a reward $R_t$ is observed. $i)$ If the transformer's latent reward likelihood is context-independent, namely $p(R_t=r\mid D_{t-1},x_{A_t},\theta)=p(R_t=r\mid x_{A_t}, \theta)$, then its action distribution remains invariant to any permutation of the context,
\begin{equation}\label{eq:transformer_sufficient_statistic_invariance_permute}
\piTF(a\mid D_n)=\piTF(a\mid \sigma_1(D_n)),
~ \forall a\in\mathcal{A},\forall n\in\mathbb{N},
\end{equation}
where $\sigma_1(\cdot)$ denotes a permutation transformation. $ii)$ If, in addition, the transformer's latent reward likelihood takes the exponential-family form \eqref{eq:general_exponential_family}, then its action distribution depends on the context only through $S(D_n)$. Accordingly,
\begin{equation}
\piTF(a\mid D_n)=\piTF(a\mid \sigma_2(D_n)),
~ \forall a\in\mathcal{A},\forall n\in\mathbb{N},
\label{eq:transformer_sufficient_statistic_invariance}
\end{equation}
where $\sigma_2(\cdot)$ denotes a transformation of the contexts that preserves the sufficient statistic.
\end{proposition}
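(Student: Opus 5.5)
The whole proof runs through the representation \eqref{eq:transformer_environment_posterior} of Definition~\ref{def:transformer_bayesian_posterior}. The decision kernel $\mathcal{K}(a\mid\theta)$ and the prior $\pi_0$ do not depend on the context, so $\piTF(a\mid D_n)$ depends on $D_n$ only through the posterior $p_{\pi_0}(\theta\mid D_n)$. It therefore suffices to show two things: in case $i)$ the posterior is unchanged by any permutation $\sigma_1$ of $D_n$, and in case $ii)$ it is unchanged by any transformation $\sigma_2$ that preserves $S(D_n)$. Equality of the integrands then gives equality of the integrals for every $a\in\mathcal{A}$ and every $n$.

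First I would factor the likelihood of the context by the chain rule:
\[
p(D_n\mid\theta)=\prod_{t=1}^n p(A_t\mid D_{t-1},\theta)\,p(R_t\mid D_{t-1},x_{A_t},\theta).
\]
The action-selection factors need care, because the posterior is only over the environment parameter. I would argue that actions are chosen by a policy that depends only on the past context and not on $\theta$. This holds in offline deployment, where the context-collection policy is fixed given the environment's data, and in online deployment, where $A_t\sim\piTF(\cdot\mid D_{t-1})$. So $p(A_t\mid D_{t-1},\theta)=p(A_t\mid D_{t-1})$, and these factors cancel in the normalization of $p_{\pi_0}(\theta\mid D_n)\propto\pi_0(\theta)p(D_n\mid\theta)$. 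This is the step I expect to be the main subtlety. The permuted context may have different $\theta$-free action factors, but since they are constant in $\theta$ they still cancel after normalization. I would therefore state explicitly that the likelihood is read as $\prod_t p(R_t\mid x_{A_t},\theta)$ up to $\theta$-independent factors, which is the model implicit in Definition~\ref{def:transformer_bayesian_posterior}.

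For $i)$, the context-independence hypothesis gives
\[
p_{\pi_0}(\theta\mid D_n)\propto\pi_0(\theta)\prod_{t=1}^n p(R_t\mid x_{A_t},\theta).
\]
The product is over the multiset of pairs $\{(A_t,R_t)\}$, so it is invariant under $\sigma_1$. After normalization the two posteriors coincide, and \eqref{eq:transformer_sufficient_statistic_invariance_permute} follows.

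For $ii)$, I would substitute the exponential-family form \eqref{eq:general_exponential_family}:
\[
\prod_t p(R_t\mid x_{A_t},\theta)=\Big(\prod_t h(R_t,x_{A_t})\Big)\exp\{\eta(\theta)^\top S_n^{(1)}-\lambda(\theta)^\top S_n^{(2)}\}.
\]
The prefactor is $\theta$-free and cancels in normalization. Hence
\[
p_{\pi_0}(\theta\mid D_n)=\frac{\pi_0(\theta)\exp\{\eta(\theta)^\top S_n^{(1)}-\lambda(\theta)^\top S_n^{(2)}\}}{\int\pi_0(\theta')\exp\{\eta(\theta')^\top S_n^{(1)}-\lambda(\theta')^\top S_n^{(2)}\}d\theta'},
\]
which is a function of $S(D_n)$ alone. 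Because $\sigma_2$ preserves $S(D_n)$, the posteriors for $D_n$ and $\sigma_2(D_n)$ are identical, and plugging into \eqref{eq:transformer_environment_posterior} yields \eqref{eq:transformer_sufficient_statistic_invariance}.

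Finally, I would note that $i)$ is in fact a special case of $ii)$ whenever the exponential-family form holds, since permutations preserve sums. I would also note that both arguments assume the normalizing integral is finite and positive, which is implicit in the posterior being well defined.
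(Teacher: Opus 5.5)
Your proposal is correct and follows essentially the same route as the paper's proof: write the context likelihood as a product of context-independent reward likelihoods, note that this product is permutation-invariant for $i)$, and for $ii)$ substitute the exponential-family form so the posterior depends only on $S(D_n)$, then integrate against the fixed kernel $\mathcal{K}(a\mid\theta)$. Your explicit argument that the action-selection factors are $\theta$-free and cancel is extra care the paper skips, since it absorbs them silently into ``$\propto$''. Likewise, computing the normalized posterior directly instead of citing Fisher--Neyman is a minor refinement rather than a different approach.
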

Note that permutation transformations preserve the sufficient statistic, so \eqref{eq:transformer_sufficient_statistic_invariance_permute} is a special case of \eqref{eq:transformer_sufficient_statistic_invariance}. Proposition~\ref{prop:posterior_sufficient_statistics} implies that if a transformer behaves as a Bayesian posterior sampler as in Definition~\ref{def:transformer_bayesian_posterior}, its action distribution given any context $D$ should coincide with that given another context $D'$ whenever $D$ and $D'$ share the same sufficient statistic. 
We now turn to the martingale property.
\subsubsection{Martingale Property}
\begin{proposition}[Martingale Property]
\label{prop:martingale}
Consider the decision-making problem and the transformer in
Definition~\ref{def:transformer_bayesian_posterior}. Given a context $D_n=\big((A_1,R_1),\ldots,(A_n,R_n)\big)$ and the sigma-algebra $\mathcal{F}_n$ generated by $D_n$, let $P_{\mathrm{TF}}^{(m)}( \cdot \mid D_n)$ denote the distribution of length-$m$ continuations generated autoregressively by the transformer's own future action and reward predictions. It then follows that
\begin{equation}
\mathbb E_{D_{n+1 : n+m} \sim  P_{\mathrm{TF}}^{(m)}( \cdot \mid D_n)}
\left[
\pi_{\mathrm{TF}}(a\mid D_{n+m})
\mid
\mathcal F_n
\right]
=
\pi_{\mathrm{TF}}(a\mid D_n),
~ \forall a\in\mathcal{A},\forall m,n\in\mathbb{N}.
\label{eq:coherent_martingale}
\end{equation}
In other words, $\{\pi_{\mathrm{TF}}(a\mid D_n)\}_{n\geq0}$ is a martingale with respect to $\{\mathcal F_n\}_{n\geq0}$.
\end{proposition}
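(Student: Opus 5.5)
The plan is to read $P_{\mathrm{TF}}^{(m)}(\cdot\mid D_n)$ as the Bayesian predictive distribution of the continuation under the prior $\pi_0$ in Definition~\ref{def:transformer_bayesian_posterior}, then combine the representation \eqref{eq:transformer_environment_posterior} with the tower property. The continuation $D_{n+1:n+m}$ is generated by a two-step rule at each future step. First, the action $A_{t+1}\sim \pi_{\mathrm{TF}}(\cdot\mid D_t)$ is drawn from the transformer's own action distribution. Second, the reward $R_{t+1}$ is drawn from the transformer's reward prediction. I take that prediction to be the posterior predictive $\int p(r\mid x_{A_{t+1}},\theta)\,p_{\pi_0}(\theta\mid D_t)\,d\theta$. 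This is the natural meaning of a ``Bayesian'' transformer's own reward predictions, and I will state it explicitly as the modeling assumption under which the result holds.

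The key step is to show that the posterior itself is a martingale under this predictive law. Fix $t\ge n$. Bayes' rule gives
\[
p_{\pi_0}(\theta\mid D_{t+1}) = \frac{p_{\pi_0}(\theta\mid D_t)\,\pi_{\mathrm{TF}}(A_{t+1}\mid D_t)\,p(R_{t+1}\mid x_{A_{t+1}},\theta)}{\pi_{\mathrm{TF}}(A_{t+1}\mid D_t)\int p(R_{t+1}\mid x_{A_{t+1}},\theta')\,p_{\pi_0}(\theta'\mid D_t)\,d\theta'}.
\]
The action-selection factor cancels because it does not depend on $\theta$. Integrating this expression against the predictive density of $(A_{t+1},R_{t+1})$ given $D_t$ cancels the normalizing denominator. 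We are left with $\sum_a \pi_{\mathrm{TF}}(a\mid D_t)\int p(r\mid x_a,\theta)\,dr\; p_{\pi_0}(\theta\mid D_t)=p_{\pi_0}(\theta\mid D_t)$. Hence $\E[p_{\pi_0}(\theta\mid D_{t+1})\mid\mathcal F_t]=p_{\pi_0}(\theta\mid D_t)$ holds pointwise in $\theta$.

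Next, I transfer this to the action distribution. Multiply by the context-independent kernel $\mathcal K(a\mid\theta)$ and integrate over $\theta$. Fubini applies because everything is nonnegative and bounded by $1$ after integrating. This yields $\E[\pi_{\mathrm{TF}}(a\mid D_{t+1})\mid\mathcal F_t]=\pi_{\mathrm{TF}}(a\mid D_t)$. The context independence of $\mathcal K$ is exactly what makes this step work. Finally, iterating with the tower property over $t=n,\dots,n+m-1$ gives \eqref{eq:coherent_martingale} for every $m$. Since $\pi_{\mathrm{TF}}(a\mid D_n)\in[0,1]$ is integrable and $\mathcal F_n$-measurable, $\{\pi_{\mathrm{TF}}(a\mid D_n)\}_{n\ge0}$ is a martingale.

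The main obstacle is justifying the premise that the continuation is drawn from the Bayesian joint predictive. This covers two points: the reward predictions must coincide with the posterior predictive of the same prior and likelihood that define $\pi_{\mathrm{TF}}$, and the likelihood must be context-independent, as in Proposition~\ref{prop:posterior_sufficient_statistics}(i). I would state this assumption as part of what it means for the transformer to be a Bayesian posterior sampler ``with its own reward predictions.'' The measure-theoretic details are routine by comparison: if the reward law is not dominated, one would replace densities by Radon--Nikodym derivatives.
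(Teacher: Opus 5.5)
Your proof is correct, but it is organized differently from the paper's.

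\textbf{The paper's route.} The paper never manipulates the posterior density. It places $\theta$ and the data in a single joint Bayesian model; this is its assumption that ``the context continuations and the posteriors arise from the same joint Bayesian model.'' In that model
\[
\pi_{\mathrm{TF}}(a\mid D_t)=\int\mathcal{K}(a\mid\theta)\,p_{\pi_0}(\theta\mid D_t)\,d\theta=\mathbb{E}[\mathcal{K}(a\mid\theta)\mid\mathcal{F}_t],
\]
so $\pi_{\mathrm{TF}}(a\mid D_t)$ is a Doob martingale of the bounded variable $\mathcal{K}(a\mid\theta)$. The paper then applies the tower property once to pass from $n+m$ to $n$.

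\textbf{Your route.} You prove the one-step identity $\mathbb{E}[p_{\pi_0}(\theta\mid D_{t+1})\mid\mathcal{F}_t]=p_{\pi_0}(\theta\mid D_t)$ by an explicit Bayes-rule computation against the one-step predictive. You then push it through $\mathcal{K}$ by Tonelli and iterate.

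\textbf{What each buys.} The paper's version is shorter and more general: the martingale step uses only conditional expectations, with no density manipulations or interchange of integrals. Yours is more explicit. It checks what the paper compresses into its joint-model assumption: the transformer's own action probabilities enter the update only as a $\theta$-free factor and cancel. Hence drawing actions from $\pi_{\mathrm{TF}}$ and rewards from the posterior predictive is consistent with the posteriors $p_{\pi_0}(\theta\mid D_t)$, and $P_{\mathrm{TF}}^{(m)}(\cdot\mid D_n)$ is the data law under which they form a martingale. Your stated premise about the reward predictions is exactly the paper's condition that continuations are ``generated according to its own Bayesian posterior-predictive distribution.''

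\textbf{A correction.} Context independence of the likelihood is not needed, despite your closing paragraph. Your Bayes update and the integration step go through verbatim with $p(r\mid D_t,x_a,\theta)$ in place of $p(r\mid x_a,\theta)$. The only structural fact you use is that the action factor does not depend on $\theta$. This matches the paper's remark that the martingale argument needs neither an exponential-family likelihood nor a sufficient statistic.
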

Given a context, Proposition~\ref{prop:martingale} implies that if the transformer behaves as a Bayesian posterior sampler and its future continuations are generated according to its own Bayesian posterior-predictive distribution, then the action distribution conditioned on these future continuations coincides in expectation with the action distribution conditioned on the current given context. Notably, Proposition~\ref{prop:posterior_sufficient_statistics} relies on specific conditions about the transformer's latent Bayesian observational model, such as a context-independent and finite-dimensional exponential-family likelihood, whereas Proposition~\ref{prop:martingale} requires access to the transformer's Bayesian posterior-predictive distribution. While Proposition~\ref{prop:posterior_sufficient_statistics} can be tested for DPT provided that its underlying observational model satisfies the required conditions, the martingale property in Proposition~\ref{prop:martingale} cannot be directly tested for DPT. This is because DPT is pretrained to predict actions only and therefore cannot generate future continuations containing both actions and rewards. To address this limitation, we introduce two DPT variants in the next subsection that satisfy the conditions of Proposition~\ref{prop:posterior_sufficient_statistics} by construction and can generate future continuations according to their own Bayesian posterior-predictive distributions.
\subsection{DPT Variants}
\label{sec:dpt_variants}
The two proposed DPT variants, DPT$'$ and GDPT, retain DPT's transformer backbone and incorporate an additional reward head to jointly predict actions \emph{and} their associated rewards. DPT$'$ and GDPT differ only in their supervision labels, as described below.

\textbf{DPT$'$.} This variant is identical to DPT except that it extends DPT's optimal action prediction to a joint prediction of the optimal action and its associated reward. The training objective for DPT$'$ is the cross-entropy loss for both the action head and the reward head.

\textbf{GDPT.} This variant employs the same architecture as DPT$'$. However, rather than predicting the optimal action and its associated reward as in DPT$'$, GDPT is trained to autoregressively predict the next action-reward pair in the context. Consequently, GDPT does not require optimal actions and rewards as supervision labels. 
This is similar to the training procedure of AD \citep{laskin2022context}, which, however, lacks a reward head and predicts only actions.
On the other hand, GDPT requires expert contexts for training (same as AD), such as Thompson sampling trajectories, whereas DPT and DPT$'$ can be trained on arbitrary contexts, including both random and expert contexts~\citep{lee2024supervised}. This distinction arises from the availability of optimal supervision labels for DPT and DPT$'$.

It is worth noting that both DPT$'$ and GDPT satisfy the conditions of Proposition~\ref{prop:posterior_sufficient_statistics} by construction. Specifically, both variants employ Gaussian reward heads whose structures mirror those of the action heads, which are supposed to be context-independent under Definition~\ref{def:transformer_bayesian_posterior}. Moreover, given a context, both DPT$'$ and GDPT can generate future continuations according to their respective posterior-predictive distributions, as required by Proposition~\ref{prop:martingale}.
\subsection{Statistical Hypothesis Testing Framework}
\label{sec_testing_framework}
Having established the two necessary conditions for a Bayesian posterior sampler (Propositions~\ref{prop:posterior_sufficient_statistics} and \ref{prop:martingale}) and introduced the transformer-based ICRL schemes including DPT, DPT$'$, and GDPT, this subsection develops a statistical hypothesis testing framework to assess whether these ICRL schemes satisfy the two conditions in practice.
\subsubsection{Sufficient Statistic}
Under the conditions of Proposition~\ref{prop:posterior_sufficient_statistics}, a Bayesian posterior sampler should produce identical action distributions for contexts with the same sufficient statistics. Fix a context length $n$. Let $\mathcal{T}$ denote the distribution over environments and let $P_n(\cdot\mid\tau)$ denote the distribution of length-$n$ contexts collected in environment $\tau$. We sample $\tau\sim\mathcal{T}$ and $D_n\sim P_n(\cdot\mid\tau)$, and independently draw a transformation $\sigma\sim\Sigma$, where $\Sigma$ is supported on transformations satisfying $S(\sigma(D_n))=S(D_n)$ (including $\sigma_1(\cdot)$ and $\sigma_2(\cdot)$ in Proposition~\ref{prop:posterior_sufficient_statistics}). For each action $a\in\mathcal{A}$, define the paired difference
\begin{equation}
X_a
:=
\pi_{\mathrm{TF}}(a\mid D_n)
-
\pi_{\mathrm{TF}}(a\mid \sigma(D_n)).
\end{equation}
Sufficient-statistic invariance requires $X_a=0$ for every action. However, testing the exact equality may yield a trivial rejection whenever there exists a nonzero discrepancy, regardless of its magnitude. To avoid the effects of noise and assess systematic deviations, we instead test a relaxed null hypothesis
\begin{equation}
\label{eq:statistics_test}
H_0:
\mathbb{E}_{\substack{
\tau\sim\mathcal{T},\;
D_n\sim P_n(\cdot\mid\tau), \;\sigma\sim\Sigma
}}
\left[X_a\right]
=0,
~
\forall a\in\mathcal{A}, \forall n\in\mathbb{N}.
\end{equation}
Importantly, we must choose $\Sigma$ such that $D$ and $\sigma(D)$ induce distinct distributions.  Otherwise, the expectation in \eqref{eq:statistics_test} trivially equals to $0$ by symmetry. For details on how we guarantee this, see Appendix~\ref{sec:context_transformations}.
\subsubsection{Martingale Property}
Given a fixed starting context $D_n$ and a continuation length $m\geq0$ in DPT$'$ and GDPT, let $P_{\mathrm{TF}}^{(m)}(\cdot\mid D_n)$ denote the distribution of length-$m$ continuations generated autoregressively by their transformer's own future action and reward predictions. For each action $a\in\mathcal{A}$, define
\begin{equation}
Y_a
:=
\pi_{\mathrm{TF}}(a\mid D_{n+m})
-
\pi_{\mathrm{TF}}(a\mid D_n),
\end{equation}
where the randomness in $Y_a$ arises solely from the generated continuations $D_{n+1:n+m}$. Following Proposition~\ref{prop:martingale}, the null hypothesis is 
\begin{equation}
\label{eq:martingale_test}
H_0:
\mathbb{E}_{
D_{n+1:n+m}\sim
P_{\mathrm{TF}}^{(m)}(\cdot\mid D_n)
}
\left[Y_a\right]
=0,
~
\forall a\in\mathcal{A}, \forall n\in\mathbb{N}.
\end{equation}
We generate independent continuations from the same $D_n$ to obtain conditionally i.i.d.\ increment vectors. Since $\sum_{a\in\mathcal{A}}Y_a=0$, we apply the paired Hotelling test as described in Appendix~\ref{sec:hotelling}. 
\section{Experiments}
\label{sec:case_studies}
This section presents experimental results for the two statistical tests developed in Section~\ref{sec_testing_framework}. We first describe the shared experimental setup, then present results for DPT, DPT$'$, and GDPT in Gaussian, Bernoulli, and linear bandits settings. Following these, we investigate the effect of the transformer's architecture through an ablation study.
\subsection{Experimental Setup}
\label{sec:exp_gauss}
\textbf{Environments (Tasks).}
We consider three bandit problems: Gaussian, Bernoulli, and linear bandits. In Gaussian bandits, rewards satisfy $r \mid a \sim \mathcal{N}(\mu_a, \sigma^2)$ with unknown arm means $\{\mu_a\}$ and known variance $\sigma^2$. In Bernoulli bandits, rewards satisfy $r \mid a \sim \operatorname{Bernoulli}(p_a)$ with unknown success probabilities $\{p_a\}$. In linear bandits, each arm $a$ is associated with a fixed, known feature vector $x_a \in \mathbb{R}^d$, and rewards satisfy $r \mid a \sim \mathcal{N}(x_a^\top \theta, \sigma^2)$ with an unknown shared parameter $\theta \in \mathbb{R}^d$ and known variance $\sigma^2$. Gaussian and Bernoulli bandits can be viewed as special cases of linear bandits.

\textbf{Pretraining and Deployment.}
The dataset collection, pretraining, and deployment procedures for DPT, DPT$'$, and GDPT follow Section~\ref{sec:dpt}. For online deployment, the contexts are collected by the model itself. For offline deployment, the contexts are pre-specified by either a Dirichlet-mixture random policy or Thompson Sampling depending on the bandit setting, following ~\citep{lee2024supervised}.

\textbf{Tests and Metrics.}
Under Proposition~\ref{prop:posterior_sufficient_statistics}, a Bayesian posterior sampler with the assumed reward likelihood must be invariant to transformations preserving the sufficient statistic. For a context $D_n$, let $N_a$ and $S_a$ be the number of pulls and the total rewards of arm $a$. The sufficient statistic is $\{(N_a,S_a)\}_{a=1}^K$ for Gaussian and Bernoulli bandits, and $\big(V=\sum_{a}N_a x_a x_a^{\top}, b=\sum_{a}x_a S_a\big)$ for linear bandits (details in Appendix~\ref{sec:bandit_sufficient_statistics_proofs}). We test this invariance with two transformations: \textit{sorting} reorders the action-reward pairs in ascending order of reward, \textit{reward replacement} replaces each arm's rewards with its empirical mean plus centered uniform noise. Since Bernoulli rewards only take values in $\{0,1\}$, we only use \textit{sorting}. Each test evaluates the hypothesis in Eq.~\eqref{eq:statistics_test} on $N=5000$ independently sampled environments and their contexts. For the martingale test, we fix one starting context $D_n$, generate independent continuations of length $m$ using the model's own future action and reward predictions, and test Eq.~\eqref{eq:martingale_test}. Both tests are carried out with the paired Hotelling tests with significance level $\alpha=0.05$ (see Appendix~\ref{sec:hotelling}). In addition, we compare cumulative regret, offline suboptimality, and the total variation (TV) distance of action distributions against Thompson Sampling, a canonical posterior sampling method.

\textbf{Hyperparameters.}
Appendix~\ref{sec:experimental_hyperparameters} provides a complete list of the hyperparameters used in our experiments, including those specifying the transformer's architecture and the experimental setup.

\subsection{Experimental Results}
\label{sec:case_dpt}
\begin{table}[htbp]
\centering
\small
\setlength{\tabcolsep}{6pt}
\renewcommand{\arraystretch}{1.08}
\caption{$p$-values for sufficient-statistic invariance tests on DPT in Gaussian, Bernoulli, and linear bandits ($N=5000$, $\alpha=0.05$). Here, $n$ denotes the context length.}
\label{tab:ss_summary_dpt}
%
\sbox{\banditupperbox}{
\begin{tabular}{@{}r rrrr rr@{}}
\toprule
& \multicolumn{4}{c}{Gaussian bandits}
& \multicolumn{2}{c}{Bernoulli bandits} \\
\cmidrule(lr){2-5}\cmidrule(lr){6-7}
& \multicolumn{2}{c}{Offline}
& \multicolumn{2}{c}{Online}
& \multicolumn{1}{c}{Offline}
& \multicolumn{1}{c}{Online} \\
$n$
& \multicolumn{1}{c}{Sort}
& \multicolumn{1}{c}{Replace}
& \multicolumn{1}{c}{Sort}
& \multicolumn{1}{c}{Replace}
& \multicolumn{1}{c}{Sort}
& \multicolumn{1}{c}{Sort} \\
\midrule
$50$  & $2.88\times 10^{-112}$ & $6.95\times 10^{-230}$ & $2.44\times 10^{-259}$ & $<10^{-300}$ & $1.41\times 10^{-141}$ & $4.79\times 10^{-80}$ \\
$100$ & $5.68\times 10^{-108}$ & $<10^{-300}$ & $5.22\times 10^{-265}$ & $<10^{-300}$ & $2.23\times 10^{-158}$ & $5.83\times 10^{-133}$ \\
$200$ & $5.62\times 10^{-178}$ & $<10^{-300}$ & $1.44\times 10^{-282}$ & $<10^{-300}$ & $5.43\times 10^{-184}$ & $4.36\times 10^{-161}$ \\
$300$ & $3.10\times 10^{-155}$ & $<10^{-300}$ & $1.02\times 10^{-256}$ & $<10^{-300}$ & $9.22\times 10^{-196}$ & $1.63\times 10^{-167}$ \\
$400$ & $1.25\times 10^{-148}$ & $<10^{-300}$ & $1.93\times 10^{-229}$ & $<10^{-300}$ & $3.17\times 10^{-209}$ & $4.13\times 10^{-177}$ \\
$500$ & $8.53\times 10^{-154}$ & $<10^{-300}$ & $5.96\times 10^{-224}$ & $<10^{-300}$ & $2.63\times 10^{-206}$ & $1.33\times 10^{-187}$ \\
\midrule
\end{tabular}%
}
%
\resizebox{\linewidth}{!}{%
\begin{minipage}{\wd\banditupperbox}
\noindent\usebox{\banditupperbox}\par\nointerlineskip
\noindent
\begin{tabular*}{\linewidth}{@{\extracolsep{\fill}}r rrrr@{}}
& \multicolumn{4}{c}{Linear bandits} \\
\cmidrule(lr){2-5}
& \multicolumn{2}{c}{Offline}
& \multicolumn{2}{c}{Online} \\
$n$
& \multicolumn{1}{c}{Sort}
& \multicolumn{1}{c}{Replace}
& \multicolumn{1}{c}{Sort}
& \multicolumn{1}{c}{Replace} \\
\midrule
$25$  & $1.56\times 10^{-18}$ & $7.60\times 10^{-7}$ & $4.08\times 10^{-22}$ & $2.31\times 10^{-4}$ \\
$50$  & $1.76\times 10^{-49}$ & $1.65\times 10^{-12}$ & $2.95\times 10^{-59}$ & $1.31\times 10^{-8}$ \\
$100$ & $4.88\times 10^{-76}$ & $1.21\times 10^{-21}$ & $1.05\times 10^{-87}$ & $4.79\times 10^{-26}$ \\
$150$ & $1.81\times 10^{-67}$ & $1.74\times 10^{-42}$ & $8.36\times 10^{-69}$ & $7.27\times 10^{-33}$ \\
$200$ & $6.43\times 10^{-93}$ & $6.38\times 10^{-45}$ & $2.33\times 10^{-82}$ & $6.94\times 10^{-40}$ \\
\bottomrule
\end{tabular*}
\end{minipage}%
}
\end{table}

\textbf{DPT.} Table~\ref{tab:ss_summary_dpt} reports the $p$-values of the sufficient-statistic tests for DPT on Gaussian, Bernoulli, and linear bandits, under both offline and online deployments and both the ``Sort'' and ``Replace'' transformations. Since the sufficient-statistic invariance should hold for arbitrary context lengths, we consider context lengths $n \in \{50, 100, 200, 300, 400, 500\}$ for Gaussian and Bernoulli bandits and $n \in \{25, 50, 100, 150, 200\}$ for linear bandits. Notably, all tests in Table~\ref{tab:ss_summary_dpt} reject the null hypothesis in \eqref{eq:statistics_test} at $\alpha = 0.05$. This provides evidence that DPT does not satisfy Proposition~\ref{prop:posterior_sufficient_statistics} and hence cannot be characterized as a Bayesian posterior sampler under Definition~\ref{def:transformer_bayesian_posterior}, provided that the latent reward likelihood of the transformer is context-independent and belongs to the exponential family. We further compare DPT with Thompson Sampling in Figures~\ref{fig_evaluation_gaussian_bandit}, \ref{fig_evaluation_bernoulli_bandit}, and~\ref{fig_evaluation_linear_bandit} for Gaussian, Bernoulli, and linear bandits, respectively.

\begin{table}[htbp]
\centering
\small
\setlength{\tabcolsep}{3pt}
\renewcommand{\arraystretch}{1.15}
\setlength{\bigstrutjot}{\dimexpr(5.75\baselineskip+0.6ex)/20\relax}
\caption{$p$-values for martingale tests on DPT$'$ in Gaussian, Bernoulli, and linear bandits ($N=5000$, $\alpha=0.05$). Here, $n$ and $m$ denote the starting-context and continuation lengths, respectively.}
\label{tab:martingale_dpt_prime_grouped}
\resizebox{\linewidth}{!}{
\begin{tabular}[t]{@{}c r rrrr@{}}
\toprule
& & \multicolumn{2}{c}{Gaussian} & \multicolumn{2}{c}{Bernoulli} \\
\cmidrule(lr){3-4}\cmidrule(lr){5-6}
$n$ & $m$
& \multicolumn{1}{c}{Offline} & \multicolumn{1}{c}{Online}
& \multicolumn{1}{c}{Offline} & \multicolumn{1}{c}{Online} \\
\midrule
\multirow[c]{5}{*}{$50$} & $100$ & $<10^{-300}$ & $<10^{-300}$ & $<10^{-300}$ & $<10^{-300}$ \\
 & $200$ & $<10^{-300}$ & $<10^{-300}$ & $<10^{-300}$ & $<10^{-300}$ \\
 & $300$ & $<10^{-300}$ & $<10^{-300}$ & $<10^{-300}$ & $<10^{-300}$ \\
 & $400$ & $<10^{-300}$ & $<10^{-300}$ & $<10^{-300}$ & $<10^{-300}$ \\
 & $450$ & $<10^{-300}$ & $<10^{-300}$ & $<10^{-300}$ & $<10^{-300}$ \\
\addlinespace[0.6ex]
\multirow[c]{4}{*}{$100$} & $100$ & $<10^{-300}$ & $<10^{-300}$ & $<10^{-300}$ & $<10^{-300}$ \\
 & $200$ & $<10^{-300}$ & $<10^{-300}$ & $<10^{-300}$ & $<10^{-300}$ \\
 & $300$ & $<10^{-300}$ & $<10^{-300}$ & $<10^{-300}$ & $<10^{-300}$ \\
 & $400$ & $<10^{-300}$ & $<10^{-300}$ & $<10^{-300}$ & $<10^{-300}$ \\
\addlinespace[0.6ex]
\multirow[c]{3}{*}{$200$} & $100$ & $<10^{-300}$ & $1.45\times 10^{-55}$ & $<10^{-300}$ & $1.51\times 10^{-157}$ \\
 & $200$ & $<10^{-300}$ & $1.91\times 10^{-82}$ & $<10^{-300}$ & $1.29\times 10^{-130}$ \\
 & $300$ & $<10^{-300}$ & $4.09\times 10^{-119}$ & $<10^{-300}$ & $1.46\times 10^{-113}$ \\
\addlinespace[0.6ex]
\multirow[c]{2}{*}{$300$} & $100$ & $<10^{-300}$ & $1.83\times 10^{-204}$ & $<10^{-300}$ & $1.24\times 10^{-91}$ \\
 & $200$ & $<10^{-300}$ & $3.75\times 10^{-12}$ & $<10^{-300}$ & $1.12\times 10^{-76}$ \\
\addlinespace[0.6ex]
$400$ & $100$ & $<10^{-300}$ & $1.67\times 10^{-250}$ & $<10^{-300}$ & $<10^{-300}$ \\
\bottomrule
\end{tabular}%
\hspace{1.5em}%
\begin{tabular}[t]{@{}c r rr@{}}
\toprule
& & \multicolumn{2}{c}{Linear bandits} \\
\cmidrule(lr){3-4}
$n$ & $m$
& \multicolumn{1}{c}{Offline} & \multicolumn{1}{c}{Online} \\
\midrule
\multirow[c]{4}[tb8]{*}{$25$} & $50$ & $<10^{-300}$ & $<10^{-300}$ \bigstrut \\
 & $100$ & $<10^{-300}$ & $<10^{-300}$ \bigstrut \\
 & $150$ & $<10^{-300}$ & $<10^{-300}$ \bigstrut \\
 & $175$ & $<10^{-300}$ & $<10^{-300}$ \bigstrut \\
\addlinespace[0.6ex]
\multirow[c]{3}[tb6]{*}{$50$} & $50$ & $<10^{-300}$ & $<10^{-300}$ \bigstrut \\
 & $100$ & $<10^{-300}$ & $<10^{-300}$ \bigstrut \\
 & $150$ & $<10^{-300}$ & $<10^{-300}$ \bigstrut \\
\addlinespace[0.6ex]
\multirow[c]{2}[tb4]{*}{$100$} & $50$ & $<10^{-300}$ & $<10^{-300}$ \bigstrut \\
 & $100$ & $<10^{-300}$ & $<10^{-300}$ \bigstrut \\
\addlinespace[0.6ex]
$150$ & $50$ & $<10^{-300}$ & $<10^{-300}$ \bigstrut \\
\bottomrule
\end{tabular}%
}
\end{table}

\textbf{DPT$'$.} As described in Section~\ref{sec:dpt_variants}, DPT$'$ satisfies the conditions of Proposition~\ref{prop:posterior_sufficient_statistics} by construction. In addition, it can generate future continuations from its own action--reward predictions, so both tests apply. For space reasons, we present the martingale tests in Table~\ref{tab:martingale_dpt_prime_grouped} and defer the sufficient-statistic tests to Table~\ref{tab:ss_summary_dpt_prime} in Appendix~\ref{sec:dpt_prime_tables_and_plots}. The sufficient-statistic tests follow the same protocol as for DPT, and all of them reject the null hypothesis in \eqref{eq:statistics_test} at $\alpha=0.05$. For the martingale tests, we fix one starting context $D_n$ for each bandit and deployment mode, and generate $N = 5000$ independent length-$m$ continuations from DPT$'$. Since the martingale property in \eqref{eq:coherent_martingale} should hold for arbitrary $n$ and $m$, we consider starting-context lengths $n \in \{50, 100, 200, 300, 400\}$ with continuation lengths $m$ satisfying $n + m \leq 500$ for Gaussian and Bernoulli bandits, and $n \in \{25, 50, 100, 150\}$ with $n + m \leq 200$ for linear bandits. All martingale tests in Table~\ref{tab:martingale_dpt_prime_grouped} reject the null hypothesis in \eqref{eq:martingale_test} at $\alpha = 0.05$. In other words, the current action probabilities of DPT$'$ differ systematically from their expected values after model-generated continuations, in violation of Proposition~\ref{prop:martingale}. Per-arm martingale increment curves are provided in Figures~\ref{fig_martingale_increment_offline_gaussian_DPT_prime}, \ref{fig_martingale_increment_online_gaussian_DPT_prime}, \ref{fig_martingale_increment_offline_bernoulli_DPT_prime}, \ref{fig_martingale_increment_online_bernoulli_DPT_prime}, \ref{fig_martingale_increment_offline_linear_DPT_prime}, and~\ref{fig_martingale_increment_online_linear_DPT_prime}. Performance comparisons with Thompson Sampling are presented in Figures~\ref{fig_evaluation_gaussian_bandit_DPT_prime}, \ref{fig_evaluation_bernoulli_bandit_DPT_prime}, and~\ref{fig_evaluation_linear_bandit_DPT_prime}. In Gaussian bandits, offline DPT$'$ attains suboptimality close to that of Thompson Sampling while its TV distance remains positive.

\textbf{GDPT.} GDPT, likewise, satisfies the conditions of Proposition~\ref{prop:posterior_sufficient_statistics} by construction, and both tests follow the same protocol as for DPT$'$. All sufficient-statistic tests in Table~\ref{tab:ss_summary_gdpt} in Appendix~\ref{sec:gdpt_tables_and_plots} reject the null hypothesis in \eqref{eq:statistics_test} at $\alpha=0.05$. Hence, GDPT also cannot be characterized as a Bayesian posterior sampler under Definition~\ref{def:transformer_bayesian_posterior}. Table~\ref{tab:martingale_gdpt_grouped} reports the martingale tests. All Gaussian and linear tests, for both offline and online starting contexts, and all Bernoulli tests with offline starting contexts reject at $\alpha = 0.05$. For Bernoulli bandits with online starting contexts, $13$ of the $15$ tests reject the null hypothesis, with two exceptions $(n, m) = (50, 100)$ and $(100, 400)$ at $p$-values of $0.099$ and $0.094$, respectively. Thus, the null hypothesis in \eqref{eq:martingale_test} is rejected in all but two of the evaluated settings, indicating systematic violations of Proposition~\ref{prop:martingale} at the tested contexts. Per-arm martingale increment curves are provided in Figures~\ref{fig_martingale_increment_offline_gaussian_GDPT}, \ref{fig_martingale_increment_online_gaussian_GDPT}, \ref{fig_martingale_increment_offline_bernoulli_GDPT}, \ref{fig_martingale_increment_online_bernoulli_GDPT}, \ref{fig_martingale_increment_offline_linear_GDPT}, and~\ref{fig_martingale_increment_online_linear_GDPT}. Performance comparisons with Thompson Sampling are presented in Figures~\ref{fig_evaluation_gaussian_bandit_GDPT}, \ref{fig_evaluation_bernoulli_bandit_GDPT}, and~\ref{fig_evaluation_linear_bandit_GDPT}. In Gaussian bandits, the cumulative regret and offline suboptimality of GDPT closely follow those of Thompson Sampling, whereas its offline TV distance remains positive. Together with the results for DPT$'$, this demonstrates that decision performance comparable to Thompson Sampling can coexist with violations of the necessary conditions for Bayesian posterior sampling, and therefore does not by itself imply posterior-sampling behavior.

\textbf{Ablation Study.}
%
%
%
The ablation study investigates whether the observed violations result primarily from the transformer's sensitivity to the order of the context. To this end, we modify the DPT architecture to be permutation-invariant by removing the positional encoding, replacing causal attention with bi-directional attention, and applying mean pooling at the output layer. This modification applies to DPT and DPT$'$, whose supervision labels are the optimal action (reward) given the entire context, but not to GDPT, whose autoregressive objective requires causal attention to prevent each prediction from accessing its own target. We therefore evaluate DPT and DPT$'$ with the modified architecture in the Gaussian bandit under the same pretraining, deployment, and testing procedures as before. The sufficient-statistic tests are reported in Tables~\ref{tab:ss_ablation_dpt} and~\ref{tab:ss_ablation_dpt_prime}, and the martingale tests for DPT$'$ are in Table~\ref{tab:martingale_ablation_dpt_prime}. For both models, the \textit{sort} transformation now yields identical action-probability vectors at every evaluated context length, enforced by the modified architecture. These entries are marked as `Equal'' with no $p$-value reported. However, all \textit{reward replacement} tests and all martingale tests still reject the null hypotheses. Thus, enforcing permutation invariance restores sorting invariance by construction but still fails the sufficient-statistic invariance and the martingale property, indicating that the sensitivity to the context order is not the main reason for the observed violations. Performance comparisons for the ablation studies are provided in Figures~\ref{fig_evaluation_gaussian_bandit_DPT_ablation} and~\ref{fig_evaluation_gaussian_bandit_DPT_prime_ablation}. Per-arm martingale increment curves for ablated DPT$'$ are provided in Figures~\ref{fig_martingale_increment_offline_gaussian_DPT_prime_ablation} and~\ref{fig_martingale_increment_online_gaussian_DPT_prime_ablation}.

\begin{table}[htbp]
\centering
\small
\setlength{\tabcolsep}{3pt}
\renewcommand{\arraystretch}{1.15}
\setlength{\bigstrutjot}{\dimexpr(5.75\baselineskip+0.6ex)/20\relax}
\caption{$p$-values for martingale tests on GDPT in Gaussian, Bernoulli, and linear bandits ($N=5000$, $\alpha=0.05$). Here, $n$ and $m$ denote the starting-context and continuation lengths, respectively.}
\label{tab:martingale_gdpt_grouped}
\resizebox{\linewidth}{!}{%
\begin{tabular}[t]{@{}c r rrrr@{}}
\toprule
& & \multicolumn{2}{c}{Gaussian} & \multicolumn{2}{c}{Bernoulli} \\
\cmidrule(lr){3-4}\cmidrule(lr){5-6}
$n$ & $m$
& \multicolumn{1}{c}{Offline} & \multicolumn{1}{c}{Online}
& \multicolumn{1}{c}{Offline} & \multicolumn{1}{c}{Online} \\
\midrule
\multirow[c]{5}{*}{$50$} & $100$ & $6.85\times 10^{-110}$ & $2.89\times 10^{-248}$ & $2.02\times 10^{-5}$ & $0.099$ \\
 & $200$ & $1.33\times 10^{-192}$ & $<10^{-300}$ & $0.004$ & $0.014$ \\
 & $300$ & $1.20\times 10^{-61}$ & $2.04\times 10^{-157}$ & $3.09\times 10^{-4}$ & $0.028$ \\
 & $400$ & $1.90\times 10^{-53}$ & $9.77\times 10^{-121}$ & $0.002$ & $0.042$ \\
 & $450$ & $5.02\times 10^{-57}$ & $8.08\times 10^{-246}$ & $0.001$ & $0.013$ \\
\addlinespace[0.6ex]
\multirow[c]{4}{*}{$100$} & $100$ & $5.86\times 10^{-177}$ & $<10^{-300}$ & $8.13\times 10^{-11}$ & $3.01\times 10^{-26}$ \\
 & $200$ & $1.09\times 10^{-20}$ & $<10^{-300}$ & $2.15\times 10^{-11}$ & $0.021$ \\
 & $300$ & $5.85\times 10^{-78}$ & $2.39\times 10^{-30}$ & $7.22\times 10^{-6}$ & $4.49\times 10^{-4}$ \\
 & $400$ & $2.89\times 10^{-79}$ & $1.08\times 10^{-94}$ & $3.91\times 10^{-10}$ & $0.094$ \\
\addlinespace[0.6ex]
\multirow[c]{3}{*}{$200$} & $100$ & $<10^{-300}$ & $<10^{-300}$ & $<10^{-300}$ & $3.08\times 10^{-34}$ \\
 & $200$ & $2.68\times 10^{-10}$ & $<10^{-300}$ & $<10^{-300}$ & $1.27\times 10^{-79}$ \\
 & $300$ & $2.56\times 10^{-46}$ & $<10^{-300}$ & $<10^{-300}$ & $4.92\times 10^{-24}$ \\
\addlinespace[0.6ex]
\multirow[c]{2}{*}{$300$} & $100$ & $<10^{-300}$ & $<10^{-300}$ & $<10^{-300}$ & $<10^{-300}$ \\
 & $200$ & $<10^{-300}$ & $<10^{-300}$ & $<10^{-300}$ & $2.02\times 10^{-106}$ \\
\addlinespace[0.6ex]
$400$ & $100$ & $<10^{-300}$ & $<10^{-300}$ & $<10^{-300}$ & $<10^{-300}$ \\
\bottomrule
\end{tabular}%
\hspace{1.5em}%
\begin{tabular}[t]{@{}c r rr@{}}
\toprule
& & \multicolumn{2}{c}{Linear bandits} \\
\cmidrule(lr){3-4}
$n$ & $m$
& \multicolumn{1}{c}{Offline} & \multicolumn{1}{c}{Online} \\
\midrule
\multirow[c]{4}[tb8]{*}{$25$} & $50$ & $3.07\times 10^{-11}$ & $4.73\times 10^{-4}$ \bigstrut \\
 & $100$ & $2.37\times 10^{-30}$ & $1.51\times 10^{-7}$ \bigstrut \\
 & $150$ & $1.79\times 10^{-15}$ & $5.65\times 10^{-11}$ \bigstrut \\
 & $175$ & $3.77\times 10^{-35}$ & $1.38\times 10^{-16}$ \bigstrut \\
\addlinespace[0.6ex]
\multirow[c]{3}[tb6]{*}{$50$} & $50$ & $1.27\times 10^{-170}$ & $<10^{-300}$ \bigstrut \\
 & $100$ & $<10^{-300}$ & $<10^{-300}$ \bigstrut \\
 & $150$ & $<10^{-300}$ & $<10^{-300}$ \bigstrut \\
\addlinespace[0.6ex]
\multirow[c]{2}[tb4]{*}{$100$} & $50$ & $<10^{-300}$ & $<10^{-300}$ \bigstrut \\
 & $100$ & $<10^{-300}$ & $<10^{-300}$ \bigstrut \\
\addlinespace[0.6ex]
$150$ & $50$ & $<10^{-300}$ & $<10^{-300}$ \bigstrut \\
\bottomrule
\end{tabular}%
}
\end{table}

\section{Conclusion}
\label{sec:conclusion}
In this paper, we investigate whether ICRL, specifically DPT and two variants DPT$'$ and GDPT, behave as Bayesian posterior samplers in practice, where the idealized assumptions underpinning its theoretical characterization may not hold. We establish two necessary conditions that any Bayesian posterior sampler must satisfy: sufficient-statistic invariance and the martingale property. Then we design statistical hypothesis tests for both conditions. Experiments on Gaussian, Bernoulli, and linear bandits show that DPT, DPT$'$, and GDPT consistently violate the sufficient-statistic invariance. Additionally, DPT$'$ and GDPT violate the martingale property under their own model-generated continuations, even when their decision performance is comparable to that of Thompson Sampling. The ablation study further exhibits that enforcing permutation invariance in the architecture restores invariance to the context order but recovers neither condition, indicating that the sensitivity to the context order is not the main source of these violations. Beyond these negative findings, our tests rely only on observable action predictions and thus provide a principled toolkit for examining Bayesian claims about ICRL. Extending the tests to decision-making problems with state dependence, and understanding what architectural and training choices are needed to satisfy the two conditions, are important directions for future work.


\clearpage

\bibliography{iclr2027_conference}

@inproceedings{falck2024bayesian,
  title={Is In-Context Learning in Large Language Models {B}ayesian? {A} Martingale Perspective},
  author={Falck, Fabian and Wang, Ziyu and Holmes, Christopher C.},
  booktitle={Proceedings of the 41st International Conference on Machine Learning},
  series={Proceedings of Machine Learning Research},
  volume={235},
  pages={12784--12805},
  publisher={PMLR},
  year={2024},
  url={https://proceedings.mlr.press/v235/falck24a.html}
}

@inproceedings{lee2024supervised,
  title={Supervised Pretraining Can Learn In-Context Reinforcement Learning},
  author={Lee, Jonathan N. and Xie, Annie and Pacchiano, Aldo and Chandak, Yash and Finn, Chelsea and Nachum, Ofir and Brunskill, Emma},
  booktitle={Advances in Neural Information Processing Systems},
  volume={36},
  year={2023},
  url={https://proceedings.neurips.cc/paper_files/paper/2023/hash/8644b61a9bc87bf7844750a015feb600-Abstract-Conference.html}
}

@inproceedings{lin2024transformers,
  title={Transformers as Decision Makers: Provable In-Context Reinforcement Learning via Supervised Pretraining},
  author={Lin, Licong and Bai, Yu and Mei, Song},
  booktitle={International Conference on Learning Representations},
  year={2024},
  url={https://arxiv.org/abs/2310.08566}
}

@article{wakayama2025icl,
  title={In-Context Learning Is Provably {B}ayesian Inference: A Generalization Theory for Meta-Learning},
  author={Wakayama, Tomoya and Suzuki, Taiji},
  journal={arXiv preprint arXiv:2510.10981},
  year={2025},
  url={https://arxiv.org/abs/2510.10981}
}

@article{thompson1933likelihood,
  title={On the Likelihood that One Unknown Probability Exceeds Another in View of the Evidence of Two Samples},
  author={Thompson, William R.},
  journal={Biometrika},
  volume={25},
  number={3/4},
  pages={285--294},
  year={1933}
}

@article{russo2016information,
  title={An Information-Theoretic Analysis of {T}hompson Sampling},
  author={Russo, Daniel and Van Roy, Benjamin},
  journal={Journal of Machine Learning Research},
  volume={17},
  number={68},
  pages={1--30},
  year={2016}
}

@techreport{radford2019language,
  title={Language Models are Unsupervised Multitask Learners},
  author={Radford, Alec and Wu, Jeffrey and Child, Rewon and Luan, David and Amodei, Dario and Sutskever, Ilya},
  institution={OpenAI},
  year={2019},
  url={https://cdn.openai.com/better-language-models/language_models_are_unsupervised_multitask_learners.pdf}
}

@inproceedings{laskin2022context,
  title={In-Context Reinforcement Learning with Algorithm Distillation},
  author={Laskin, Michael and Wang, Luyu and Oh, Junhyuk and Parisotto, Emilio and Spencer, Stephen and Steigerwald, Richie and Strouse, DJ and Hansen, Steven and Filos, Angelos and Brooks, Ethan and Gazeau, Maxime and Sahni, Himanshu and Singh, Satinder and Mnih, Volodymyr},
  booktitle={International Conference on Learning Representations},
  year={2023},
  url={https://openreview.net/forum?id=hy0a5MMPUv}
}

@inproceedings{xie2022explanation,
  title={An Explanation of In-context Learning as Implicit {B}ayesian Inference},
  author={Xie, Sang Michael and Raghunathan, Aditi and Liang, Percy and Ma, Tengyu},
  booktitle={International Conference on Learning Representations},
  year={2022},
  url={https://arxiv.org/abs/2111.02080}
}

@article{schulman2017proximal,
  title={Proximal policy optimization algorithms},
  author={Schulman, John and Wolski, Filip and Dhariwal, Prafulla and Radford, Alec and Klimov, Oleg},
  journal={arXiv preprint arXiv:1707.06347},
  year={2017}
}

@inproceedings{osband2013more,
  title={(More) Efficient Reinforcement Learning via Posterior Sampling},
  author={Osband, Ian and Russo, Daniel and Van Roy, Benjamin},
  booktitle={Advances in Neural Information Processing Systems},
  volume={26},
  year={2013}
}

@article{mnih2013playing,
  title={Playing atari with deep reinforcement learning},
  author={Mnih, Volodymyr and Kavukcuoglu, Koray and Silver, David and Graves, Alex and Antonoglou, Ioannis and Wierstra, Daan and Riedmiller, Martin},
  journal={arXiv preprint arXiv:1312.5602},
  year={2013}
}

@article{mnih2015human,
  title={Human-level control through deep reinforcement learning},
  author={Mnih, Volodymyr and Kavukcuoglu, Koray and Silver, David and Rusu, Andrei A and Veness, Joel and Bellemare, Marc G and Graves, Alex and Riedmiller, Martin and Fidjeland, Andreas K and Ostrovski, Georg and others},
  journal={nature},
  volume={518},
  number={7540},
  pages={529--533},
  year={2015},
  publisher={Nature Publishing Group UK London}
}

@article{levine2016end,
  title={End-to-end training of deep visuomotor policies},
  author={Levine, Sergey and Finn, Chelsea and Darrell, Trevor and Abbeel, Pieter},
  journal={Journal of Machine Learning Research},
  volume={17},
  number={39},
  pages={1--40},
  year={2016}
}

@inproceedings{duan2016benchmarking,
  title={Benchmarking deep reinforcement learning for continuous control},
  author={Duan, Yan and Chen, Xi and Houthooft, Rein and Schulman, John and Abbeel, Pieter},
  booktitle={International conference on machine learning},
  pages={1329--1338},
  year={2016},
  organization={PMLR}
}

@article{afsar2022reinforcement,
  title={Reinforcement learning based recommender systems: A survey},
  author={Afsar, M Mehdi and Crump, Trafford and Far, Behrouz},
  journal={ACM Computing Surveys},
  volume={55},
  number={7},
  eid={145},
  pages={1--38},
  year={2023},
  doi={10.1145/3543846},
  note={38 pages}
}

@article{lin2023survey,
  title={A survey on reinforcement learning for recommender systems},
  author={Lin, Yuanguo and Liu, Yong and Lin, Fan and Zou, Lixin and Wu, Pengcheng and Zeng, Wenhua and Chen, Huanhuan and Miao, Chunyan},
  journal={IEEE Transactions on Neural Networks and Learning Systems},
  volume={35},
  number={10},
  pages={13164--13184},
  year={2024},
  publisher={IEEE},
  doi={10.1109/TNNLS.2023.3280161}
}

@inproceedings{dong2024survey,
  title={A survey on in-context learning},
  author={Dong, Qingxiu and Li, Lei and Dai, Damai and Zheng, Ce and Ma, Jingyuan and Li, Rui and Xia, Heming and Xu, Jingjing and Wu, Zhiyong and Chang, Baobao and others},
  booktitle={Proceedings of the 2024 conference on empirical methods in natural language processing},
  pages={1107--1128},
  year={2024}
}

@inproceedings{li2023transformers,
  title={Transformers as algorithms: Generalization and stability in in-context learning},
  author={Li, Yingcong and Ildiz, Muhammed Emrullah and Papailiopoulos, Dimitris and Oymak, Samet},
  booktitle={International conference on machine learning},
  pages={19565--19594},
  year={2023},
  organization={PMLR}
}

@article{wei2023larger,
  title={Larger language models do in-context learning differently},
  author={Wei, Jerry and Wei, Jason and Tay, Yi and Tran, Dustin and Webson, Albert and Lu, Yifeng and Chen, Xinyun and Liu, Hanxiao and Huang, Da and Zhou, Denny and others},
  journal={arXiv preprint arXiv:2303.03846},
  year={2023}
}

@article{chen2024random,
  title={Random policy enables in-context reinforcement learning within trust horizons},
  author={Chen, Weiqin and Paternain, Santiago},
  journal={Transactions on Machine Learning Research},
  year={2025},
  url={https://openreview.net/forum?id=mAiMKnr9r5}
}

@inproceedings{vaswani2017attention,
  title={Attention Is All You Need},
  author={Vaswani, Ashish and Shazeer, Noam and Parmar, Niki and Uszkoreit, Jakob and Jones, Llion and Gomez, Aidan N. and Kaiser, {\L}ukasz and Polosukhin, Illia},
  booktitle={Advances in Neural Information Processing Systems},
  volume={30},
  year={2017},
  url={https://proceedings.nips.cc/paper_files/paper/2017/hash/3f5ee243547dee91fbd053c1c4a845aa-Abstract.html}
}

@inproceedings{haarnoja2018soft,
  title={Soft Actor-Critic: Off-Policy Maximum Entropy Deep Reinforcement Learning with a Stochastic Actor},
  author={Haarnoja, Tuomas and Zhou, Aurick and Abbeel, Pieter and Levine, Sergey},
  booktitle={Proceedings of the 35th International Conference on Machine Learning},
  series={Proceedings of Machine Learning Research},
  volume={80},
  pages={1861--1870},
  year={2018},
  publisher={PMLR},
  url={https://proceedings.mlr.press/v80/haarnoja18b.html}
}

@misc{openaigpt35,
  title={{GPT-3.5 Turbo} Model},
  author={{OpenAI}},
  year={n.d.},
  howpublished={OpenAI API documentation},
  note={Accessed September 19, 2026},
  url={https://developers.openai.com/api/docs/models/gpt-3.5-turbo}
}

@article{openai2023gpt4,
  title={{GPT-4} Technical Report},
  author={{OpenAI}},
  journal={arXiv preprint arXiv:2303.08774},
  year={2023},
  url={https://arxiv.org/abs/2303.08774}
}

@article{touvron2023llama2,
  title={{Llama 2}: Open Foundation and Fine-Tuned Chat Models},
  author={Touvron, Hugo and Martin, Louis and Stone, Kevin and Albert, Peter and Almahairi, Amjad and Babaei, Yasmine and Bashlykov, Nikolay and Batra, Soumya and Bhargava, Prajjwal and Bhosale, Shruti and others},
  journal={arXiv preprint arXiv:2307.09288},
  year={2023},
  url={https://arxiv.org/abs/2307.09288}
}

@article{jiang2023mistral,
  title={{Mistral 7B}},
  author={Jiang, Albert Q. and Sablayrolles, Alexandre and Mensch, Arthur and Bamford, Chris and Chaplot, Devendra Singh and de las Casas, Diego and Bressand, Florian and Lengyel, Gianna and Lample, Guillaume and Saulnier, Lucile and Renard Lavaud, L{\'e}lio and Lachaux, Marie-Anne and Stock, Pierre and Le Scao, Teven and Lavril, Thibaut and Wang, Thomas and Lacroix, Timoth{\'e}e and El Sayed, William},
  journal={arXiv preprint arXiv:2310.06825},
  year={2023},
  url={https://arxiv.org/abs/2310.06825}
}

@inproceedings{chen2021decision,
  title={Decision Transformer: Reinforcement Learning via Sequence Modeling},
  author={Chen, Lili and Lu, Kevin and Rajeswaran, Aravind and Lee, Kimin and Grover, Aditya and Laskin, Misha and Abbeel, Pieter and Srinivas, Aravind and Mordatch, Igor},
  booktitle={Advances in Neural Information Processing Systems},
  volume={34},
  year={2021},
  url={https://papers.nips.cc/paper/2021/hash/7f489f642a0ddb10272b5c31057f0663-Abstract.html}
}

@inproceedings{janner2021offline,
  title={Offline Reinforcement Learning as One Big Sequence Modeling Problem},
  author={Janner, Michael and Li, Qiyang and Levine, Sergey},
  booktitle={Advances in Neural Information Processing Systems},
  volume={34},
  year={2021},
  url={https://papers.nips.cc/paper/2021/hash/099fe6b0b444c23836c4a5d07346082b-Abstract.html}
}

@book{casella2002statistical,
  title={Statistical Inference},
  author={Casella, George and Berger, Roger L.},
  edition={2},
  publisher={Duxbury},
  year={2002},
  isbn={9780534243128},
  url={https://www.cengage.com/c/statistical-inference-2e-casella/berger/9780534243128/}
}
\bibliographystyle{iclr2027_conference}
\clearpage

\appendix

\section{Related Work}
\label{sec:related}
\paragraph{In-context reinforcement learning.}
Transformers have been successfully applied to RL by formulating decision-making as a sequence modeling problem. Two pioneering works, Decision Transformer and Trajectory Transformer, take different approaches. Decision Transformer predicts actions conditioned on desired returns and past interactions~\citep{chen2021decision}, while Trajectory Transformer models trajectories autoregressively and performs planning via beam search~\citep{janner2021offline}.
In-context reinforcement learning (ICRL) takes this paradigm a step further by studying how pretrained transformers can adapt to new, unseen tasks without any parameter updates. \citet{laskin2022context} introduced Algorithm Distillation (AD), which pretrains a causal transformer on the learning histories generated by a source RL algorithm, with the goal of distilling its policy improvement operator across episodes. \citet{lee2024supervised} subsequently proposed the Decision-Pretrained Transformer (DPT), which predicts optimal actions given a query state and arbitrary contexts of varying lengths. DPT typically relies on optimal policies, such as well-trained PPO policies or human-designed policies, to provide supervision labels. A central theoretical grounding for DPT is that, under idealized assumptions such as a sufficiently expressive
model and infinite pretraining data, it behaves as a Bayesian posterior sampler. Building on this perspective, we investigate whether DPT and our proposed DPT variants satisfy necessary conditions for the Bayesian posterior sampler in practice.

\paragraph{Is in-context learning Bayesian?}
A prominent line of work examines whether in-context learning (ICL) can be characterized as Bayesian inference. \citet{xie2022explanation} model pretraining data as a mixture of hidden Markov models indexed by a latent concept and demonstrate that the pretrained predictor implicitly infers the concept shared across the prompt, becoming asymptotically Bayes-optimal as the number of in-context examples grows. \citet{wakayama2025icl} provide a finite-sample counterpart for meta-learning over a mixture of tasks. They decompose the ICL risk into a Bayes gap and an irreducible posterior variance, and bound the former non-asymptotically for uniform-attention transformers trained by empirical risk minimization. These results establish Bayes-optimality under idealized generative assumptions, while whether trained models actually behave in a Bayesian manner remains an empirical question. \citet{falck2024bayesian} address it by observing that Bayesian predictive distributions over exchangeable data must satisfy a martingale property, and derive testable diagnostics from it. Applied to pretrained LLMs on synthetic tasks, these diagnostics reveal systematic violations. Following this diagnostic perspective, this work statistically tests whether ICRL behaves as a Bayesian posterior sampler.

\paragraph{Bayesian posterior sampling in decision-making.}
Bayesian posterior sampling, originating with \citep{thompson1933likelihood}, selects actions by drawing an environment from the current posterior and acting optimally for that draw. \citet{russo2016information} give an information-theoretic analysis of Thompson sampling, bounding Bayesian regret through the information ratio, which trades off instantaneous regret against information gained about the optimal action, and obtaining regret bounds for finite-armed and linear bandits. \citet{osband2013more} extend posterior sampling to episodic Markov decision processes (MDPs) with Bayesian regret guarantees, which samples an MDP from the posterior at the start of each episode and following its optimal policy. These works characterize the exact posterior sampling, whereas this paper explores whether ICRL schemes can be viewed as posterior samplers in practice.
\clearpage

\section{Theory and Proofs}

\subsection[Proofs of Propositions~\ref{prop:posterior_sufficient_statistics} and~\ref{prop:martingale}]{Proofs of Propositions~\ref{prop:posterior_sufficient_statistics} and~\ref{prop:martingale}}
\label{sec:proposition_proofs}

\begin{proof}[Proof of Proposition~\ref{prop:posterior_sufficient_statistics}]
\label{pf:posterior_sufficient_statistics}
Since we assumed the reward likelihood to be context independent, the joint context likelihood is proportional to the product of each reward likelihood, i.e.,
\begin{equation}
p(D_n\mid\theta)\propto\prod_{t=1}^n p(R_t\mid x_{A_t},\theta).
\label{eq:sequential_history_likelihood}
\end{equation}
Therefore, the posterior takes the form:
\[
p_{\pi_0}(\theta\mid D_n)\propto
\pi_0(\theta)\prod_{t=1}^n p(R_t\mid x_{A_t},\theta).
\]
For part $i)$, permuting the action--reward pairs leaves the product of reward likelihoods unchanged. Thus $D_n$ and $\sigma_1(D_n)$ induce the same posterior.

For part $ii)$, substituting~\eqref{eq:general_exponential_family} into~\eqref{eq:sequential_history_likelihood} gives
\[
p(D_n\mid\theta)
\propto\exp\left\{
\eta(\theta)^\top S_n^{(1)}-\lambda(\theta)^\top S_n^{(2)}
\right\}.
\]
It then follows from the Fisher--Neyman factorization theorem \citep{casella2002statistical} that $S(D_n)=(S_n^{(1)},S_n^{(2)})$ is sufficient for $\theta$. In particular, the posterior takes the form
\[
p_{\pi_0}(\theta\mid D_n)\propto
\pi_0(\theta)\exp\left\{
\eta(\theta)^\top S_n^{(1)}-\lambda(\theta)^\top S_n^{(2)}
\right\}.
\]
Hence $S(\sigma_2(D_n))=S(D_n)$ implies that $D_n$ and $\sigma_2(D_n)$ induce the same posteriors. For both parts, combining the same context-independent decision kernel $\mathcal{K}(a\mid\theta)$ and the same posteriors, as in Definition~\ref{def:transformer_bayesian_posterior}, leads to the equality of action distributions for every $a\in\mathcal{A}$.

\end{proof}

\begin{proof}[Proof of Proposition~\ref{prop:martingale}]
Fix an action $a\in\mathcal{A}$. By assumption, the context continuations and the posteriors in Definition~\ref{def:transformer_bayesian_posterior} arise from the same joint Bayesian model. Under this model,
\[
\pi_{\mathrm{TF}}(a\mid D_t)
=
\int_{\Theta}
\mathcal K(a\mid\theta)
p_{\pi_0}(\theta\mid D_t)
\,d\theta
=
\mathbb E
\left[
\mathcal K(a\mid\theta)
\mid
\mathcal F_t
\right].
\]
Since $0\leq\mathcal K(a\mid\theta)\leq1$, it is integrable. It then follows from the tower property that
\begin{align}
\mathbb E
\left[
\pi_{\mathrm{TF}}(a\mid D_{n+m})
\mid
\mathcal F_n
\right]
&=
\mathbb E
\left[
\mathbb E
\left[
\mathcal K(a\mid\theta)
\mid
\mathcal F_{n+m}
\right]
\middle|
\mathcal F_n
\right]
\\
&=
\mathbb E
\left[
\mathcal K(a\mid\theta)
\mid
\mathcal F_n
\right]
\\
&=
\pi_{\mathrm{TF}}(a\mid D_n).
\end{align}
Conditional on $D_n$, the law for the expectation on the left is precisely $P_{\mathrm{TF}}^{(m)}(\cdot\mid D_n)$, which proves~\eqref{eq:coherent_martingale}. Thus $\{\pi_{\mathrm{TF}}(a\mid D_n)\}_{n\geq0}$ is a martingale. This argument does not require an exponential-family likelihood or a sufficient statistic.

\end{proof}
\clearpage

\subsection{Distinct Context Distributions after sufficient statistics-preserving transformations}
\label{sec:context_transformations}

Let $D=((A_1, R_1),\ldots,(A_n, R_n))$ be an ordered context of length $n\geq2$ from a $K$-armed bandit. The results below are independent of the exact interaction policy. Conditional on the history and $A_t=a$, the reward law is $\mathcal{N}(\mu_a,\sigma^2)$ for Gaussian bandits, $\operatorname{Bernoulli}(p_a)$ for Bernoulli bandits, and $\mathcal{N}(x_a^\top\theta,\sigma^2)$ for linear bandits with fixed known features $x_a$. We assume the variance $\sigma^2>0$ is known, and $p_a\in(0,1)$.

Let $I_{\{A_t=a\}}$ be the indicator function of the event: the action taken at time $t$ is $a$, then let $$N_a=\sum_{t=1}^n I_{\{A_t=a\}}\text{ and }S_a=\sum_{t=1}^n I_{\{A_t=a\}} R_t.$$ 
The sufficient statistics are $\{(N_a,S_a)\}_{a=1}^K$ for Gaussian and Bernoulli bandits, and
\[
V=\sum_{a=1}^K N_ax_ax_a^\top,
\qquad b=\sum_{a=1}^K x_aS_a
\]
for linear bandits (see Appendix~\ref{sec:bandit_sufficient_statistics_proofs}). Thus preserving every arm's count and reward sum suffices is equivalent to preserving the sufficient statistics in all three settings.

\begin{proposition}[Context transformations]
\label{prop:context_transformations}
The following transformations produce a context $D'$ with the same sufficient statistics as $D$ but a different distribution:
\begin{enumerate}
    \item \textit{Sorting (all three bandits):} sort the action--reward pairs by ascending reward, retaining their original order within ties.
    \item \textit{Reward replacement (Gaussian and linear bandits only):} sample $U_t\sim^\text{i.i.d.}\operatorname{Uniform}[-c,c]$ independently of $D$ for some fixed $c>0$. For each observed arm $a$, set
    \[
    R'_t=\bar R_a+(U_t-\bar U_a)I_{\{A_t=a\}},
    \quad\text{where }
    \bar R_a=\frac{S_a}{N_a},\quad
    \bar U_a=\frac{1}{N_a}\sum_{s=1}^nU_sI_{\{A_s=a\}}.
    \]
    Actions remain unchanged. For this claim, suppose there exists an arm that is observed at least twice with positive probability\footnote{This can be easily established by choosing $n>K$}, i.e., $\exists a:\mathbb{P}(\max_a N_a\geq2)>0$.
\end{enumerate}
\end{proposition}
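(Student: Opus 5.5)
The proof has two parts for each transformation: first show that the sufficient statistics are preserved, then exhibit a functional whose law differs between $D$ and $D'$. For the first part it is enough to show that every arm's pair $(N_a,S_a)$ is unchanged. The linear-bandit statistics $(V,b)$ are functions of $\{(N_a,S_a)\}_a$, so they are then preserved automatically.

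\textbf{Preservation.} Sorting only permutes the action--reward pairs as units. The multiset $\{(A_t,R_t)\}_{t=1}^n$ is therefore unchanged, and so are $N_a=\sum_t I_{\{A_t=a\}}$ and $S_a=\sum_t I_{\{A_t=a\}}R_t$. Reward replacement keeps the actions, so $N_a$ is preserved. For each observed arm, summing over $\{t:A_t=a\}$ gives
\[
\sum_t I_{\{A_t=a\}}R'_t \;=\; N_a\bar R_a+\sum_t I_{\{A_t=a\}}U_t-N_a\bar U_a \;=\; S_a .
\]

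\textbf{Distinctness of distributions.} For each transformation I would pick a simple test functional $f$ and show that $f(D)$ and $f(D')$ have different laws.

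For sorting, take $f(D)=\mathbf 1\{R_1\le R_2\}$. After sorting, $\mathbb P(R'_1\le R'_2)=1$ always. For the original context I would show $\mathbb P(R_1>R_2)>0$.
\begin{itemize}
\item In the Gaussian and linear cases, conditional on $(A_1,A_2)$ and $R_1$, the reward $R_2$ is Gaussian with variance $\sigma^2>0$. Hence $\mathbb P(R_2<R_1\mid A_1,R_1,A_2)>0$ almost surely, and integrating gives $\mathbb P(R_1>R_2)>0$.
\item In the Bernoulli case, $p_a\in(0,1)$ gives $\mathbb P(R_1=1,R_2=0)\ge \min_a p_a\,\min_a(1-p_a)>0$.
\end{itemize}
This conclusion does not depend on the interaction policy, since only the conditional reward laws are used.

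For reward replacement, I would use the event $E$ that some arm is pulled at least twice; it has positive probability by assumption. Let $s<t$ be the first two pulls of such an arm $a$, and take $f(D)=(R_s-R_t)\mathbf 1_E$. Replacement gives $R'_s-R'_t=U_s-U_t\in[-2c,2c]$, which is bounded. In the original context, conditional on the history up to $t-1$ and on $A_t=a$, the reward $R_t$ is Gaussian. Therefore $|R_s-R_t|>2c$ with positive conditional probability, and integrating over $E$ shows $\mathbb P(E,\,|R_s-R_t|>2c)>0$. The two laws thus differ.

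\textbf{Main obstacle.} The difficulty is making the distinctness argument rigorous without any assumption on the policy, because the actions may depend on past rewards. I would handle this by conditioning on the history at the relevant time and using only the conditional reward kernel, which is fixed by the bandit model. The indices $s,t$ are selected by a stopping-time-like rule, so I would condition on the finitely many possible values of $(s,t)$ and apply the conditional Gaussian argument separately on each value.
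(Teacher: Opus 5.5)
Your proposal is correct and follows essentially the same route as the paper. You show that $\{(N_a,S_a)\}_a$ is preserved, you use the event $\{R_1>R_2\}$ (probability zero after sorting, positive before) for sorting, and for replacement you compare the bound $|R'_s-R'_t|=|U_s-U_t|\le 2c$ against the conditionally Gaussian $R_t$. The only difference is that the paper avoids your random first-repeat indices by fixing a deterministic pair $s<t$ with $\mathbb{P}(A_s=A_t)>0$. Such a pair exists because $\{\max_a N_a\ge 2\}$ is a finite union of events $\{A_s=A_t\}$, and $\{A_s=A_t\}$ is automatically measurable with respect to $(D_{t-1},A_t)$, so no stopping-time bookkeeping is needed.
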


\begin{proof}
Both transformations preserve the arm counts. Sorting also preserves the arm reward sums, while replacement satisfies, for each observed arm $a$,
\[
\sum_{t=1}^nI_{\{A_t=a\}}R'_t
=N_a\bar R_a+\sum_{t=1}^nI_{\{A_t=a\}}(U_t-\bar U_a)
=S_a.
\]
Consequently, both transformations preserve the sufficient statistics.

For sorting, $\mathbb{P}(R'_1>R'_2)=0$. In Gaussian and linear bandits setting, conditional on the first observation and the second action, $R_2$ has a Gaussian distribution on $
\mathbb{R}$, so $\mathbb{P}(R_1>R_2)>0$. For Bernoulli bandits, $\mathbb{P}(R_1=1)>0$ and the conditional probability of $R_2=0$ is positive for every second action, so again $\mathbb{P}(R_1>R_2)>0$. This distinguishes the laws of $D$ and $D'$.

For replacement, whenever $A_s=A_t$,
\[
|R'_s-R'_t|=|U_s-U_t|\leq2c.
\]
The repetition assumption implies that some fixed $s<t$ satisfies $\mathbb{P}(A_s=A_t)>0$. Conditional on $D_{t-1}$ and $A_t$, the earlier reward $R_s$ is fixed and $R_t$ has a Gaussian distribution with full support. Therefore,
\[
\mathbb{P}\big(A_s=A_t,\ |R_s-R_t|>2c\big)>0,
\qquad
\mathbb{P}\big(A_s=A_t,\ |R'_s-R'_t|>2c\big)=0.
\]
These probabilities distinguish the laws in both Gaussian and linear bandits.

\end{proof}
\clearpage

\subsection{Sufficient Statistics for Bandits}
\label{sec:bandit_sufficient_statistics_proofs}
\label{sec:gaussian_sufficient_statistics_proof}

We apply Proposition~\ref{prop:posterior_sufficient_statistics} to the three bandit settings in Section~\ref{sec:case_studies}. We assume the transformer satisfies the assumptions of the proposition with the corresponding reward likelihood. For a context $D_n=\big((A_1,R_1),\ldots,(A_n,R_n)\big)$ and each arm $a\in\{1,\ldots,K\}$, let $I_{\{A_t=a\}}$ be the indicator variable for the event $\{A_t=a\}$ as in Appendix~\ref{sec:context_transformations}. Similarly, define the arm count and reward sum:
\[
N_a=\sum_{t=1}^n I_{\{A_t=a\}},
\qquad
S_a=\sum_{t=1}^n I_{\{A_t=a\}}R_t.
\]

\begin{corollary}[Gaussian bandits]
\label{cor:transformer_gaussian_bandit}
In the Gaussian bandits setting where we have $R_t\mid A_t=a\sim\mathcal{N}(\mu_a,\sigma^2)$ with unknown means $\mu=(\mu_1,\ldots,\mu_K)$ and known variance $\sigma^2>0$, the transformer's action distribution depends on $D_n$ only through $\{(N_a,S_a)\}_{a=1}^K$.
\end{corollary}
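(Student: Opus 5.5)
The plan is to instantiate Proposition~\ref{prop:posterior_sufficient_statistics}\,$ii)$ directly. We must exhibit the Gaussian reward likelihood in the exponential-family form \eqref{eq:general_exponential_family}, read off $S_n^{(1)}$ and $S_n^{(2)}$, and check that they are functions of $\{(N_a,S_a)\}_{a=1}^K$. Throughout, take the action features to be the standard basis vectors, $x_a=e_a\in\mathbb{R}^K$, so that $x_{A_t}$ encodes the arm, and let $\theta=\mu\in\mathbb{R}^K$.

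First, expand the Gaussian density:
\[
p(r\mid x_a,\mu)=\frac{1}{\sqrt{2\pi\sigma^2}}\exp\Big\{-\frac{r^2}{2\sigma^2}\Big\}\exp\Big\{\frac{\mu_a r}{\sigma^2}-\frac{\mu_a^2}{2\sigma^2}\Big\}.
\]
Then choose
\[
h(r,x)=(2\pi\sigma^2)^{-1/2}e^{-r^2/(2\sigma^2)},\qquad \eta(\mu)=\mu/\sigma^2,\qquad T(r,x)=r\,x,
\]
\[
\lambda(\mu)=\big(\mu_1^2,\ldots,\mu_K^2\big)/(2\sigma^2),\qquad \phi(x)=x.
\]
With these choices, $\eta(\mu)^\top T(r,e_a)=\mu_a r/\sigma^2$ and $\lambda(\mu)^\top\phi(e_a)=\mu_a^2/(2\sigma^2)$. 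Since $\sigma^2$ is known, $h$ does not depend on $\theta$, so the form \eqref{eq:general_exponential_family} holds.

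Next, compute the statistics:
\[
S_n^{(1)}=\sum_t R_t e_{A_t}=(S_1,\ldots,S_K),\qquad S_n^{(2)}=\sum_t e_{A_t}=(N_1,\ldots,N_K).
\]
Hence $S(D_n)$ is in bijection with $\{(N_a,S_a)\}_{a=1}^K$.

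Finally, we need the assumption of the corollary that the transformer satisfies Definition~\ref{def:transformer_bayesian_posterior} with this context-independent likelihood. Under that assumption, Proposition~\ref{prop:posterior_sufficient_statistics}\,$ii)$ (via the Fisher--Neyman argument in its proof) gives that the posterior, and therefore $\pi_{\mathrm{TF}}(a\mid D_n)=\int\mathcal{K}(a\mid\mu)\,p_{\pi_0}(\mu\mid D_n)\,d\mu$, depends on $D_n$ only through $S(D_n)$, which is the claim.

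The only potential obstacle is the bookkeeping of encoding arms as features; it is handled by the one-hot choice above. Any other injective encoding could be reduced to this one by reparametrizing $\theta$.
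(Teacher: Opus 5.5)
Your proposal is correct and matches the paper's proof: one-hot features $x_a=e_a$, $T(r,x)=rx$, $\phi(x)=x$, reading off $S_n^{(1)}=(S_1,\ldots,S_K)$ and $S_n^{(2)}=(N_1,\ldots,N_K)$, and then invoking Proposition~\ref{prop:posterior_sufficient_statistics}\,$ii)$. The only difference is that you write out $h$, $\eta(\mu)=\mu/\sigma^2$ and $\lambda(\mu)$ explicitly, which the paper leaves implicit in its display of the context likelihood.
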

\begin{proof}
Up to a constant independent of $\mu$, the context likelihood is
\[
p(D_n\mid\mu)\propto
\exp\left\{\frac{1}{\sigma^2}\sum_{a=1}^K\mu_a S_a
-\frac{1}{2\sigma^2}\sum_{a=1}^K\mu_a^2N_a\right\}.
\]
This takes the exponential-family form in Proposition~\ref{prop:posterior_sufficient_statistics}, using the standard basis vector in $\mathbb{R}^K$ for $x_a$, $T(r,x)=rx$, and $\phi(x)=x$. The rest follows directly from the proposition.

\end{proof}

\begin{corollary}[Bernoulli bandits]
\label{cor:transformer_bernoulli_bandit}
Here, we have $R_t\mid A_t=a\sim\operatorname{Bernoulli}(p_a)$ with unknown success probabilities $p=(p_1,\ldots,p_K)\in(0,1)^K$, the transformer's action distribution depends on $D_n$ only through $\{(N_a,S_a)\}_{a=1}^K$.
\end{corollary}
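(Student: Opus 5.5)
The plan mirrors the proof of Corollary~\ref{cor:transformer_gaussian_bandit}: write the context likelihood under the Bernoulli model, put it in the exponential-family form \eqref{eq:general_exponential_family}, and then invoke part $ii)$ of Proposition~\ref{prop:posterior_sufficient_statistics}. As a standing assumption we take the transformer's latent reward likelihood to be context-independent and equal to the Bernoulli model. By the proof of Proposition~\ref{prop:posterior_sufficient_statistics}, the context likelihood then factorizes as $p(D_n\mid p)\propto\prod_{t=1}^n p_{A_t}^{R_t}(1-p_{A_t})^{1-R_t}$.

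First, we group the factors by arm. Using $N_a$ and $S_a$, the product becomes $\prod_{a=1}^K p_a^{S_a}(1-p_a)^{N_a-S_a}$. Since $p\in(0,1)^K$, logarithms are finite, and this equals
\[
\exp\left\{\sum_{a=1}^K \log\frac{p_a}{1-p_a}\,S_a+\sum_{a=1}^K\log(1-p_a)\,N_a\right\}.
\]

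Second, we match this to \eqref{eq:general_exponential_family} at the level of a single observation. We take $x_a=e_a\in\mathbb{R}^K$, the standard basis vector, and set $T(r,x)=rx$ and $\phi(x)=x$. We take $\eta(p)$ to be the vector of log-odds $\big(\log\tfrac{p_a}{1-p_a}\big)_{a}$, with $\lambda(p)=\big(-\log(1-p_a)\big)_a$ and $h(r,x)=1$ for $r\in\{0,1\}$. A one-line check then gives $p(r\mid e_a,p)=\exp\{r\log\tfrac{p_a}{1-p_a}+\log(1-p_a)\}$, which is the Bernoulli pmf. With these choices, $S_n^{(1)}=\sum_t R_t e_{A_t}=(S_a)_a$ and $S_n^{(2)}=\sum_t e_{A_t}=(N_a)_a$.

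Finally, part $ii)$ of Proposition~\ref{prop:posterior_sufficient_statistics} shows that the posterior, and hence $\piTF(\cdot\mid D_n)$, depends on $D_n$ only through $\{(N_a,S_a)\}_{a=1}^K$. The only point needing any care is the reparametrization to log-odds, which requires $p_a\in(0,1)$ so that $\eta$ and $\lambda$ are finite. That is exactly the hypothesis of the corollary, so I expect no real obstacle.
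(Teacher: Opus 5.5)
Your proposal is correct and takes the same route as the paper's proof. Both group the Bernoulli likelihood by arm into $\prod_a p_a^{S_a}(1-p_a)^{N_a-S_a}$, cast it in the exponential-family form with $x_a=e_a$, $T(r,x)=rx$, $\phi(x)=x$, and then invoke part $ii)$ of Proposition~\ref{prop:posterior_sufficient_statistics}; your explicit choices $\eta(p)=\big(\log\tfrac{p_a}{1-p_a}\big)_a$, $\lambda(p)=\big(-\log(1-p_a)\big)_a$, and $h=1$, which the paper leaves implicit, are correct and complete the verification.
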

\begin{proof}
Up to a constant independent of $p$, the context likelihood is
\[
p(D_n\mid p)\propto\prod_{a=1}^Kp_a^{S_a}(1-p_a)^{N_a-S_a}.
\]
With $x_a$ being the standard basis vector in $\mathbb{R}^K$, $T(r,x)=rx$, and $\phi(x)=x$, Proposition~\ref{prop:posterior_sufficient_statistics} directly implies the result.

\end{proof}

\begin{corollary}[Linear bandits]
\label{cor:transformer_linear_bandit}
Given known arm features $x_1,\ldots,x_K\in\mathbb{R}^d$ and rewards $R_t\mid A_t=a\sim\mathcal{N}(x_a^{\top}\theta,\sigma^2)$ with unknown parameter $\theta\in\mathbb{R}^d$ and known variance $\sigma^2>0$, the transformer's action distribution depends on $D_n$ only through
\begin{equation}\label{eq:linear_sufficient_statistics}
V=\sum_{t=1}^n x_{A_t}x_{A_t}^{\top}=\sum_{a=1}^K N_ax_ax_a^{\top},
\qquad
b=\sum_{t=1}^n x_{A_t}R_t=\sum_{a=1}^K x_aS_a.
\end{equation}
\end{corollary}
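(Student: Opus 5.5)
The plan is to mirror the proofs of Corollaries~\ref{cor:transformer_gaussian_bandit} and~\ref{cor:transformer_bernoulli_bandit}. I will write the per-observation Gaussian likelihood in the exponential-family form~\eqref{eq:general_exponential_family} with suitable $T$, $\phi$, $\eta$, $\lambda$, and then apply part $ii)$ of Proposition~\ref{prop:posterior_sufficient_statistics}. Finally, I will check that the resulting $S(D_n)=(S_n^{(1)},S_n^{(2)})$ carries exactly the same information as $(V,b)$.

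First, I expand the Gaussian density for a single observation $(A_t,R_t)$ with $x=x_{A_t}$:
\[
p(r\mid x,\theta)=\tfrac{1}{\sqrt{2\pi\sigma^2}}e^{-r^2/(2\sigma^2)}\exp\Bigl\{\tfrac{1}{\sigma^2}\theta^\top(rx)-\tfrac{1}{2\sigma^2}\theta^\top xx^\top\theta\Bigr\}.
\]
I take $h(r,x)=(2\pi\sigma^2)^{-1/2}e^{-r^2/(2\sigma^2)}$, which is free of $\theta$, together with $T(r,x)=rx\in\mathbb{R}^d$ and $\eta(\theta)=\theta/\sigma^2$. The quadratic term is the only step needing care, because it is not linear in $x$. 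I will write $\theta^\top xx^\top\theta=\langle \theta\theta^\top, xx^\top\rangle_F$ and then vectorize. This gives $\phi(x)=\mathrm{vec}(xx^\top)\in\mathbb{R}^{d^2}$ and $\lambda(\theta)=\mathrm{vec}(\theta\theta^\top)/(2\sigma^2)$. Both are fixed finite-dimensional maps, and the first depends only on the feature $x$, as~\eqref{eq:general_exponential_family} requires.

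Summing over $t$ gives $S_n^{(1)}=\sum_t R_t x_{A_t}=b$ and $S_n^{(2)}=\sum_t \mathrm{vec}(x_{A_t}x_{A_t}^\top)=\mathrm{vec}(V)$. So $S(D_n)$ is in bijection with $(V,b)$. Grouping the sum by arms gives the second equalities in~\eqref{eq:linear_sufficient_statistics}: $V=\sum_a N_a x_ax_a^\top$ and $b=\sum_a x_aS_a$.

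Under the standing assumption that the transformer's latent reward likelihood is this context-independent Gaussian model, part $ii)$ of Proposition~\ref{prop:posterior_sufficient_statistics} then says $\pi_{\mathrm{TF}}(a\mid D_n)$ depends on $D_n$ only through $(V,b)$. I expect no real obstacle. The one point to state explicitly is the vectorization trick, which puts the matrix-valued quadratic term into the $\lambda(\theta)^\top\phi(x)$ form.
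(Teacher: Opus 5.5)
Your proposal is correct and matches the paper's proof essentially step for step. The paper writes the context likelihood as $\exp\{\theta^{\top}b/\sigma^2-\theta^{\top}V\theta/(2\sigma^2)\}$ and applies part $ii)$ of Proposition~\ref{prop:posterior_sufficient_statistics} with exactly your choices $T(r,x)=rx$, $\eta(\theta)=\theta/\sigma^2$, $\phi(x)=\operatorname{vec}(xx^{\top})$, and $\lambda(\theta)=\operatorname{vec}(\theta\theta^{\top})/(2\sigma^2)$; your explicit $h(r,x)$ and the Frobenius-inner-product justification of the vectorization spell out details the paper leaves implicit.
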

\begin{proof}
Up to a constant independent of $\theta$, the context likelihood is
\[
p(D_n\mid\theta)\propto
\exp\left\{\frac{\theta^{\top}b}{\sigma^2}
-\frac{\theta^{\top}V\theta}{2\sigma^2}\right\}.
\]
Apply Proposition~\ref{prop:posterior_sufficient_statistics} with $T(r,x)=rx$, $\eta(\theta)=\theta/\sigma^2$, $\phi(x)=\operatorname{vec}(xx^{\top})$, and $\lambda(\theta)=\operatorname{vec}(\theta\theta^{\top})/(2\sigma^2)$. The result follows directly from the proposition.

\end{proof}
\clearpage

\section{Experimental details}
\label{sec:experimental_details}

\subsection{Paired Hotelling Test}\label{sec:hotelling}
Suppose we observe $n$ i.i.d. pairs of random variables $(X_i,X_i')$, where each vector has $K\ge 2$ nonnegative components summing to $1$. Define the paired difference
$$D_i=X_i-X_i',\quad\delta=\E[D_i].$$
The hypotheses are 
$$H_0:\delta=0\quad v.s. \quad H_1:\delta\ne 0.$$
Since $\sum_{j=1}^K D_i^j=0$, the covariance matrix of the full $K$-dimensional differences is singular. Therefore, we drop any one coordinate and set $p=K-1$ and $Z_i=(D^1_i,...,D_i^{K-1})$. Now, $\E[Z_i]=0$ is equivalent to $H_0$ from above. Compute
\begin{align*}
    \bar{Z}&=\frac{1}{n}\sum_{i=1}^nZ_i,\\
    S&=\frac{1}{n-1}\sum_{i=1}^n(Z_i-\bar{Z})(Z_i-\bar{Z})^\top,\\
    T^2&=n\bar{Z}^\top S^{-1}\bar{Z}.
\end{align*}
Since $S$ is consistent, the central limit theorem gives
$$T^2\xrightarrow{\;\; d\;\;}\chi_p^2.$$
Thus an asymptotic level-$\alpha$ test rejects when
$$T^2>\chi_{p,1-\alpha}^2,\quad p=K-1.$$
In our setting, the observed pair $(X,X')$ is simply $\big(\piTF(\cdot\mid D),\piTF(\cdot\mid \sigma(D))\big)$.
\clearpage

\subsection{Experimental setup and hyperparameters}
\label{sec:experimental_hyperparameters}
Table~\ref{tab:experimental_hyperparameters} summarizes the configurations for DPT, DPT$'$, and GDPT. Unless a model-specific setting is indicated, the three models share the listed values.

\begin{table}[H]
\centering
\small
\setlength{\tabcolsep}{3pt}
\renewcommand{\arraystretch}{1.06}
\caption{Environment, model, and statistical test hyperparameters for DPT, DPT$'$, and GDPT.}
\label{tab:experimental_hyperparameters}
\begin{tabular*}{\linewidth}{@{\extracolsep{\fill}}lccc@{}}
\toprule
Parameter & Gaussian & Bernoulli & Linear \\
\midrule
\multicolumn{4}{@{}l}{\textbf{Environment and data}} \\
Number of arms $K$ & $5$ & $5$ & $10$ \\
Feature dimension $d$ & --- & --- & $2$ \\
Environment prior & $\mu_a\sim U[0,1]$ & $p_a\sim\mathrm{Beta}(1,1)$ & $\theta\sim\mathcal{N}(0,I_2/2)$ \\
Reward distribution & $\mathcal{N}(\mu_a,0.09)$ & $\mathrm{Bernoulli}(p_a)$ & $\mathcal{N}(x_a^\top\theta,0.09)$ \\
Context horizon $H$ & $500$ & $500$ & $200$ \\
Train / validation environments & $80{,}000/20{,}000$ & $80{,}000/20{,}000$ & $800{,}000/200{,}000$ \\
DPT, DPT$'$ context policy & \multicolumn{2}{c}{Dirichlet-mixture random policy} & Linear TS \\
GDPT context policy & Gaussian TS & Beta--Bernoulli TS & Linear TS \\
\midrule
\multicolumn{4}{@{}l}{\textbf{Model and optimization}} \\
Backbone & \multicolumn{3}{c}{GPT-2 with learned positions} \\
Layers / heads / embedding size & \multicolumn{3}{c}{$4\;/\;4\;/\;32$} \\
Dropout & \multicolumn{3}{c}{$0$ (embedding, attention, and residual)} \\
Optimizer & \multicolumn{3}{c}{AdamW} \\
Learning rate / weight decay & \multicolumn{3}{c}{$10^{-4}\;/\;10^{-4}$} \\
Training batch size & \multicolumn{3}{c}{$64$} \\
Reward-loss weight & \multicolumn{3}{c}{$1$ for DPT$'$ and GDPT \& no reward loss for DPT} \\
\midrule
\multicolumn{4}{@{}l}{\textbf{Evaluation and statistical tests}} \\
Performance environments & \multicolumn{3}{c}{$200$ fresh environments} \\
Sufficient-statistic test size $N$ & \multicolumn{3}{c}{$5000$ independent environments and contexts} \\
Sufficient-statistic lengths $n$ & \multicolumn{2}{c}{$\{50,100,200,300,400,500\}$} & $\{25,50,100,150,200\}$ \\
Context transformations & Sort \& Replace & Sort & Sort \& Replace \\
Replacement noise $U_t$ & $U[-1,1]$ & --- & $U[-1,1]$ \\
Martingale continuations $N'$ & \multicolumn{3}{c}{$5000$ per fixed context (DPT$'$, GDPT)} \\
Martingale starting lengths $n$ & \multicolumn{2}{c}{$\{50,100,200,300,400\}$} & $\{25,50,100,150\}$ \\
Test level / calibration & \multicolumn{3}{c}{$\alpha=0.05$ \& asymptotic paired Hotelling $T^2$} \\
\bottomrule
\end{tabular*}
\end{table}

\textbf{Architecture ablations.}
The DPT and DPT$'$ ablations keep the data and optimization settings above, remove positional encoding, replace causal attention with bidirectional attention, and at inference canonically order the context tokens and mean-pool their hidden states (excluding the query token), which makes predictions permutation invariant. Exact equality under sorting is reported as ``Equal'' without a $p$-value. GDPT is not ablated since its next-pair objective requires causal attention.

\clearpage

\section{Additional tables and plots}
\label{sec:additional_tables_and_plots}

\subsection{DPT figures}

\subsubsection{Gaussian evaluation figures}
\begin{figure}[H]
\centering
\begin{subfigure}{0.245\textwidth}
\includegraphics[width=\textwidth]{figures/gaussian_DPT/cumulative_regret_online.pdf}
\caption{Online regret}
\end{subfigure}
\hfill
\begin{subfigure}{0.245\textwidth}
\includegraphics[width=\textwidth]{figures/gaussian_DPT/tv_distance_online.pdf}
\caption{Online TV distance}
\end{subfigure}
\hfill
\begin{subfigure}{0.245\textwidth}
\includegraphics[width=\textwidth]{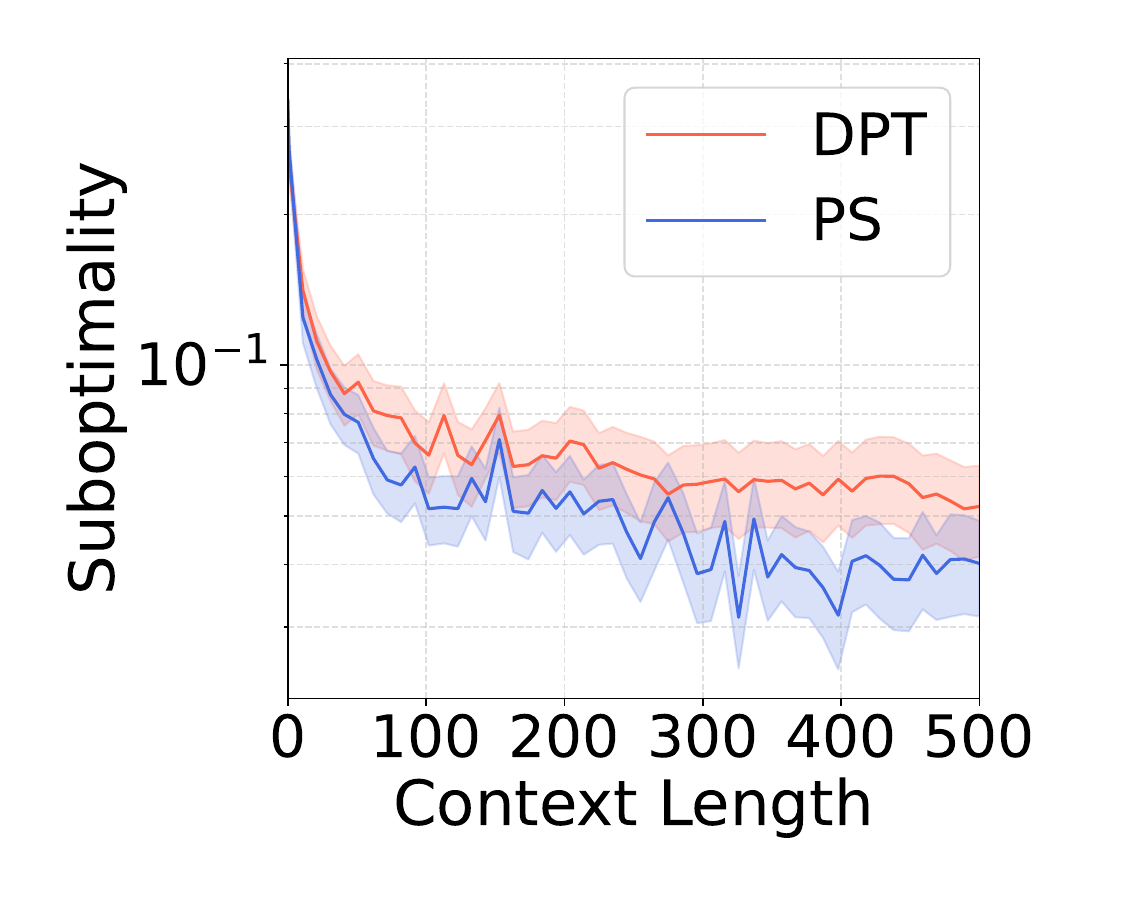}
\caption{Offline suboptimality}
\end{subfigure}
\hfill
\begin{subfigure}{0.245\textwidth}
\includegraphics[width=\textwidth]{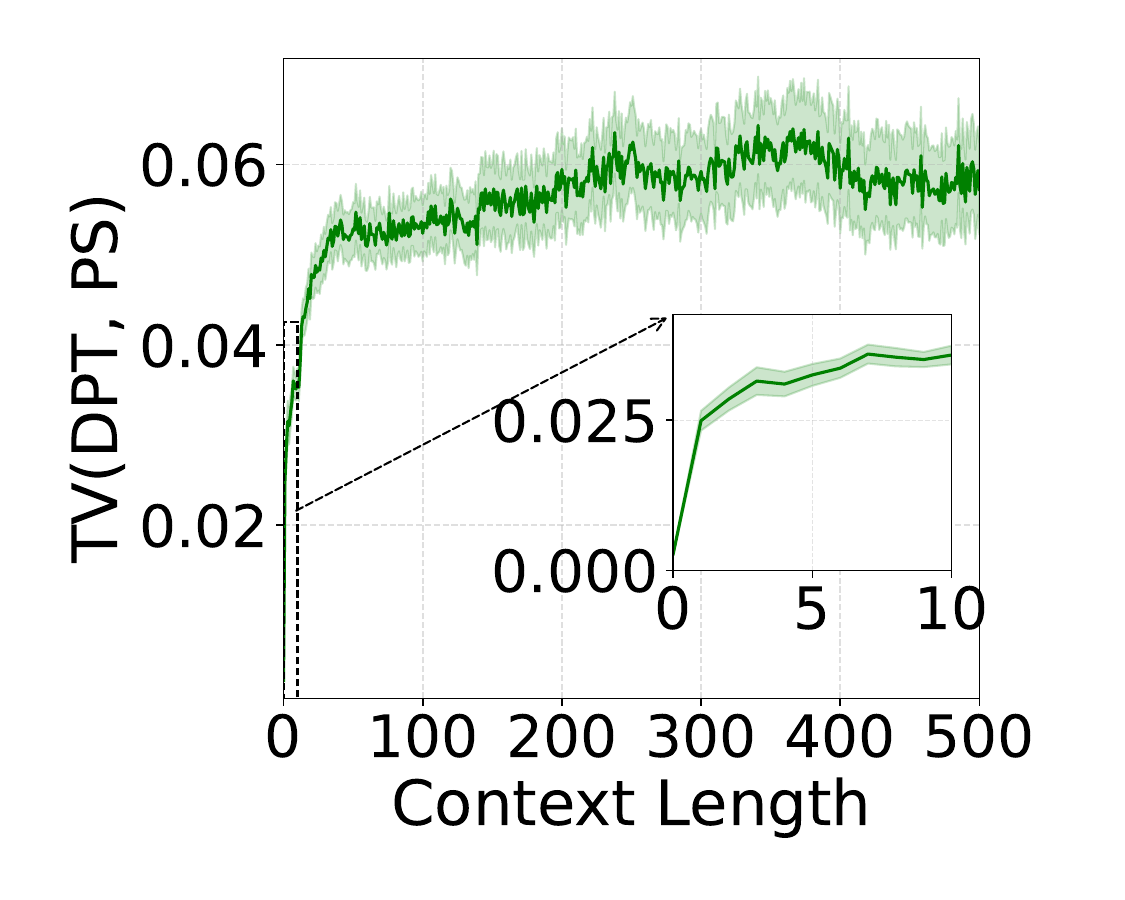}
\caption{Offline TV distance}
\end{subfigure}
\caption{Performance comparison of DPT and Thompson Sampling in Gaussian bandits under online and offline deployment.}
\label{fig_evaluation_gaussian_bandit}
\end{figure}

\subsubsection{Bernoulli evaluation figures}
\begin{figure}[H]
\centering
\begin{subfigure}{0.245\textwidth}
\includegraphics[width=\textwidth]{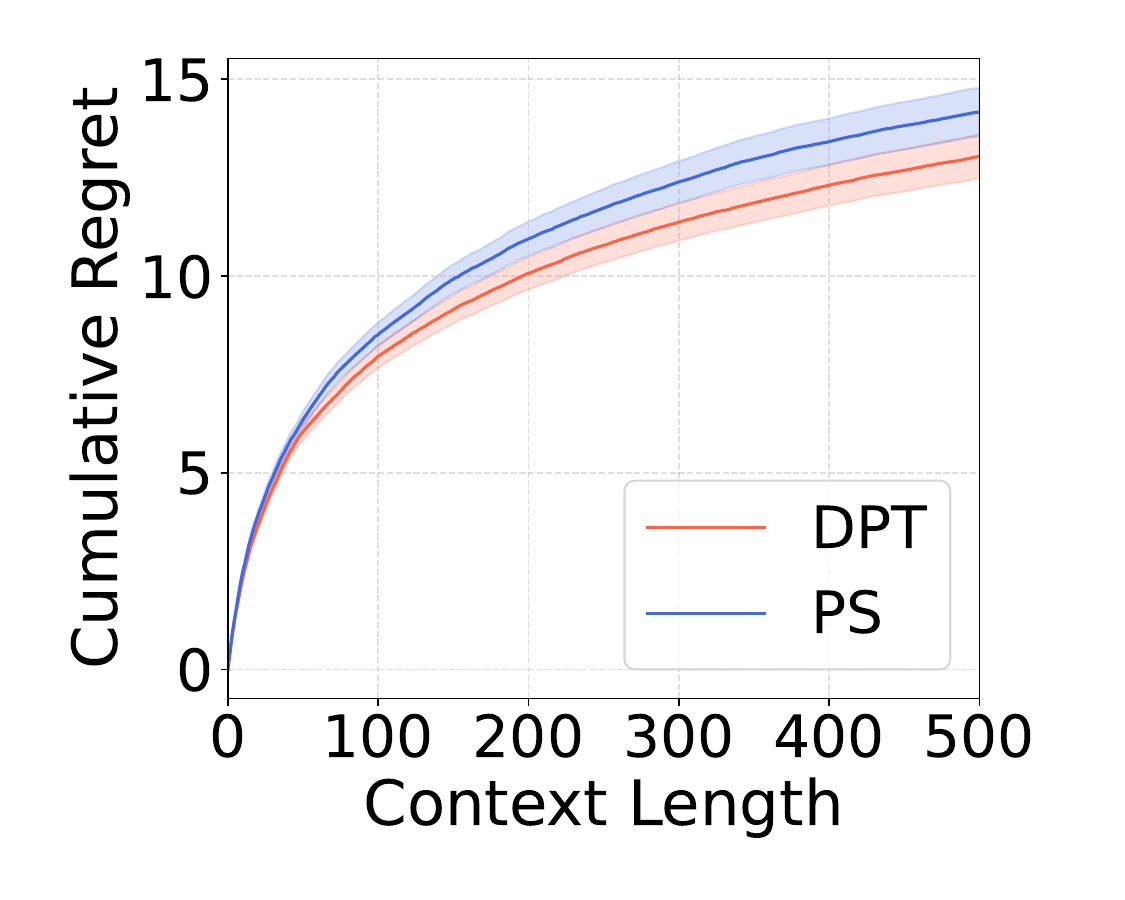}
\caption{Online regret}
\end{subfigure}
\hfill
\begin{subfigure}{0.245\textwidth}
\includegraphics[width=\textwidth]{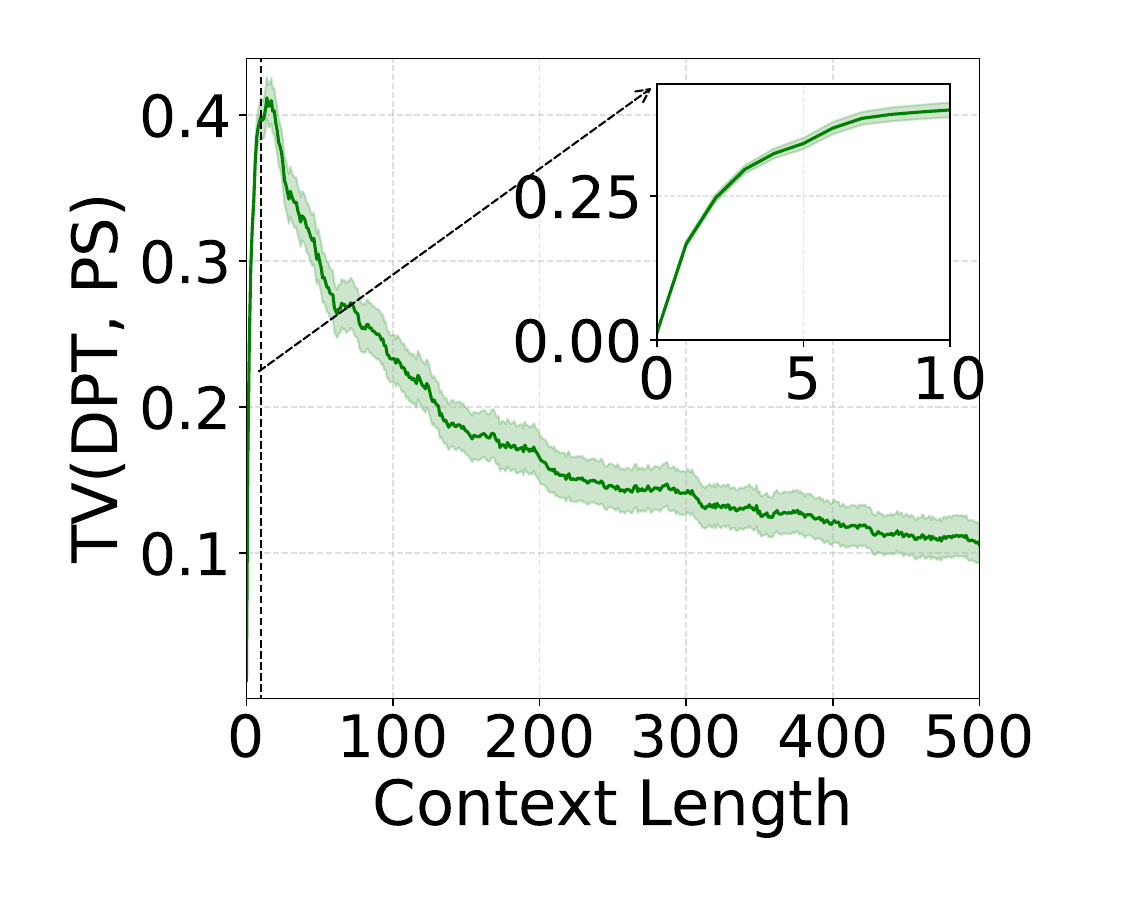}
\caption{Online TV distance}
\end{subfigure}
\hfill
\begin{subfigure}{0.245\textwidth}
\includegraphics[width=\textwidth]{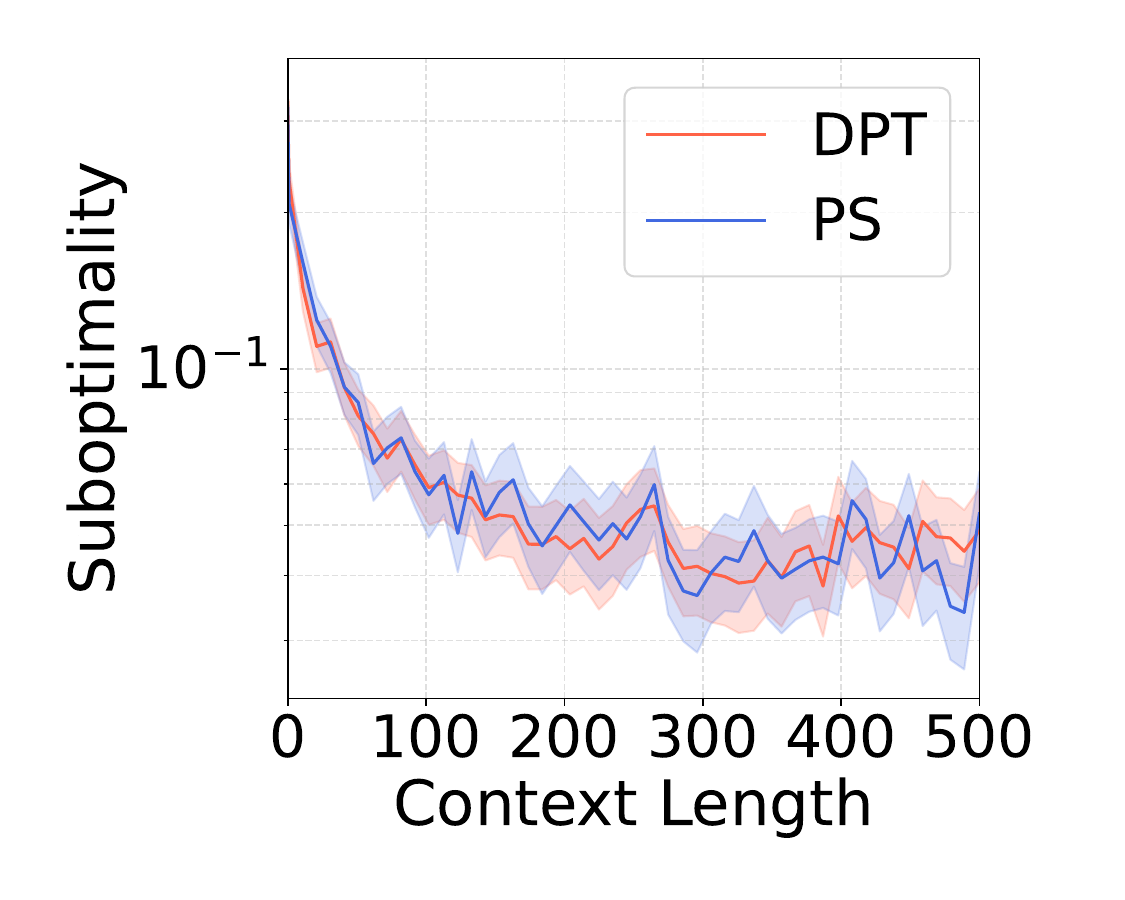}
\caption{Offline suboptimality}
\end{subfigure}
\hfill
\begin{subfigure}{0.245\textwidth}
\includegraphics[width=\textwidth]{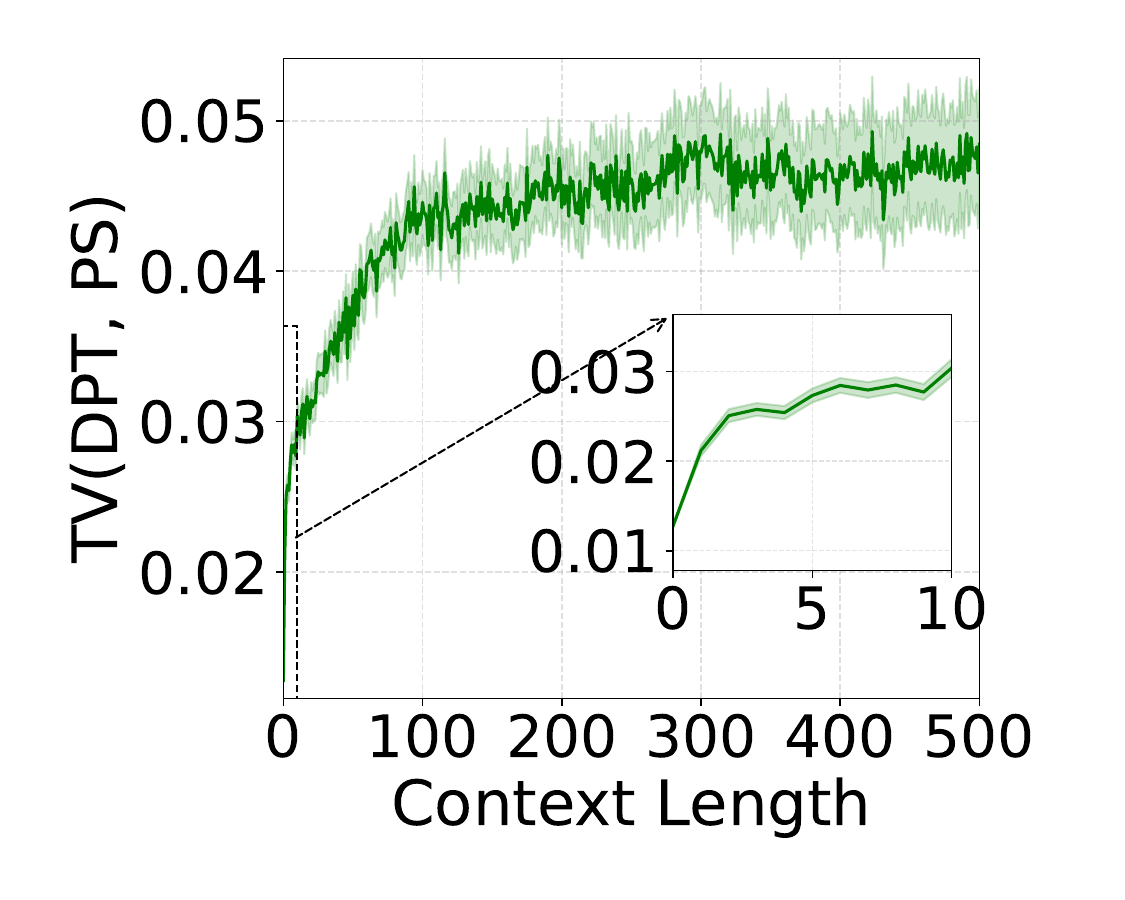}
\caption{Offline TV distance}
\end{subfigure}
\caption{Performance comparison of DPT and Thompson Sampling in Bernoulli bandits under online and offline deployment.}
\label{fig_evaluation_bernoulli_bandit}
\end{figure}

\subsubsection{Linear evaluation figures}
\begin{figure}[H]
\centering
\begin{subfigure}{0.245\textwidth}
\includegraphics[width=\textwidth]{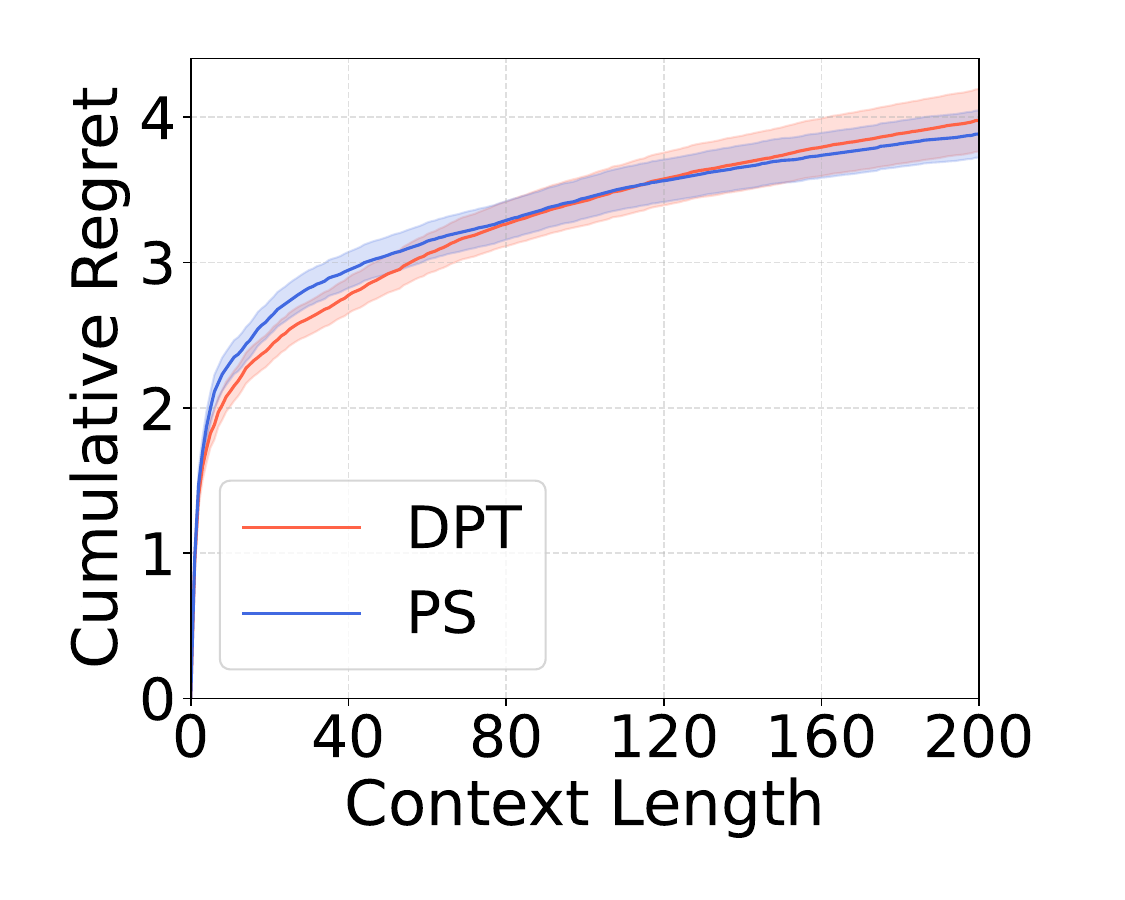}
\caption{Online regret}
\end{subfigure}
\hfill
\begin{subfigure}{0.245\textwidth}
\includegraphics[width=\textwidth]{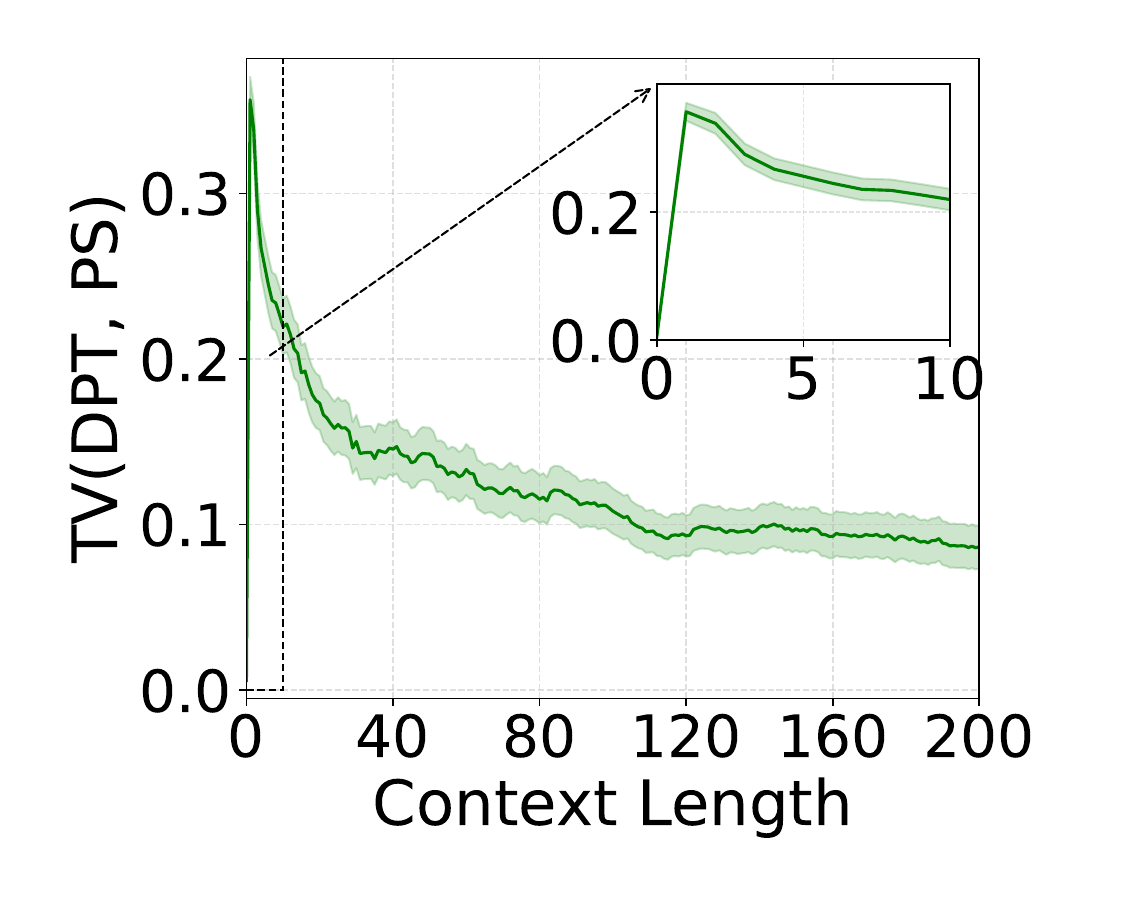}
\caption{Online TV distance}
\end{subfigure}
\hfill
\begin{subfigure}{0.245\textwidth}
\includegraphics[width=\textwidth]{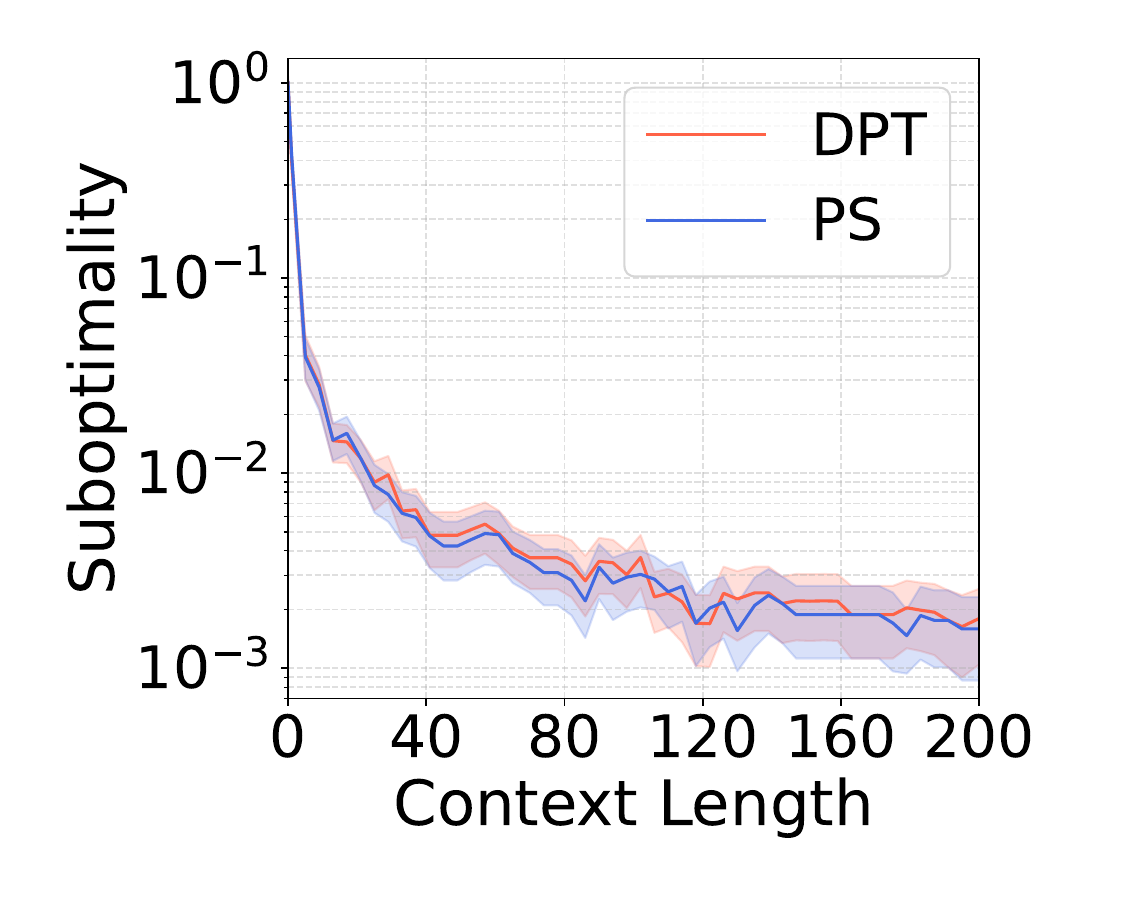}
\caption{Offline suboptimality}
\end{subfigure}
\hfill
\begin{subfigure}{0.245\textwidth}
\includegraphics[width=\textwidth]{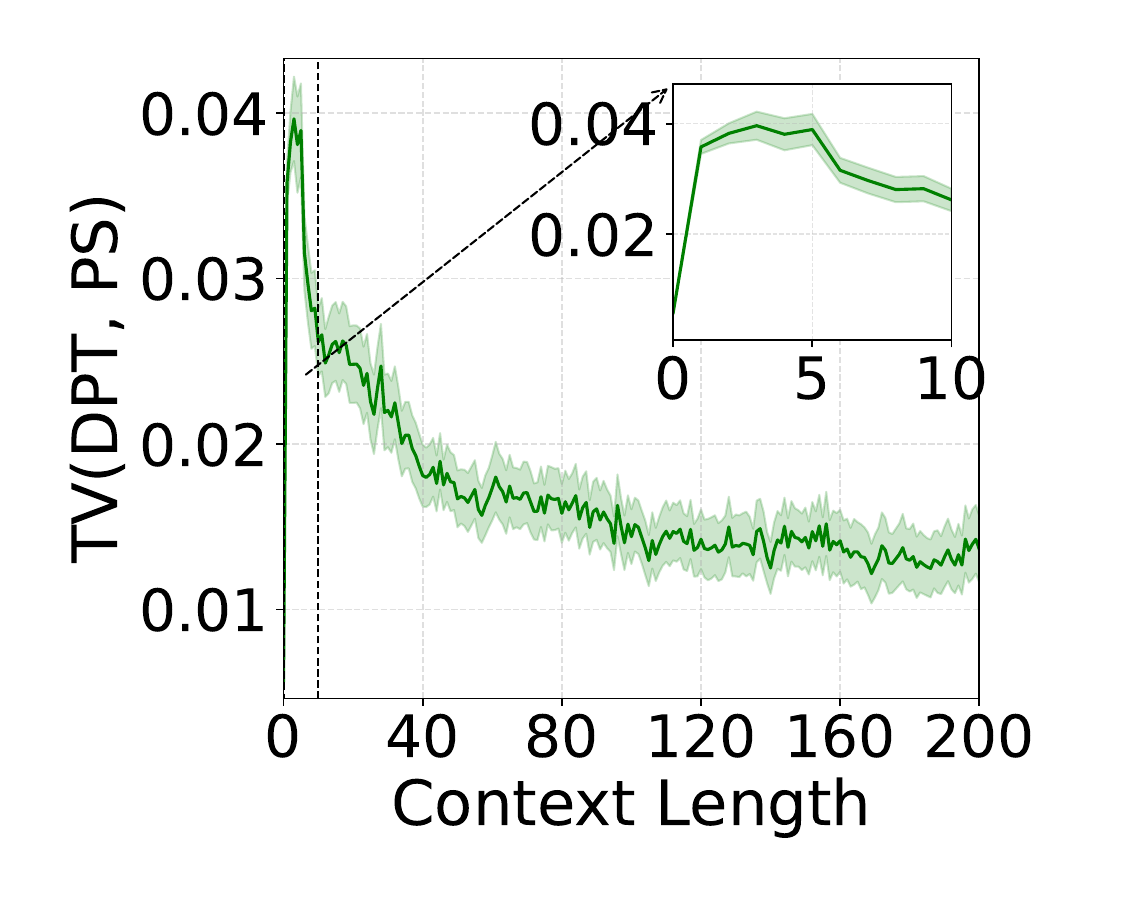}
\caption{Offline TV distance}
\end{subfigure}
\caption{Performance comparison of DPT and Thompson Sampling in linear bandits under online and offline deployment.}
\label{fig_evaluation_linear_bandit}
\end{figure}

\subsection{DPT\texorpdfstring{$'$}{'} sufficient-statistics table and evaluation figures}
\label{sec:dpt_prime_tables_and_plots}

\subsubsection{Sufficient-statistics table}
\begin{table}[H]
\centering
\small
\setlength{\tabcolsep}{6pt}
\renewcommand{\arraystretch}{1.08}
\caption{$p$-values for sufficient-statistic invariance tests on DPT$'$ in Gaussian, Bernoulli, and linear bandits ($N=5000$, $\alpha=0.05$). Here, $n$ denotes the context length.}
\label{tab:ss_summary_dpt_prime}

\sbox{\banditupperbox}{%
\begin{tabular}{@{}r rrrr rr@{}}
\toprule
& \multicolumn{4}{c}{Gaussian}
& \multicolumn{2}{c}{Bernoulli} \\
\cmidrule(lr){2-5}\cmidrule(lr){6-7}
& \multicolumn{2}{c}{Offline}
& \multicolumn{2}{c}{Online}
& \multicolumn{1}{c}{Offline}
& \multicolumn{1}{c}{Online} \\
$n$
& \multicolumn{1}{c}{Sort}
& \multicolumn{1}{c}{Replace}
& \multicolumn{1}{c}{Sort}
& \multicolumn{1}{c}{Replace}
& \multicolumn{1}{c}{Sort}
& \multicolumn{1}{c}{Sort} \\
\midrule
$50$  & $2.31\times 10^{-105}$ & $9.28\times 10^{-19}$ & $<10^{-300}$ & $6.36\times 10^{-205}$ & $1.67\times 10^{-196}$ & $2.17\times 10^{-215}$ \\
$100$ & $3.09\times 10^{-194}$ & $6.26\times 10^{-60}$ & $<10^{-300}$ & $<10^{-300}$ & $5.66\times 10^{-160}$ & $4.75\times 10^{-237}$ \\
$200$ & $7.78\times 10^{-157}$ & $1.04\times 10^{-126}$ & $3.59\times 10^{-265}$ & $1.35\times 10^{-285}$ & $1.08\times 10^{-125}$ & $8.26\times 10^{-250}$ \\
$300$ & $6.67\times 10^{-137}$ & $1.42\times 10^{-156}$ & $2.58\times 10^{-226}$ & $3.05\times 10^{-239}$ & $2.00\times 10^{-120}$ & $5.46\times 10^{-229}$ \\
$400$ & $8.63\times 10^{-133}$ & $1.37\times 10^{-188}$ & $1.88\times 10^{-205}$ & $1.02\times 10^{-193}$ & $3.39\times 10^{-112}$ & $2.36\times 10^{-226}$ \\
$500$ & $7.34\times 10^{-139}$ & $9.23\times 10^{-216}$ & $2.56\times 10^{-222}$ & $3.69\times 10^{-189}$ & $2.08\times 10^{-111}$ & $3.16\times 10^{-216}$ \\
\midrule
\end{tabular}}

\resizebox{\linewidth}{!}{%
\begin{minipage}{\wd\banditupperbox}
\noindent\usebox{\banditupperbox}\par\nointerlineskip
\noindent
\begin{tabular*}{\linewidth}{@{\extracolsep{\fill}}r rrrr@{}}
& \multicolumn{4}{c}{Linear} \\
\cmidrule(lr){2-5}
& \multicolumn{2}{c}{Offline}
& \multicolumn{2}{c}{Online} \\
$n$
& \multicolumn{1}{c}{Sort}
& \multicolumn{1}{c}{Replace}
& \multicolumn{1}{c}{Sort}
& \multicolumn{1}{c}{Replace} \\
\midrule
$25$  & $1.96\times 10^{-102}$ & $5.71\times 10^{-28}$ & $1.15\times 10^{-86}$ & $4.19\times 10^{-29}$ \\
$50$  & $6.30\times 10^{-96}$  & $2.33\times 10^{-27}$ & $7.27\times 10^{-93}$ & $2.22\times 10^{-28}$ \\
$100$ & $9.44\times 10^{-108}$ & $9.99\times 10^{-18}$ & $3.07\times 10^{-99}$ & $1.10\times 10^{-21}$ \\
$150$ & $1.78\times 10^{-93}$  & $3.22\times 10^{-16}$ & $8.76\times 10^{-84}$ & $3.62\times 10^{-14}$ \\
$200$ & $2.68\times 10^{-83}$  & $1.17\times 10^{-11}$ & $3.13\times 10^{-79}$ & $2.73\times 10^{-11}$ \\
\bottomrule
\end{tabular*}
\end{minipage}%
}
\end{table}

\subsubsection{Gaussian evaluation figures and martingale plots}
\begin{figure}[H]
\centering
\begin{subfigure}{0.245\textwidth}
\includegraphics[width=\textwidth]{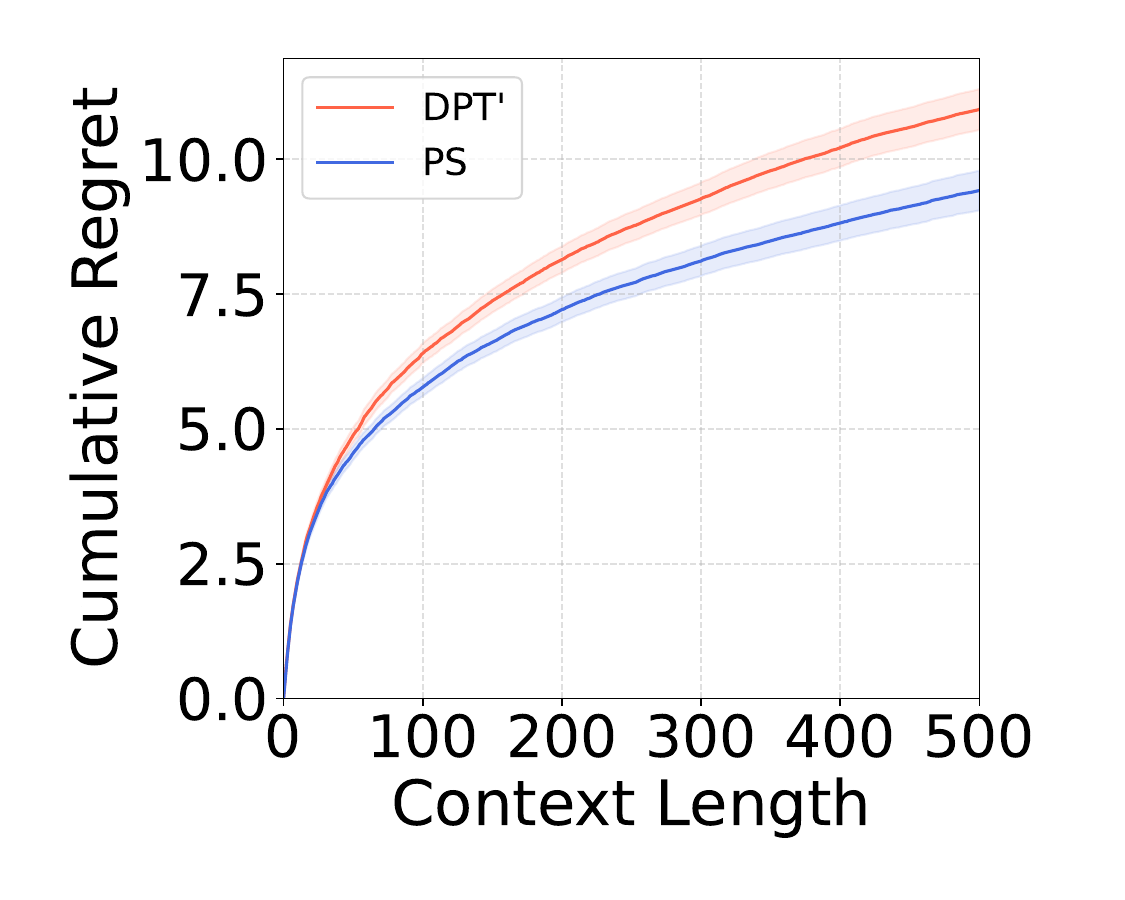}
\caption{Online regret}
\end{subfigure}
\hfill
\begin{subfigure}{0.245\textwidth}
\includegraphics[width=\textwidth]{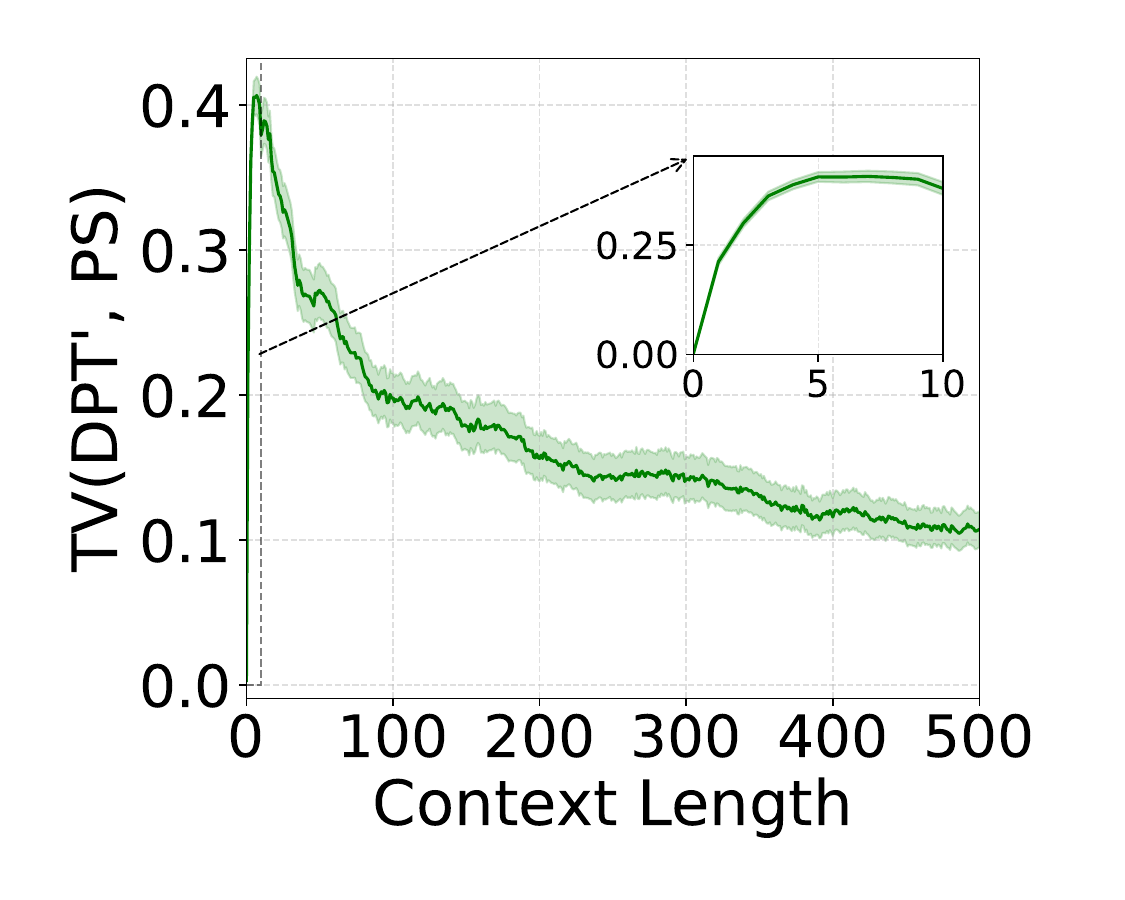}
\caption{Online TV distance}
\end{subfigure}
\hfill
\begin{subfigure}{0.245\textwidth}
\includegraphics[width=\textwidth]{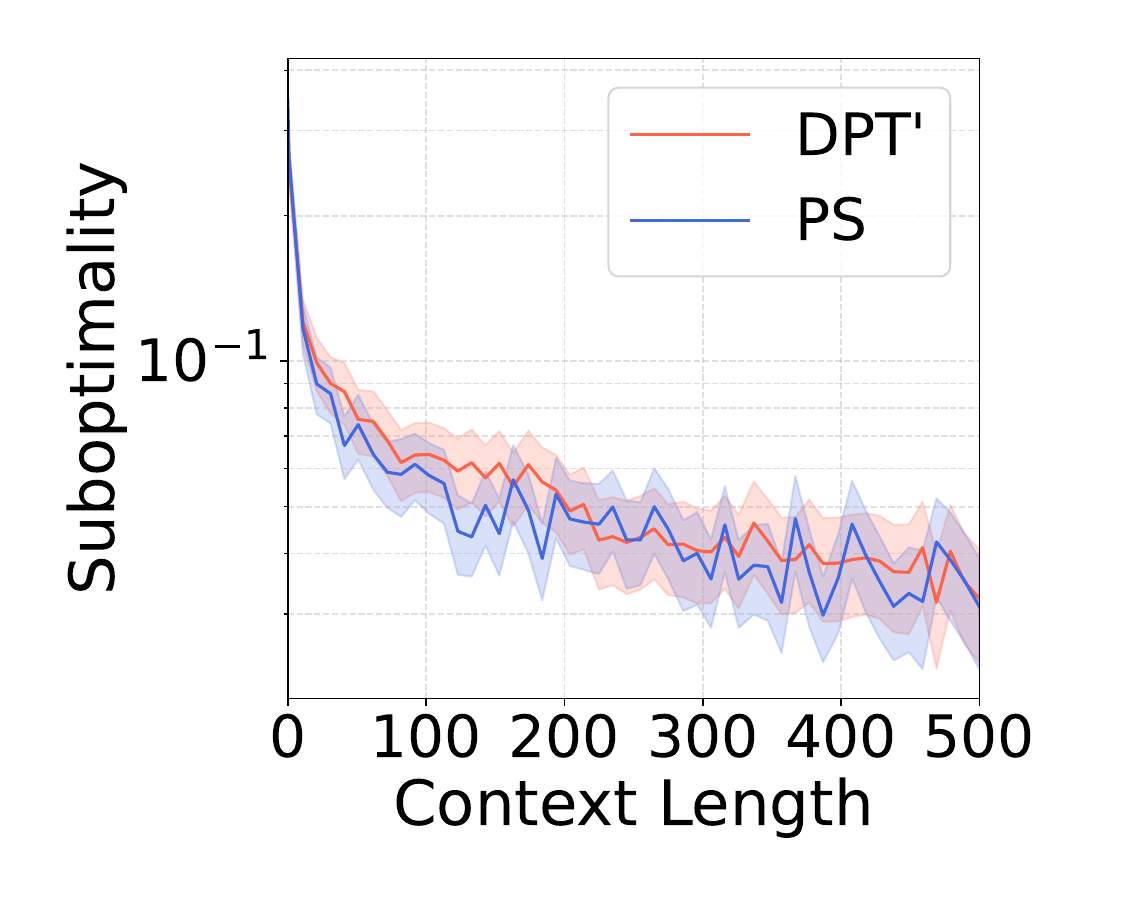}
\caption{Offline suboptimality}
\end{subfigure}
\hfill
\begin{subfigure}{0.245\textwidth}
\includegraphics[width=\textwidth]{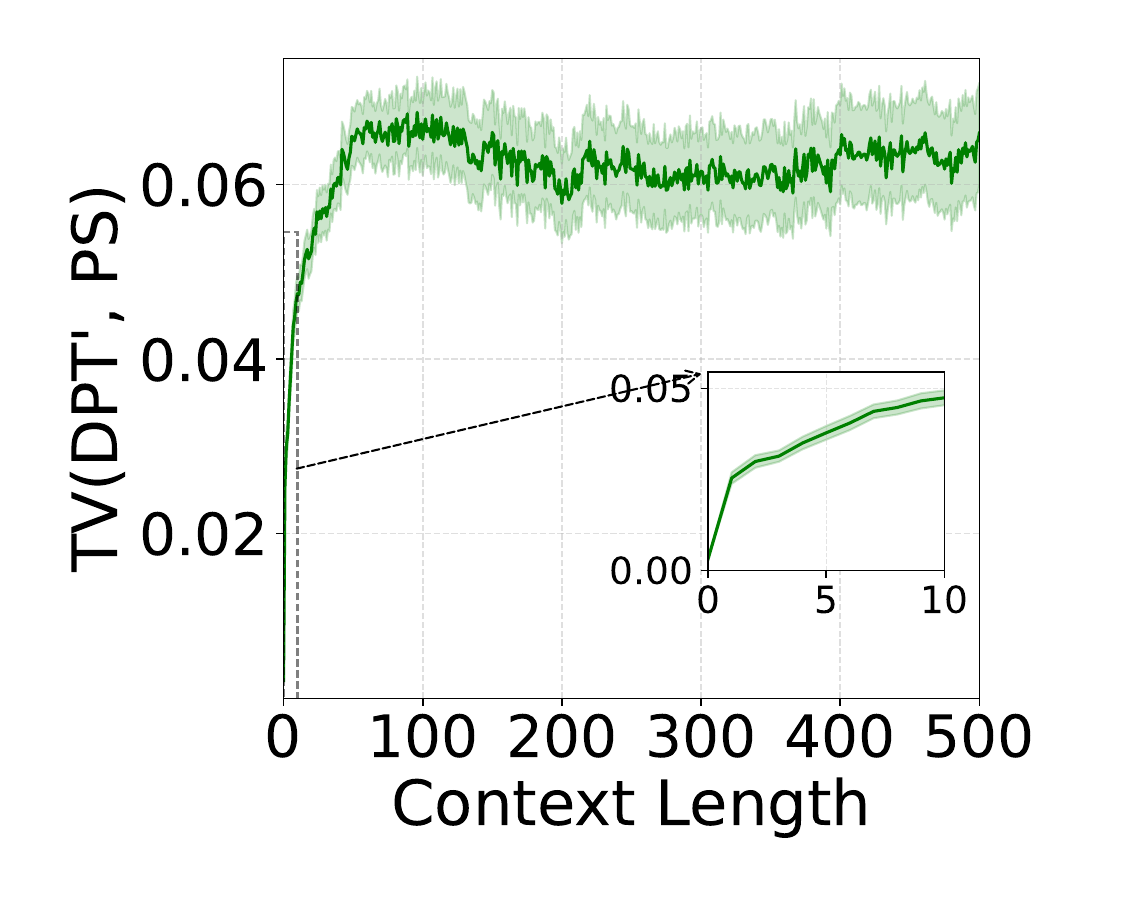}
\caption{Offline TV distance}
\end{subfigure}
\caption{Performance comparison of DPT$'$ and Thompson Sampling in Gaussian bandits under online and offline deployment.}
\label{fig_evaluation_gaussian_bandit_DPT_prime}
\end{figure}

\begin{figure}[H]
    \centering
    \begin{subfigure}[b]{0.32\textwidth}
        \centering
        \includegraphics[width=\textwidth]{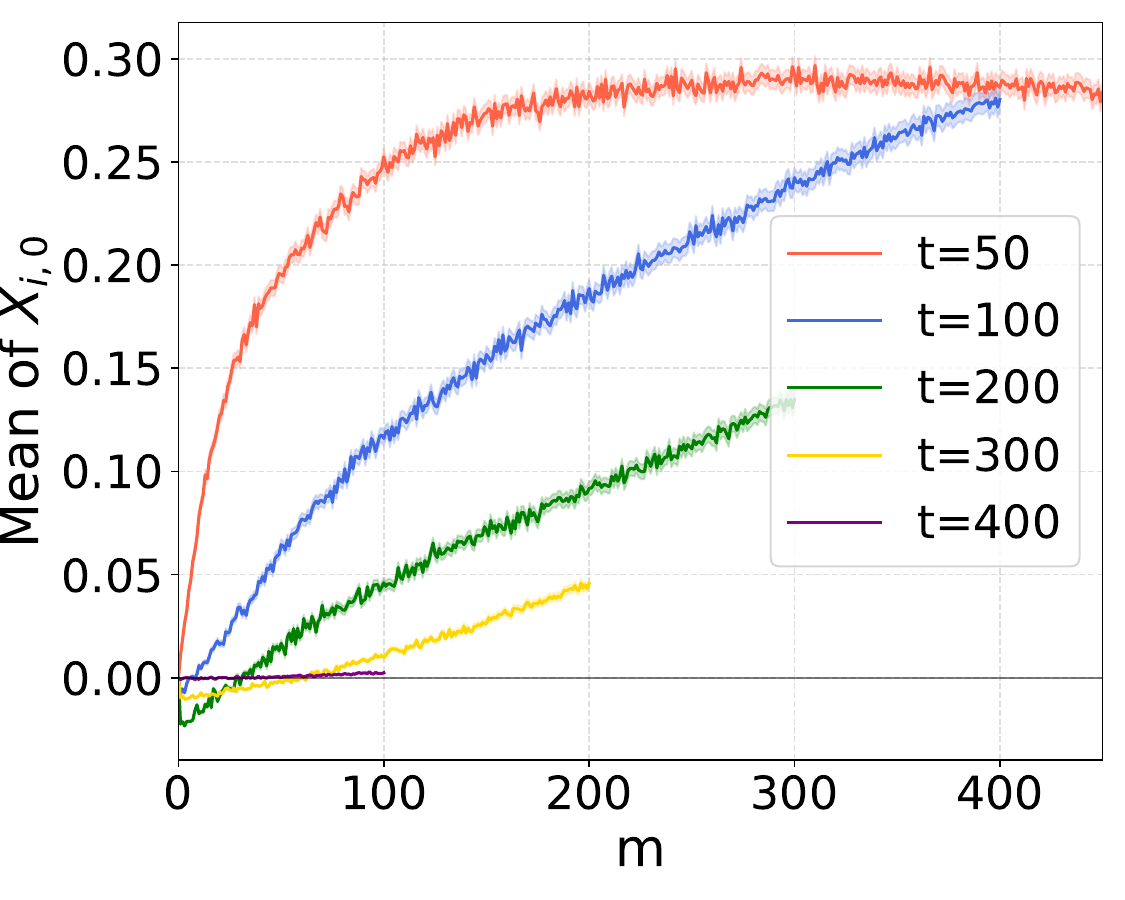}
        \caption{Arm 0}
    \end{subfigure}
    \begin{subfigure}[b]{0.32\textwidth}
        \centering
        \includegraphics[width=\textwidth]{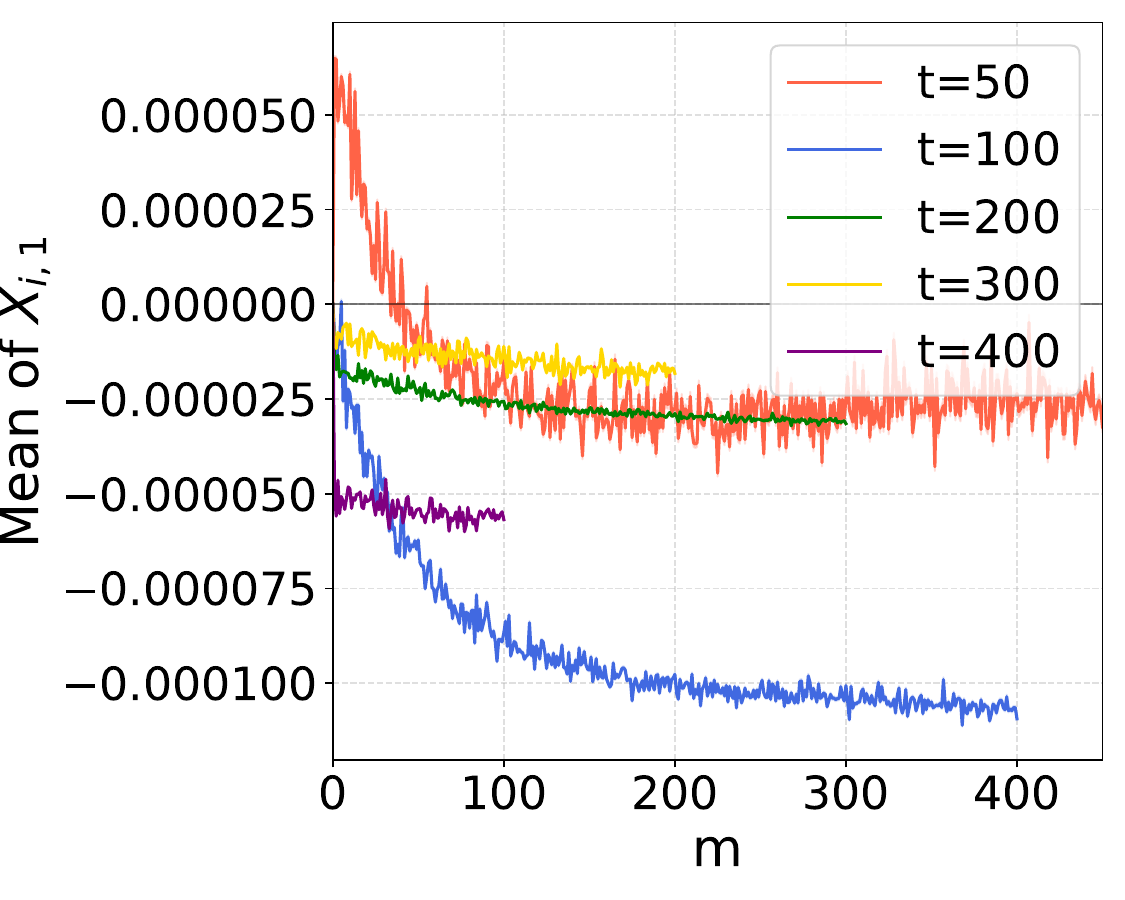}
        \caption{Arm 1}
    \end{subfigure}
    \begin{subfigure}[b]{0.32\textwidth}
        \centering
        \includegraphics[width=\textwidth]{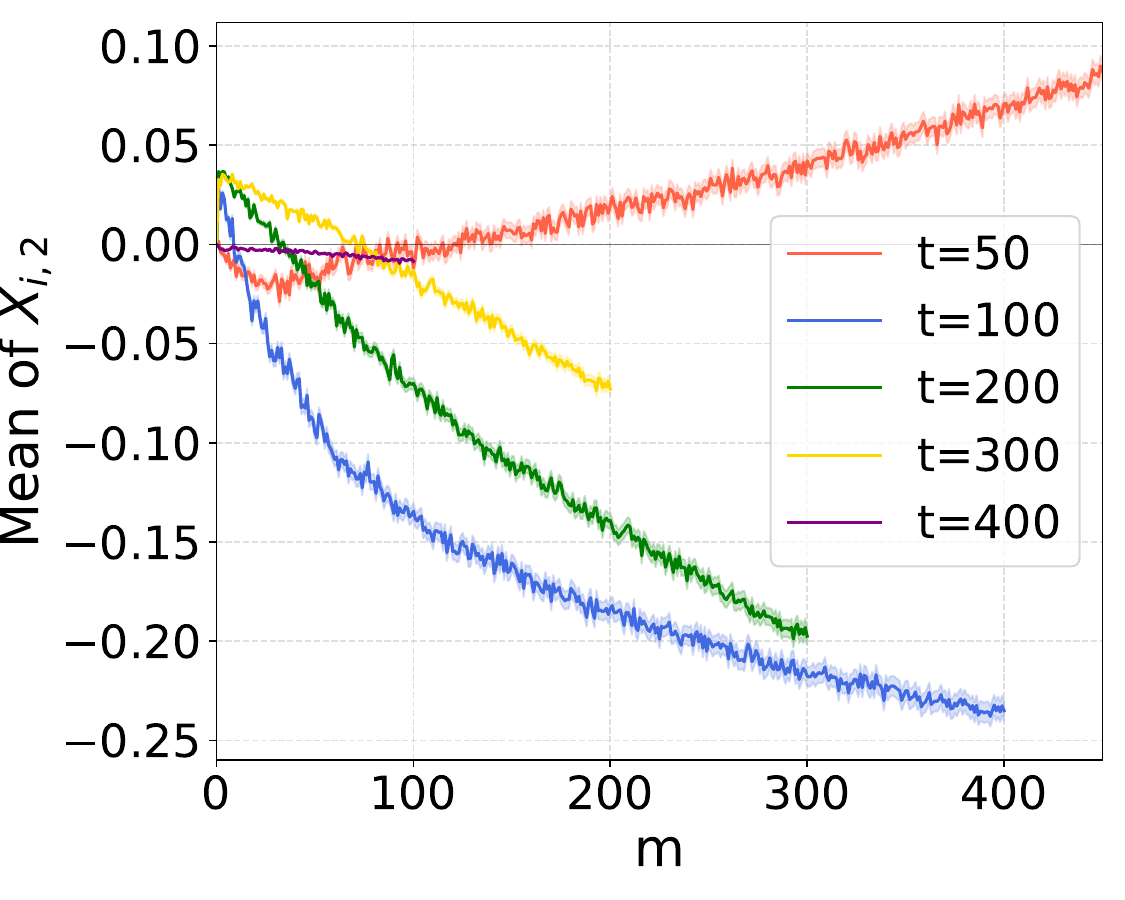}
        \caption{Arm 2}
    \end{subfigure}
    \\
    \begin{subfigure}[b]{0.32\textwidth}
        \centering
        \includegraphics[width=\textwidth]{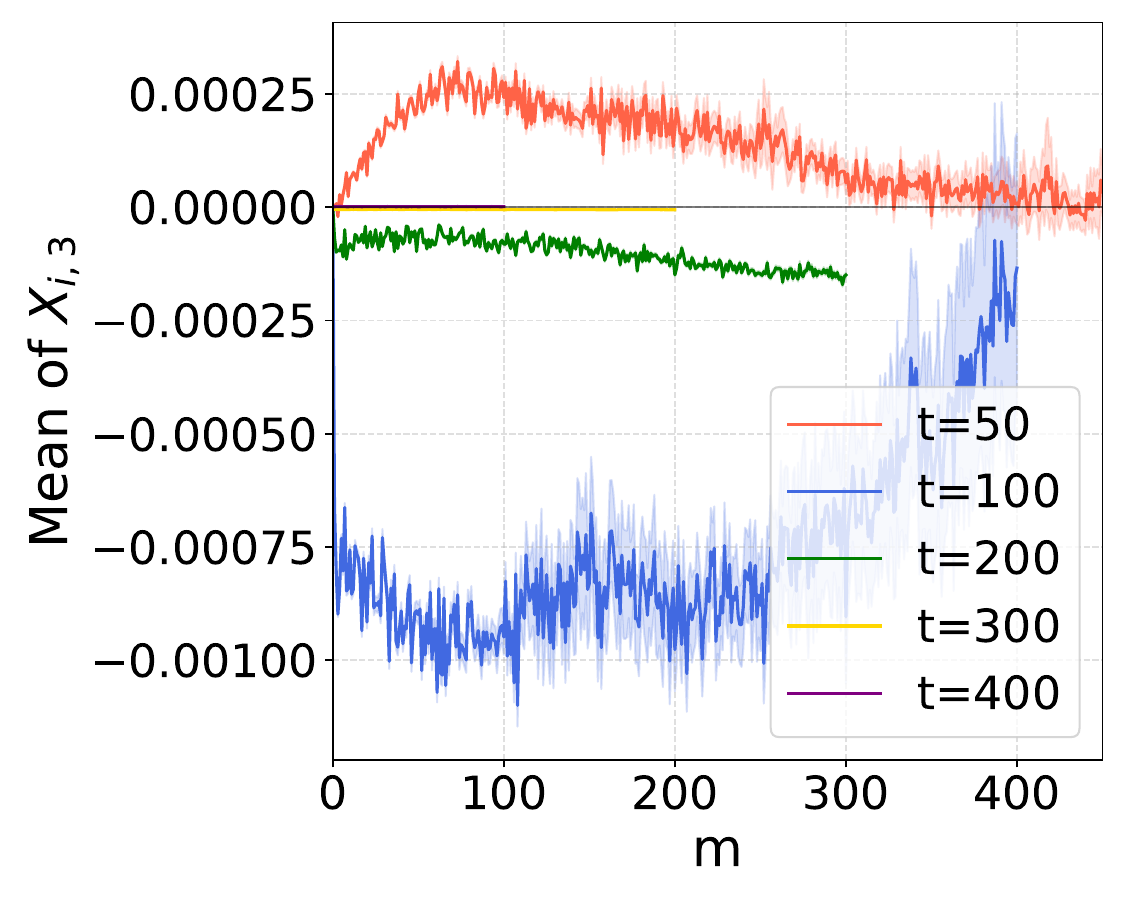}
        \caption{Arm 3}
    \end{subfigure}
    \begin{subfigure}[b]{0.32\textwidth}
        \centering
        \includegraphics[width=\textwidth]{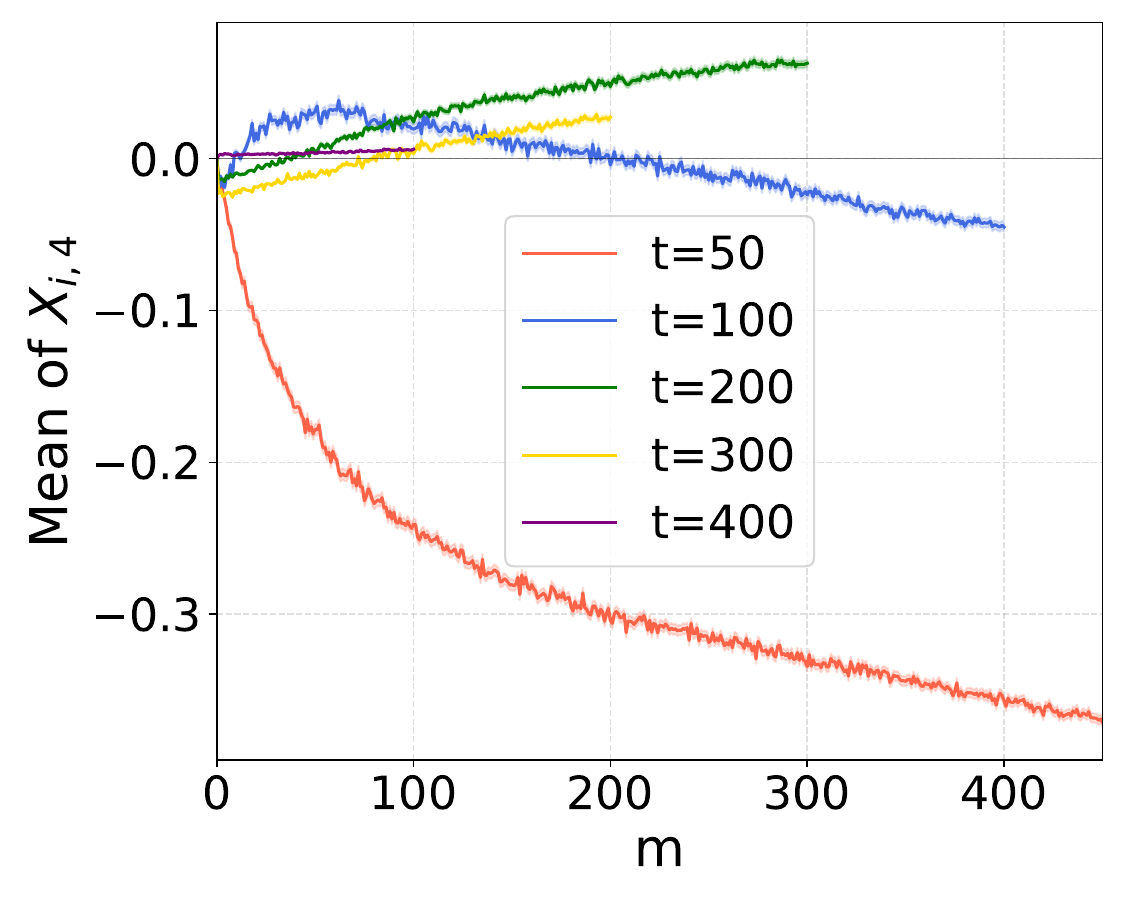}
        \caption{Arm 4}
    \end{subfigure}
    \caption{Per-arm martingale increments for DPT$'$ in Gaussian bandits with offline starting contexts, where $m$ and $t$ denote the lengths of the martingale continuation and the starting contexts, respectively.}
    \label{fig_martingale_increment_offline_gaussian_DPT_prime}
\end{figure}

\begin{figure}[H]
    \centering
    \begin{subfigure}[b]{0.32\textwidth}
        \centering
        \includegraphics[width=\textwidth]{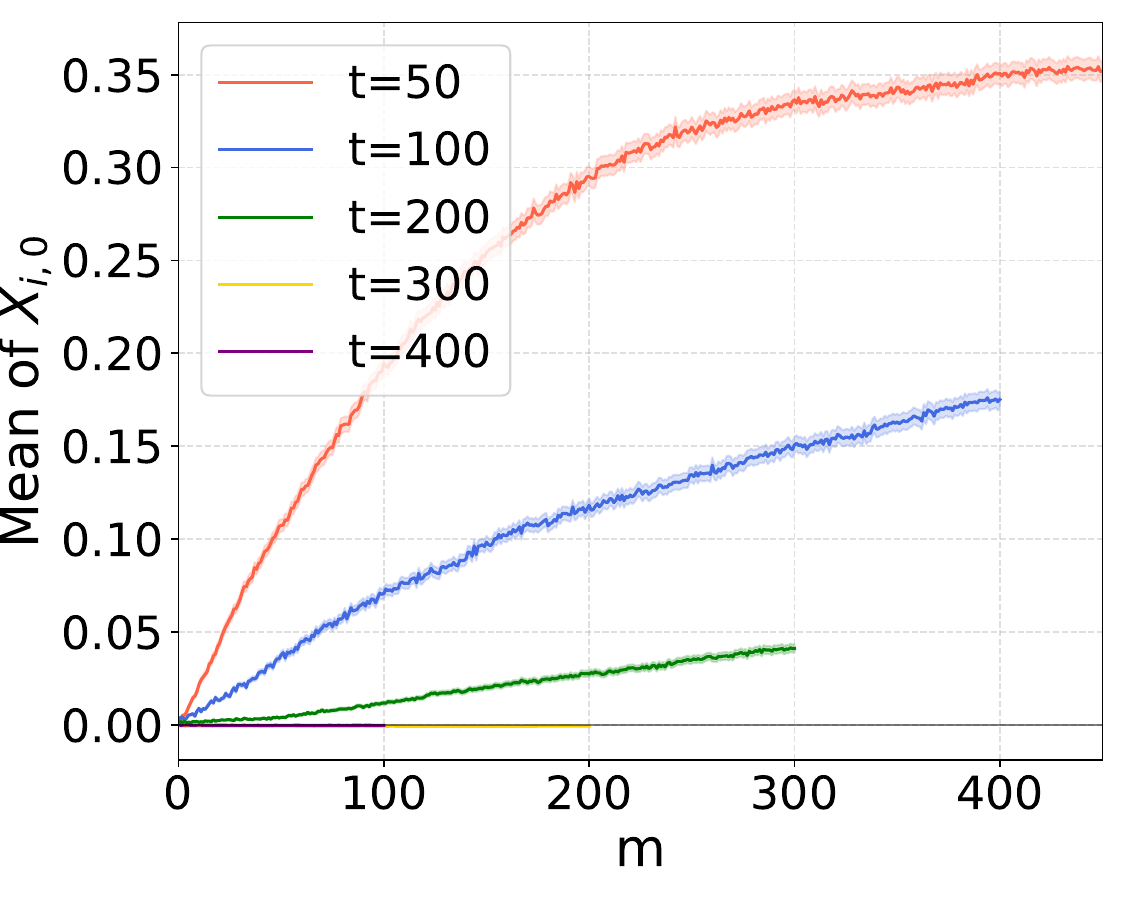}
        \caption{Arm 0}
    \end{subfigure}
    \begin{subfigure}[b]{0.32\textwidth}
        \centering
        \includegraphics[width=\textwidth]{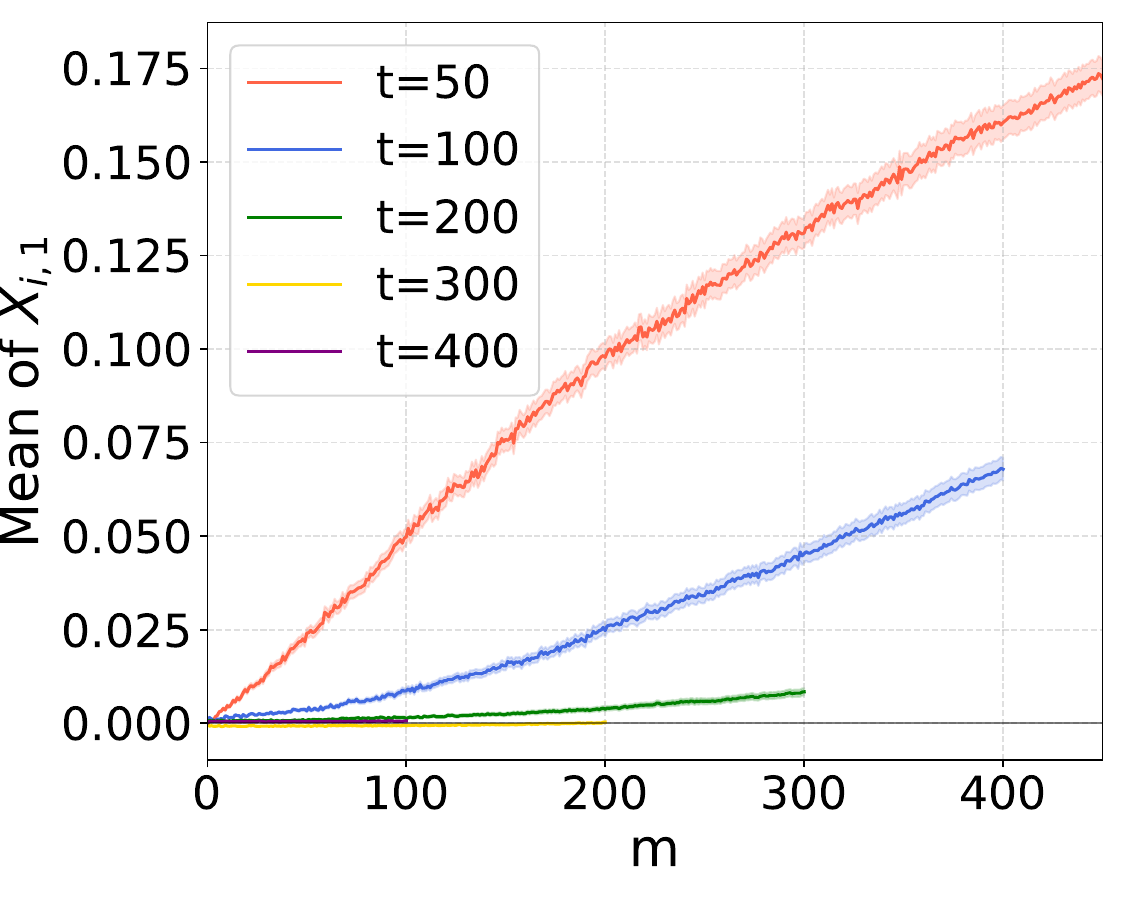}
        \caption{Arm 1}
    \end{subfigure}
    \begin{subfigure}[b]{0.32\textwidth}
        \centering
        \includegraphics[width=\textwidth]{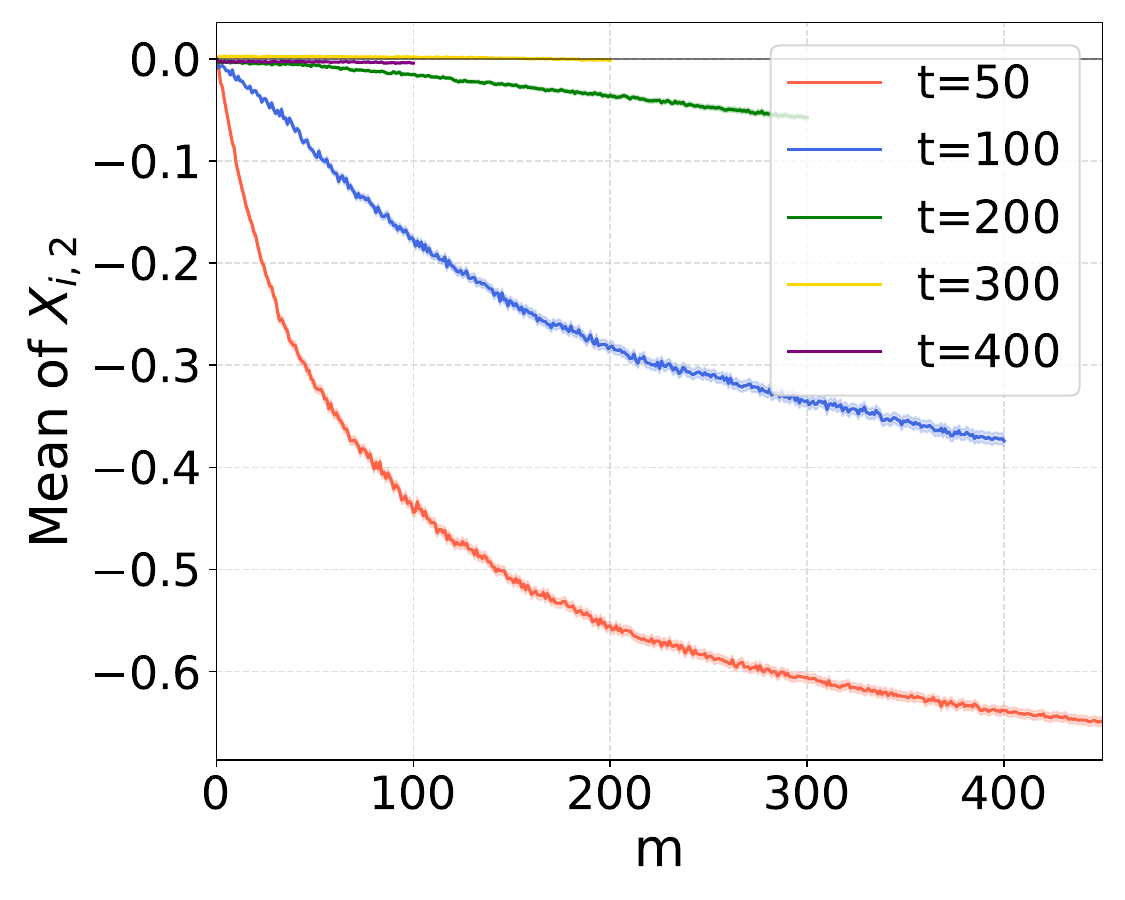}
        \caption{Arm 2}
    \end{subfigure}
    \\
    \begin{subfigure}[b]{0.32\textwidth}
        \centering
        \includegraphics[width=\textwidth]{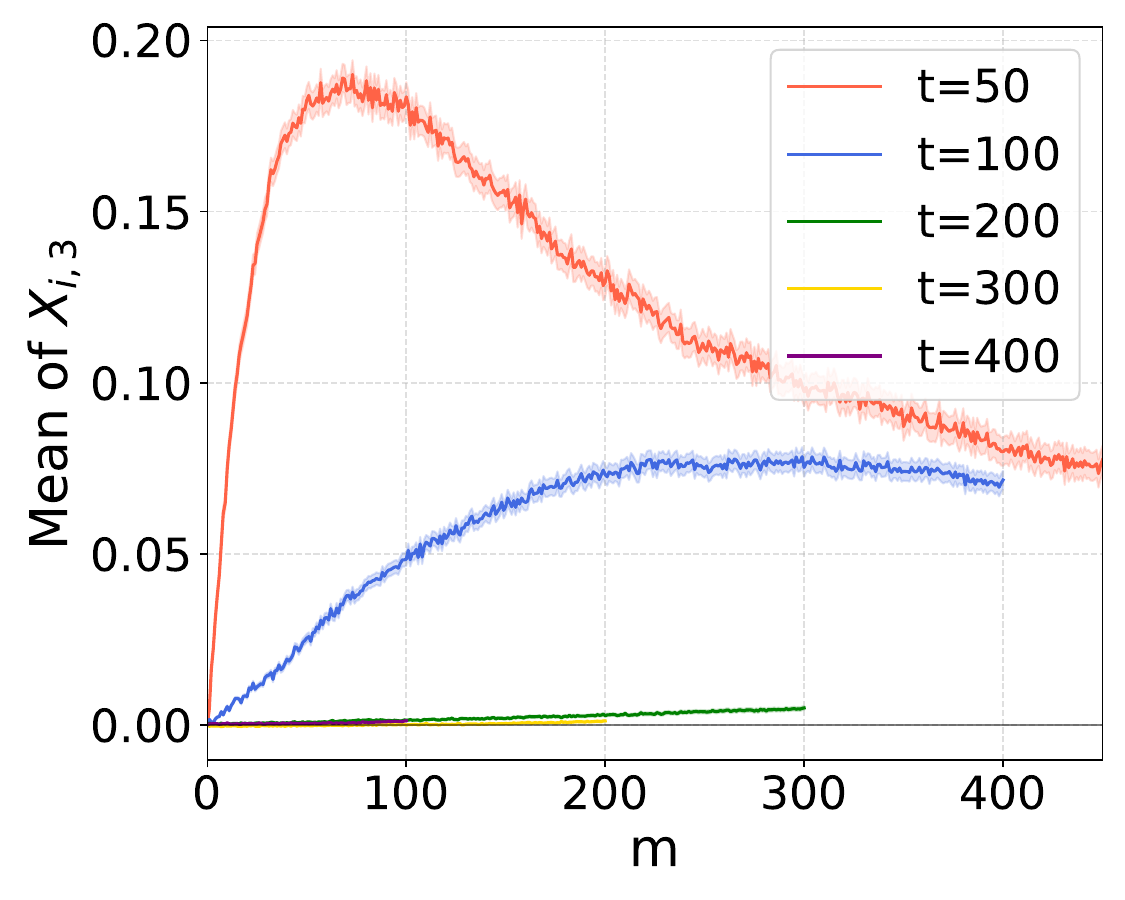}
        \caption{Arm 3}
    \end{subfigure}
    \begin{subfigure}[b]{0.32\textwidth}
        \centering
        \includegraphics[width=\textwidth]{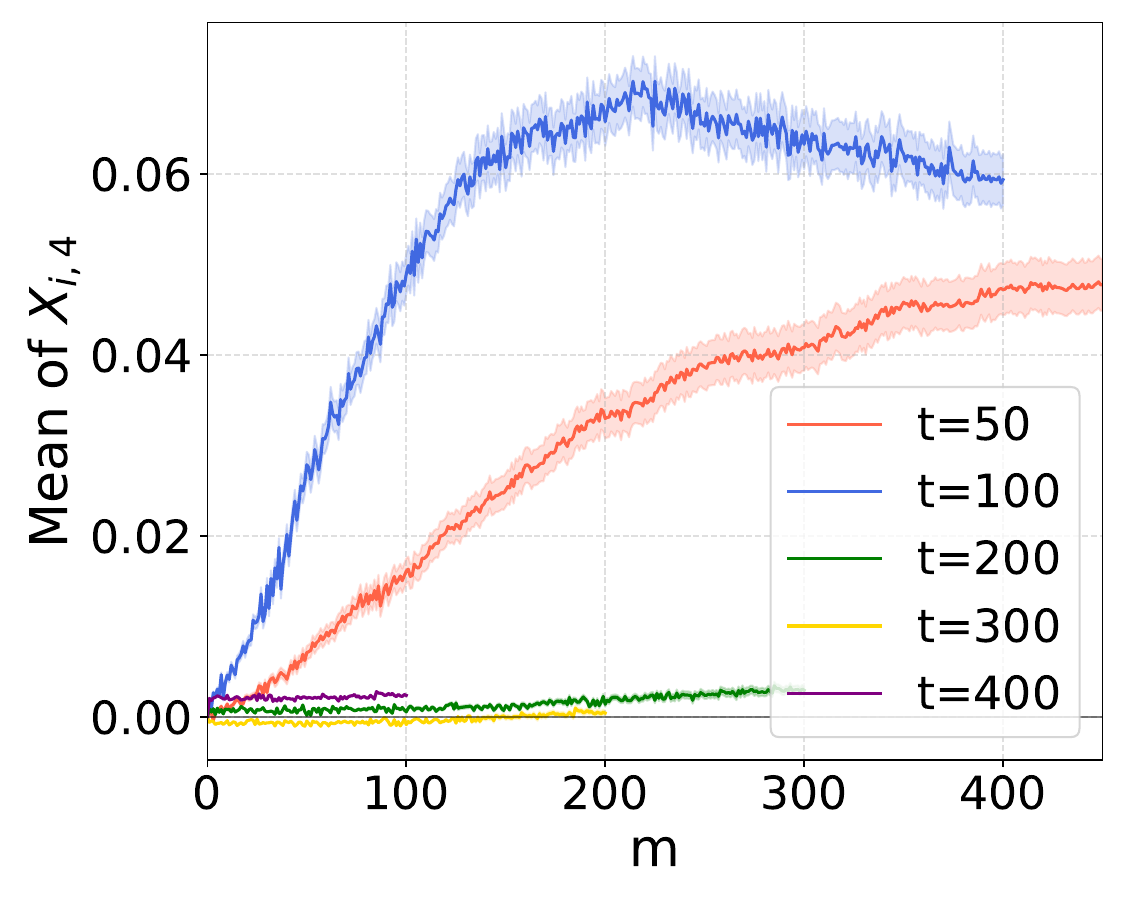}
        \caption{Arm 4}
    \end{subfigure}
    \caption{Per-arm martingale increments for DPT$'$ in Gaussian bandits with online starting contexts, where $m$ and $t$ denote the lengths of the martingale continuation and the starting contexts, respectively.}
    \label{fig_martingale_increment_online_gaussian_DPT_prime}
\end{figure}

\subsubsection{Bernoulli evaluation figures and martingale plots}
\begin{figure}[H]
\centering
\begin{subfigure}{0.245\textwidth}
\includegraphics[width=\textwidth]{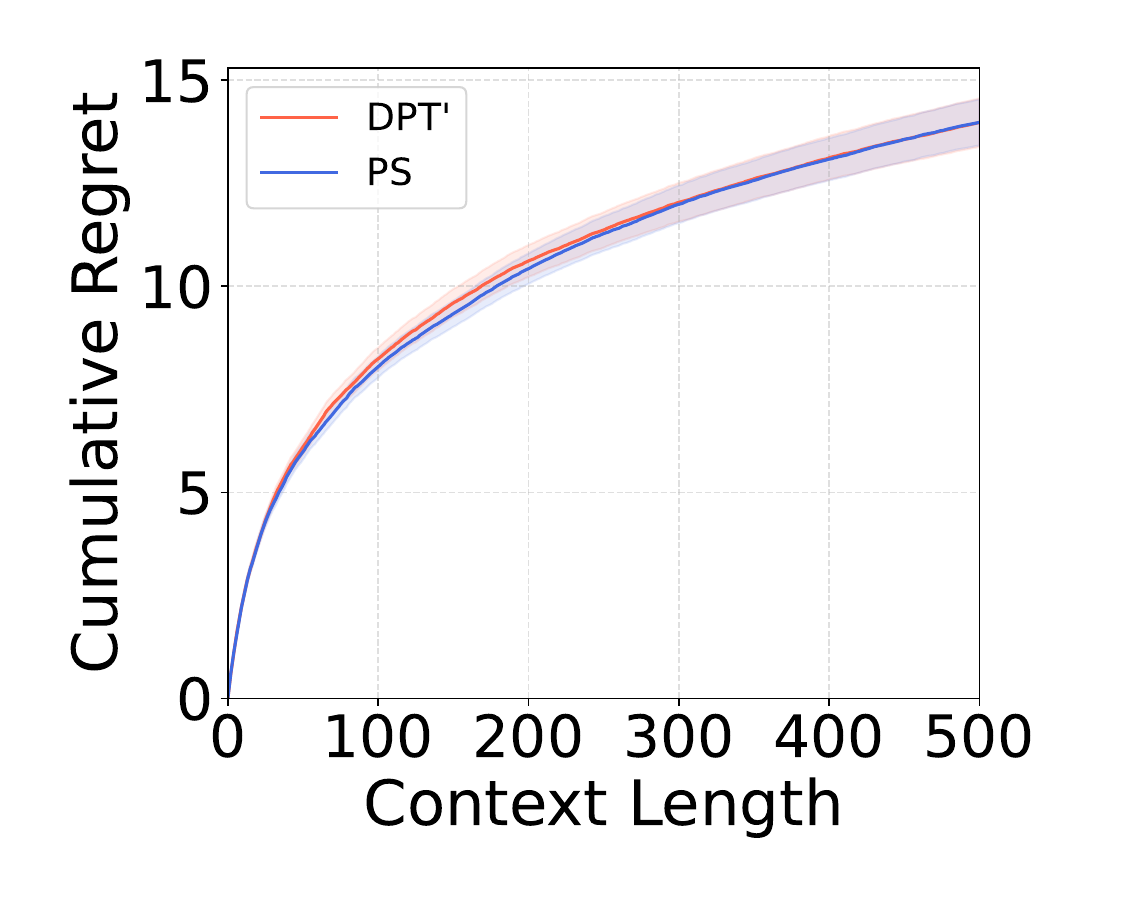}
\caption{Online regret}
\end{subfigure}
\hfill
\begin{subfigure}{0.245\textwidth}
\includegraphics[width=\textwidth]{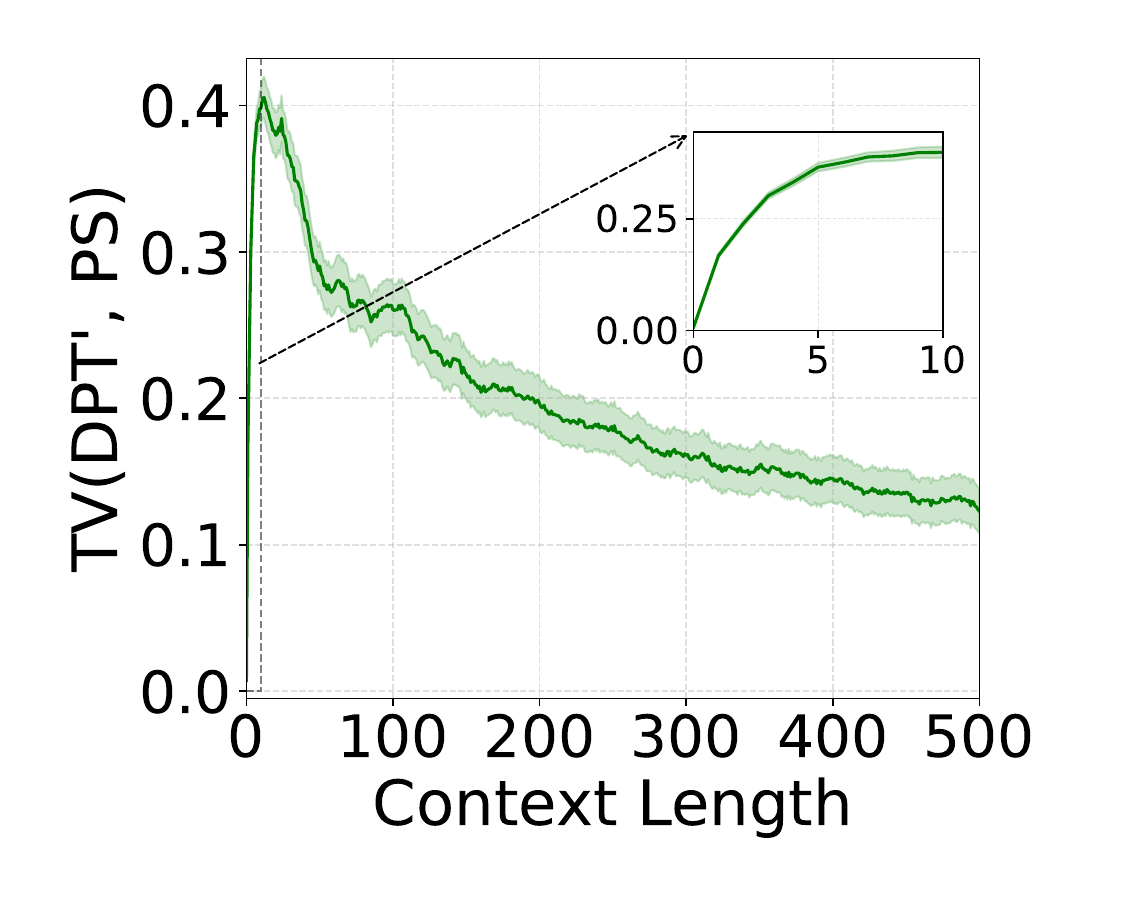}
\caption{Online TV distance}
\end{subfigure}
\hfill
\begin{subfigure}{0.245\textwidth}
\includegraphics[width=\textwidth]{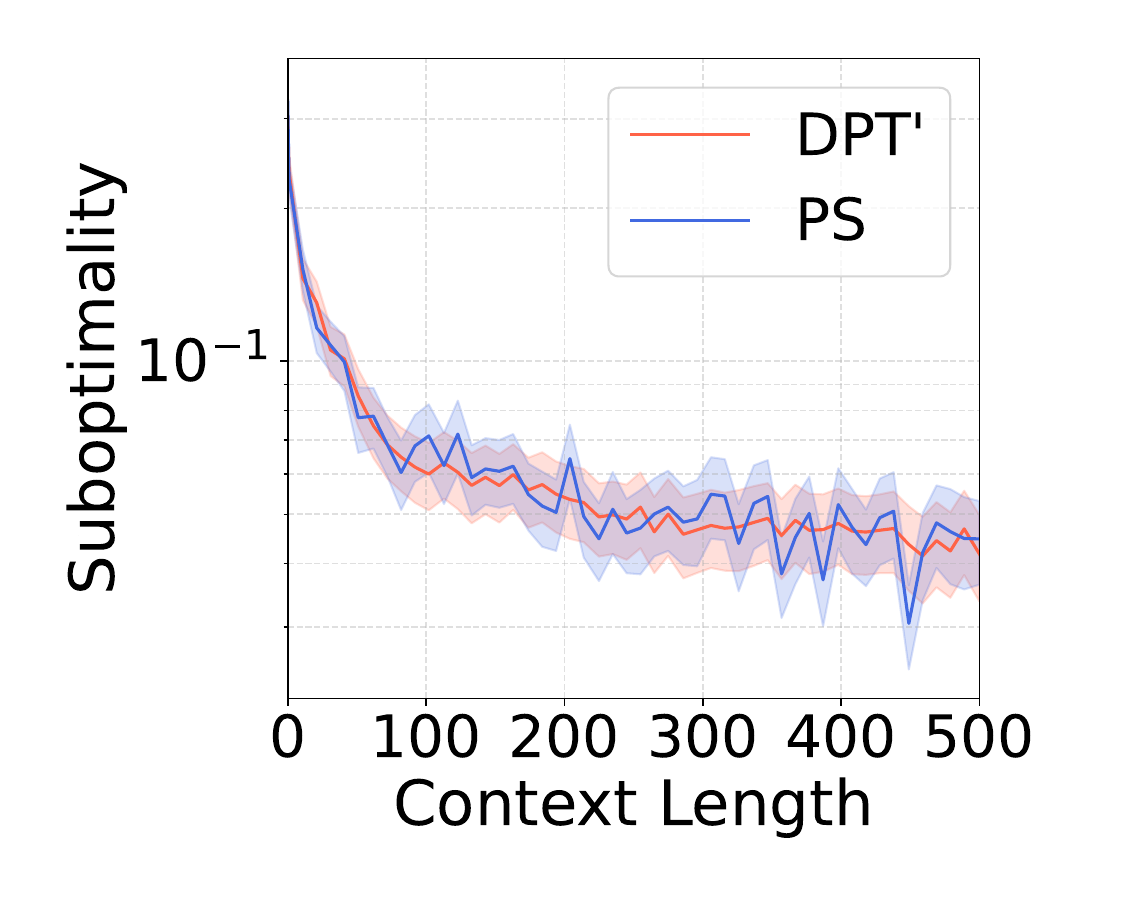}
\caption{Offline suboptimality}
\end{subfigure}
\hfill
\begin{subfigure}{0.245\textwidth}
\includegraphics[width=\textwidth]{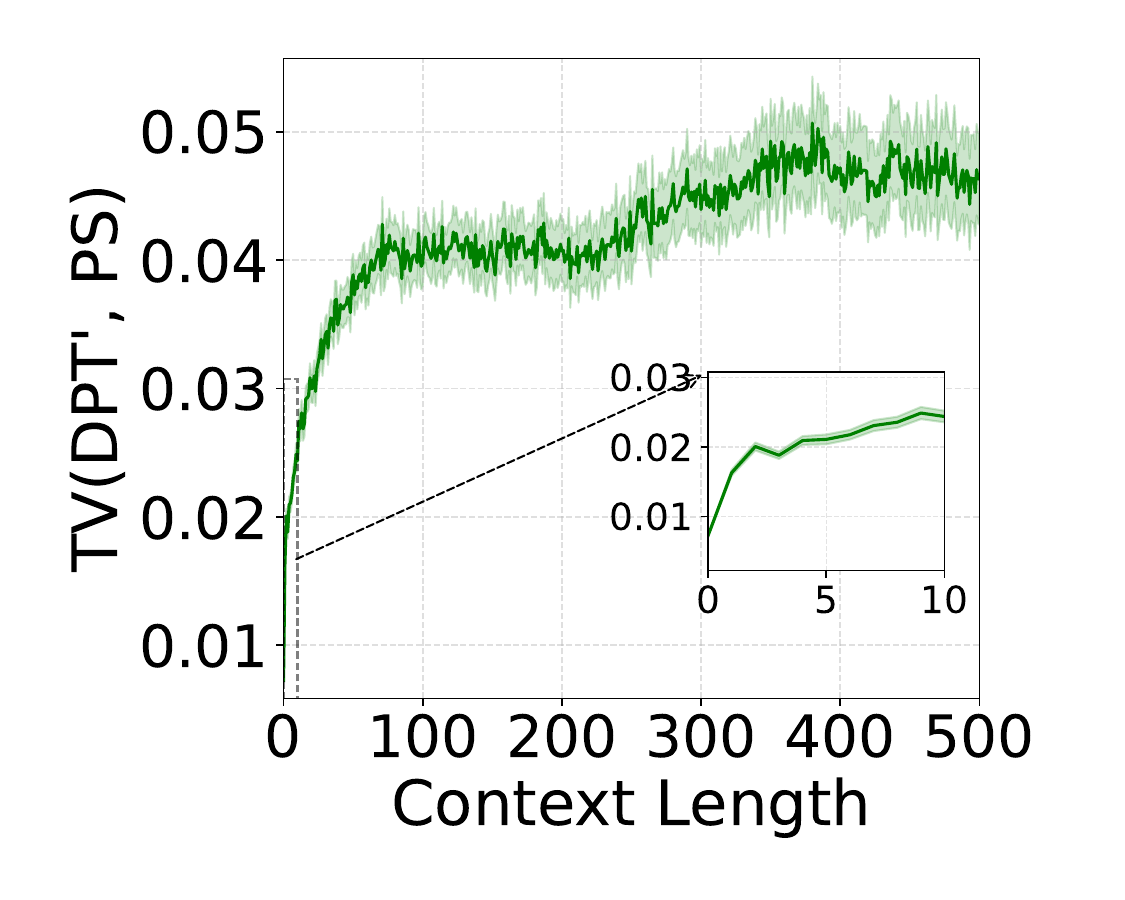}
\caption{Offline TV distance}
\end{subfigure}
\caption{Performance comparison of DPT$'$ and Thompson Sampling in Bernoulli bandits under online and offline deployment.}
\label{fig_evaluation_bernoulli_bandit_DPT_prime}
\end{figure}

\begin{figure}[H]
    \centering
    \begin{subfigure}[b]{0.32\textwidth}
        \centering
        \includegraphics[width=\textwidth]{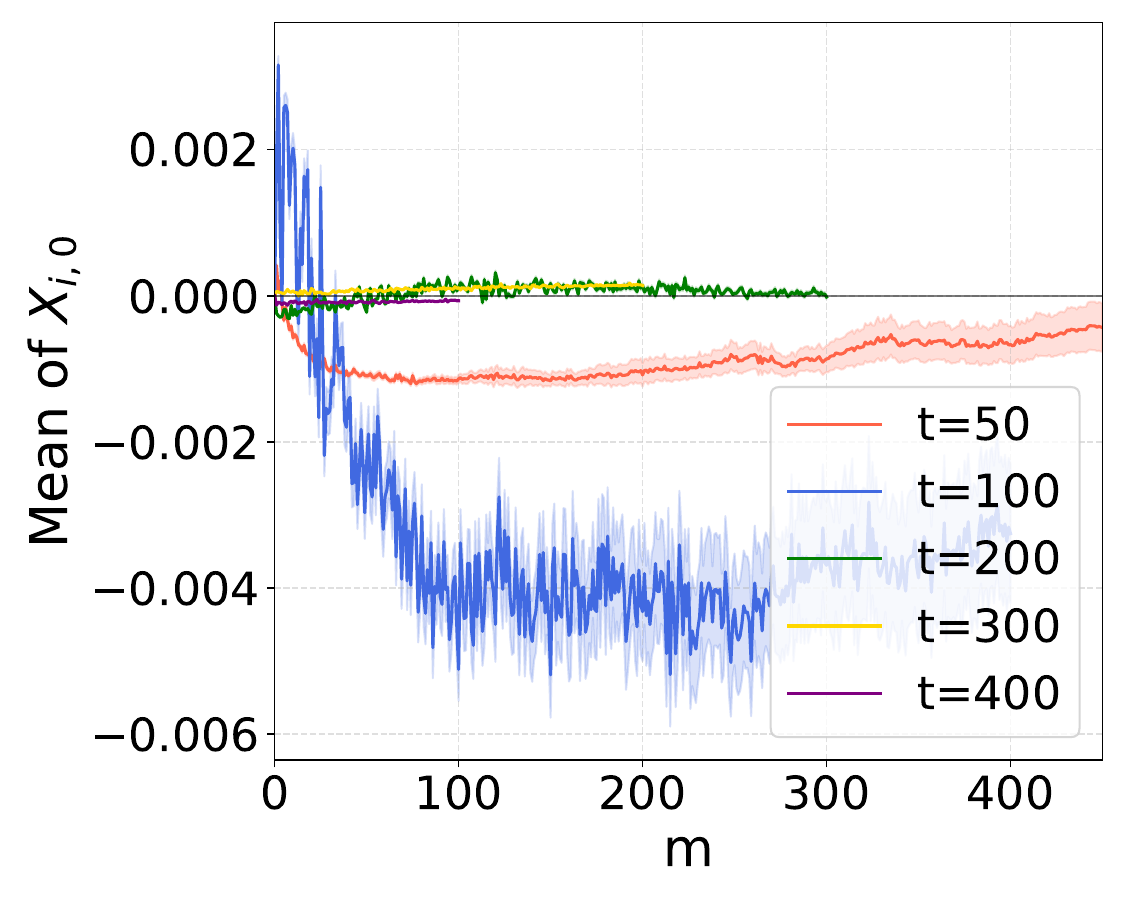}
        \caption{Arm 0}
    \end{subfigure}
    \begin{subfigure}[b]{0.32\textwidth}
        \centering
        \includegraphics[width=\textwidth]{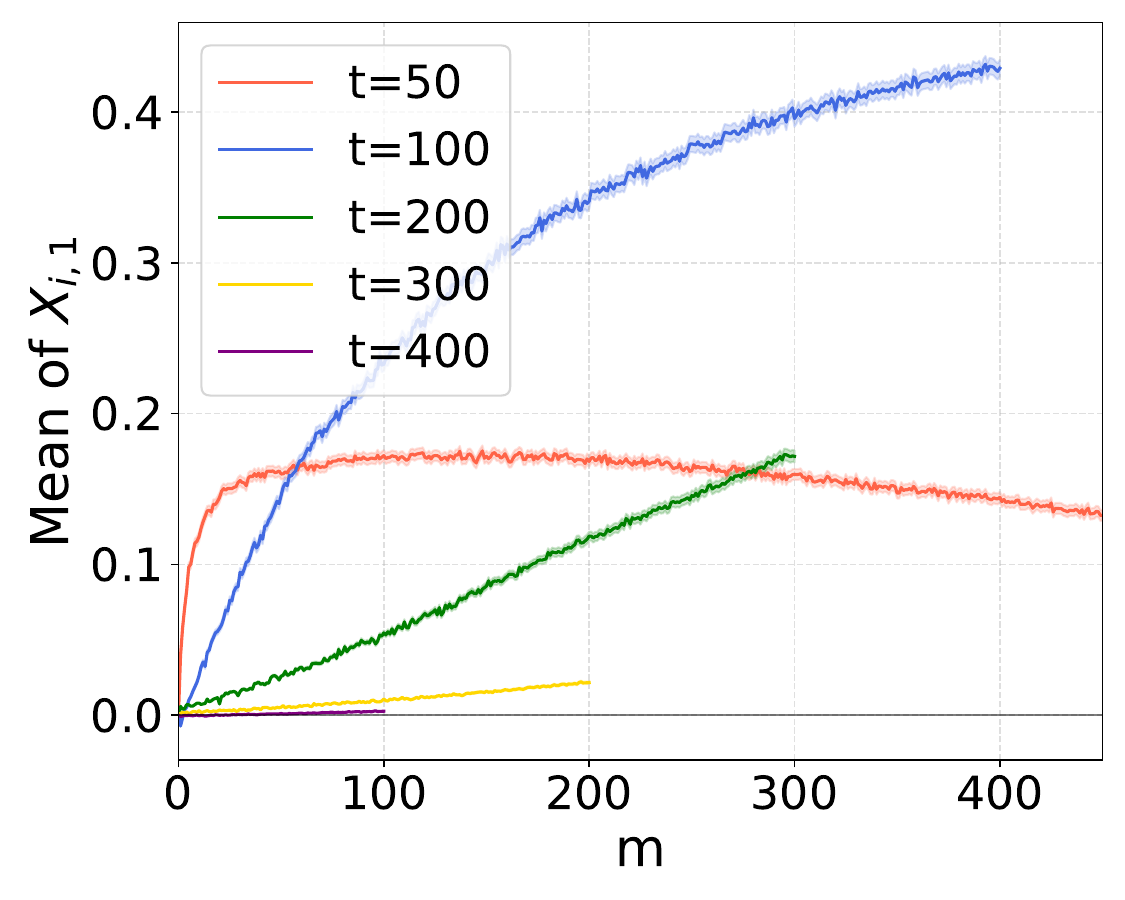}
        \caption{Arm 1}
    \end{subfigure}
    \begin{subfigure}[b]{0.32\textwidth}
        \centering
        \includegraphics[width=\textwidth]{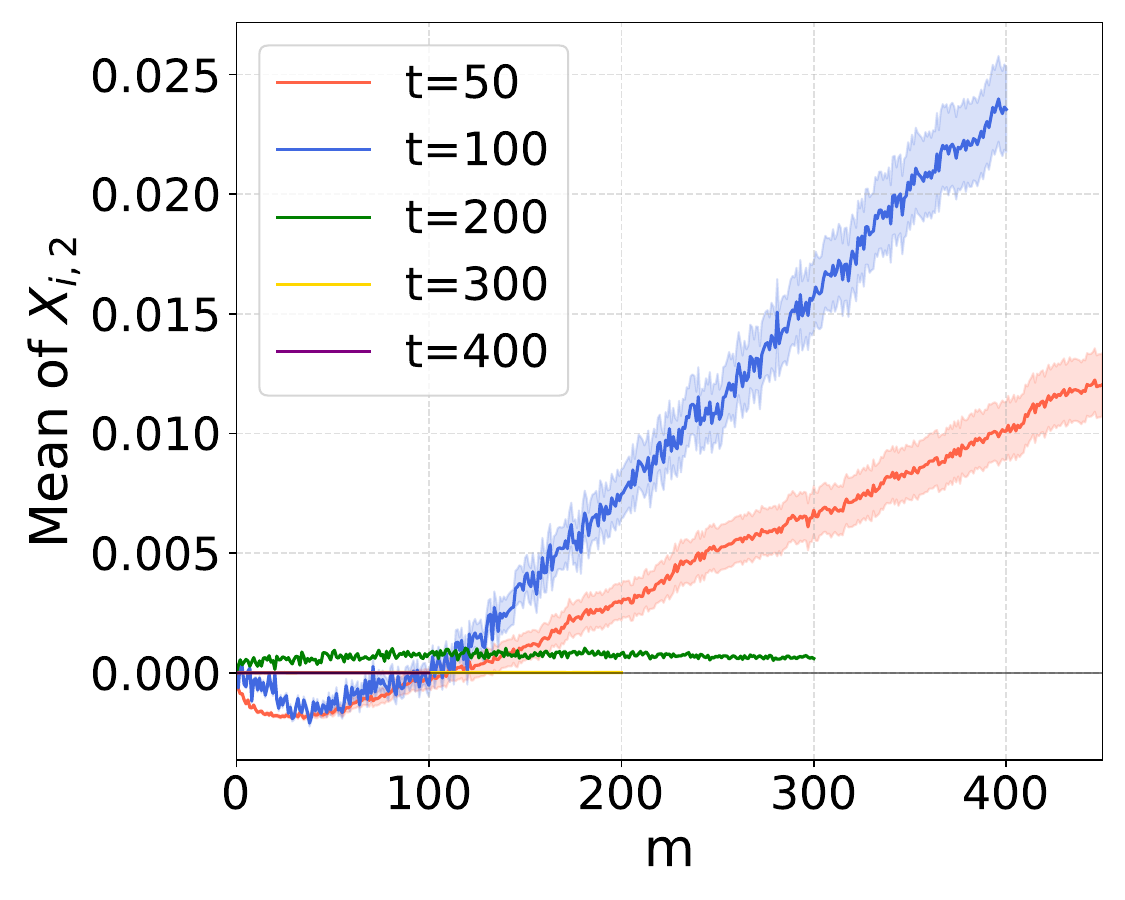}
        \caption{Arm 2}
    \end{subfigure}
    \\
    \begin{subfigure}[b]{0.32\textwidth}
        \centering
        \includegraphics[width=\textwidth]{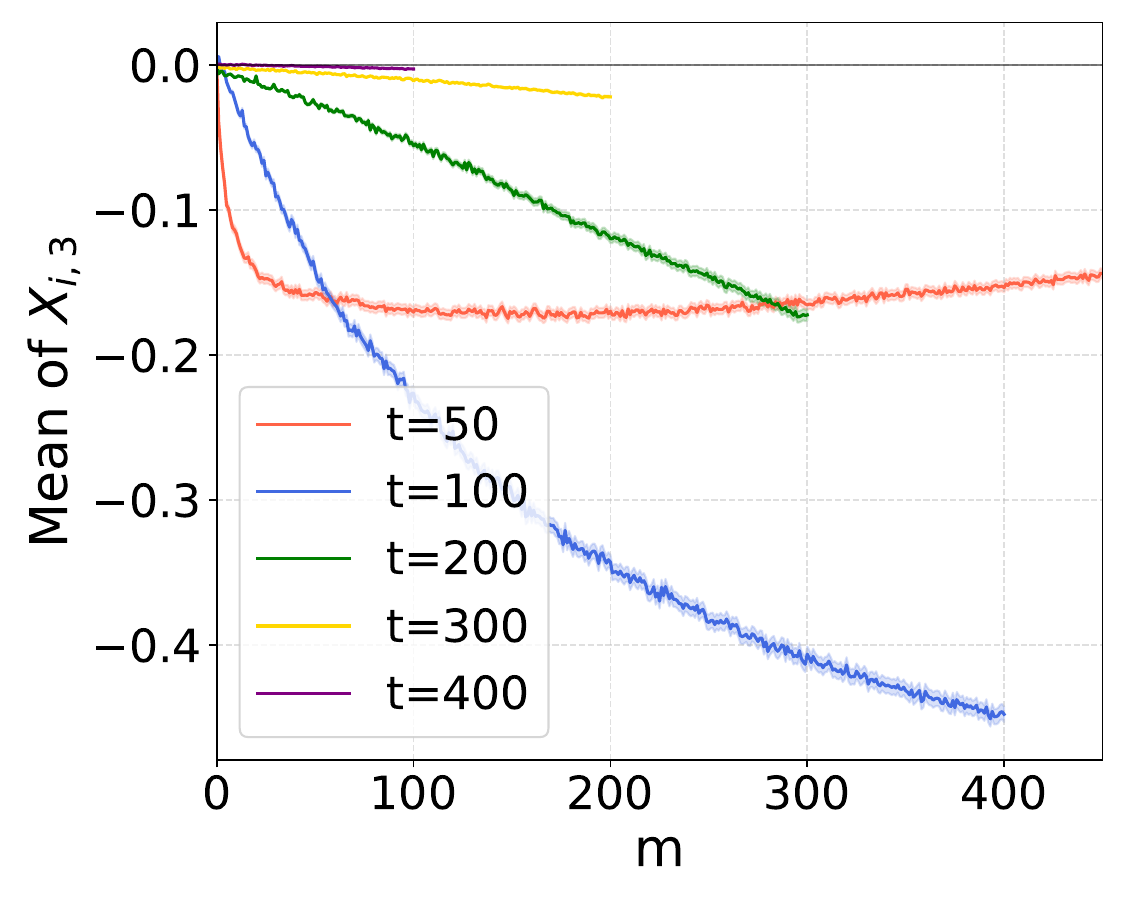}
        \caption{Arm 3}
    \end{subfigure}
    \begin{subfigure}[b]{0.32\textwidth}
        \centering
        \includegraphics[width=\textwidth]{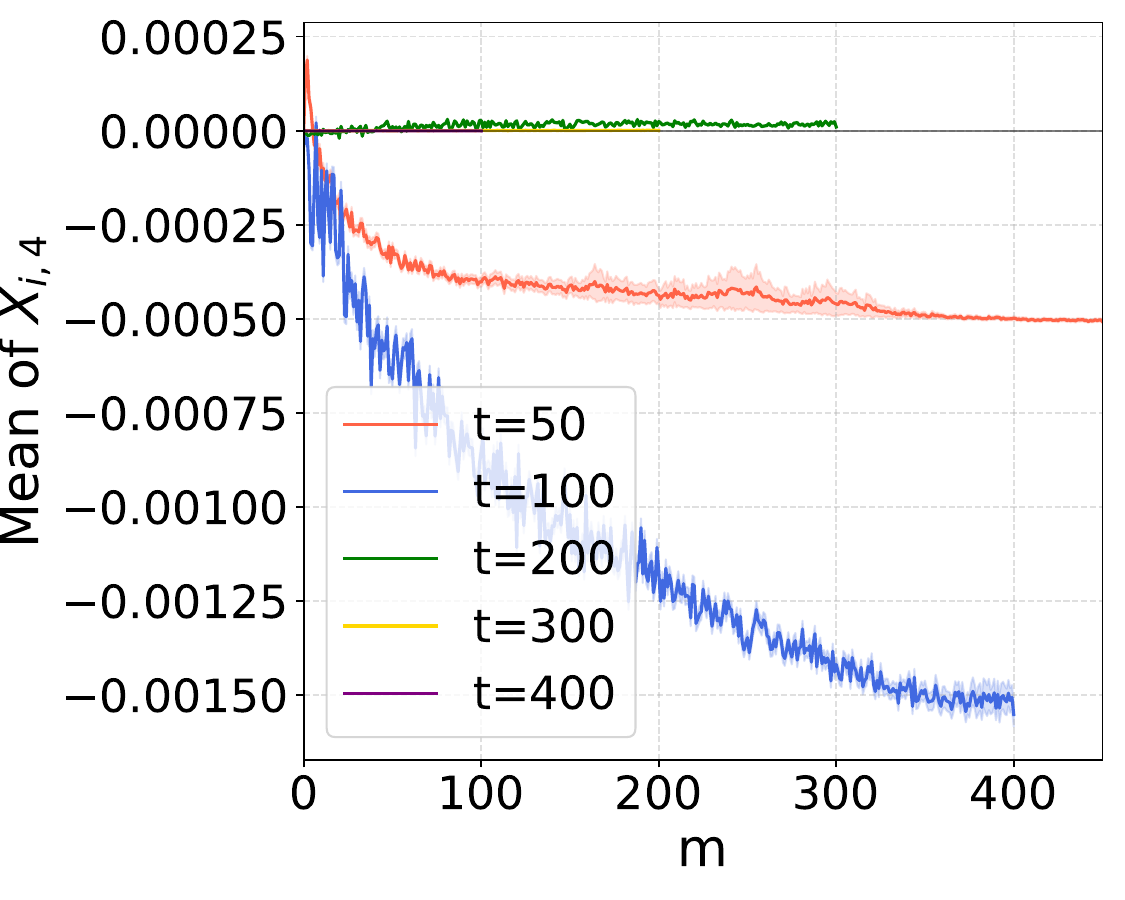}
        \caption{Arm 4}
    \end{subfigure}
    \caption{Per-arm martingale increments for DPT$'$ in Bernoulli bandits with offline starting contexts, where $m$ and $t$ denote the lengths of the martingale continuation and the starting contexts, respectively.}
    \label{fig_martingale_increment_offline_bernoulli_DPT_prime}
\end{figure}

\begin{figure}[H]
    \centering
    \begin{subfigure}[b]{0.32\textwidth}
        \centering
        \includegraphics[width=\textwidth]{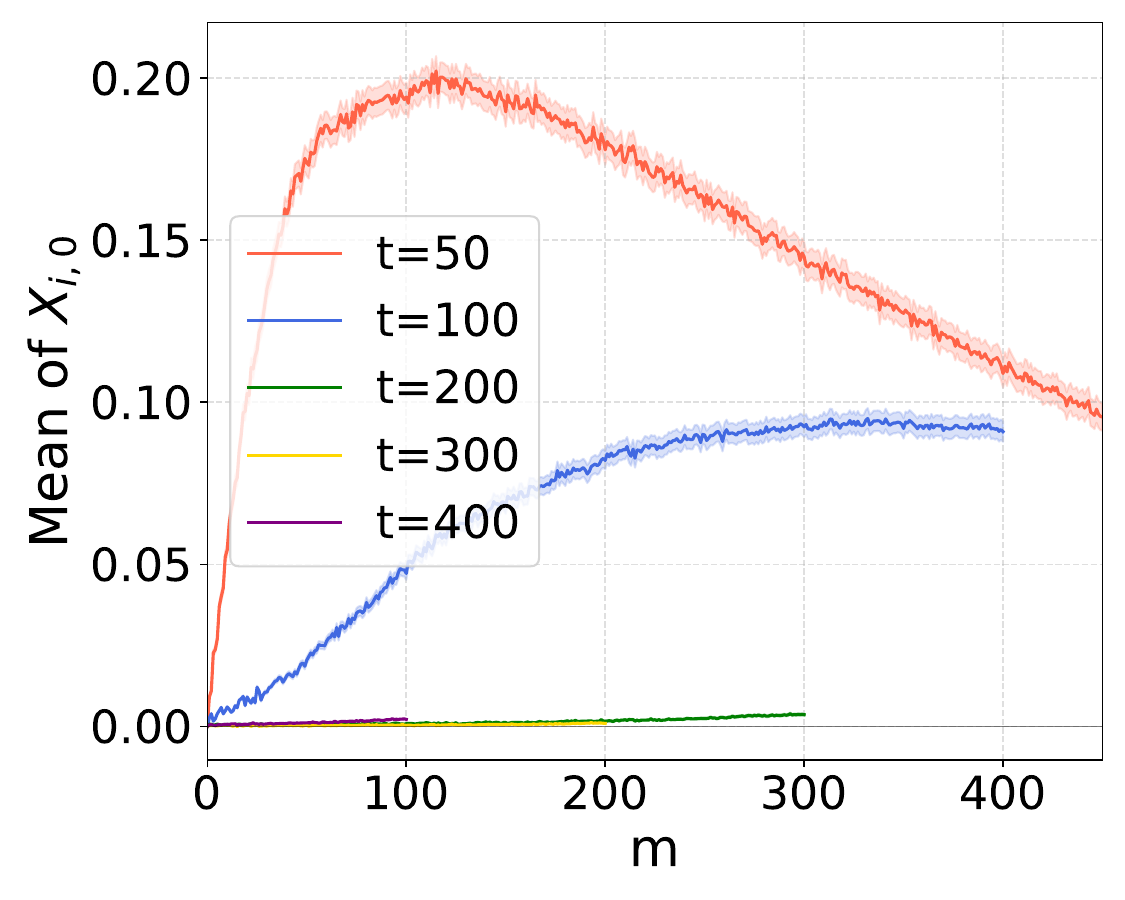}
        \caption{Arm 0}
    \end{subfigure}
    \begin{subfigure}[b]{0.32\textwidth}
        \centering
        \includegraphics[width=\textwidth]{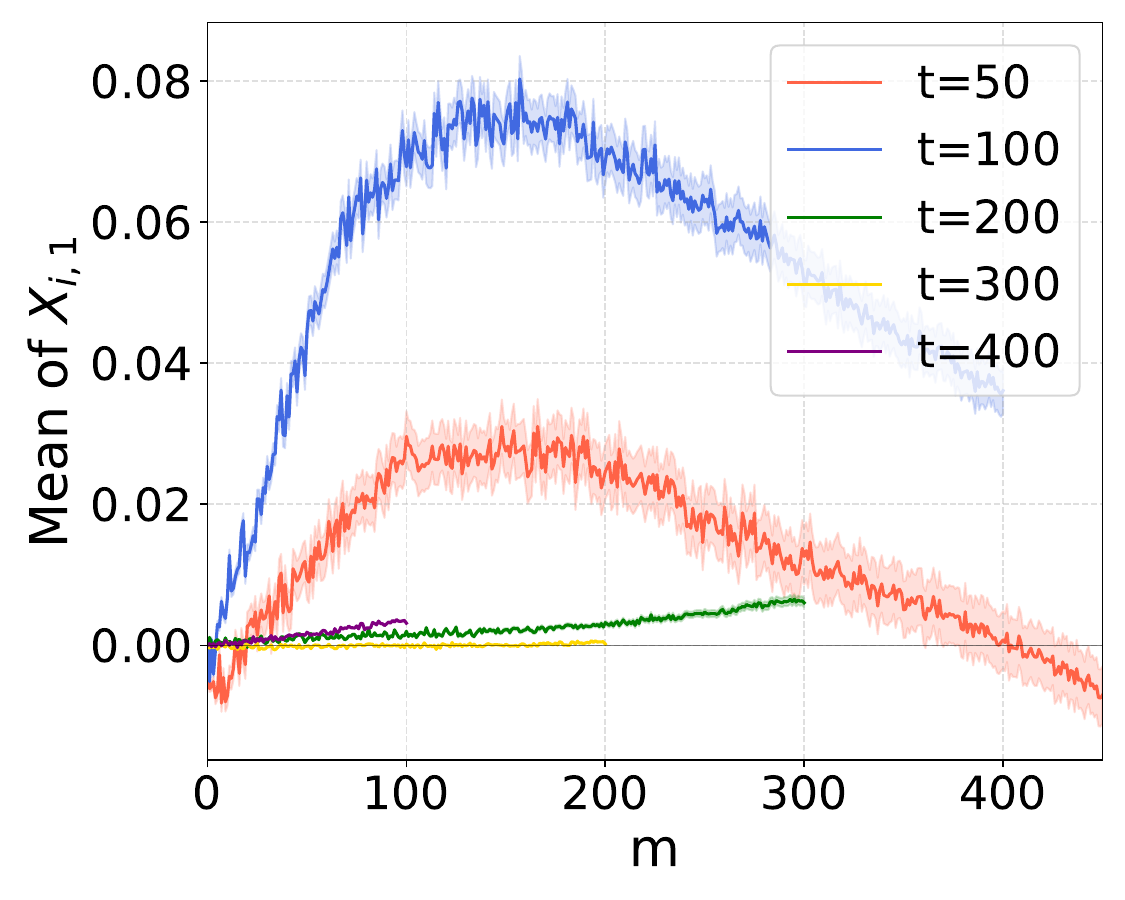}
        \caption{Arm 1}
    \end{subfigure}
    \begin{subfigure}[b]{0.32\textwidth}
        \centering
        \includegraphics[width=\textwidth]{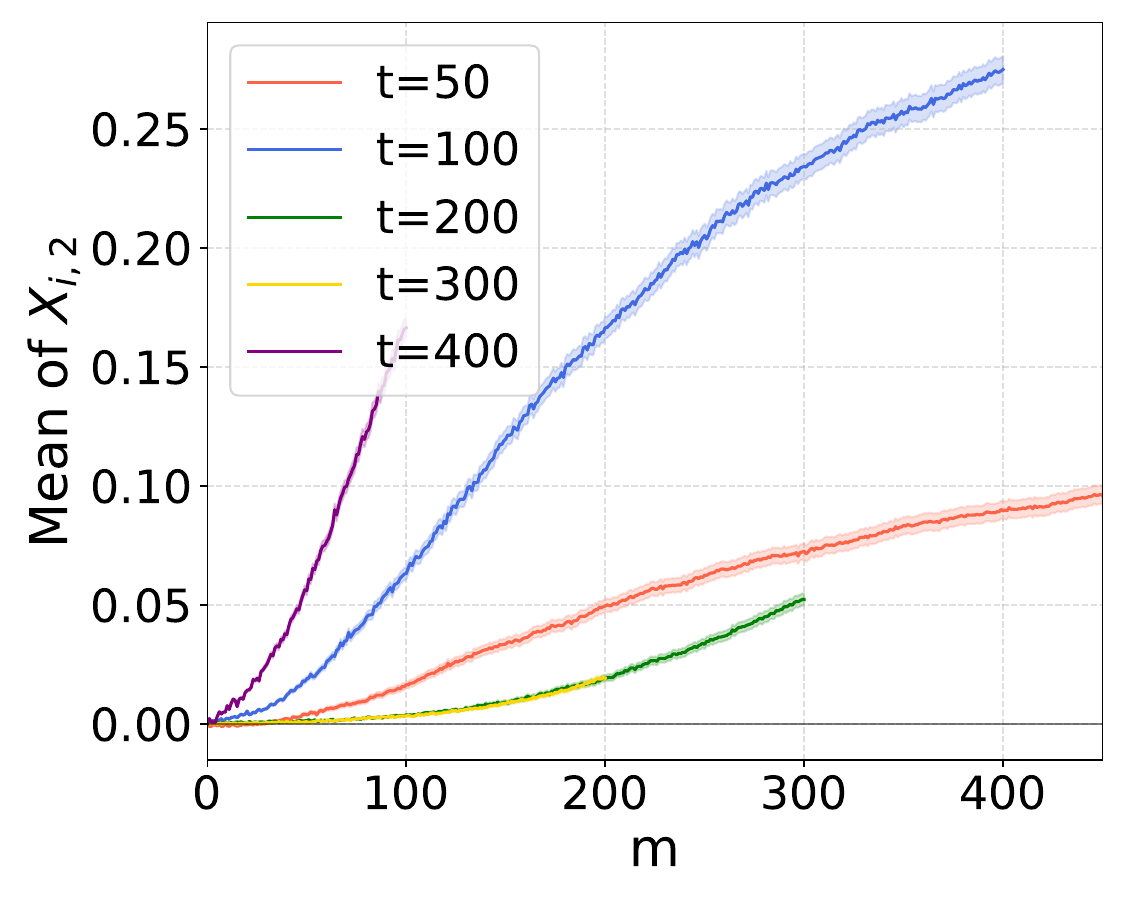}
        \caption{Arm 2}
    \end{subfigure}
    \\
    \begin{subfigure}[b]{0.32\textwidth}
        \centering
        \includegraphics[width=\textwidth]{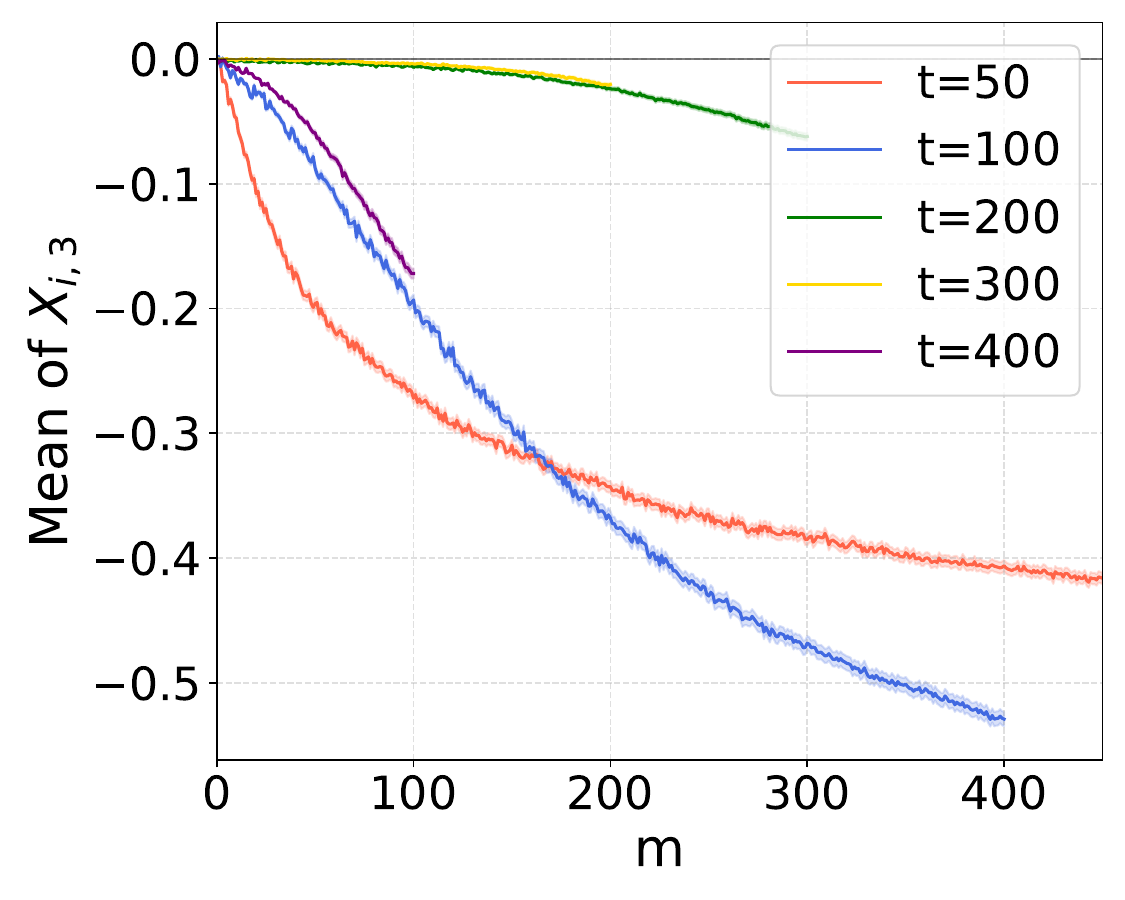}
        \caption{Arm 3}
    \end{subfigure}
    \begin{subfigure}[b]{0.32\textwidth}
        \centering
        \includegraphics[width=\textwidth]{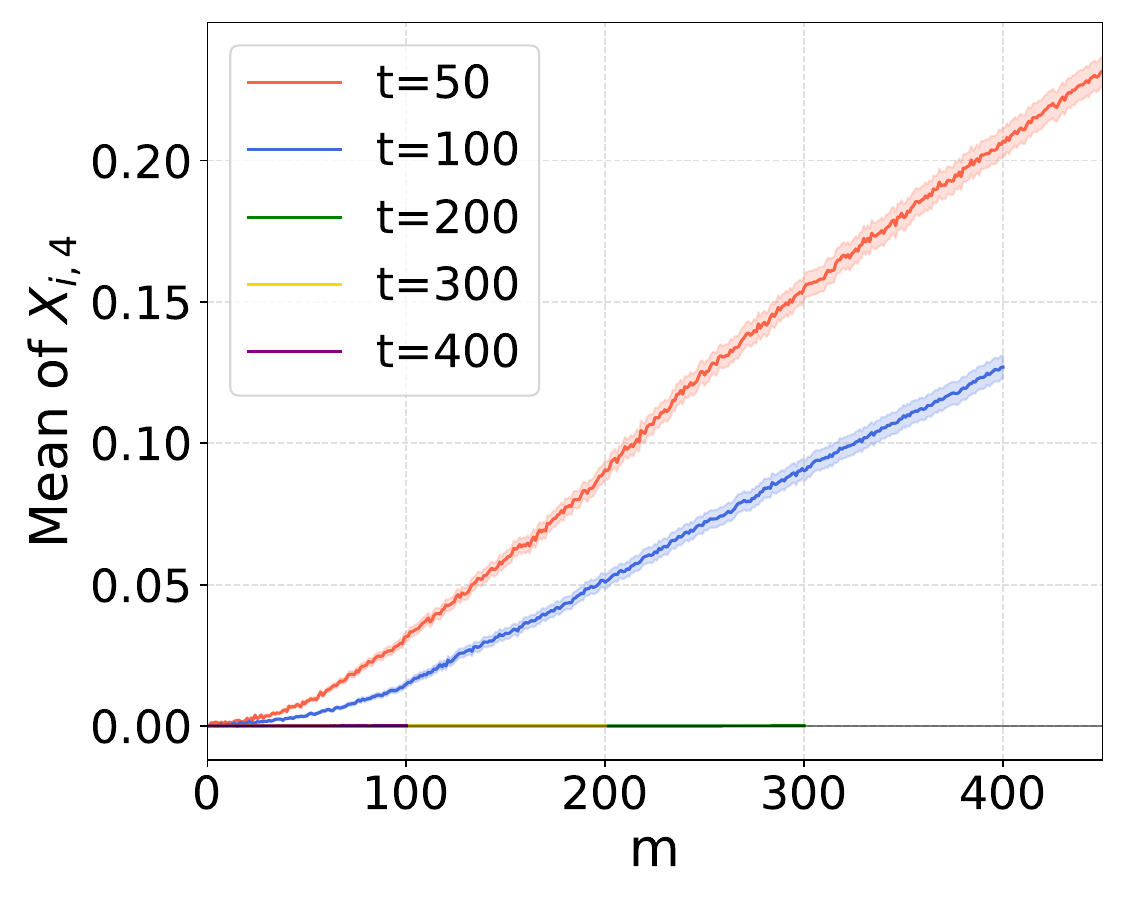}
        \caption{Arm 4}
    \end{subfigure}
    \caption{Per-arm martingale increments for DPT$'$ in Bernoulli bandits with online starting contexts, where $m$ and $t$ denote the lengths of the martingale continuation and the starting contexts, respectively.}
    \label{fig_martingale_increment_online_bernoulli_DPT_prime}
\end{figure}

\subsubsection{Linear evaluation figures and martingale plots}
\begin{figure}[H]
\centering
\begin{subfigure}{0.245\textwidth}
\includegraphics[width=\textwidth]{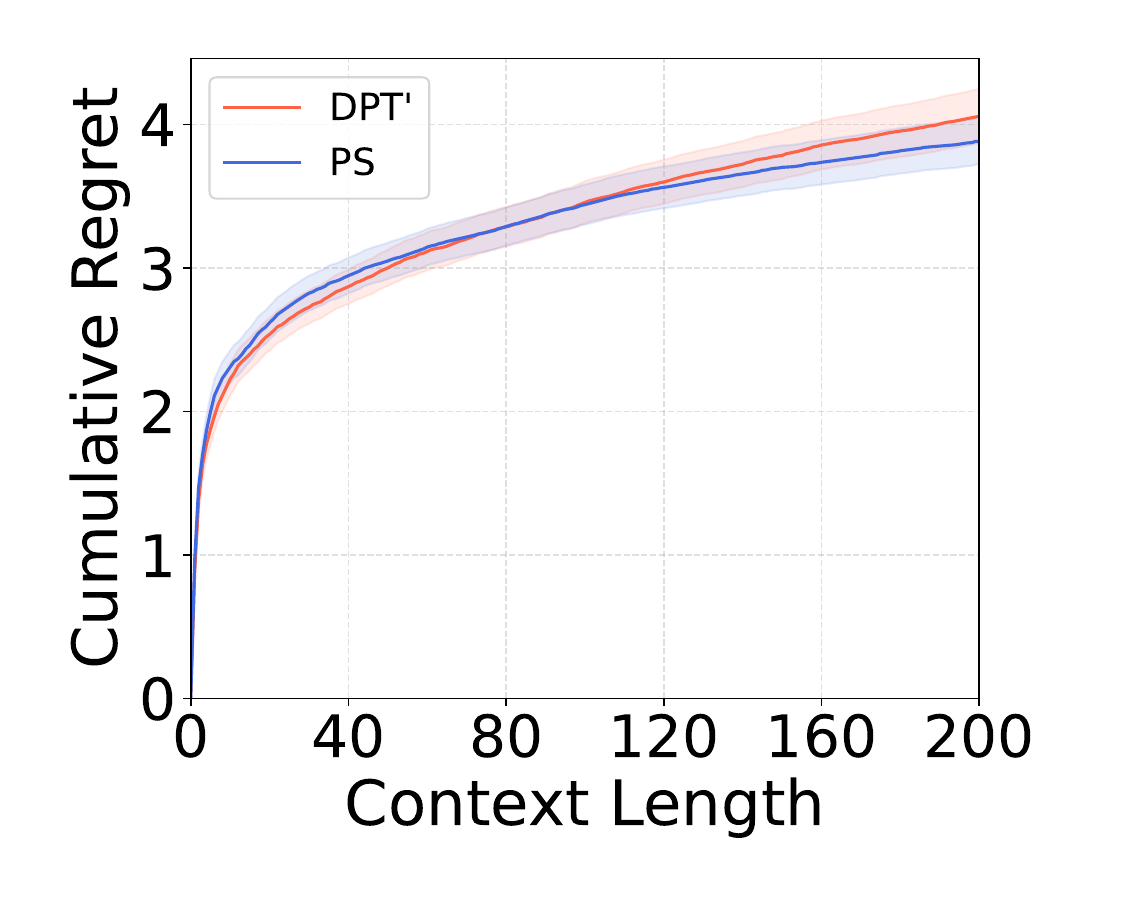}
\caption{Online regret}
\end{subfigure}
\hfill
\begin{subfigure}{0.245\textwidth}
\includegraphics[width=\textwidth]{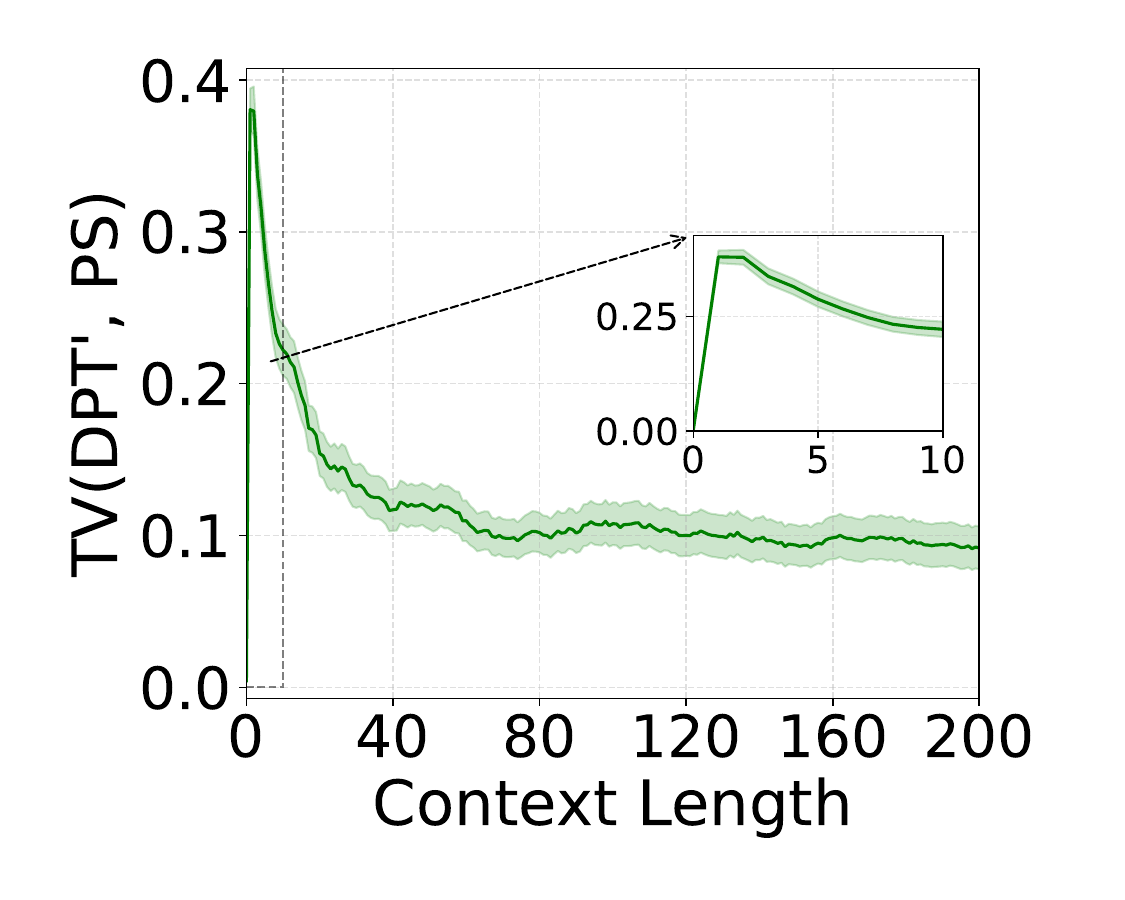}
\caption{Online TV distance}
\end{subfigure}
\hfill
\begin{subfigure}{0.245\textwidth}
\includegraphics[width=\textwidth]{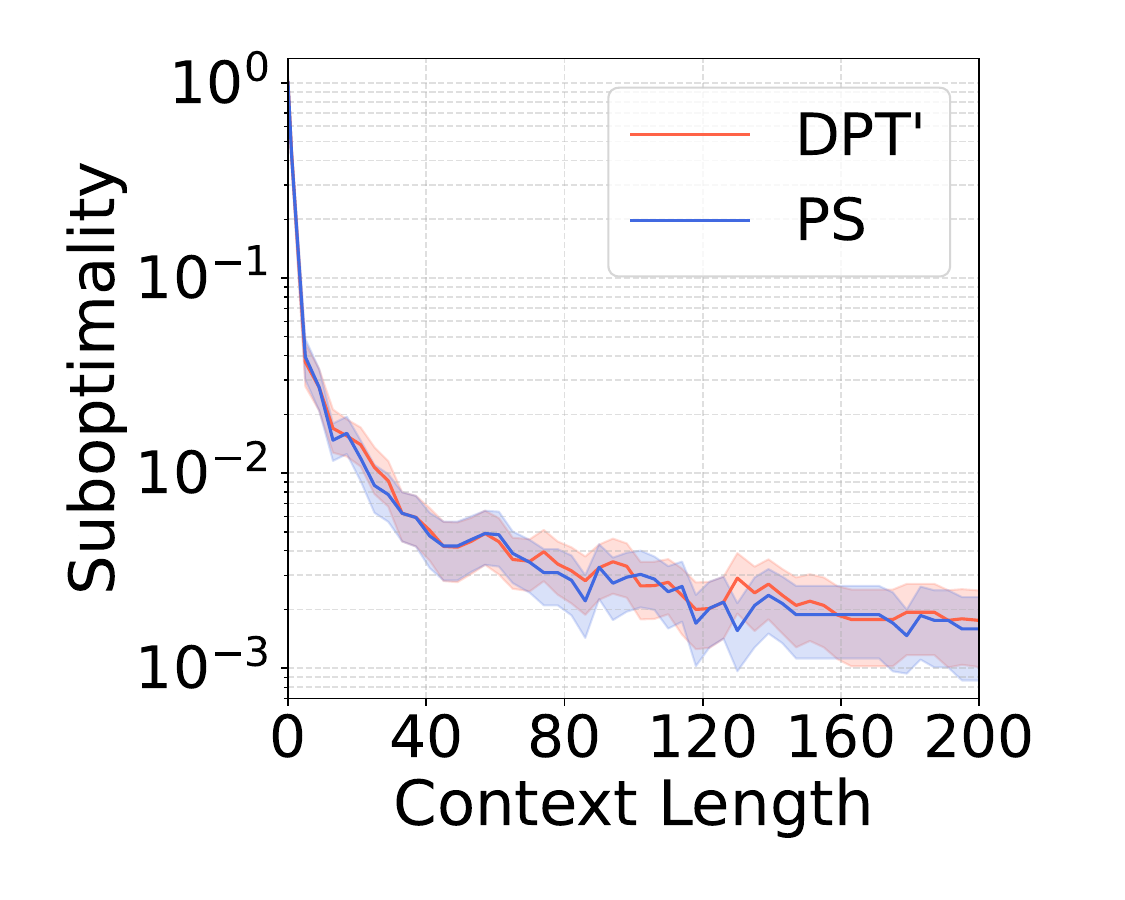}
\caption{Offline suboptimality}
\end{subfigure}
\hfill
\begin{subfigure}{0.245\textwidth}
\includegraphics[width=\textwidth]{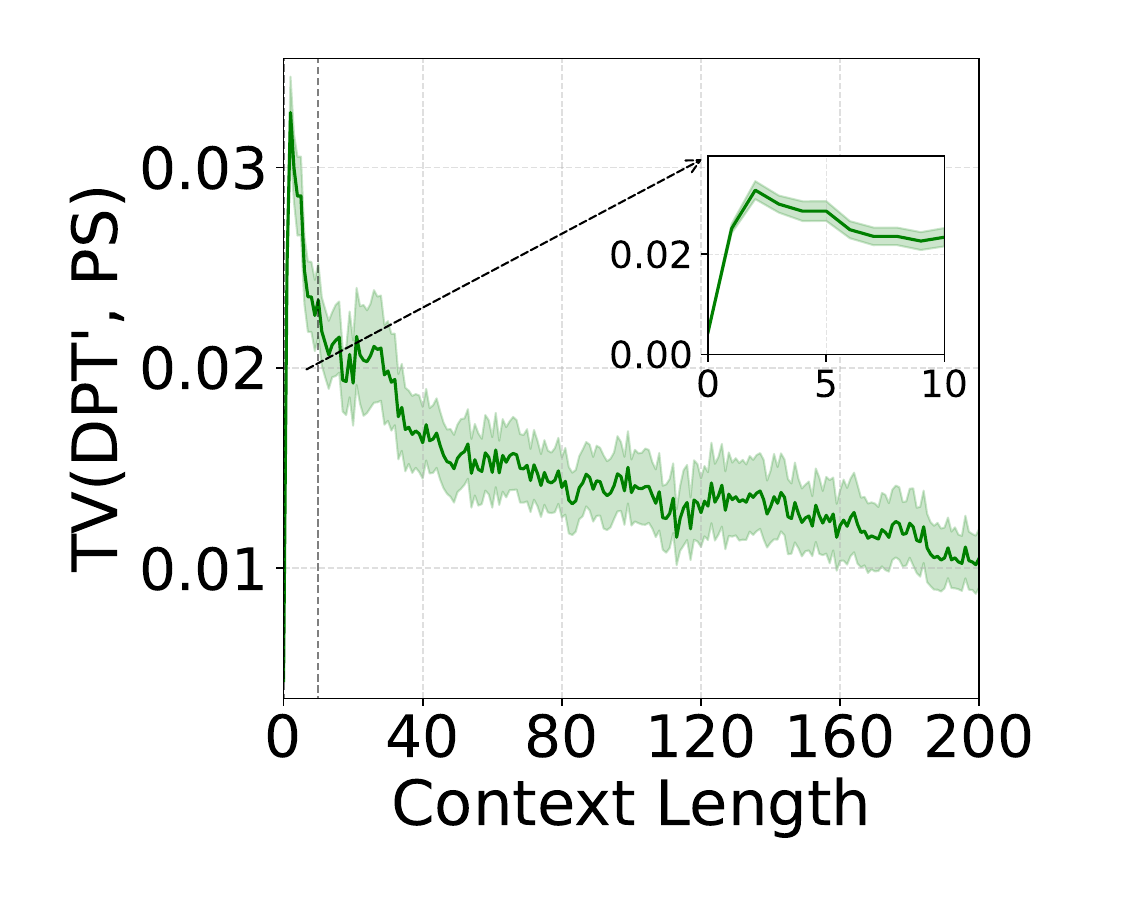}
\caption{Offline TV distance}
\end{subfigure}
\caption{Performance comparison of DPT$'$ and Thompson Sampling in linear bandits under online and offline deployment.}
\label{fig_evaluation_linear_bandit_DPT_prime}
\end{figure}

\begin{figure}[H]
    \centering
    \begin{subfigure}[b]{0.32\textwidth}
        \centering
        \includegraphics[width=\textwidth]{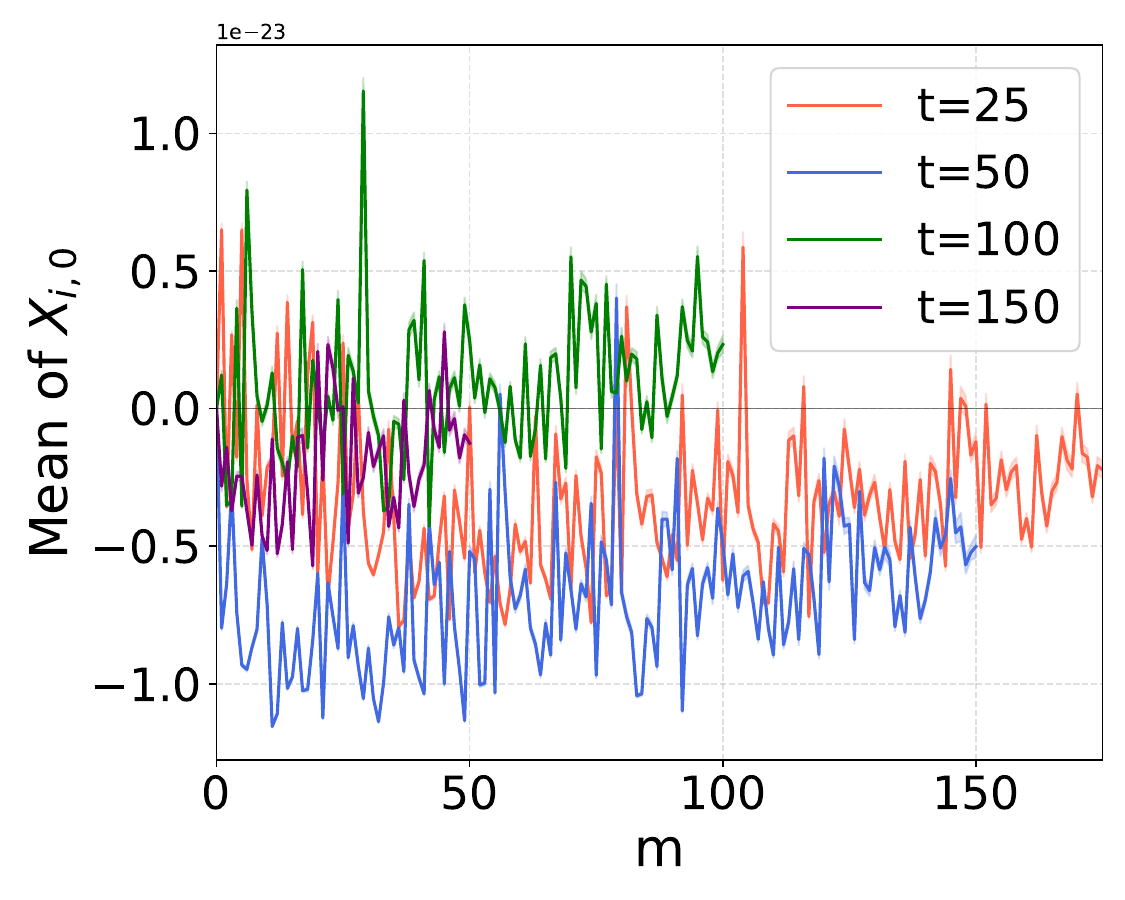}
        \caption{Arm 0}
    \end{subfigure}
    \begin{subfigure}[b]{0.32\textwidth}
        \centering
        \includegraphics[width=\textwidth]{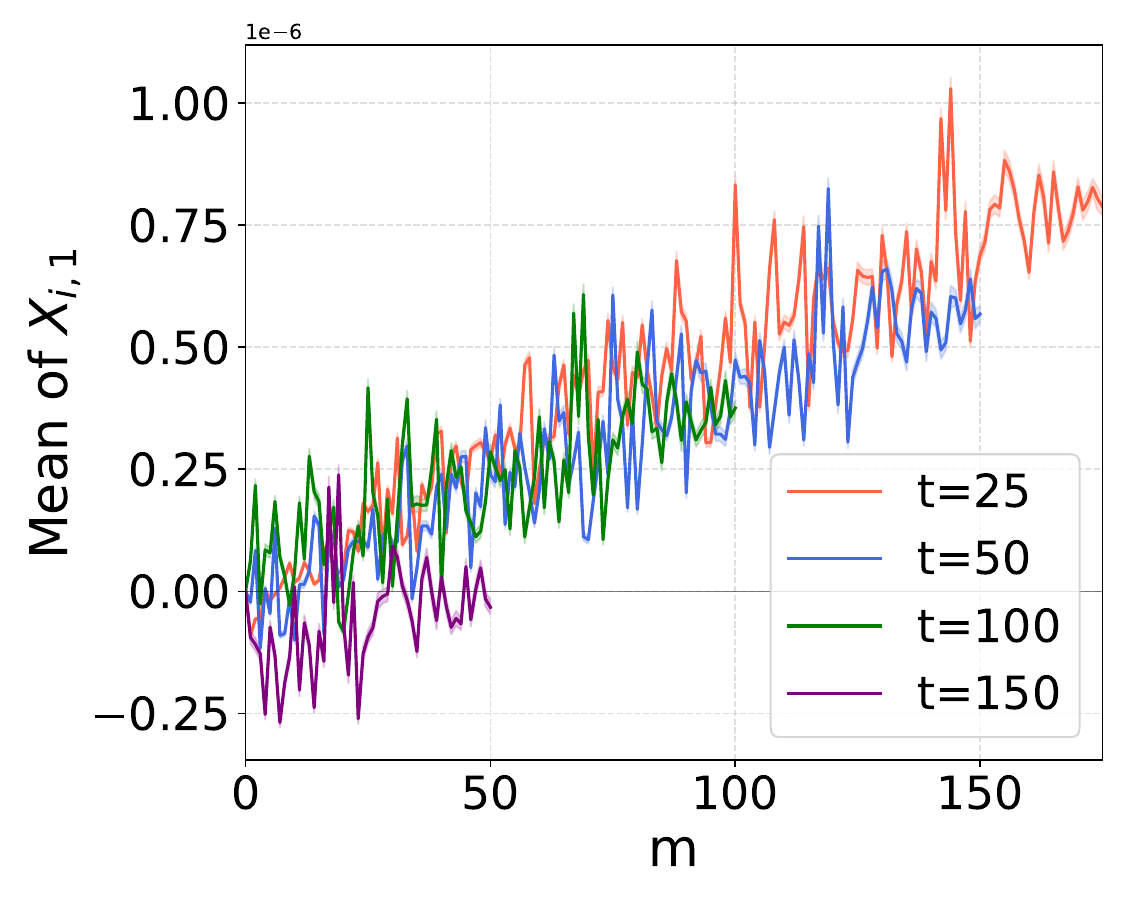}
        \caption{Arm 1}
    \end{subfigure}
    \begin{subfigure}[b]{0.32\textwidth}
        \centering
        \includegraphics[width=\textwidth]{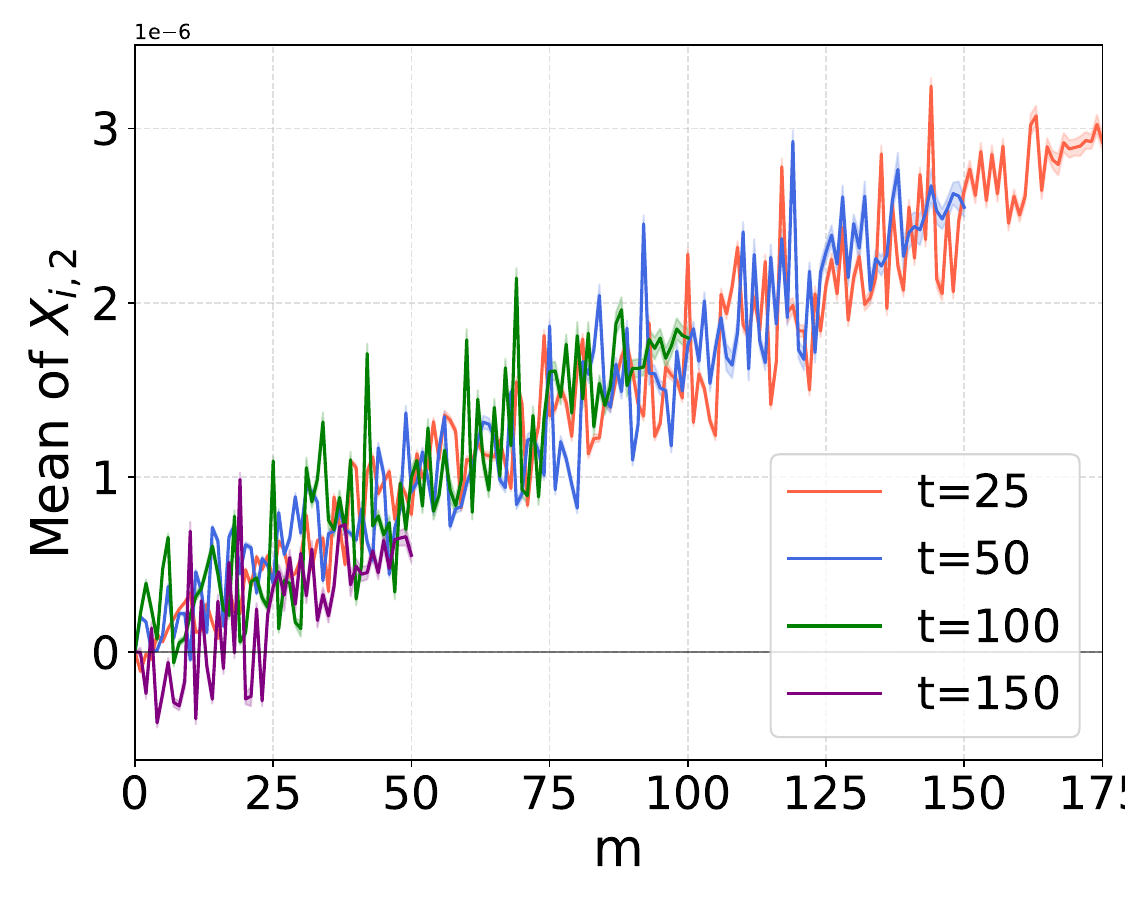}
        \caption{Arm 2}
    \end{subfigure}
    \\
    \begin{subfigure}[b]{0.32\textwidth}
        \centering
        \includegraphics[width=\textwidth]{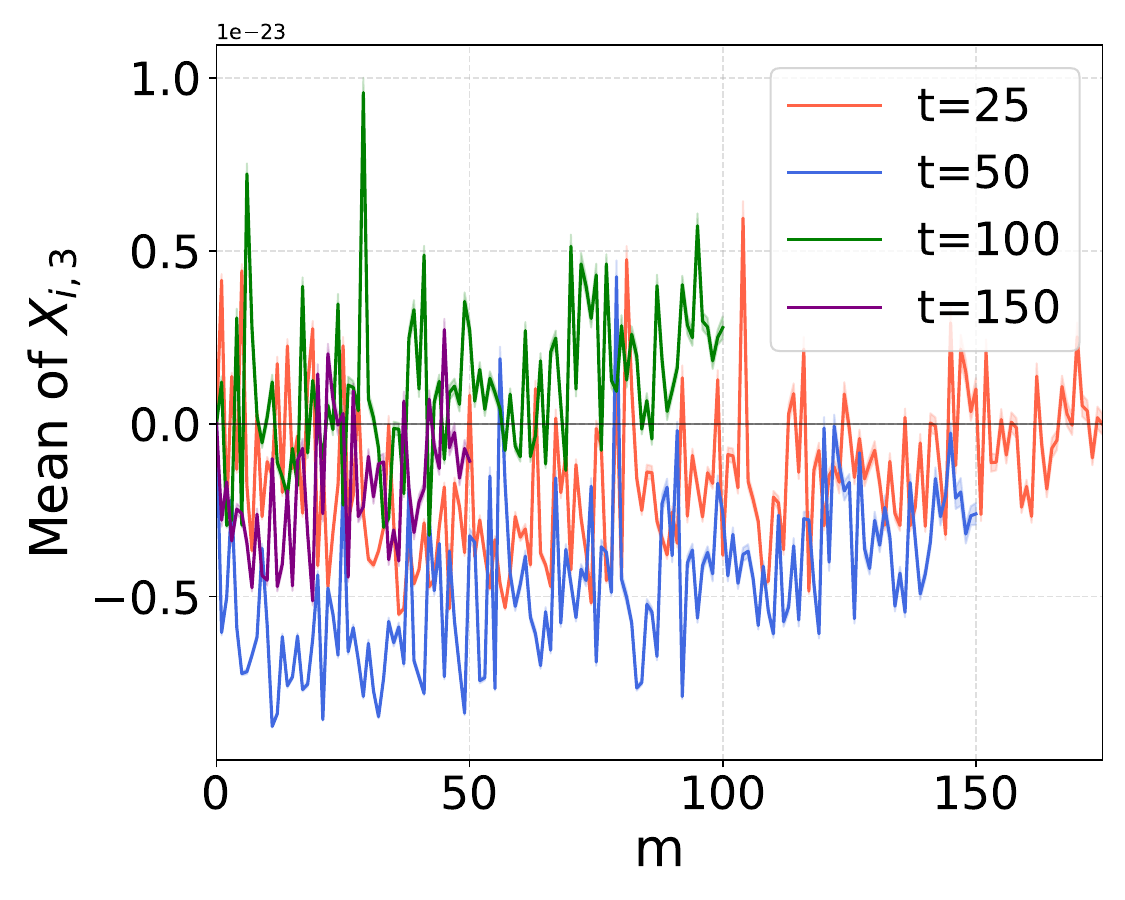}
        \caption{Arm 3}
    \end{subfigure}
    \begin{subfigure}[b]{0.32\textwidth}
        \centering
        \includegraphics[width=\textwidth]{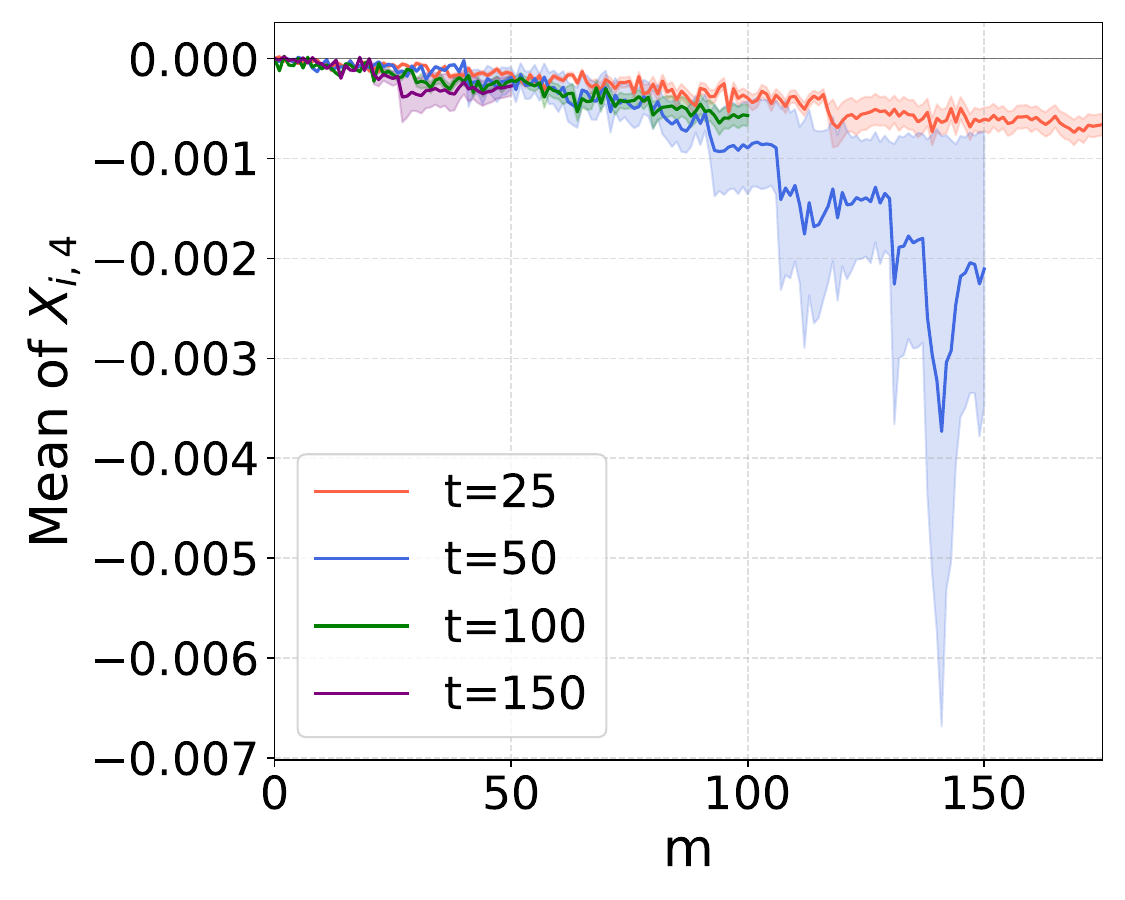}
        \caption{Arm 4}
    \end{subfigure}
\\
    \begin{subfigure}[b]{0.32\textwidth}
        \centering
        \includegraphics[width=\textwidth]{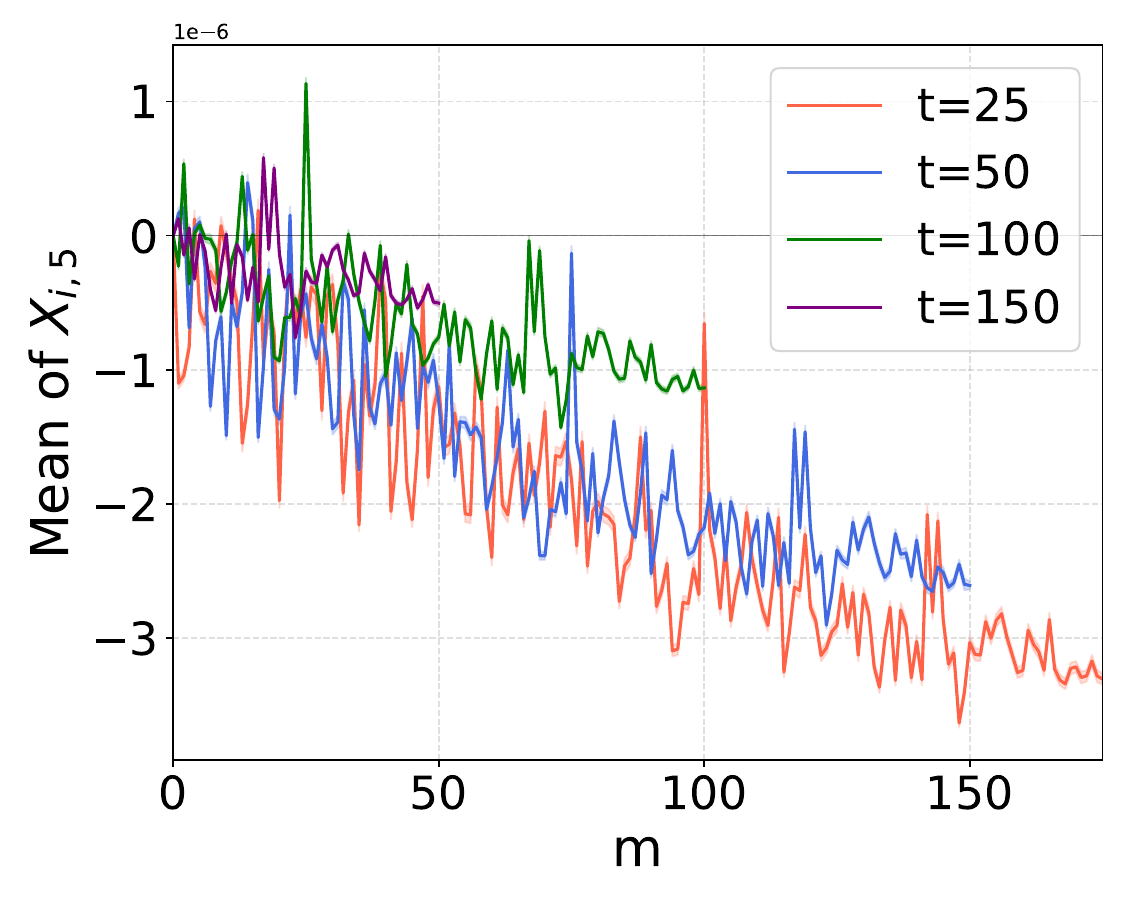}
        \caption{Arm 5}
    \end{subfigure}
    \begin{subfigure}[b]{0.32\textwidth}
        \centering
        \includegraphics[width=\textwidth]{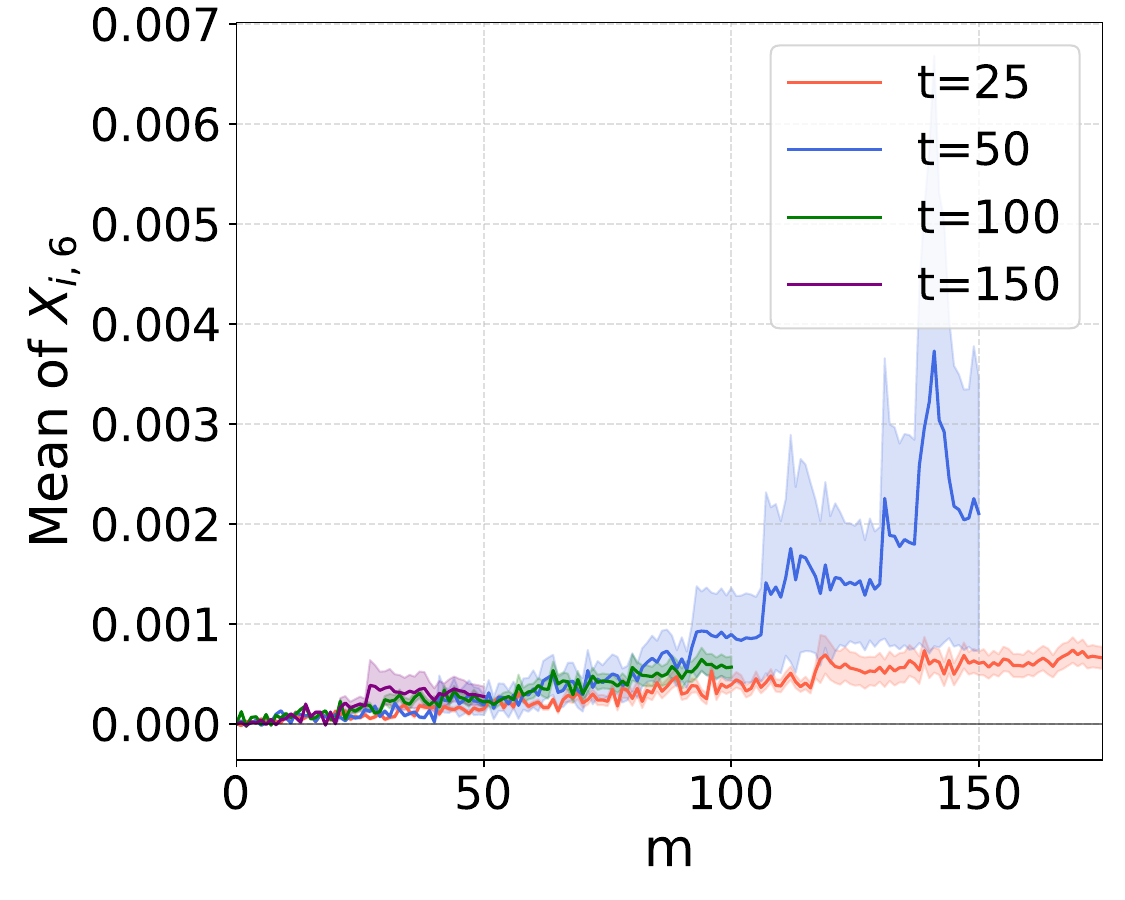}
        \caption{Arm 6}
    \end{subfigure}
    \begin{subfigure}[b]{0.32\textwidth}
        \centering
        \includegraphics[width=\textwidth]{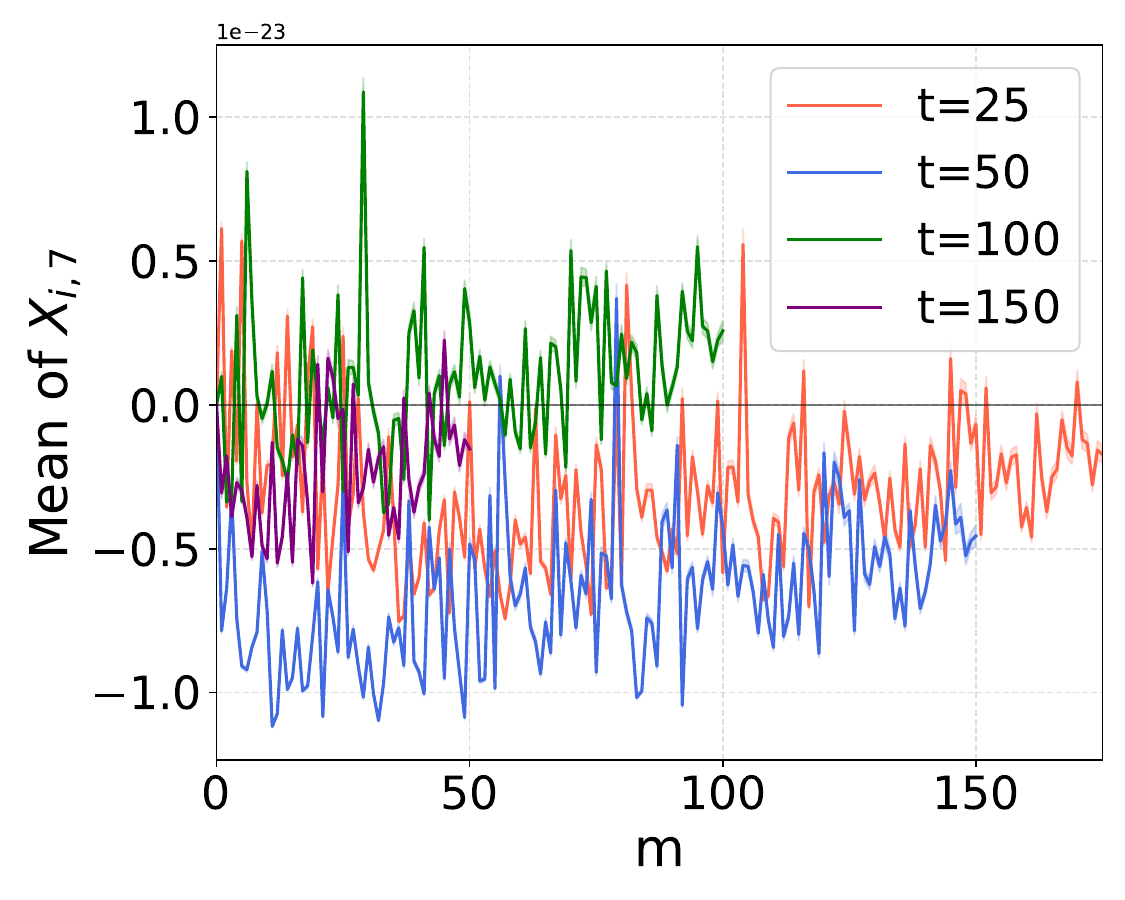}
        \caption{Arm 7}
    \end{subfigure}
    \\
    \begin{subfigure}[b]{0.32\textwidth}
        \centering
        \includegraphics[width=\textwidth]{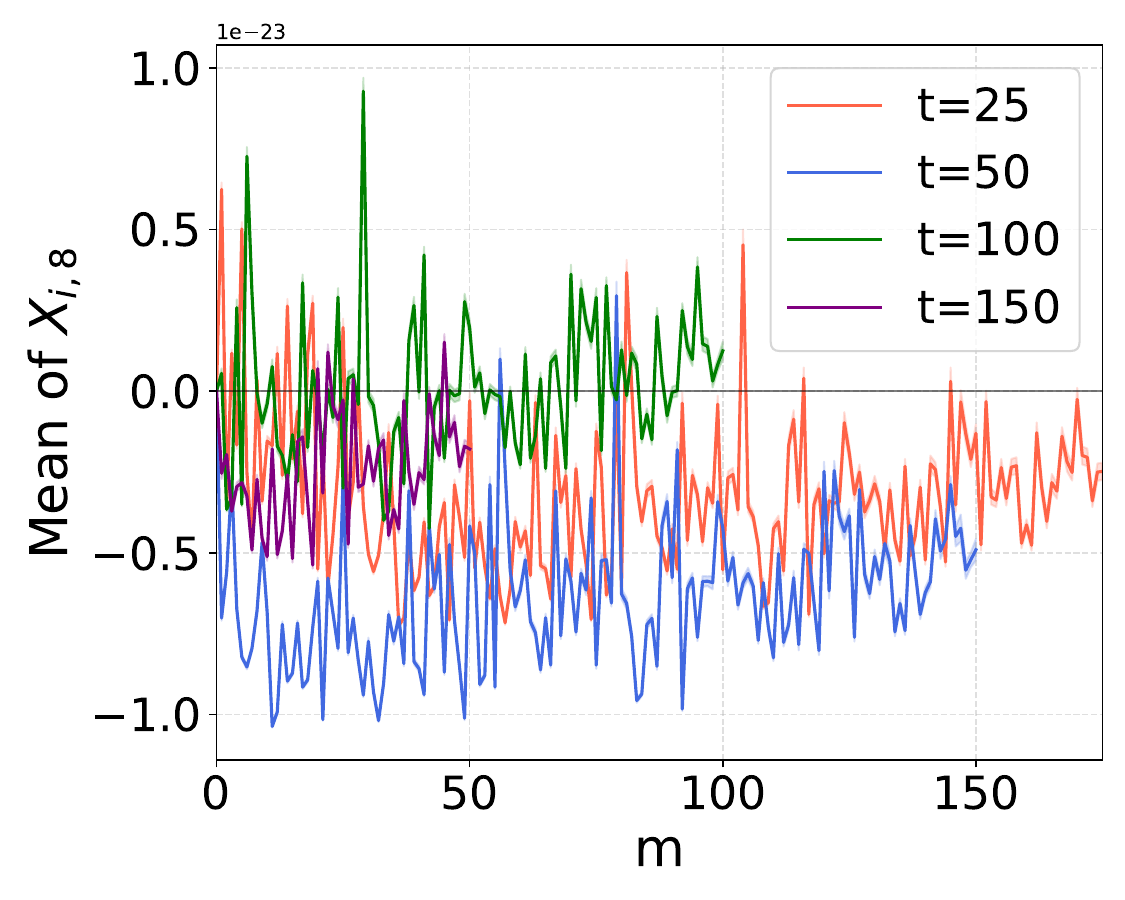}
        \caption{Arm 8}
    \end{subfigure}
    \begin{subfigure}[b]{0.32\textwidth}
        \centering
        \includegraphics[width=\textwidth]{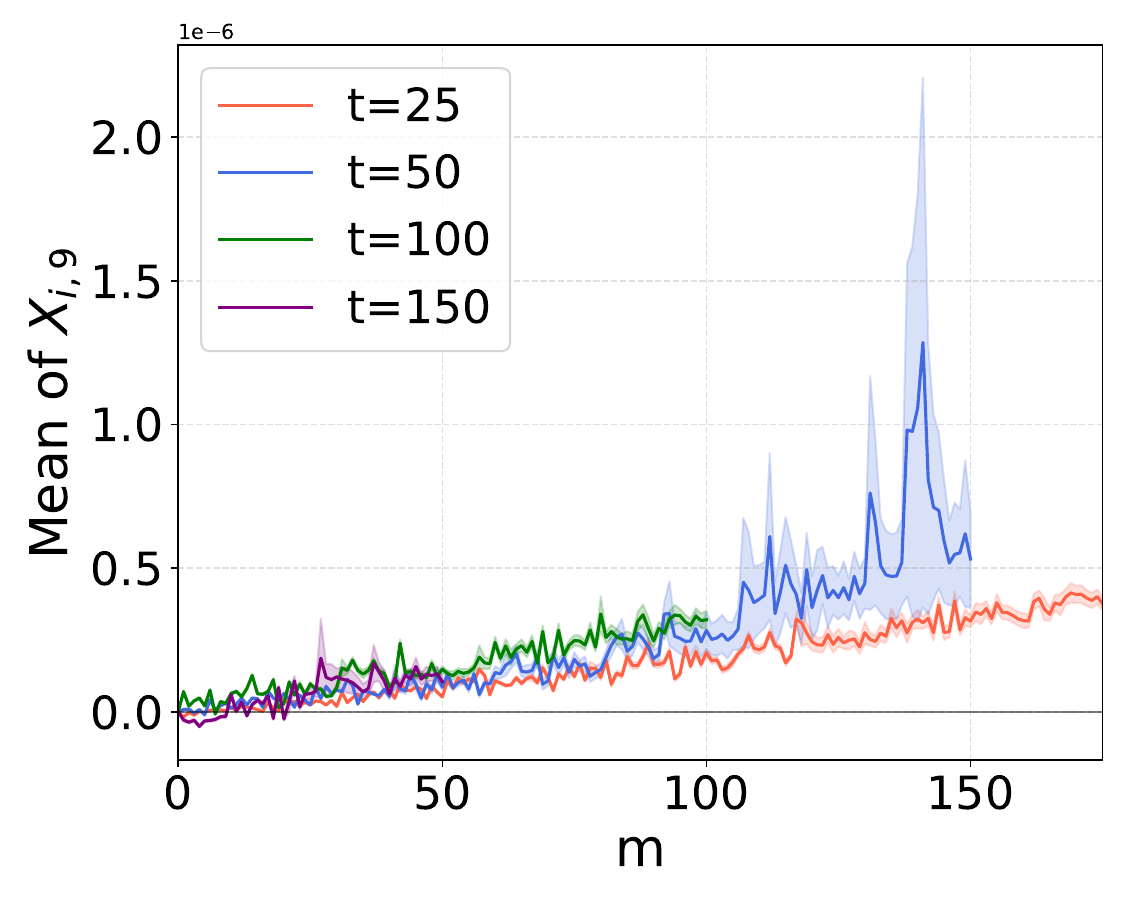}
        \caption{Arm 9}
    \end{subfigure}
    \caption{Per-arm martingale increments for DPT$'$ in linear bandits with offline starting contexts, where $m$ and $t$ denote the lengths of the martingale continuation and the starting contexts, respectively.}
    \label{fig_martingale_increment_offline_linear_DPT_prime}
\end{figure}

\begin{figure}[H]
    \centering
    \begin{subfigure}[b]{0.32\textwidth}
        \centering
        \includegraphics[width=\textwidth]{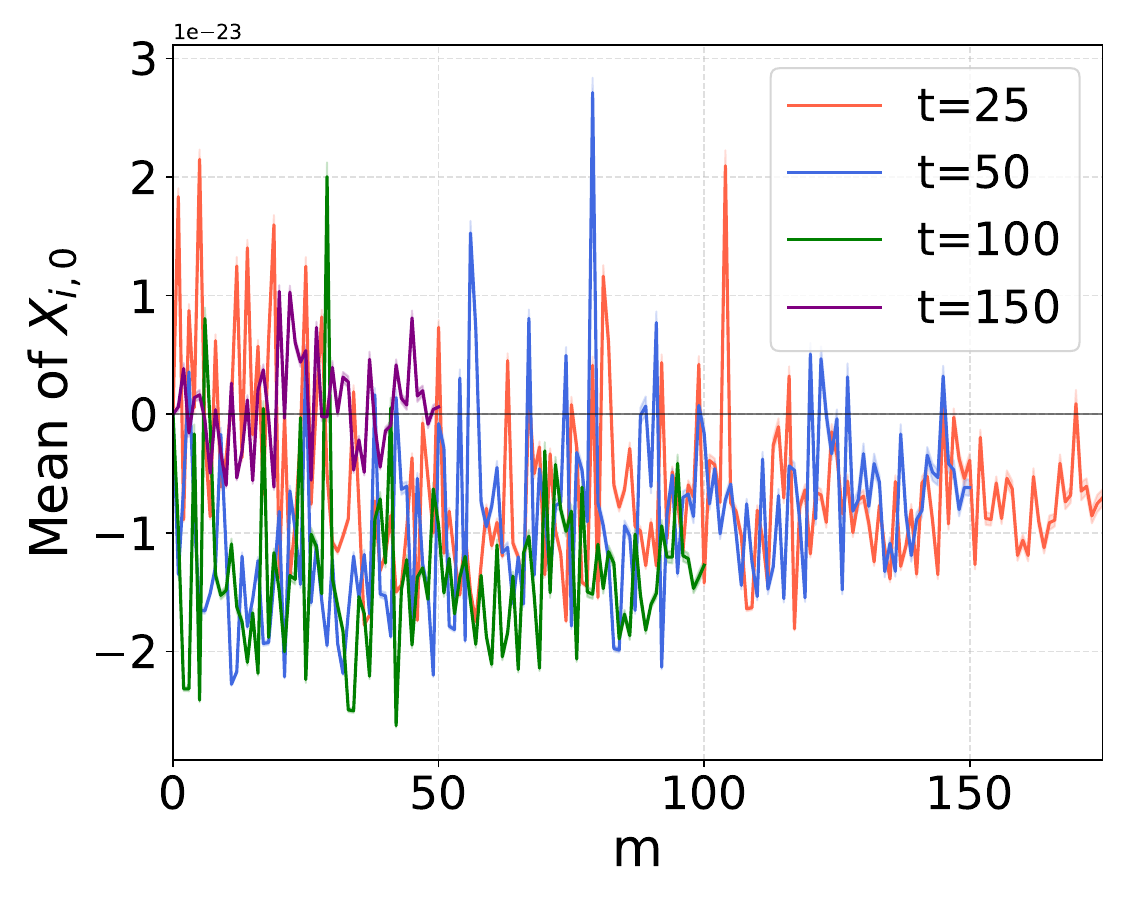}
        \caption{Arm 0}
    \end{subfigure}
    \begin{subfigure}[b]{0.32\textwidth}
        \centering
        \includegraphics[width=\textwidth]{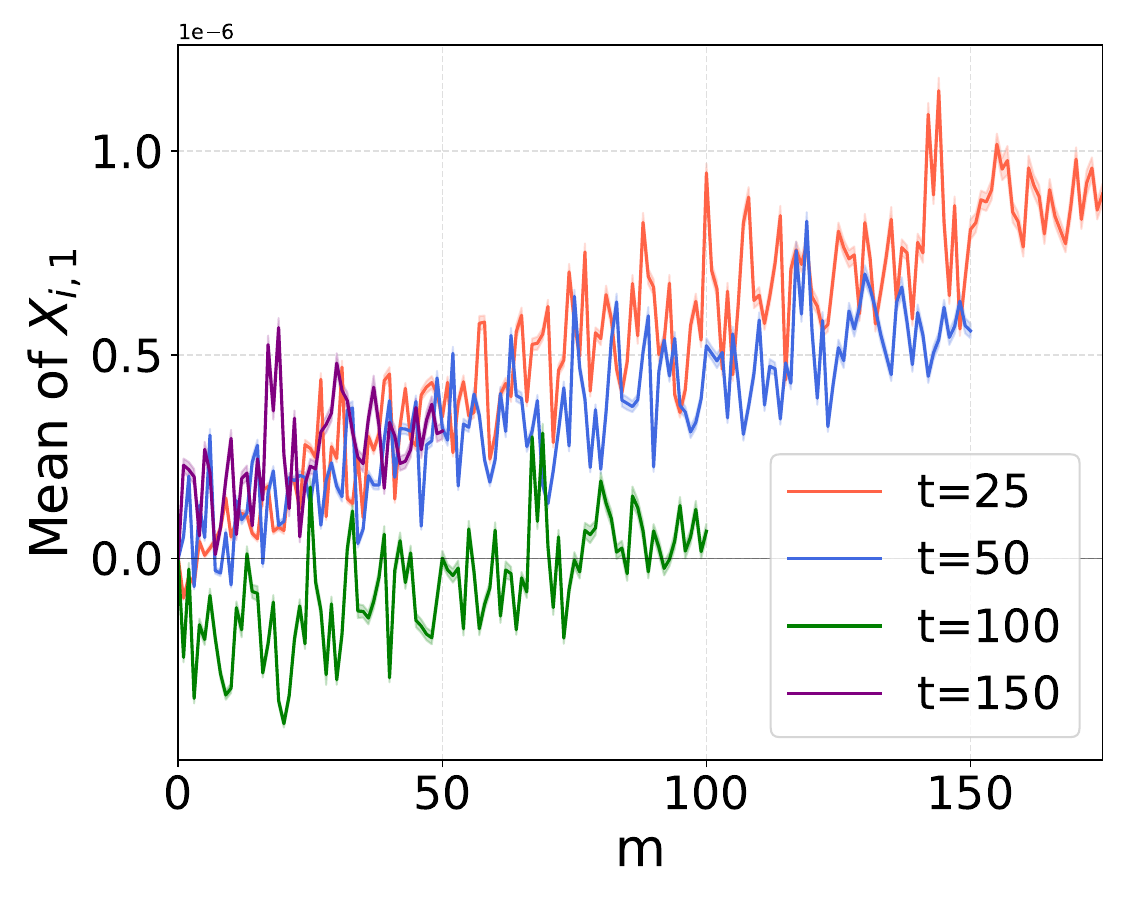}
        \caption{Arm 1}
    \end{subfigure}
    \begin{subfigure}[b]{0.32\textwidth}
        \centering
        \includegraphics[width=\textwidth]{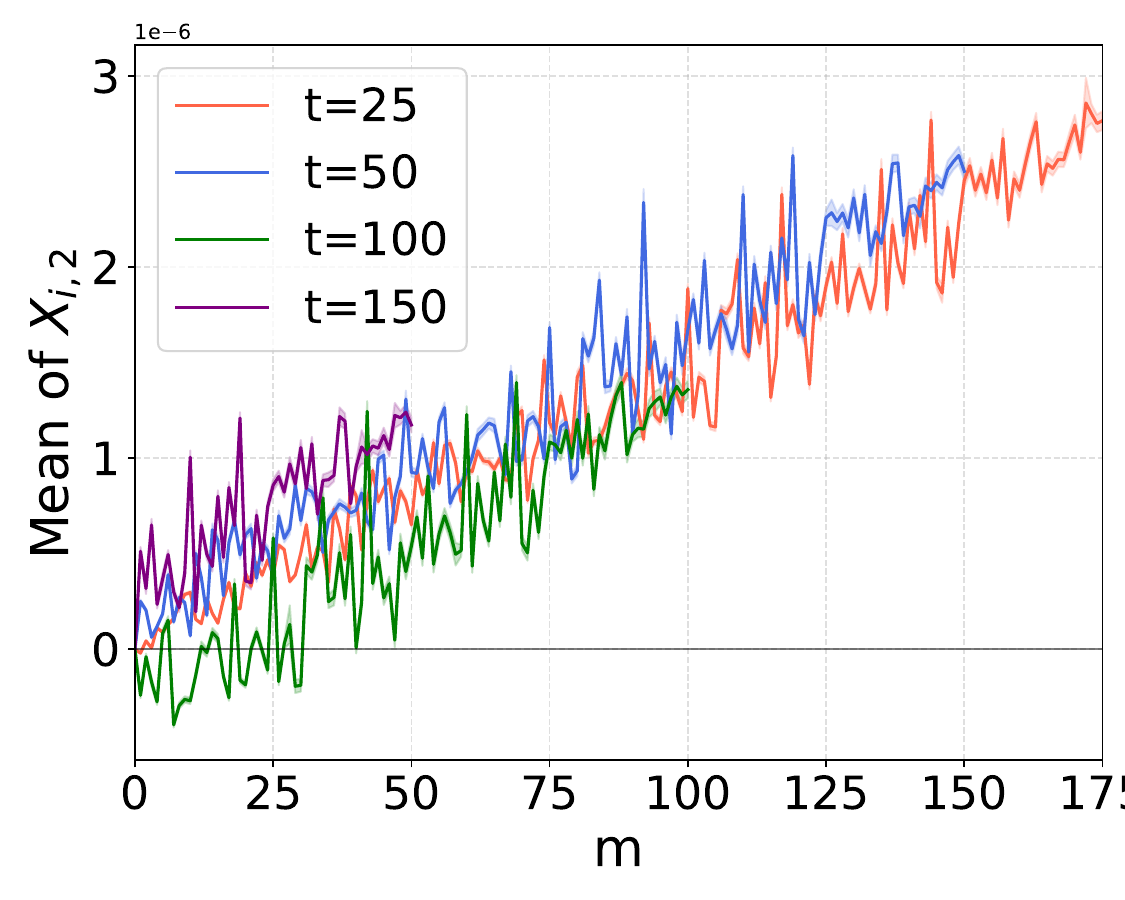}
        \caption{Arm 2}
    \end{subfigure}
    \\
    \begin{subfigure}[b]{0.32\textwidth}
        \centering
        \includegraphics[width=\textwidth]{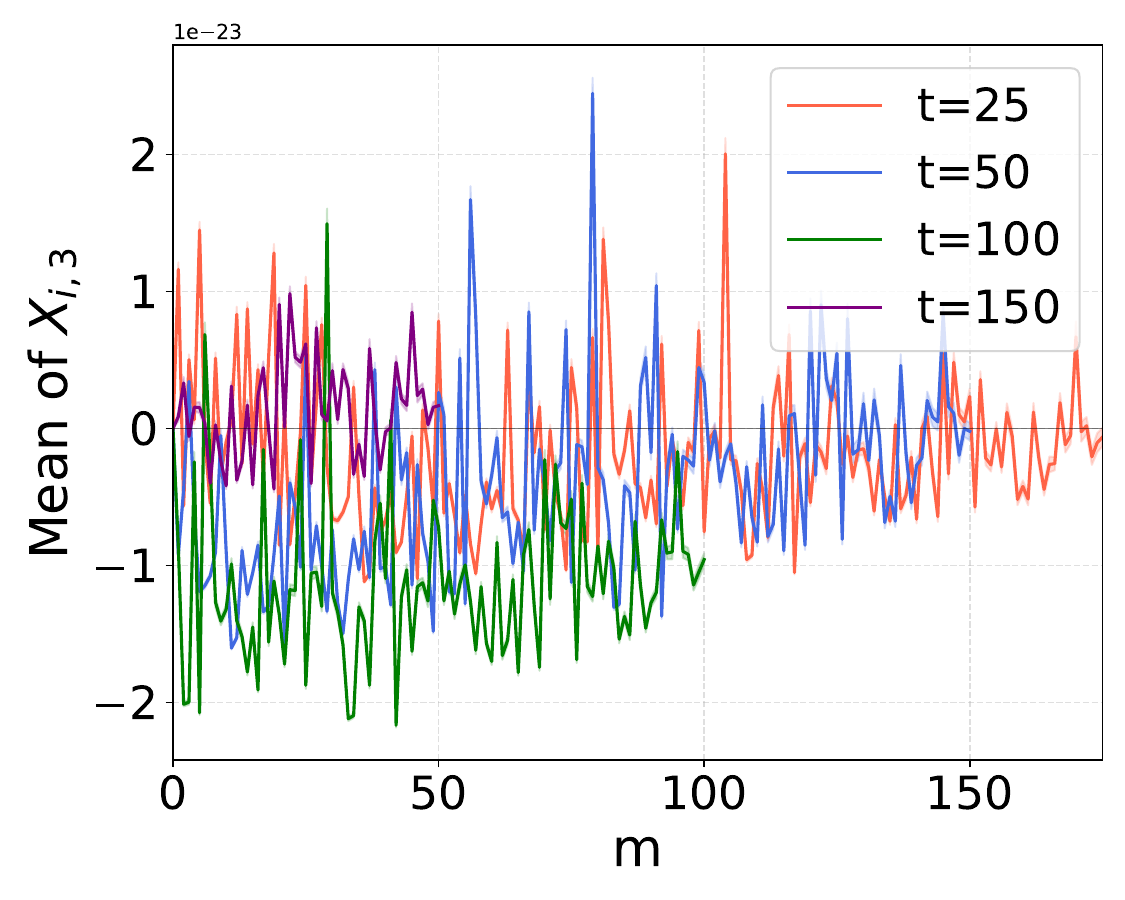}
        \caption{Arm 3}
    \end{subfigure}
    \begin{subfigure}[b]{0.32\textwidth}
        \centering
        \includegraphics[width=\textwidth]{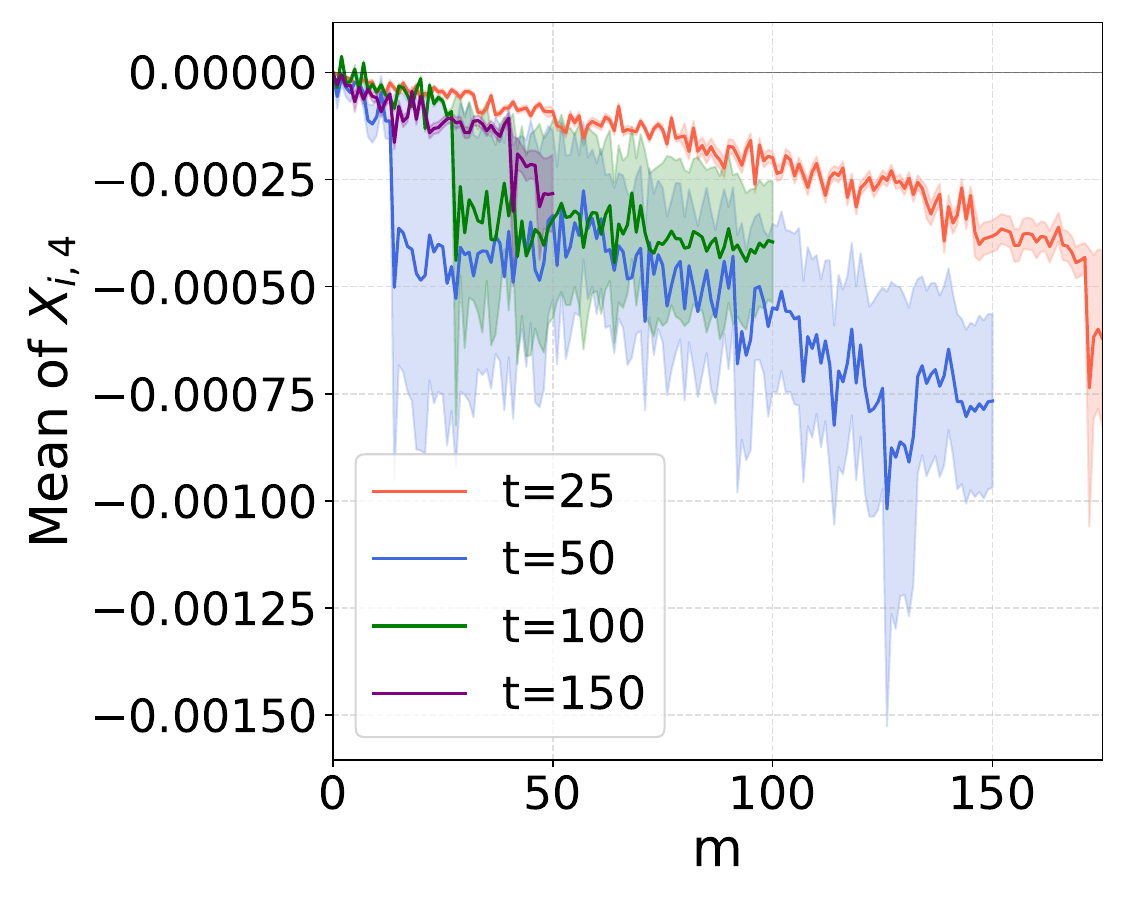}
        \caption{Arm 4}
    \end{subfigure}
\\
    \begin{subfigure}[b]{0.32\textwidth}
        \centering
        \includegraphics[width=\textwidth]{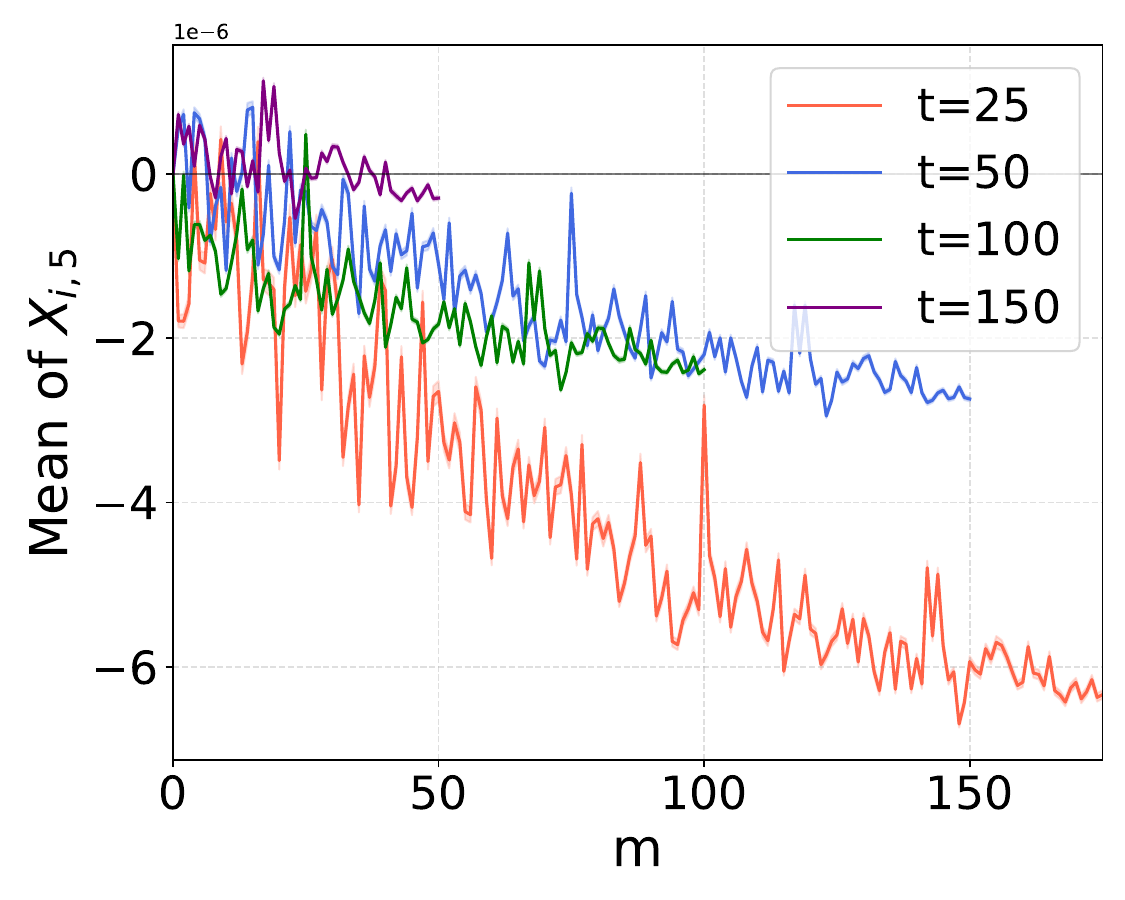}
        \caption{Arm 5}
    \end{subfigure}
    \begin{subfigure}[b]{0.32\textwidth}
        \centering
        \includegraphics[width=\textwidth]{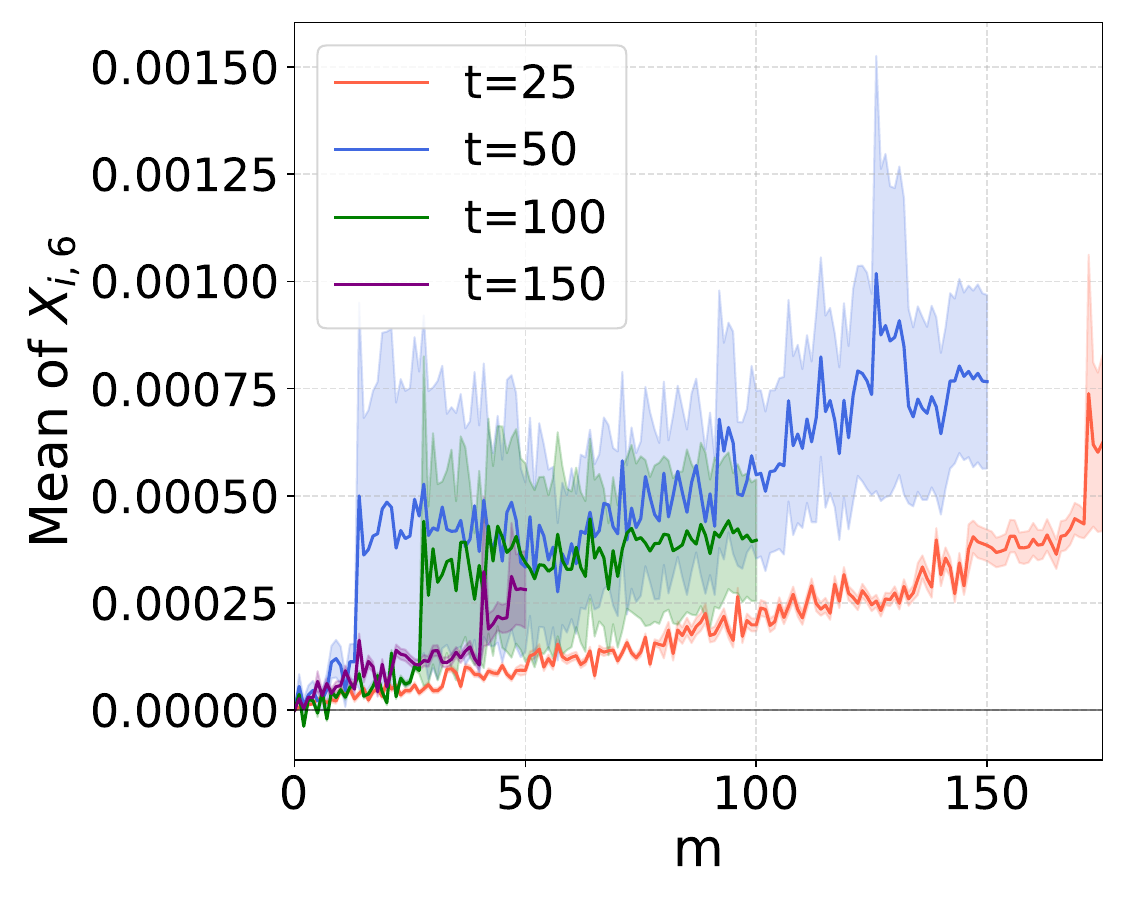}
        \caption{Arm 6}
    \end{subfigure}
    \begin{subfigure}[b]{0.32\textwidth}
        \centering
        \includegraphics[width=\textwidth]{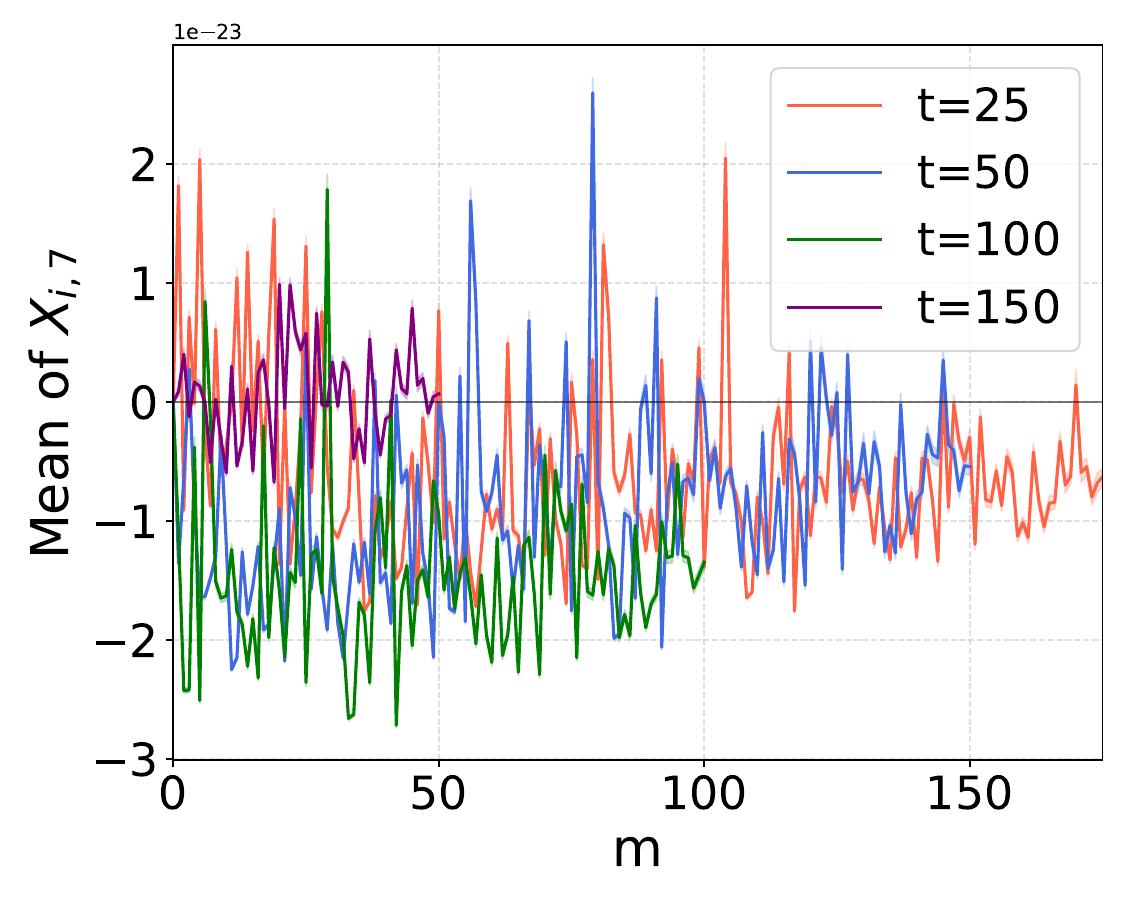}
        \caption{Arm 7}
    \end{subfigure}
    \\
    \begin{subfigure}[b]{0.32\textwidth}
        \centering
        \includegraphics[width=\textwidth]{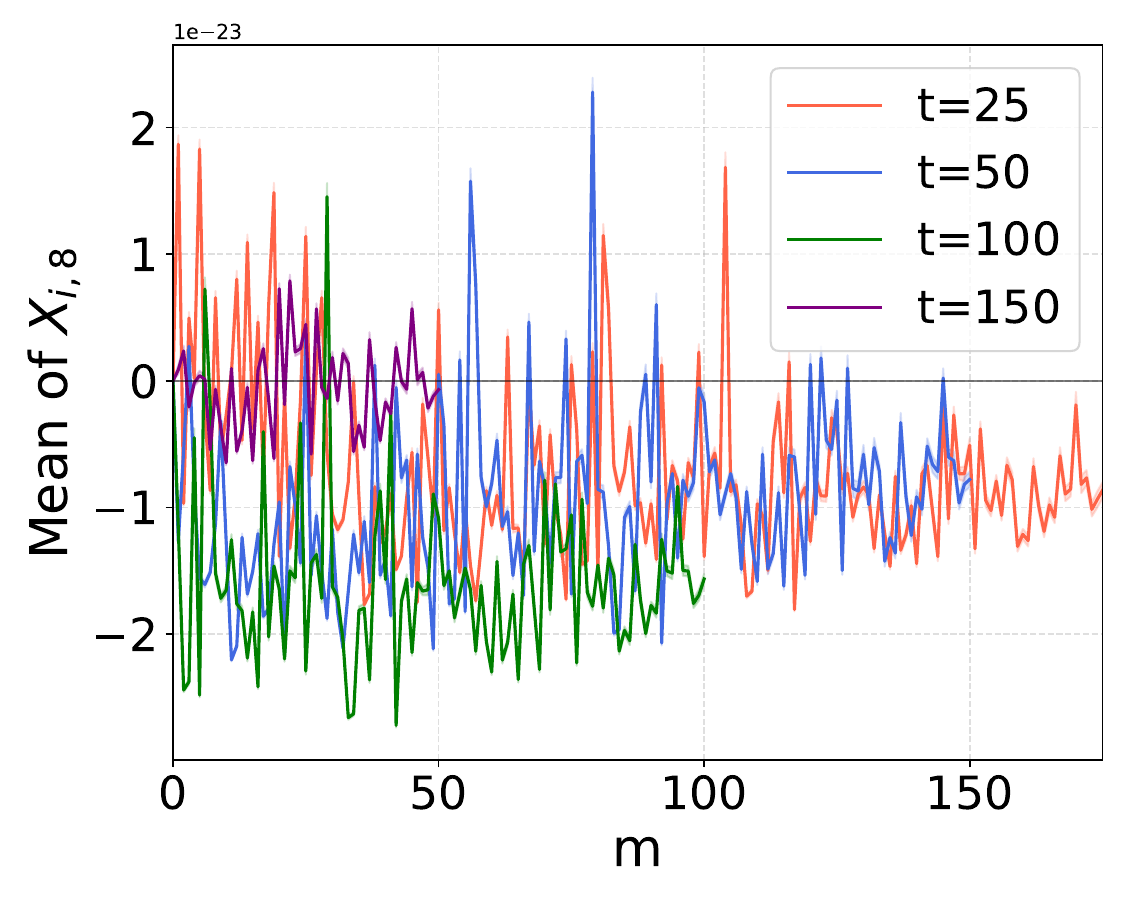}
        \caption{Arm 8}
    \end{subfigure}
    \begin{subfigure}[b]{0.32\textwidth}
        \centering
        \includegraphics[width=\textwidth]{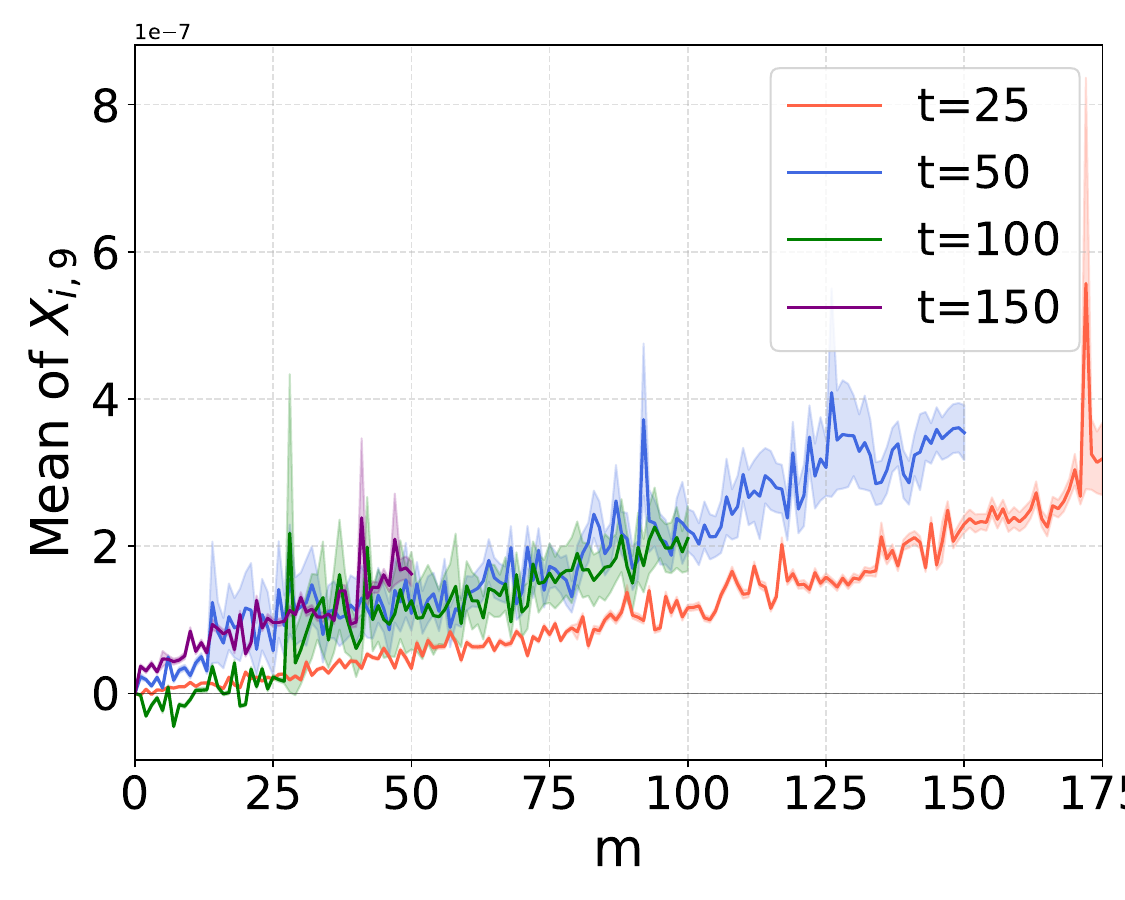}
        \caption{Arm 9}
    \end{subfigure}
    \caption{Per-arm martingale increments for DPT$'$ in linear bandits with online starting contexts, where $m$ and $t$ denote the lengths of the martingale continuation and the starting contexts, respectively.}
    \label{fig_martingale_increment_online_linear_DPT_prime}
\end{figure}

\subsection{GDPT sufficient-statistics table and evaluation figures}
\label{sec:gdpt_tables_and_plots}

\subsubsection{Sufficient-statistics table}
\begin{table}[H]
\centering
\small
\setlength{\tabcolsep}{6pt}
\renewcommand{\arraystretch}{1.08}
\caption{$p$-values for sufficient-statistic invariance tests on GDPT in Gaussian, Bernoulli, and linear bandits ($N=5000$, $\alpha=0.05$). Here, $n$ denotes the context length.}
\label{tab:ss_summary_gdpt}

\sbox{\banditupperbox}{%
\begin{tabular}{@{}r rrrr rr@{}}
\toprule
& \multicolumn{4}{c}{Gaussian}
& \multicolumn{2}{c}{Bernoulli} \\
\cmidrule(lr){2-5}\cmidrule(lr){6-7}
& \multicolumn{2}{c}{Offline}
& \multicolumn{2}{c}{Online}
& \multicolumn{1}{c}{Offline}
& \multicolumn{1}{c}{Online} \\
$n$
& \multicolumn{1}{c}{Sort}
& \multicolumn{1}{c}{Replace}
& \multicolumn{1}{c}{Sort}
& \multicolumn{1}{c}{Replace}
& \multicolumn{1}{c}{Sort}
& \multicolumn{1}{c}{Sort} \\
\midrule
$50$ & $<10^{-300}$ & $1.53\times 10^{-106}$ & $<10^{-300}$ & $2.50\times 10^{-80}$ & $1.36\times 10^{-127}$ & $4.33\times 10^{-125}$ \\
$100$ & $3.40\times 10^{-233}$ & $6.19\times 10^{-73}$ & $7.93\times 10^{-208}$ & $7.01\times 10^{-60}$ & $3.05\times 10^{-81}$ & $1.94\times 10^{-81}$ \\
$200$ & $1.55\times 10^{-134}$ & $1.79\times 10^{-59}$ & $4.62\times 10^{-135}$ & $7.43\times 10^{-43}$ & $2.07\times 10^{-26}$ & $1.06\times 10^{-20}$ \\
$300$ & $6.21\times 10^{-128}$ & $8.78\times 10^{-102}$ & $4.39\times 10^{-132}$ & $1.79\times 10^{-106}$ & $1.78\times 10^{-25}$ & $1.50\times 10^{-18}$ \\
$400$ & $8.54\times 10^{-114}$ & $7.85\times 10^{-109}$ & $1.72\times 10^{-116}$ & $2.18\times 10^{-113}$ & $1.26\times 10^{-34}$ & $1.53\times 10^{-31}$ \\
$500$ & $3.38\times 10^{-92}$ & $5.25\times 10^{-128}$ & $1.45\times 10^{-102}$ & $3.10\times 10^{-140}$ & $5.24\times 10^{-50}$ & $2.27\times 10^{-47}$ \\
\midrule
\end{tabular}}

\resizebox{\linewidth}{!}{%
\begin{minipage}{\wd\banditupperbox}
\noindent\usebox{\banditupperbox}\par\nointerlineskip
\noindent
\begin{tabular*}{\linewidth}{@{\extracolsep{\fill}}r rrrr@{}}
& \multicolumn{4}{c}{Linear} \\
\cmidrule(lr){2-5}
& \multicolumn{2}{c}{Offline}
& \multicolumn{2}{c}{Online} \\
$n$
& \multicolumn{1}{c}{Sort}
& \multicolumn{1}{c}{Replace}
& \multicolumn{1}{c}{Sort}
& \multicolumn{1}{c}{Replace} \\
\midrule
$25$  & $1.37\times 10^{-107}$ & $3.64\times 10^{-36}$ & $6.02\times 10^{-91}$ & $3.81\times 10^{-36}$ \\
$50$  & $5.32\times 10^{-91}$  & $1.83\times 10^{-41}$ & $1.22\times 10^{-81}$ & $1.34\times 10^{-32}$ \\
$100$ & $1.99\times 10^{-49}$  & $2.37\times 10^{-51}$ & $2.73\times 10^{-42}$ & $1.91\times 10^{-39}$ \\
$150$ & $2.60\times 10^{-31}$  & $2.88\times 10^{-49}$ & $3.15\times 10^{-29}$ & $9.43\times 10^{-45}$ \\
$200$ & $2.73\times 10^{-30}$  & $8.05\times 10^{-46}$ & $2.60\times 10^{-32}$ & $4.09\times 10^{-47}$ \\
\bottomrule
\end{tabular*}
\end{minipage}%
}
\end{table}

\subsubsection{Gaussian evaluation figures and martingale plots}
\begin{figure}[H]
\centering
\begin{subfigure}{0.245\textwidth}
\includegraphics[width=\textwidth]{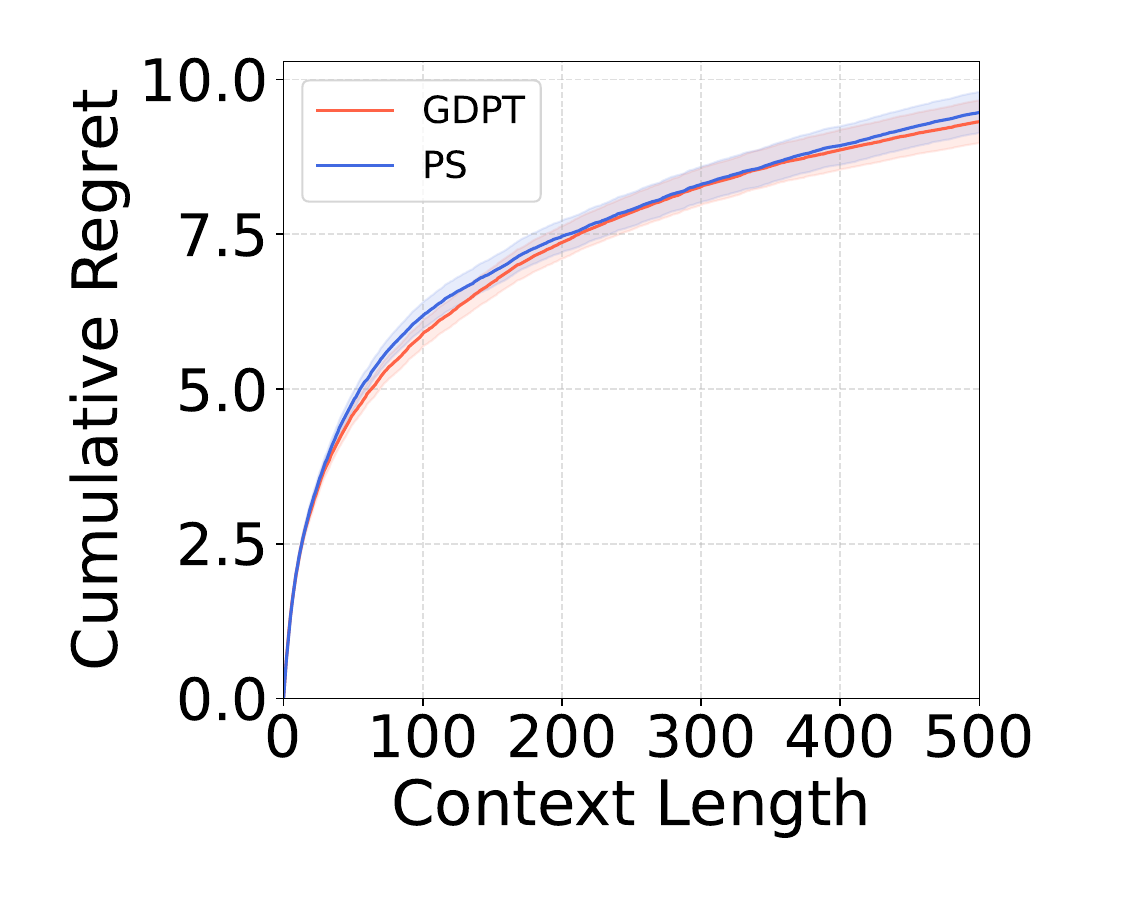}
\caption{Online regret}
\end{subfigure}
\hfill
\begin{subfigure}{0.245\textwidth}
\includegraphics[width=\textwidth]{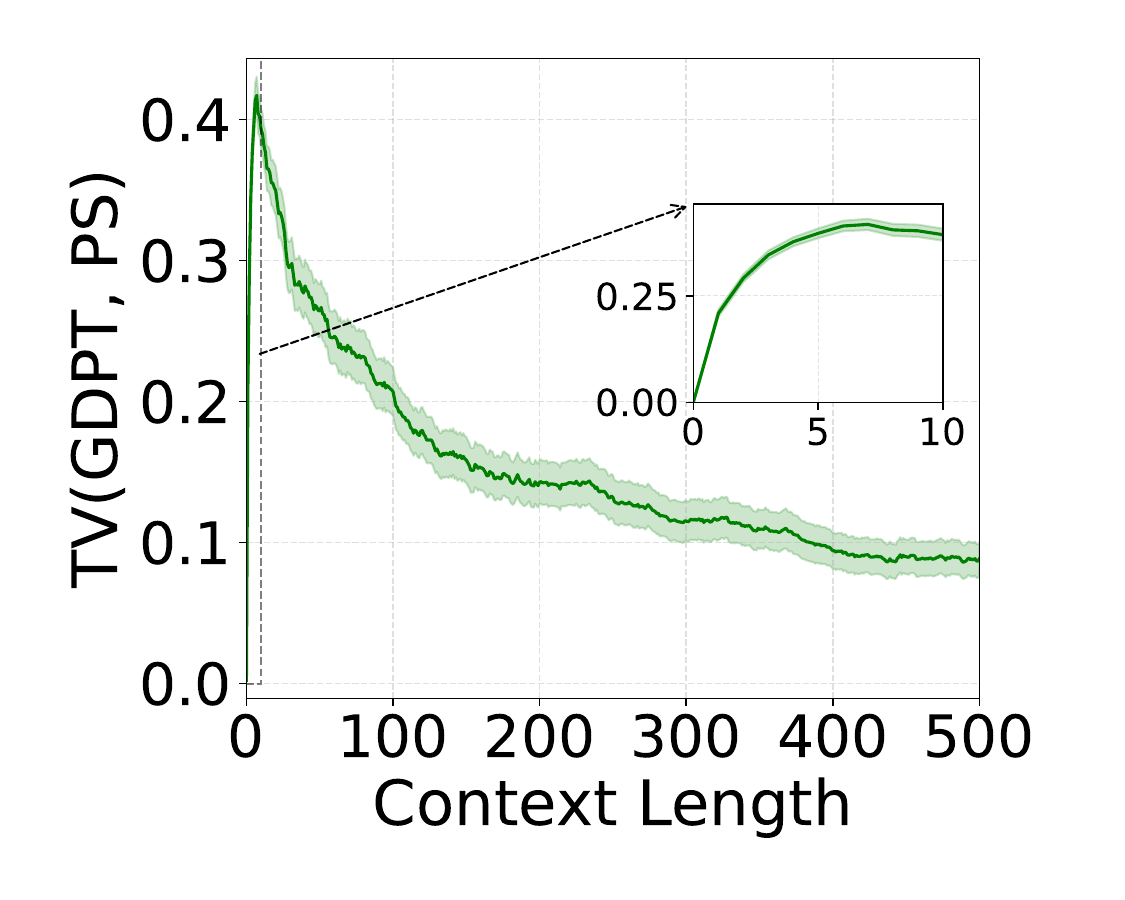}
\caption{Online TV distance}
\end{subfigure}
\hfill
\begin{subfigure}{0.245\textwidth}
\includegraphics[width=\textwidth]{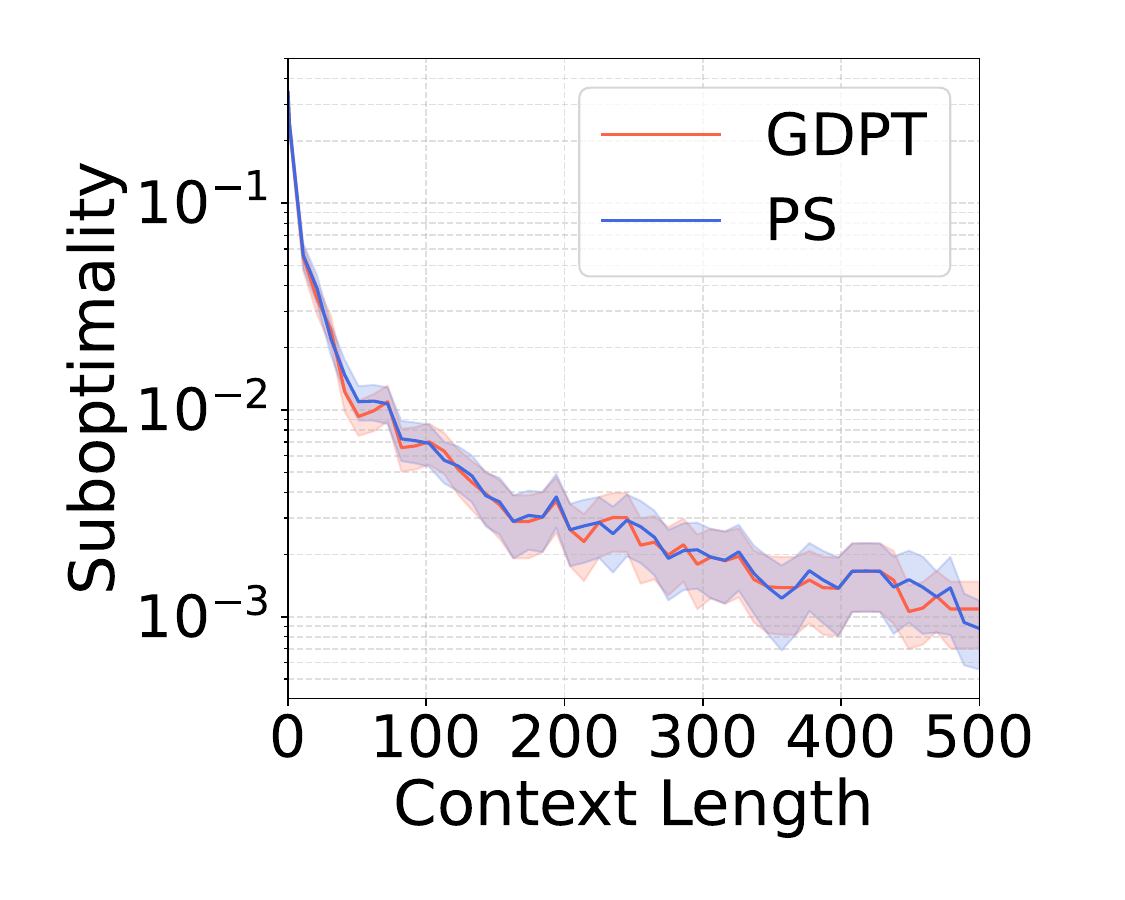}
\caption{Offline suboptimality}
\end{subfigure}
\hfill
\begin{subfigure}{0.245\textwidth}
\includegraphics[width=\textwidth]{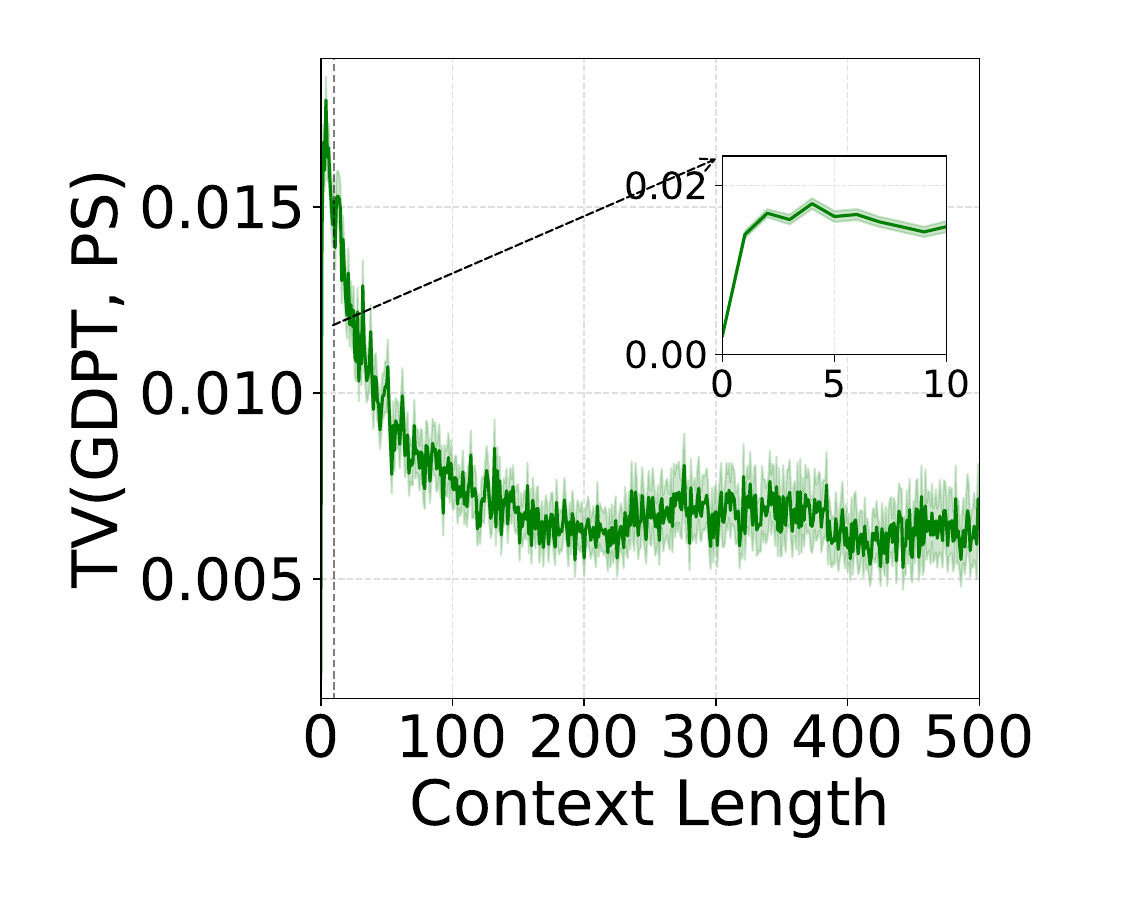}
\caption{Offline TV distance}
\end{subfigure}
\caption{Performance comparison of GDPT and Thompson Sampling in Gaussian bandits under online and offline deployment.}
\label{fig_evaluation_gaussian_bandit_GDPT}
\end{figure}

\begin{figure}[H]
    \centering
    \begin{subfigure}[b]{0.32\textwidth}
        \centering
        \includegraphics[width=\textwidth]{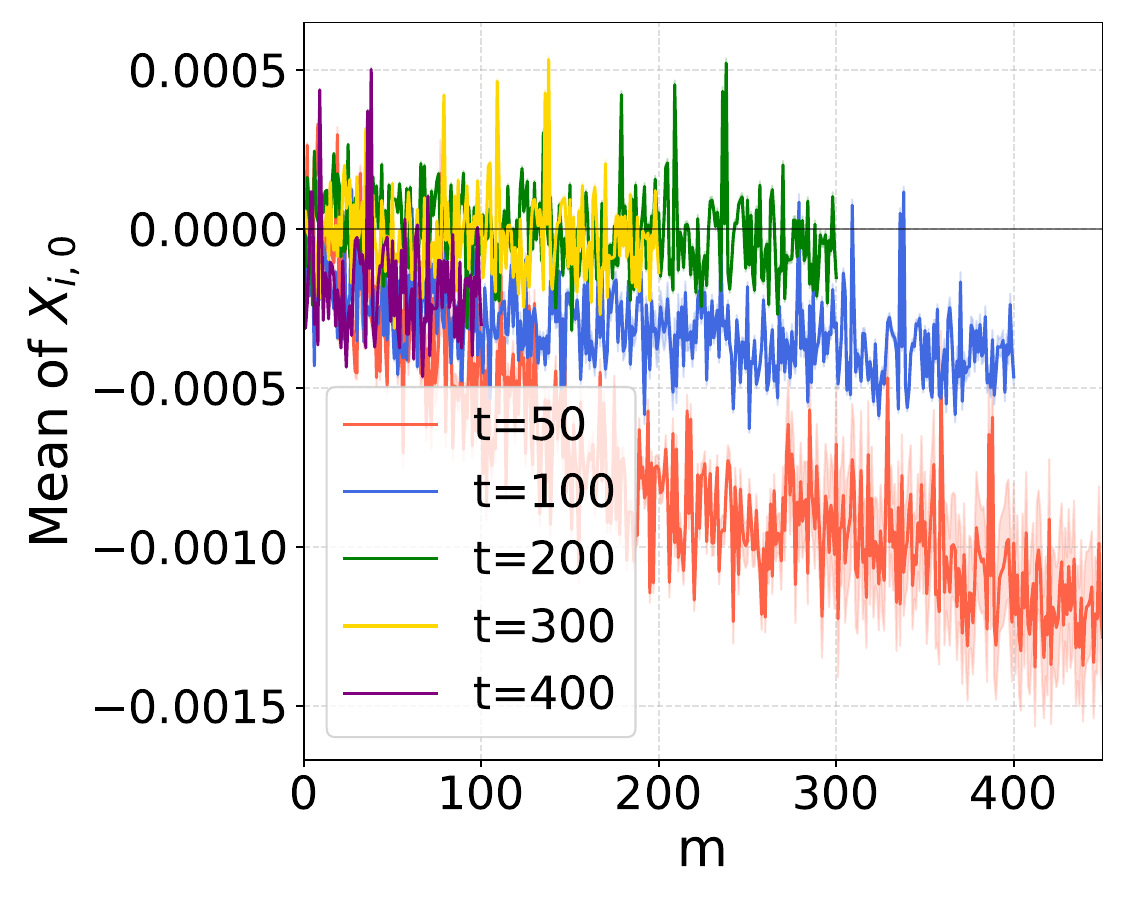}
        \caption{Arm 0}
    \end{subfigure}
    \begin{subfigure}[b]{0.32\textwidth}
        \centering
        \includegraphics[width=\textwidth]{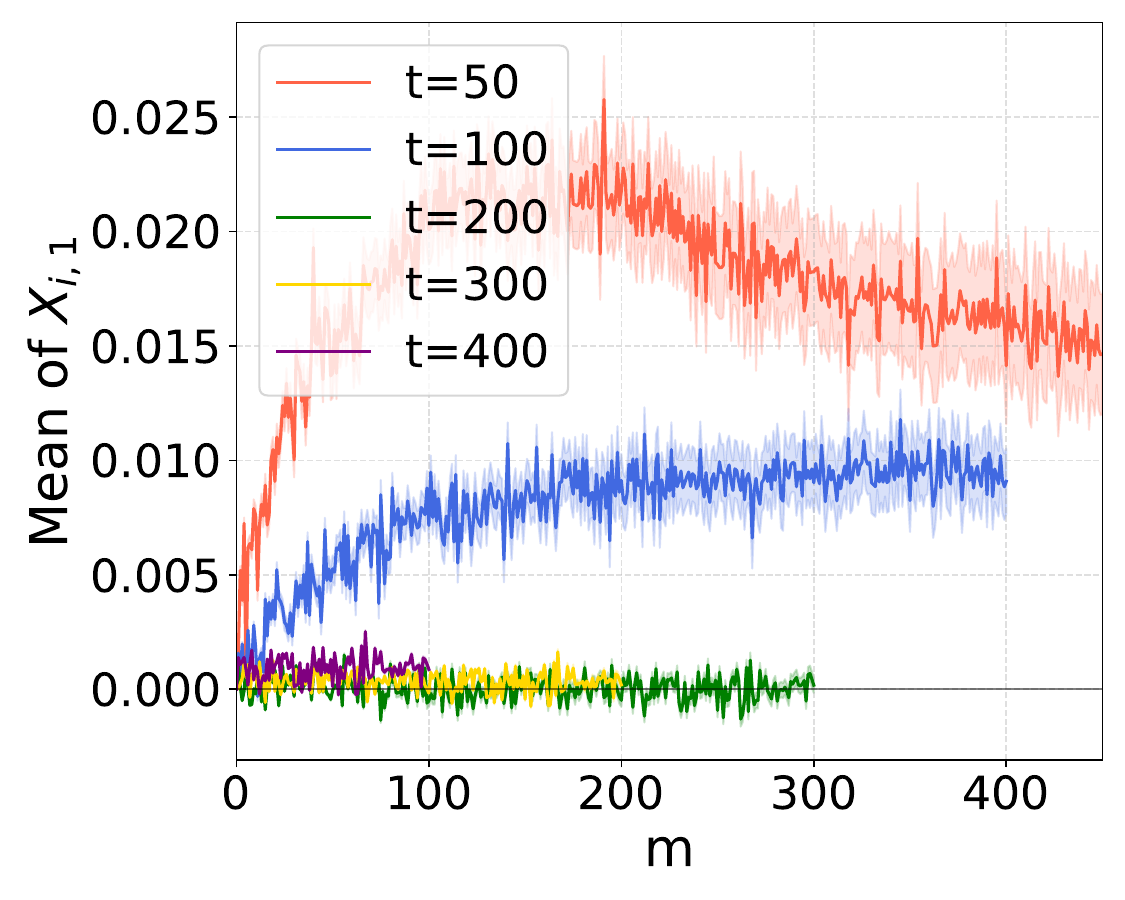}
        \caption{Arm 1}
    \end{subfigure}
    \begin{subfigure}[b]{0.32\textwidth}
        \centering
        \includegraphics[width=\textwidth]{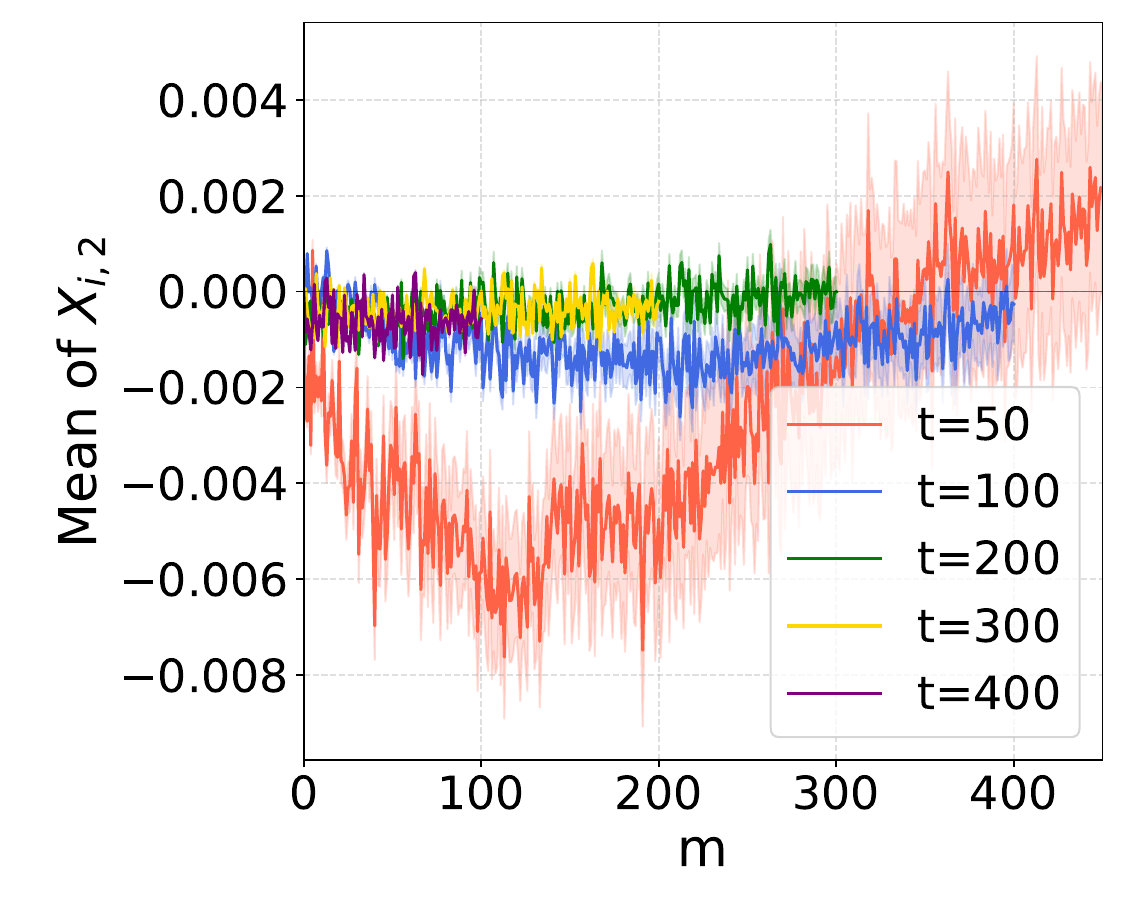}
        \caption{Arm 2}
    \end{subfigure}
    \\
    \begin{subfigure}[b]{0.32\textwidth}
        \centering
        \includegraphics[width=\textwidth]{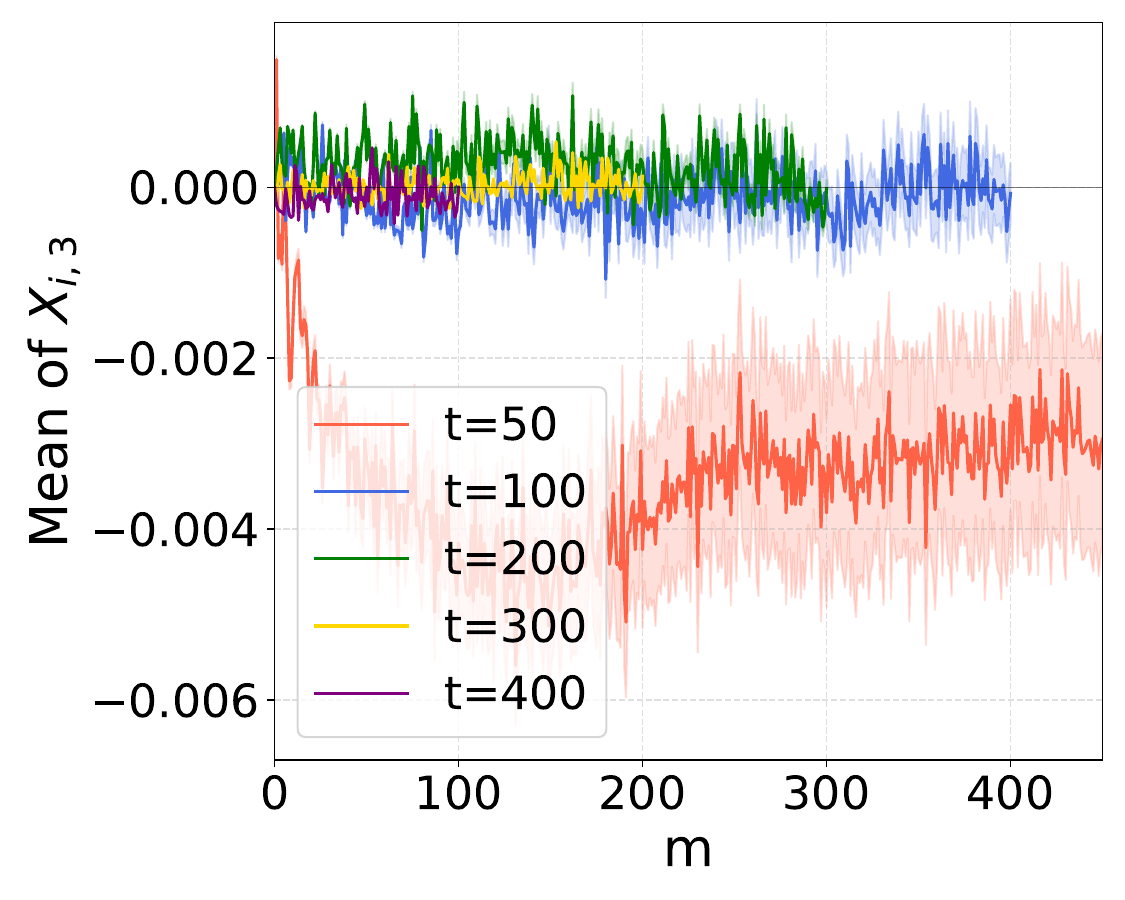}
        \caption{Arm 3}
    \end{subfigure}
    \begin{subfigure}[b]{0.32\textwidth}
        \centering
        \includegraphics[width=\textwidth]{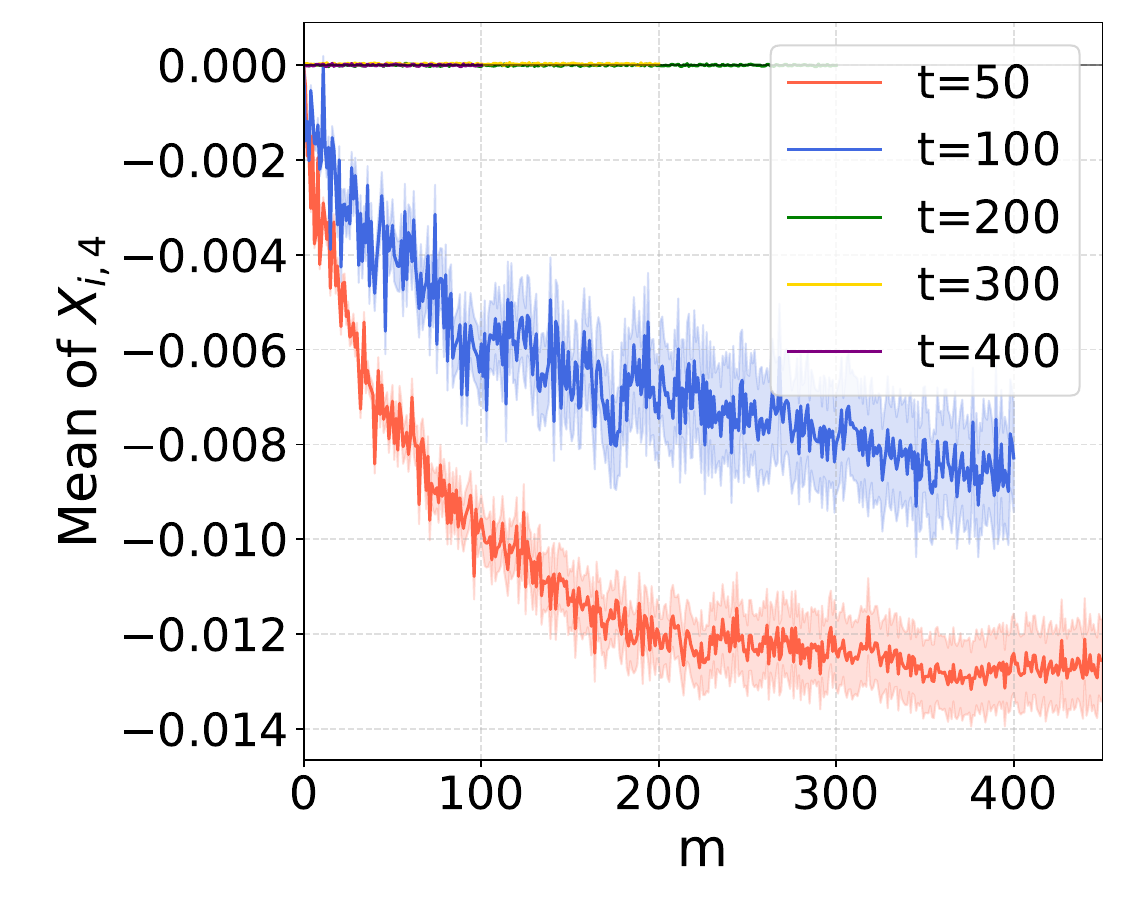}
        \caption{Arm 4}
    \end{subfigure}
    \caption{Per-arm martingale increments for GDPT in Gaussian bandits with offline starting contexts, where $m$ and $t$ denote the lengths of the martingale continuation and the starting contexts, respectively.}
    \label{fig_martingale_increment_offline_gaussian_GDPT}
\end{figure}

\begin{figure}[H]
    \centering
    \begin{subfigure}[b]{0.32\textwidth}
        \centering
        \includegraphics[width=\textwidth]{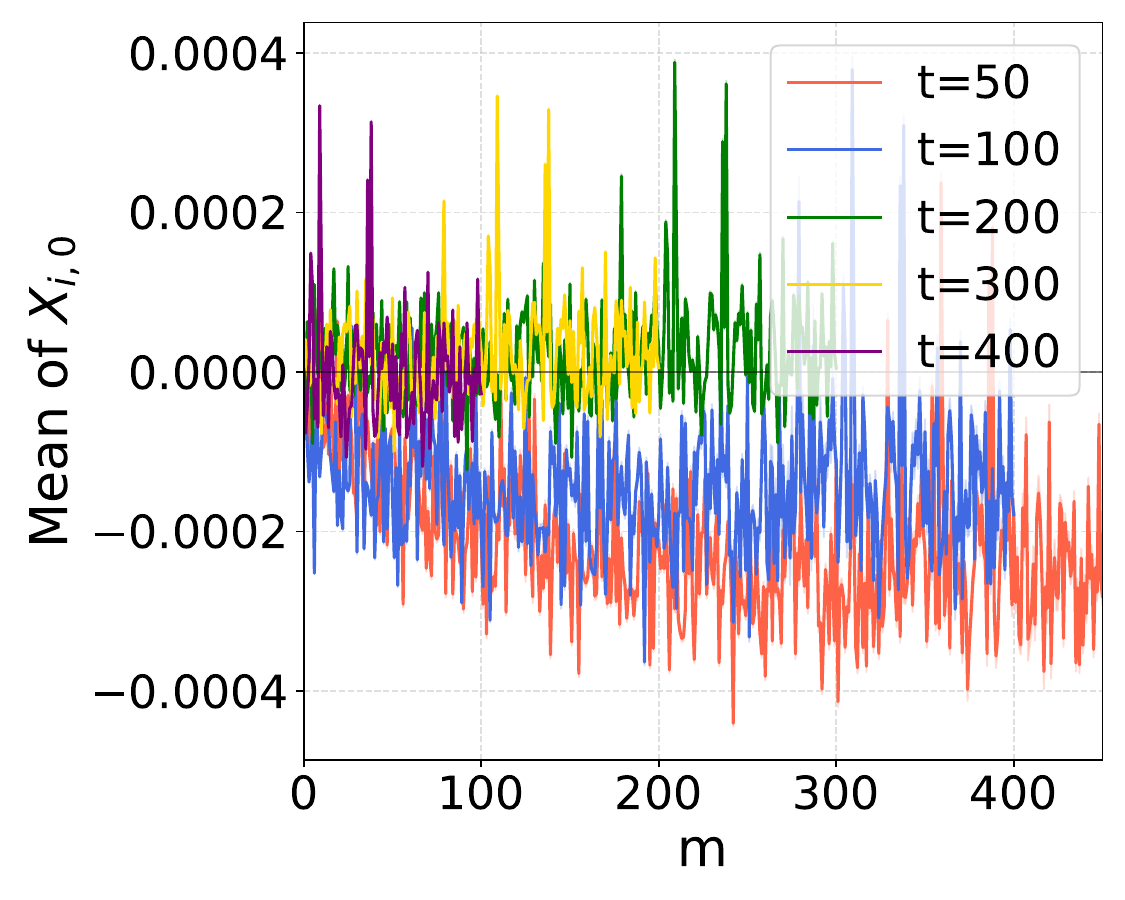}
        \caption{Arm 0}
    \end{subfigure}
    \begin{subfigure}[b]{0.32\textwidth}
        \centering
        \includegraphics[width=\textwidth]{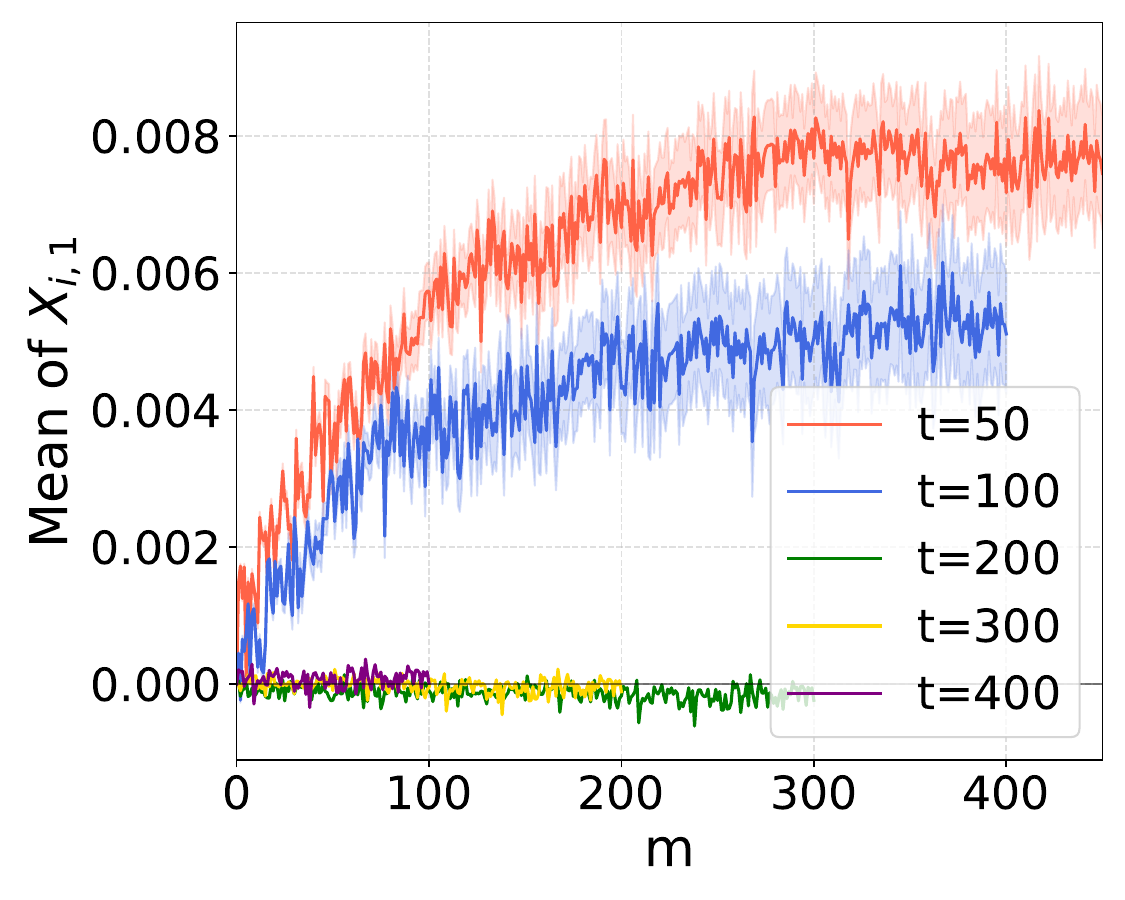}
        \caption{Arm 1}
    \end{subfigure}
    \begin{subfigure}[b]{0.32\textwidth}
        \centering
        \includegraphics[width=\textwidth]{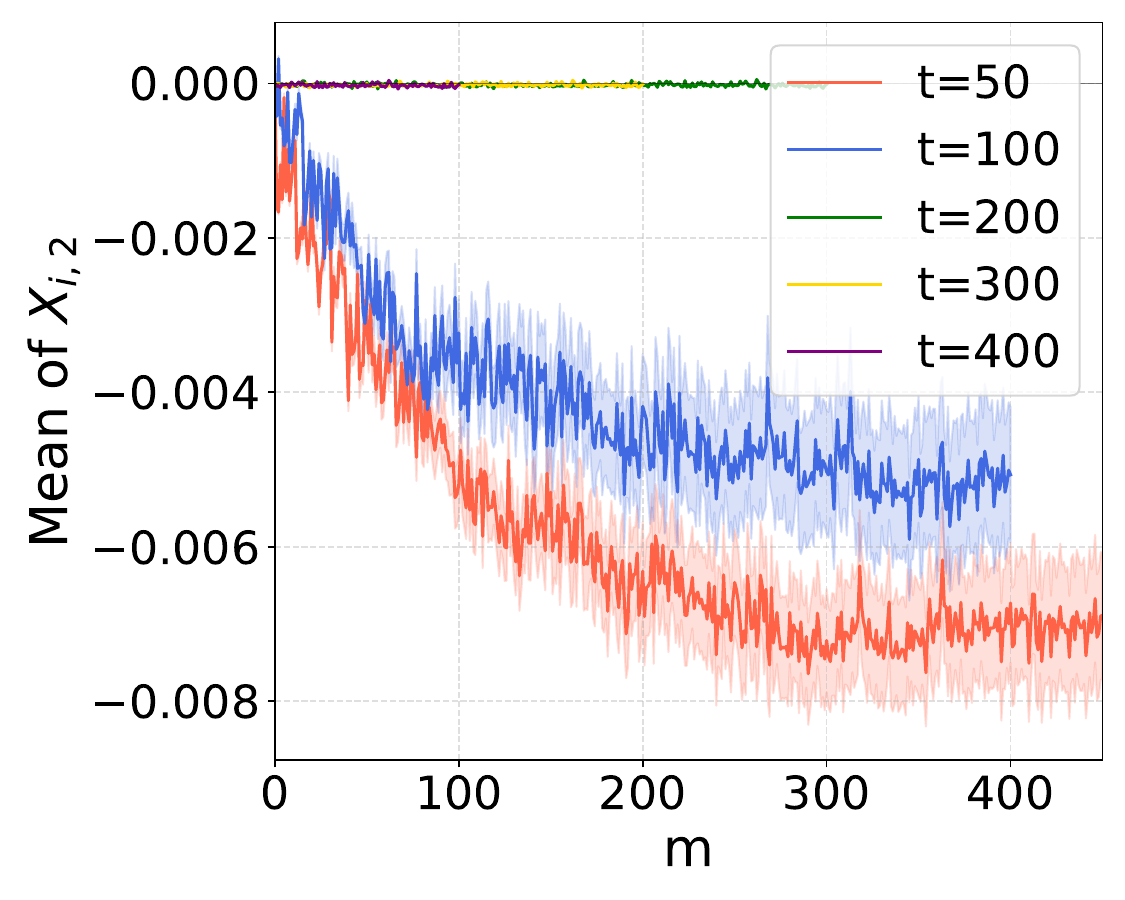}
        \caption{Arm 2}
    \end{subfigure}
    \\
    \begin{subfigure}[b]{0.32\textwidth}
        \centering
        \includegraphics[width=\textwidth]{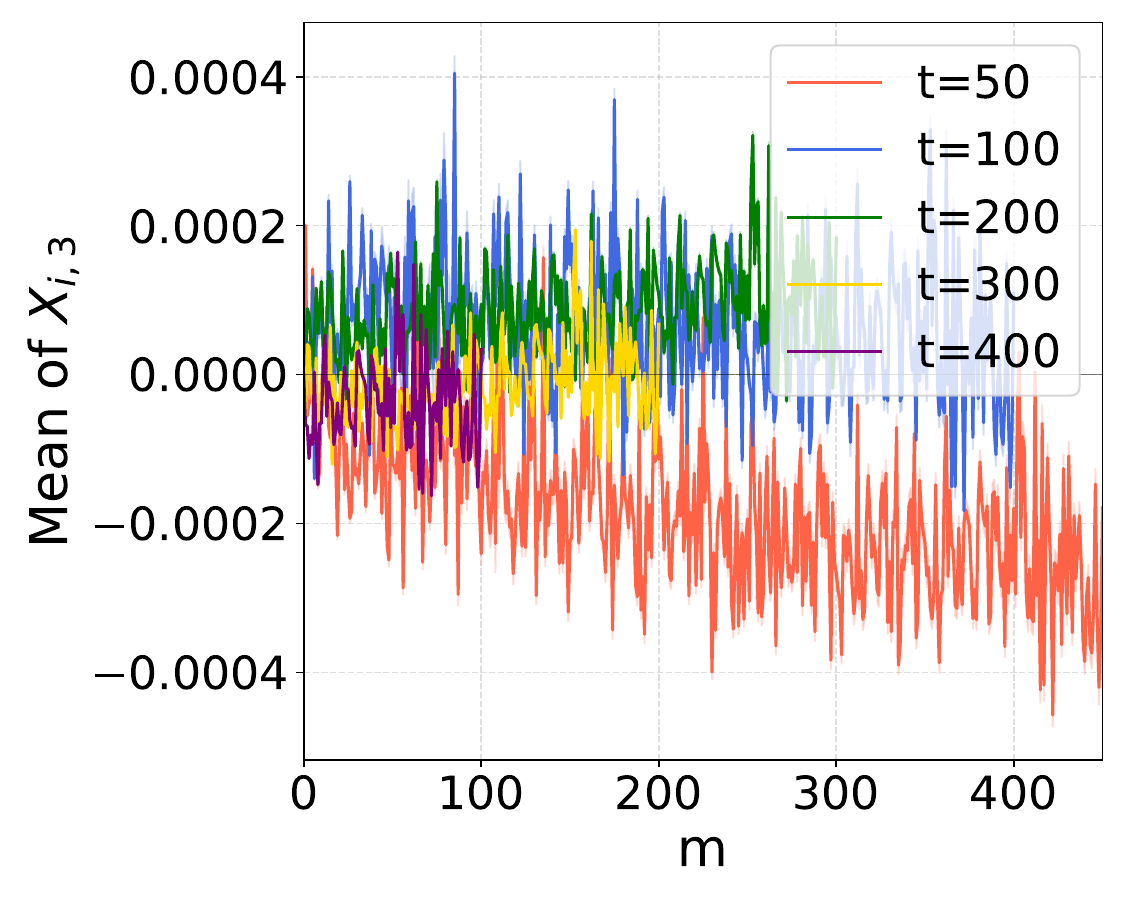}
        \caption{Arm 3}
    \end{subfigure}
    \begin{subfigure}[b]{0.32\textwidth}
        \centering
        \includegraphics[width=\textwidth]{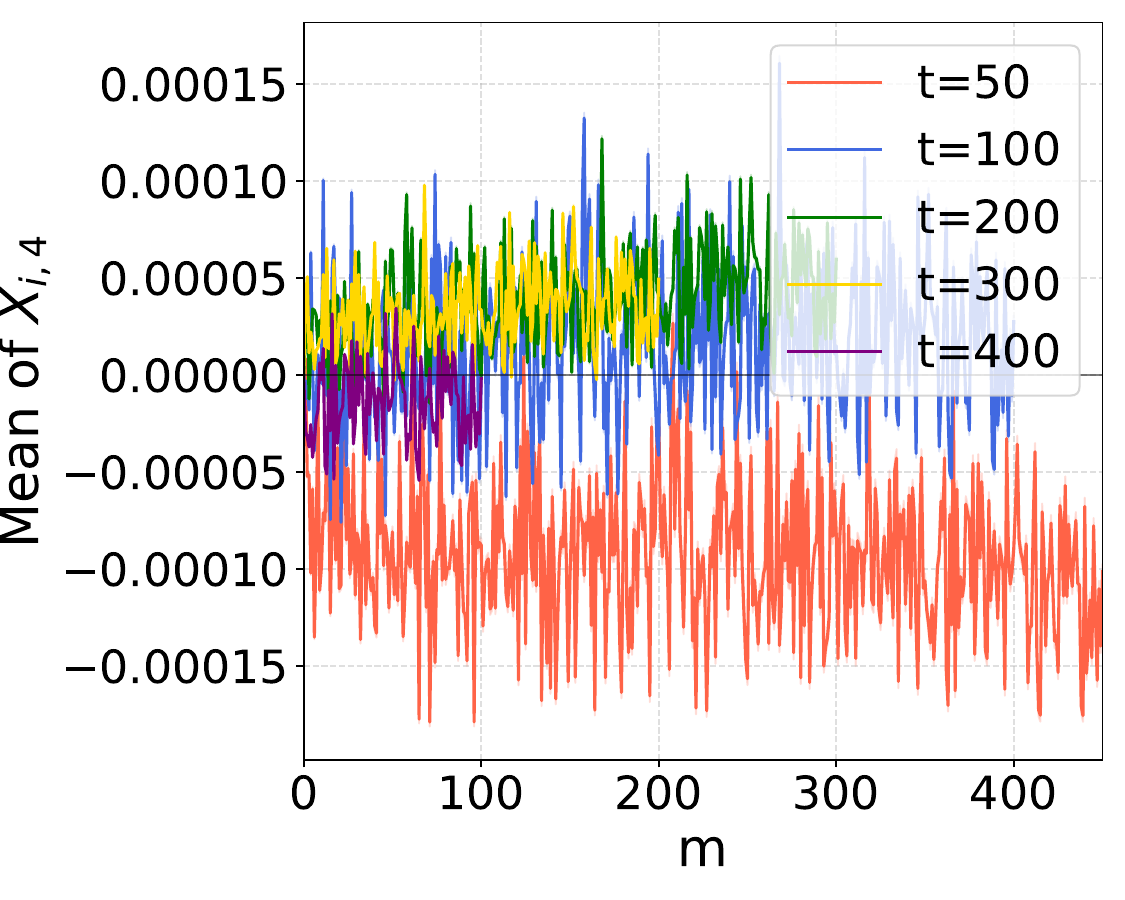}
        \caption{Arm 4}
    \end{subfigure}
    \caption{Per-arm martingale increments for GDPT in Gaussian bandits with online starting contexts, where $m$ and $t$ denote the lengths of the martingale continuation and the starting contexts, respectively.}
    \label{fig_martingale_increment_online_gaussian_GDPT}
\end{figure}

\subsubsection{Bernoulli evaluation figures and martingale plots}
\begin{figure}[H]
\centering
\begin{subfigure}{0.245\textwidth}
\includegraphics[width=\textwidth]{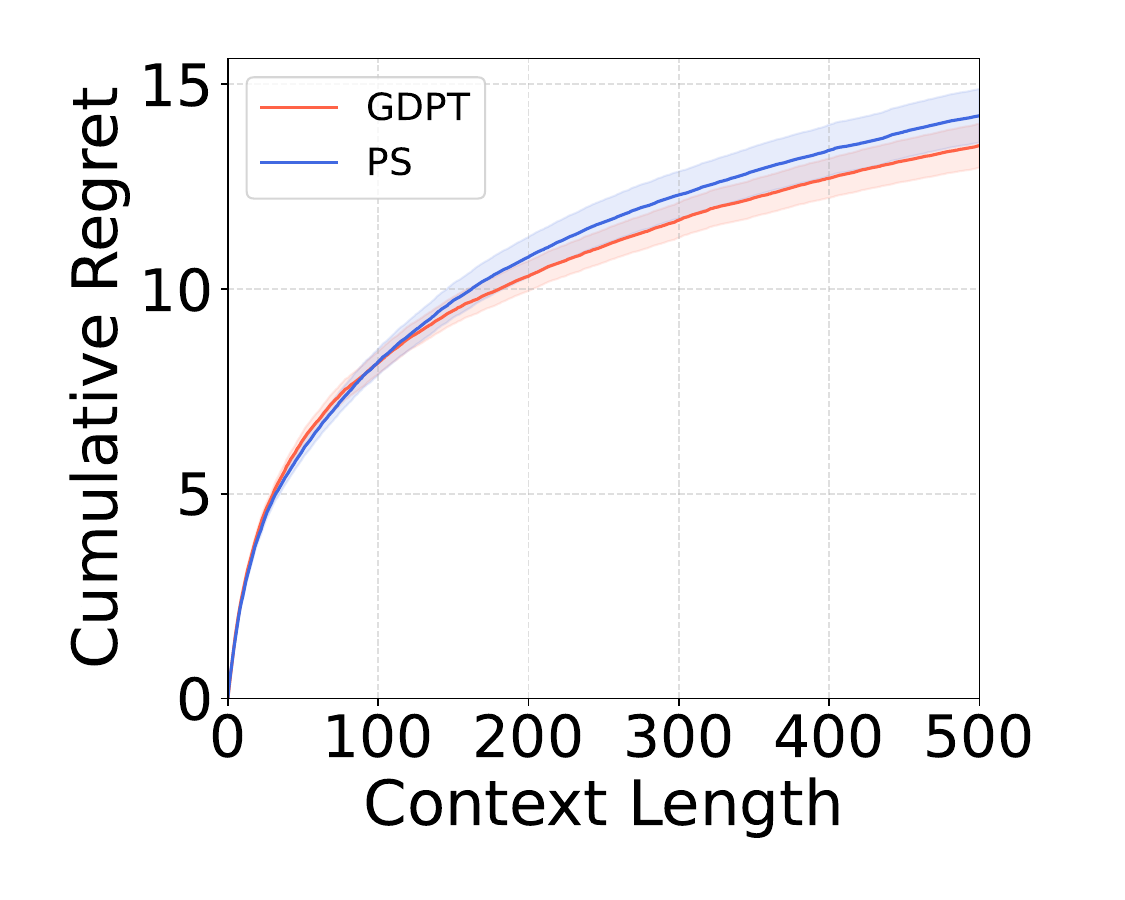}
\caption{Online regret}
\end{subfigure}
\hfill
\begin{subfigure}{0.245\textwidth}
\includegraphics[width=\textwidth]{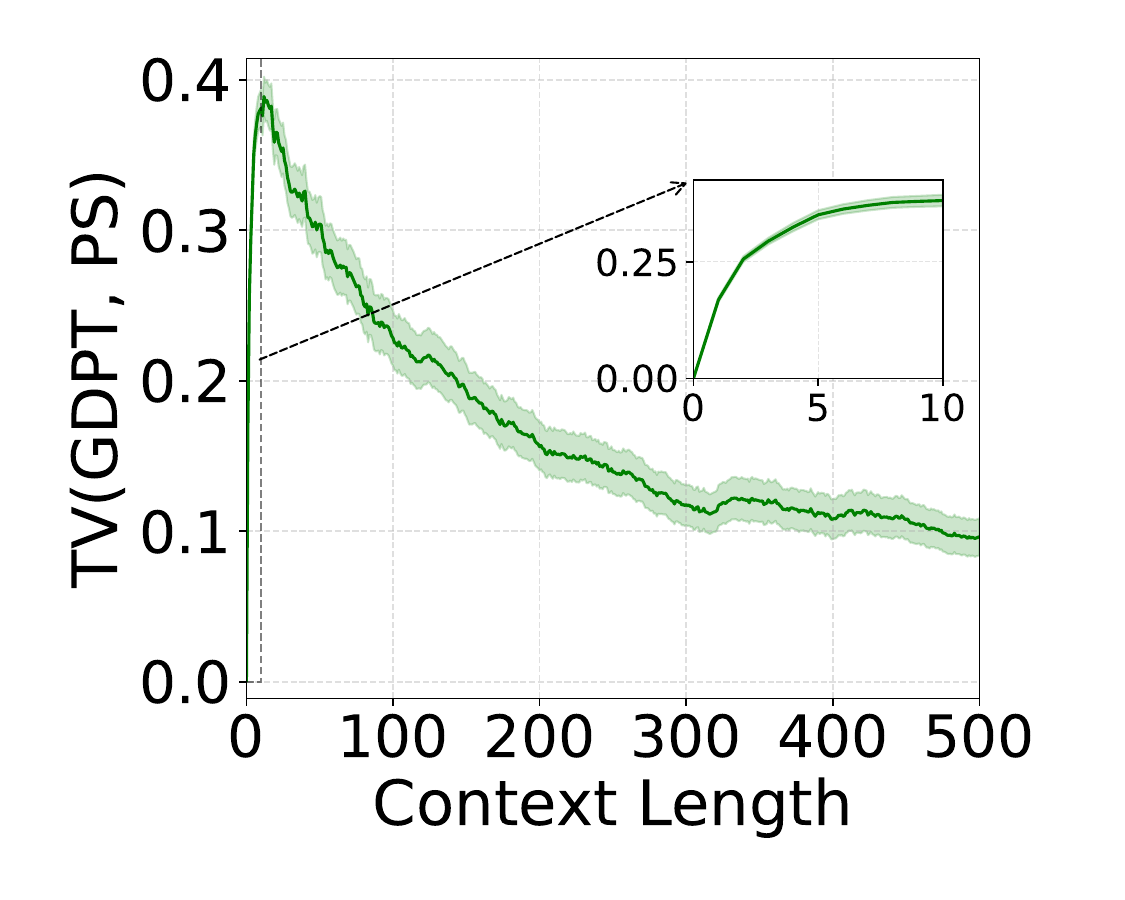}
\caption{Online TV distance}
\end{subfigure}
\hfill
\begin{subfigure}{0.245\textwidth}
\includegraphics[width=\textwidth]{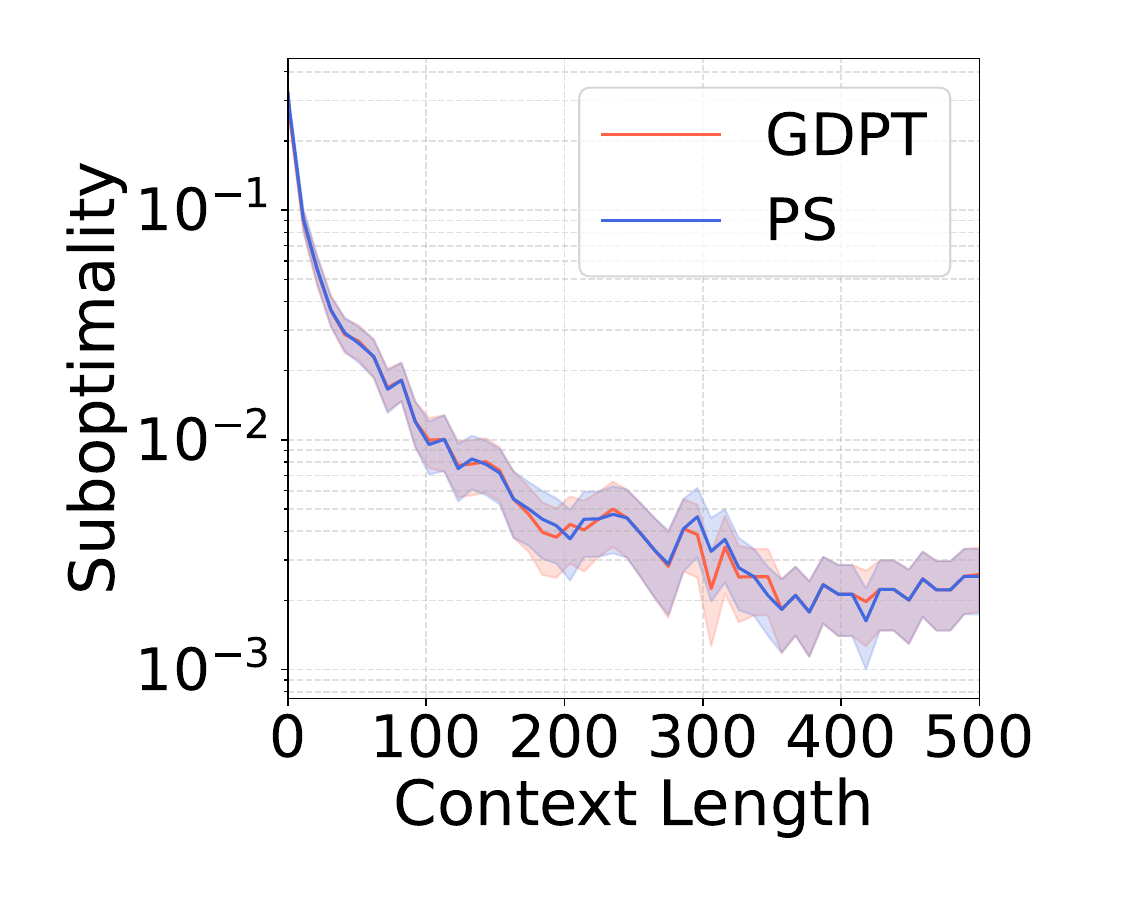}
\caption{Offline suboptimality}
\end{subfigure}
\hfill
\begin{subfigure}{0.245\textwidth}
\includegraphics[width=\textwidth]{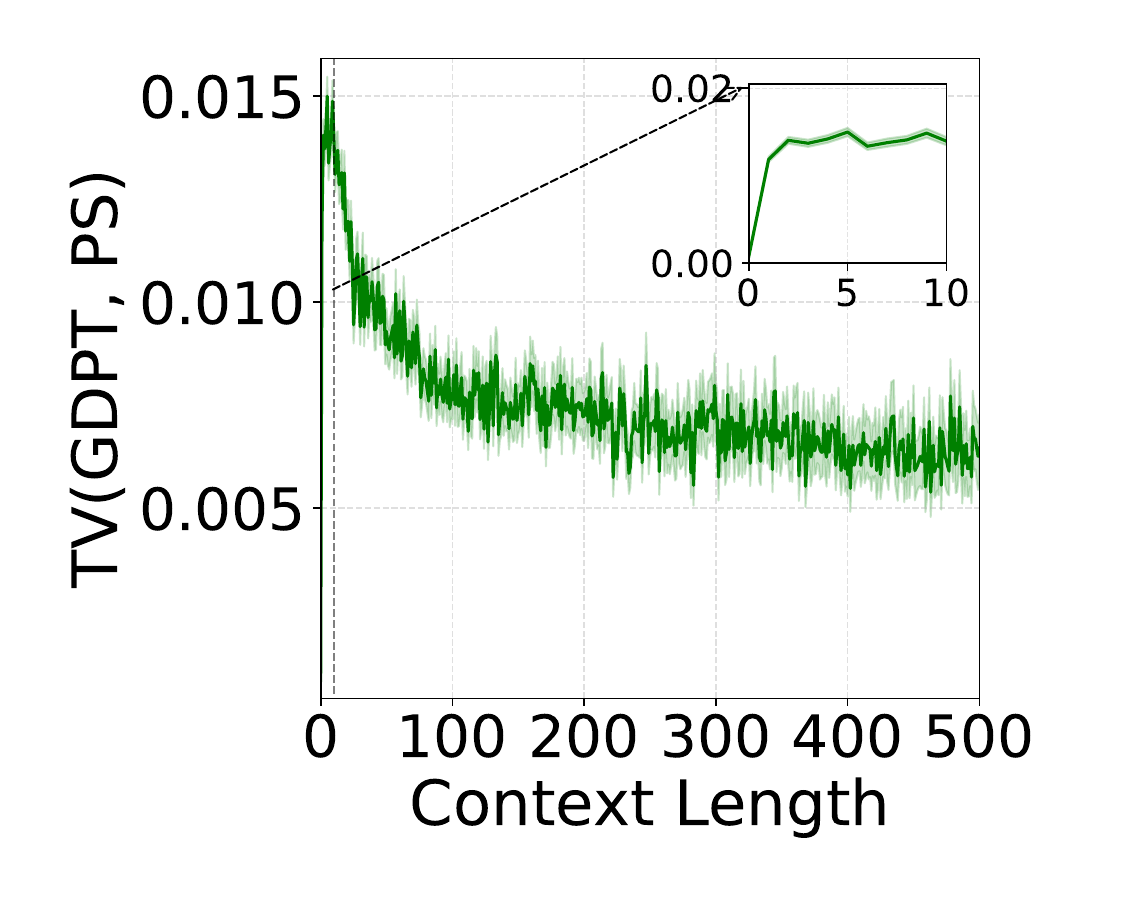}
\caption{Offline TV distance}
\end{subfigure}
\caption{Performance comparison of GDPT and Thompson Sampling in Bernoulli bandits under online and offline deployment.}
\label{fig_evaluation_bernoulli_bandit_GDPT}
\end{figure}

\begin{figure}[H]
    \centering
    \begin{subfigure}[b]{0.32\textwidth}
        \centering
        \includegraphics[width=\textwidth]{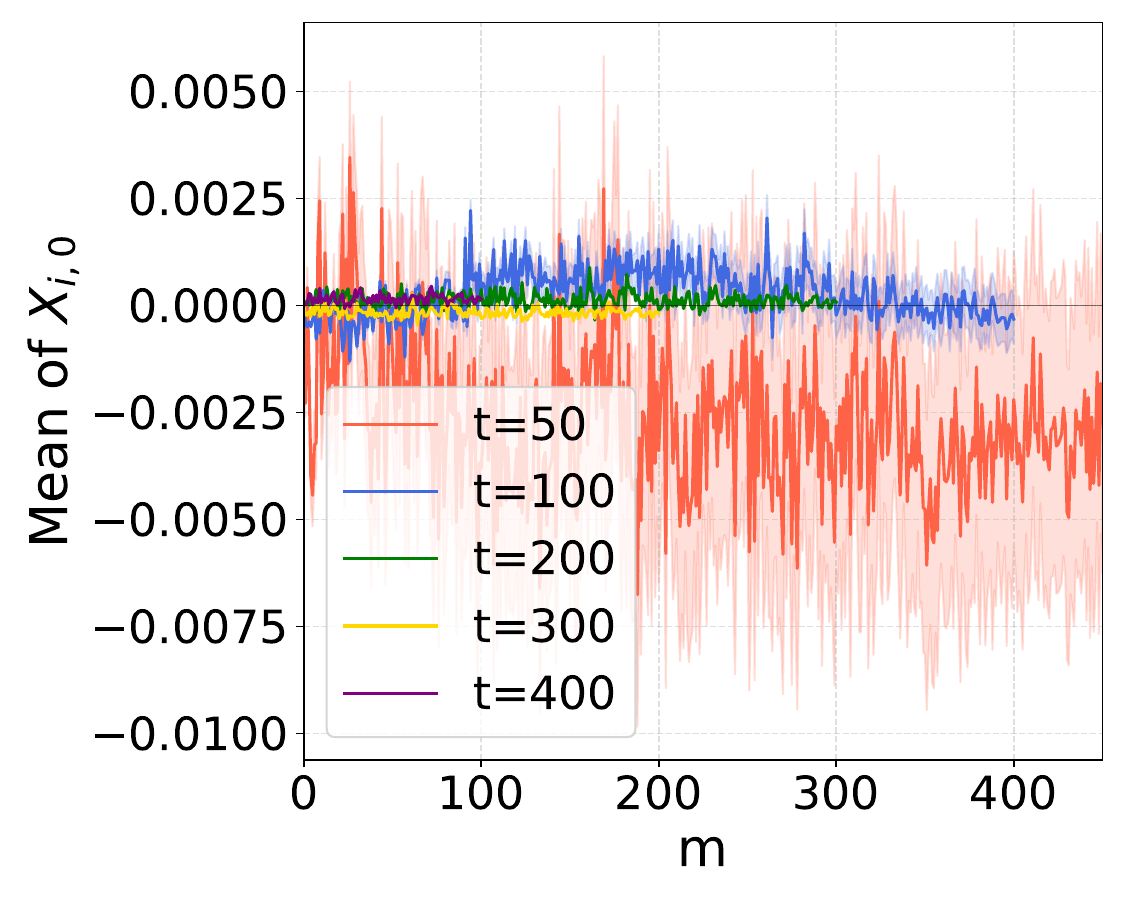}
        \caption{Arm 0}
    \end{subfigure}
    \begin{subfigure}[b]{0.32\textwidth}
        \centering
        \includegraphics[width=\textwidth]{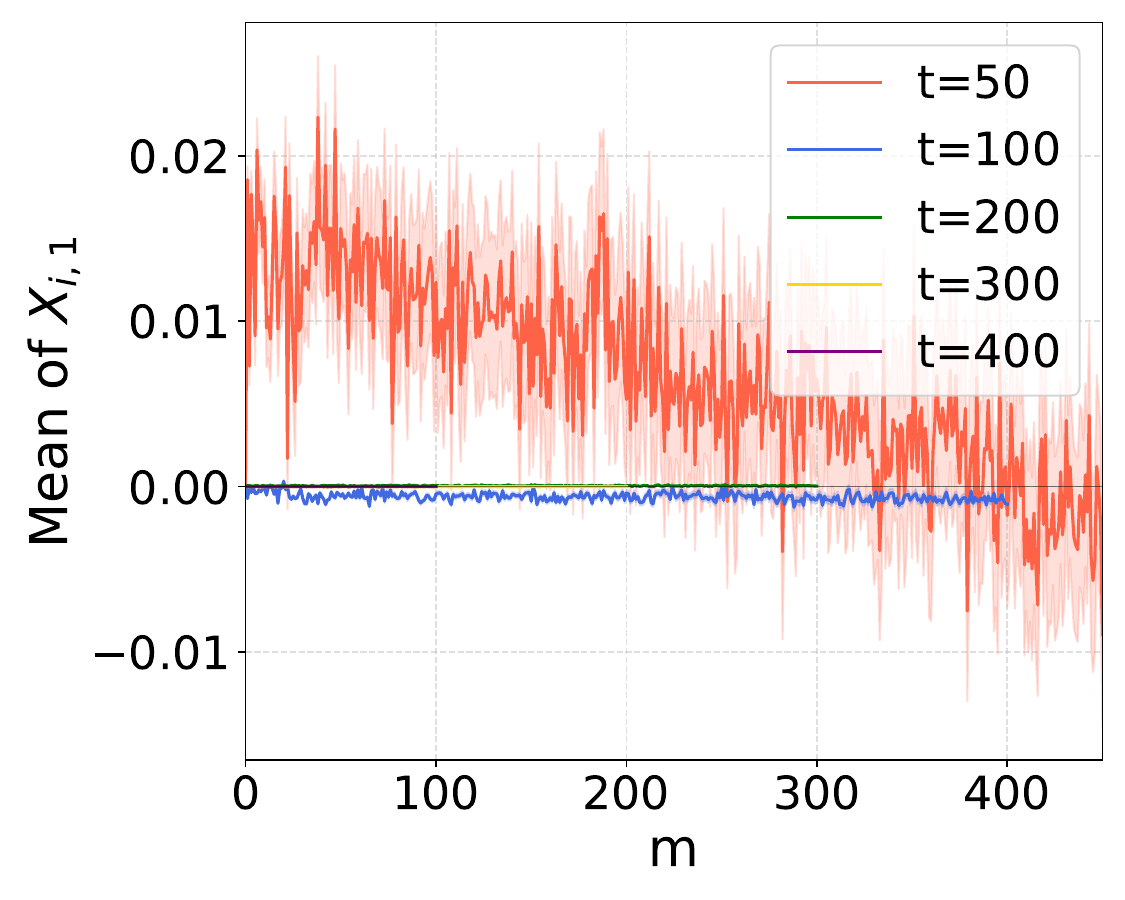}
        \caption{Arm 1}
    \end{subfigure}
    \begin{subfigure}[b]{0.32\textwidth}
        \centering
        \includegraphics[width=\textwidth]{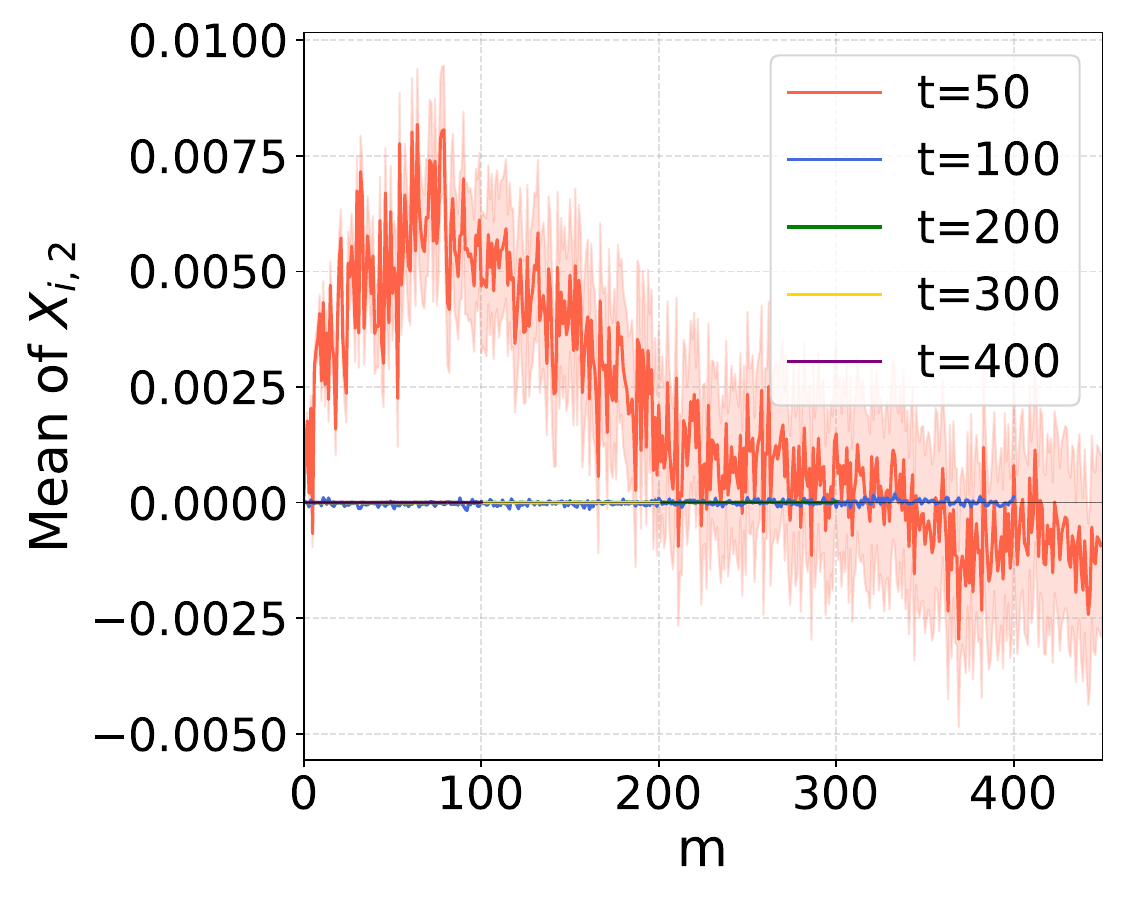}
        \caption{Arm 2}
    \end{subfigure}
    \\
    \begin{subfigure}[b]{0.32\textwidth}
        \centering
        \includegraphics[width=\textwidth]{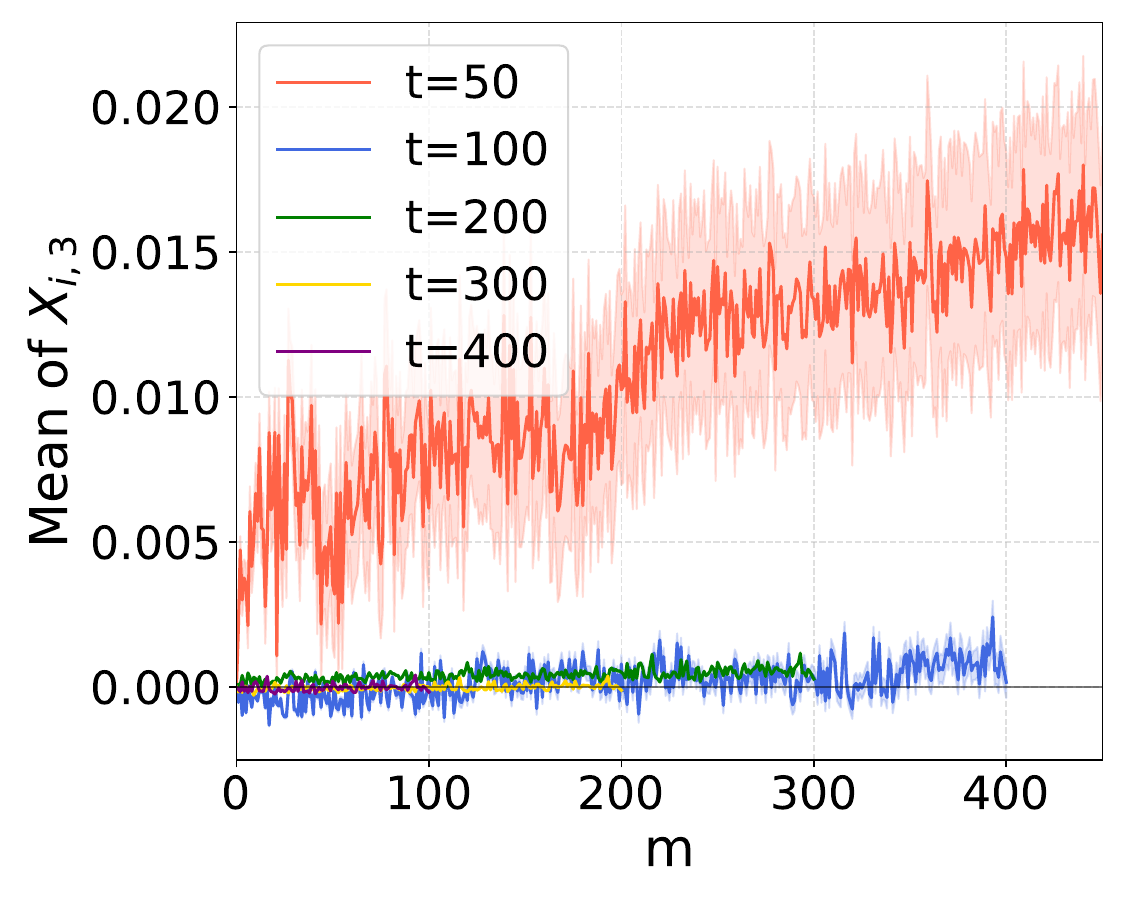}
        \caption{Arm 3}
    \end{subfigure}
    \begin{subfigure}[b]{0.32\textwidth}
        \centering
        \includegraphics[width=\textwidth]{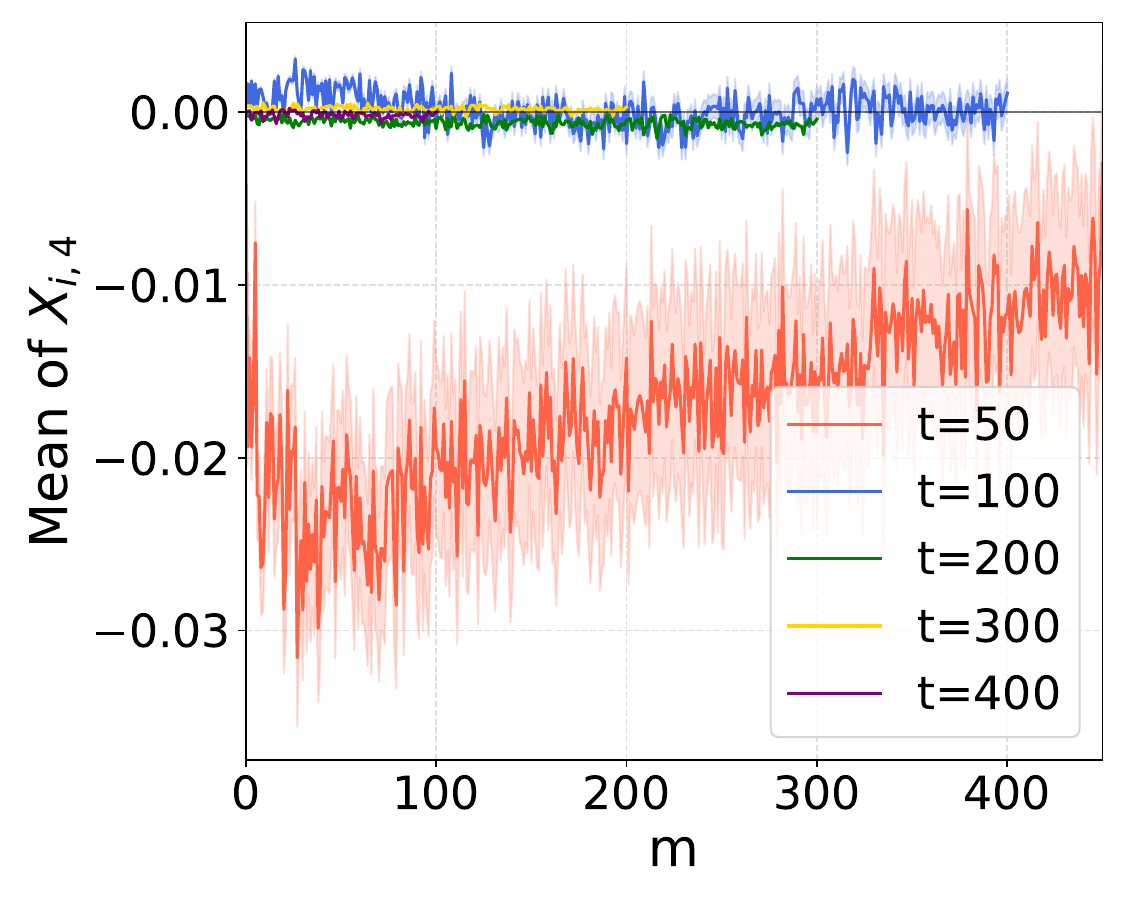}
        \caption{Arm 4}
    \end{subfigure}
    \caption{Per-arm martingale increments for GDPT in Bernoulli bandits with offline starting contexts, where $m$ and $t$ denote the lengths of the martingale continuation and the starting contexts, respectively.}
    \label{fig_martingale_increment_offline_bernoulli_GDPT}
\end{figure}

\begin{figure}[H]
    \centering
    \begin{subfigure}[b]{0.32\textwidth}
        \centering
        \includegraphics[width=\textwidth]{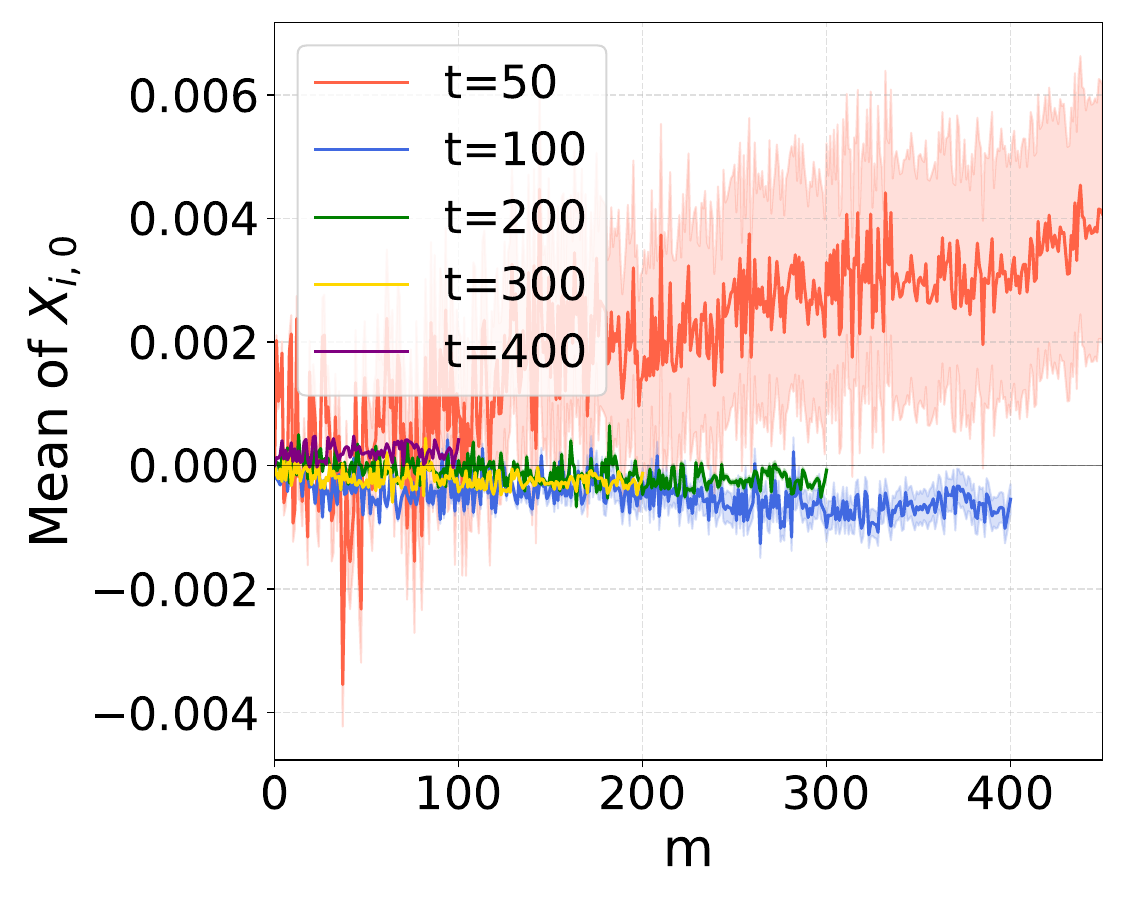}
        \caption{Arm 0}
    \end{subfigure}
    \begin{subfigure}[b]{0.32\textwidth}
        \centering
        \includegraphics[width=\textwidth]{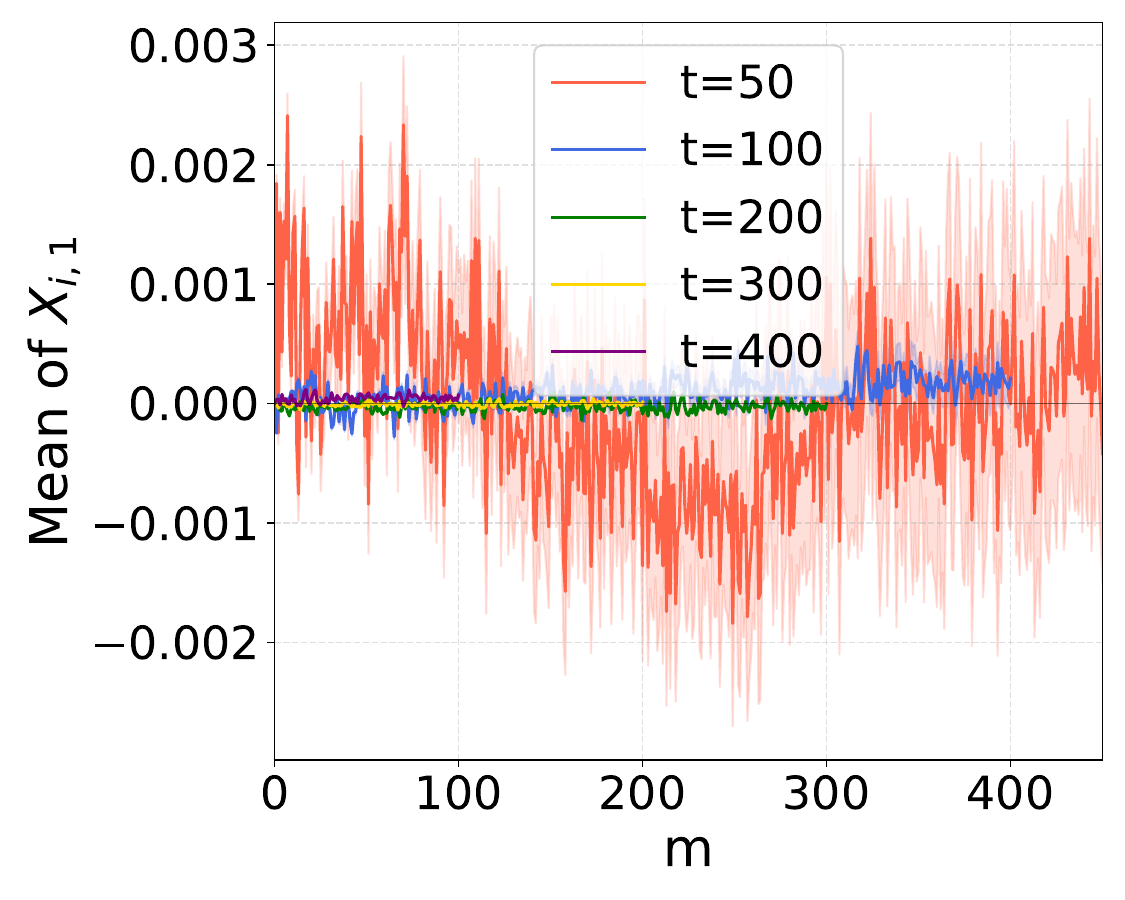}
        \caption{Arm 1}
    \end{subfigure}
    \begin{subfigure}[b]{0.32\textwidth}
        \centering
        \includegraphics[width=\textwidth]{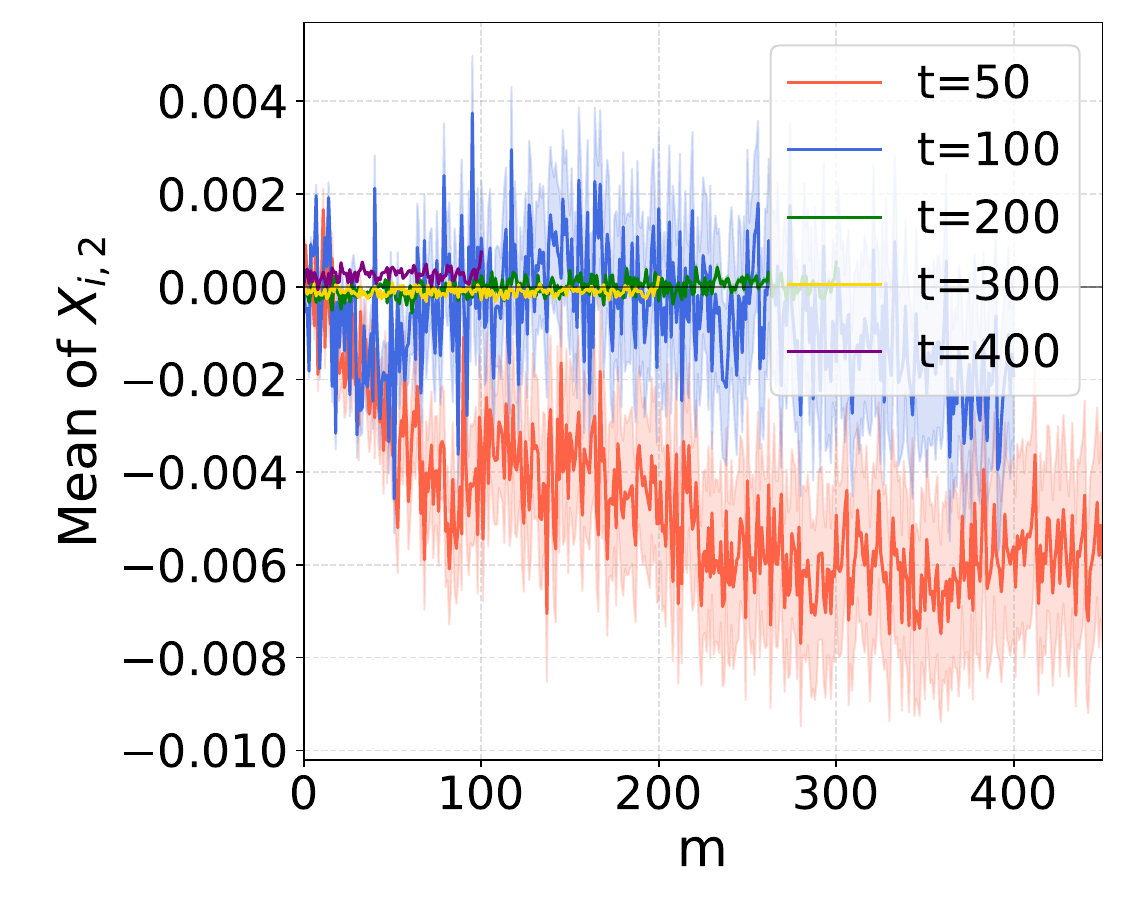}
        \caption{Arm 2}
    \end{subfigure}
    \\
    \begin{subfigure}[b]{0.32\textwidth}
        \centering
        \includegraphics[width=\textwidth]{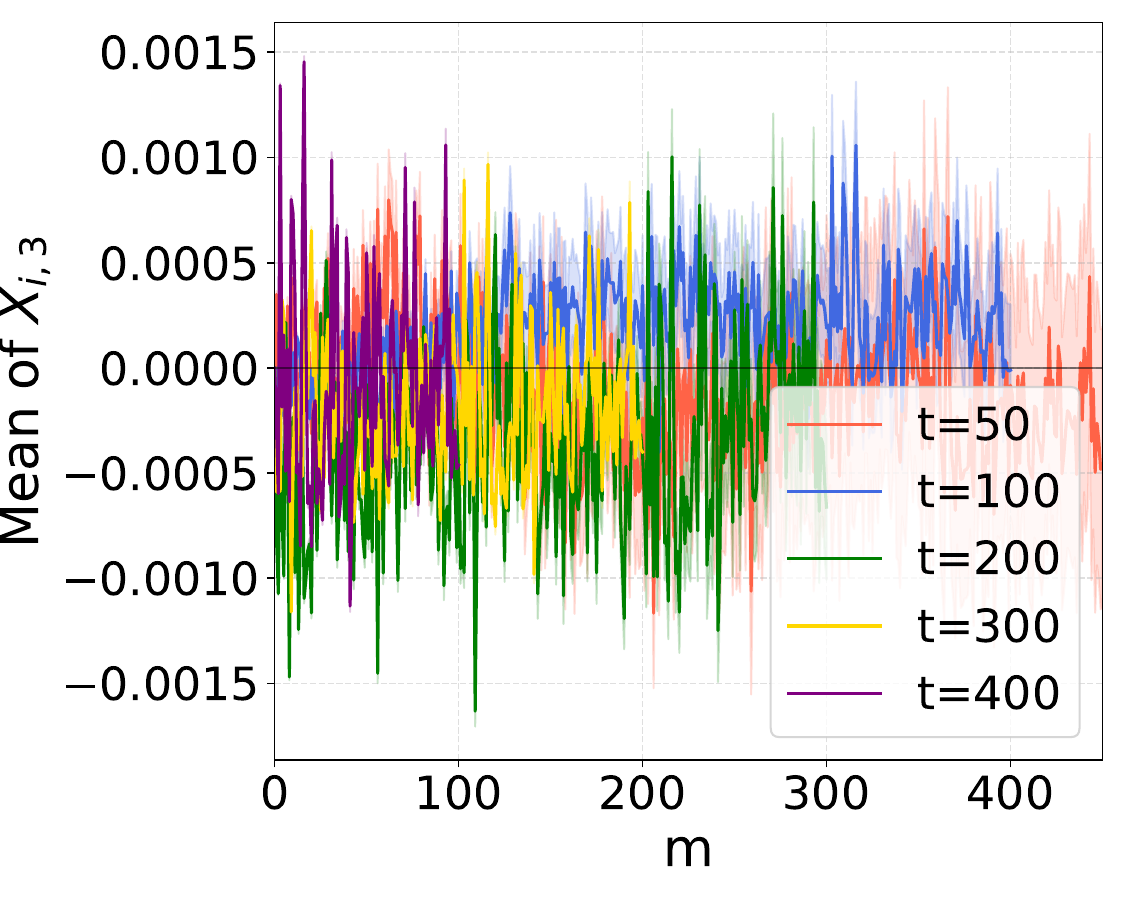}
        \caption{Arm 3}
    \end{subfigure}
    \begin{subfigure}[b]{0.32\textwidth}
        \centering
        \includegraphics[width=\textwidth]{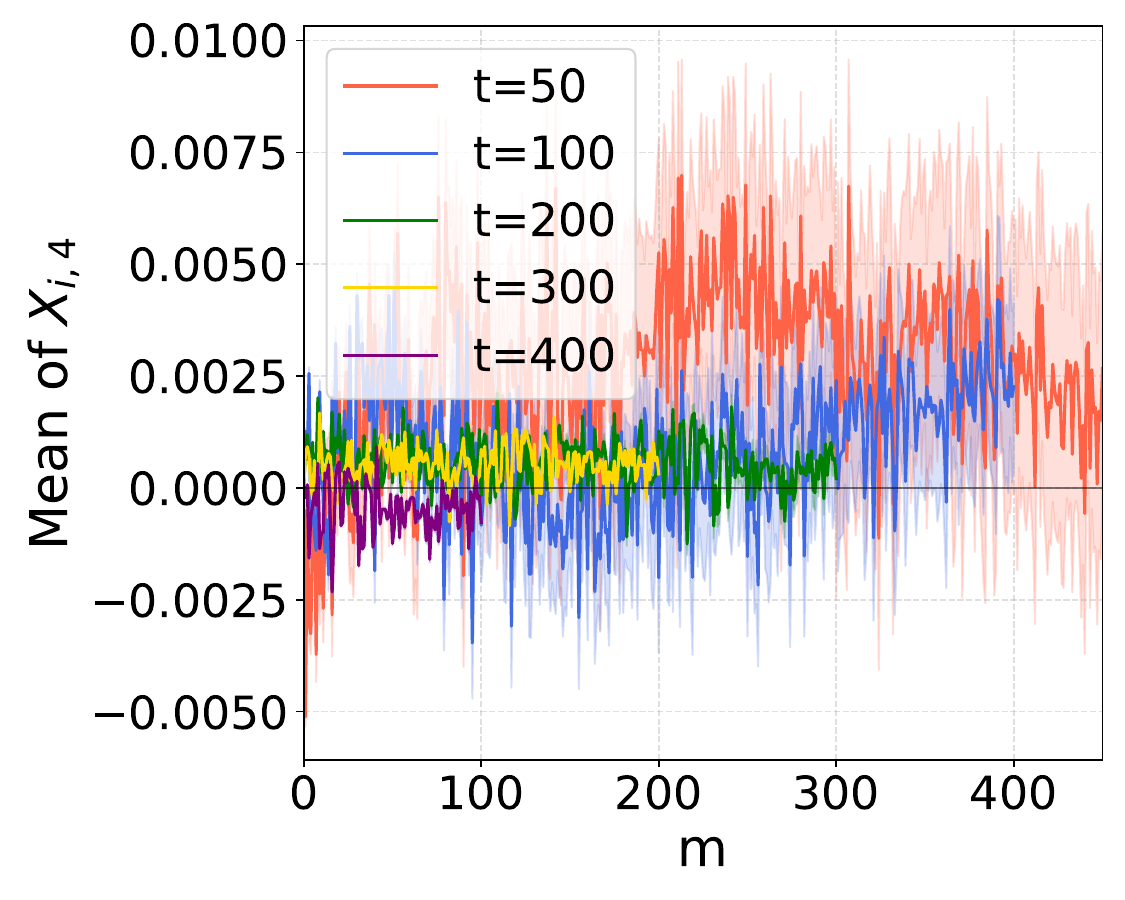}
        \caption{Arm 4}
    \end{subfigure}
    \caption{Per-arm martingale increments for GDPT in Bernoulli bandits with online starting contexts, where $m$ and $t$ denote the lengths of the martingale continuation and the starting contexts, respectively.}
    \label{fig_martingale_increment_online_bernoulli_GDPT}
\end{figure}

\subsubsection{Linear evaluation figures and martingale plots}
\begin{figure}[H]
\centering
\begin{subfigure}{0.245\textwidth}
\includegraphics[width=\textwidth]{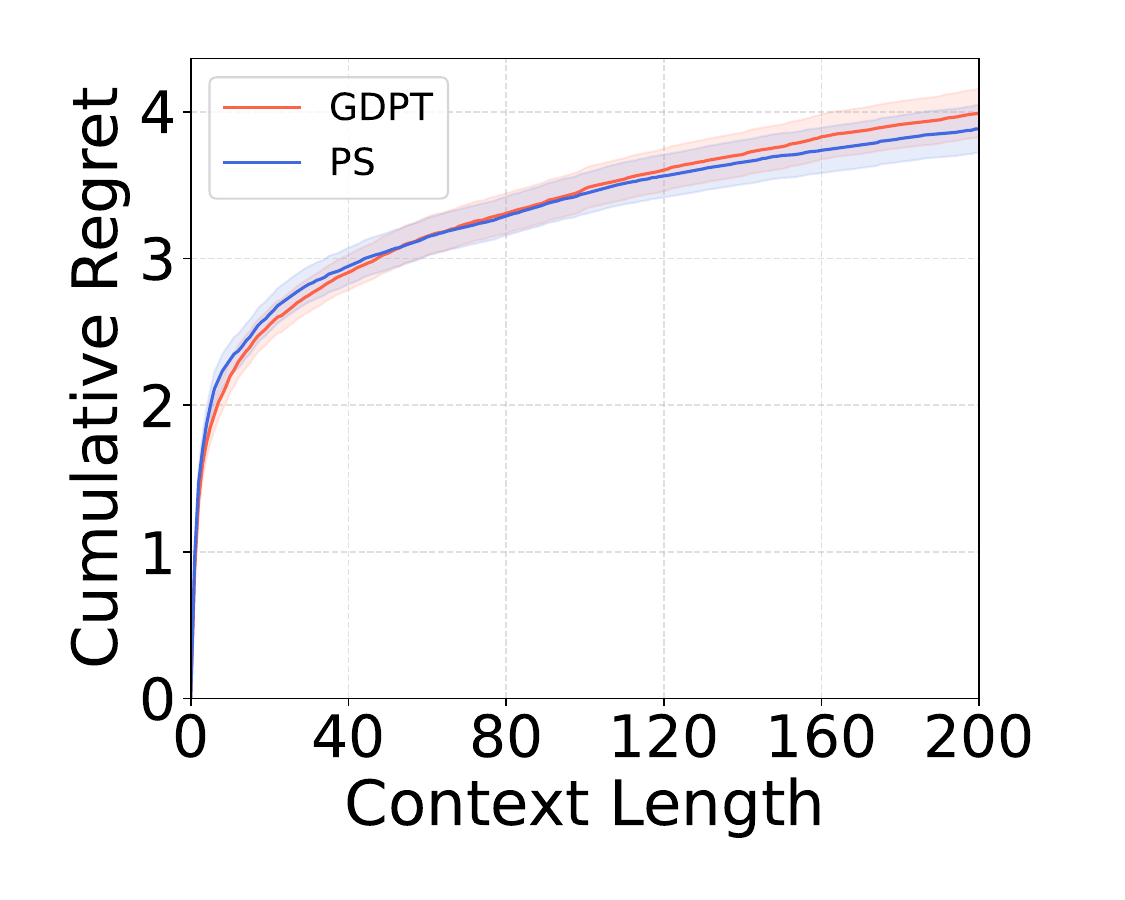}
\caption{Online regret}
\end{subfigure}
\hfill
\begin{subfigure}{0.245\textwidth}
\includegraphics[width=\textwidth]{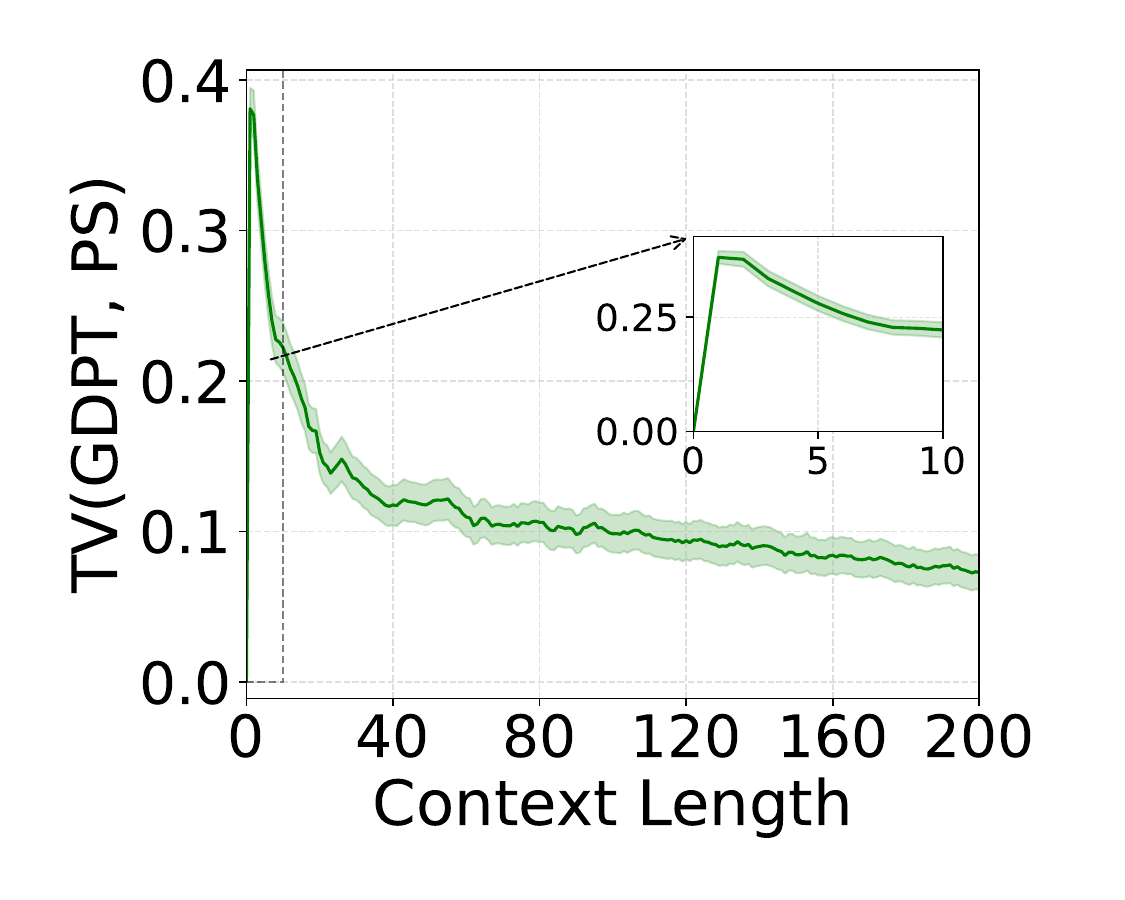}
\caption{Online TV distance}
\end{subfigure}
\hfill
\begin{subfigure}{0.245\textwidth}
\includegraphics[width=\textwidth]{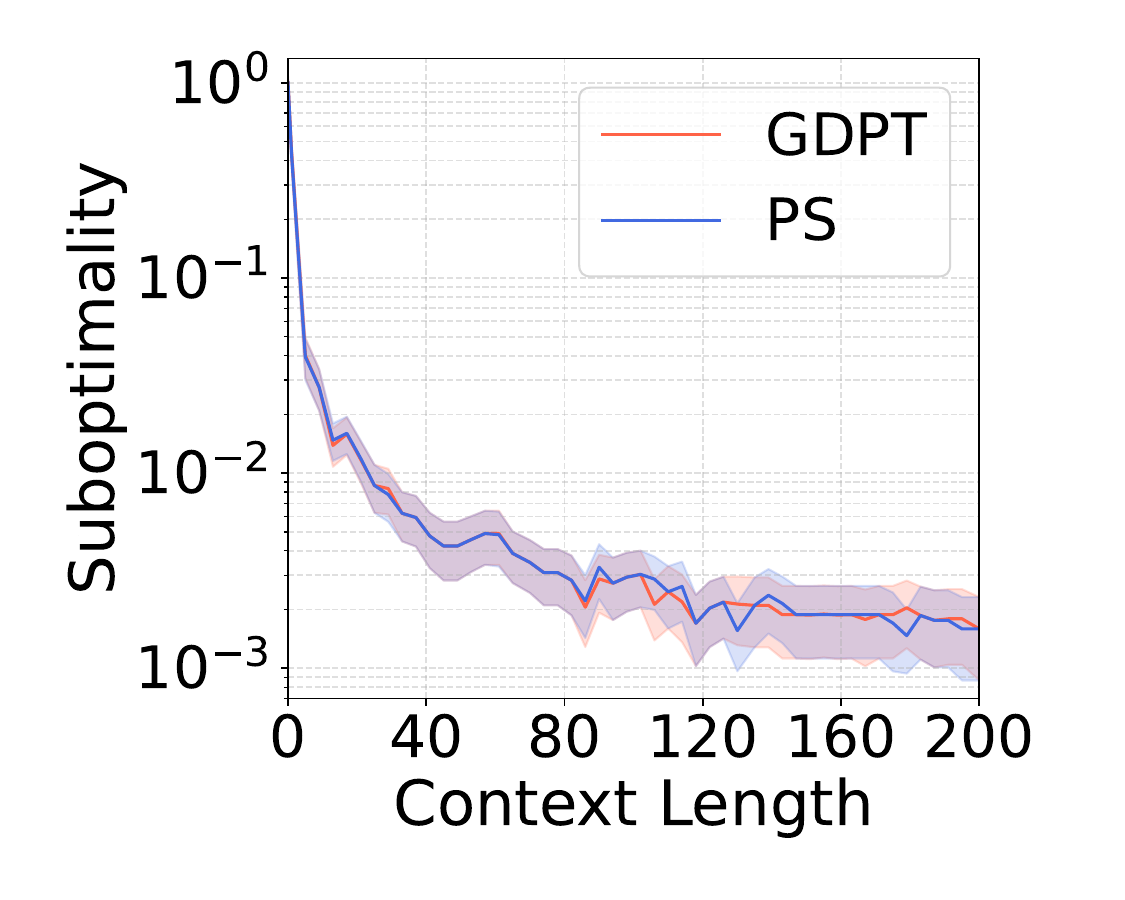}
\caption{Offline suboptimality}
\end{subfigure}
\hfill
\begin{subfigure}{0.245\textwidth}
\includegraphics[width=\textwidth]{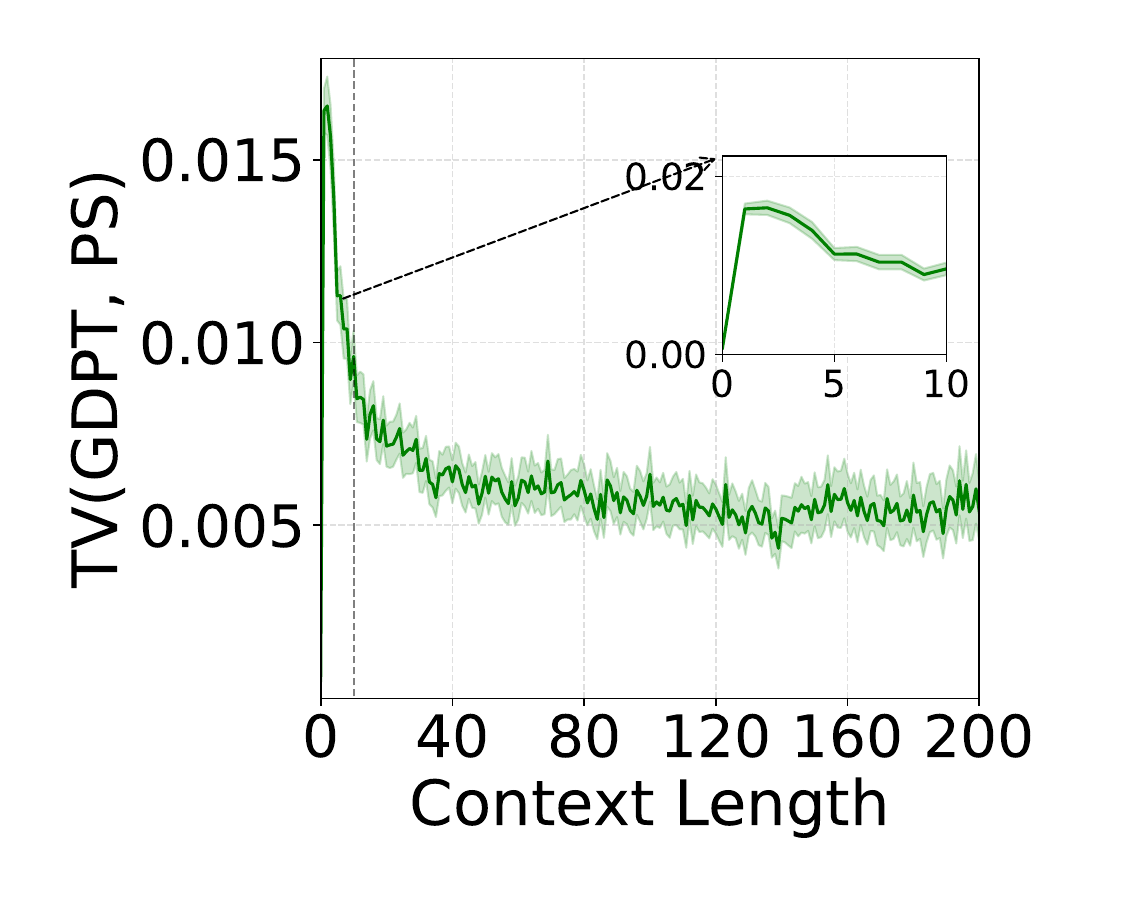}
\caption{Offline TV distance}
\end{subfigure}
\caption{Performance comparison of GDPT and Thompson Sampling in linear bandits under online and offline deployment.}
\label{fig_evaluation_linear_bandit_GDPT}
\end{figure}

\begin{figure}[H]
    \centering
    \begin{subfigure}[b]{0.32\textwidth}
        \centering
        \includegraphics[width=\textwidth]{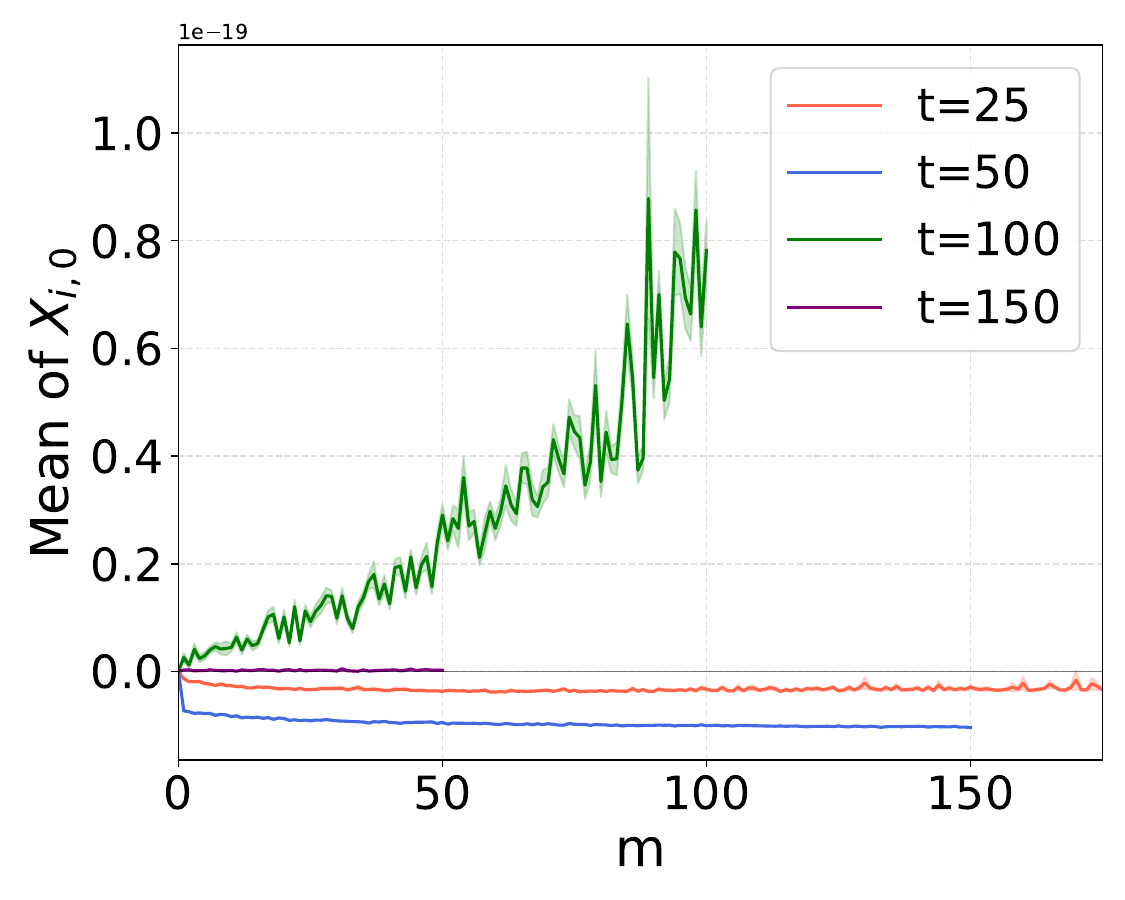}
        \caption{Arm 0}
    \end{subfigure}
    \begin{subfigure}[b]{0.32\textwidth}
        \centering
        \includegraphics[width=\textwidth]{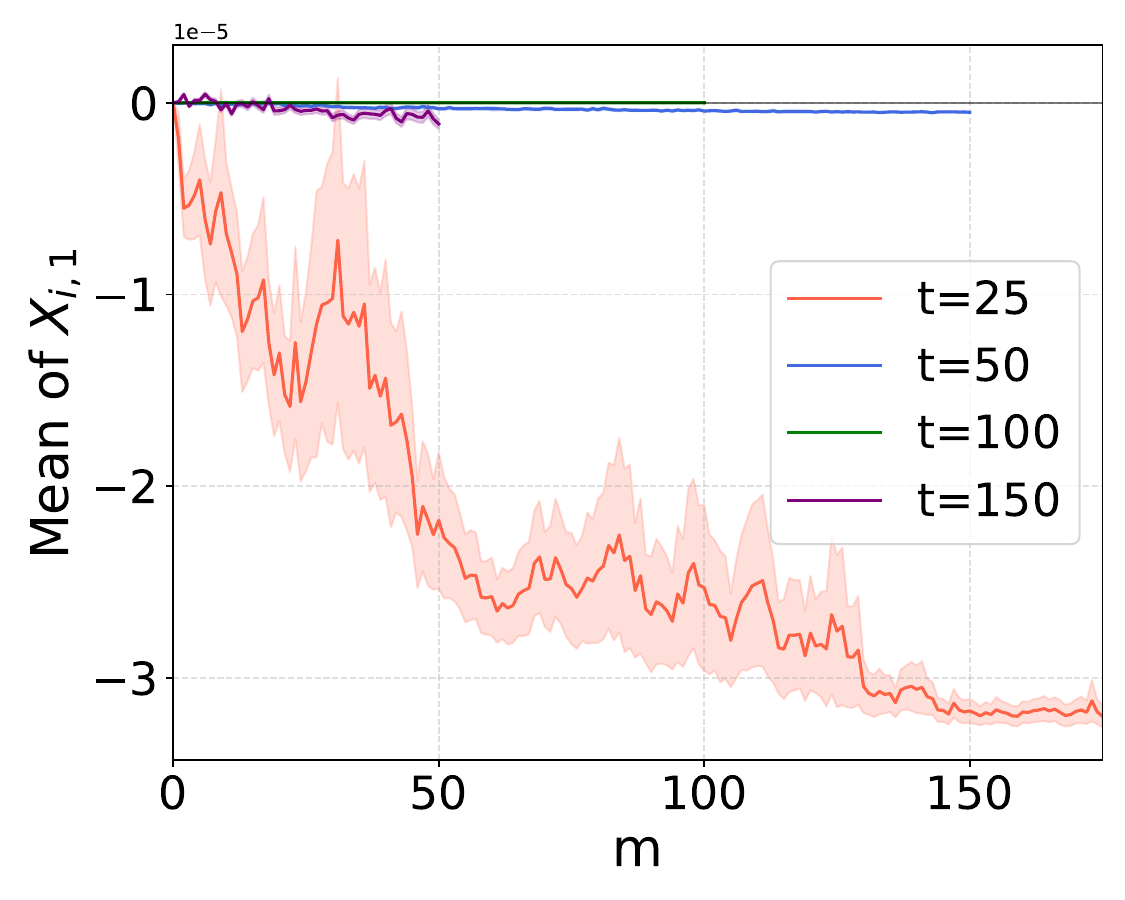}
        \caption{Arm 1}
    \end{subfigure}
    \begin{subfigure}[b]{0.32\textwidth}
        \centering
        \includegraphics[width=\textwidth]{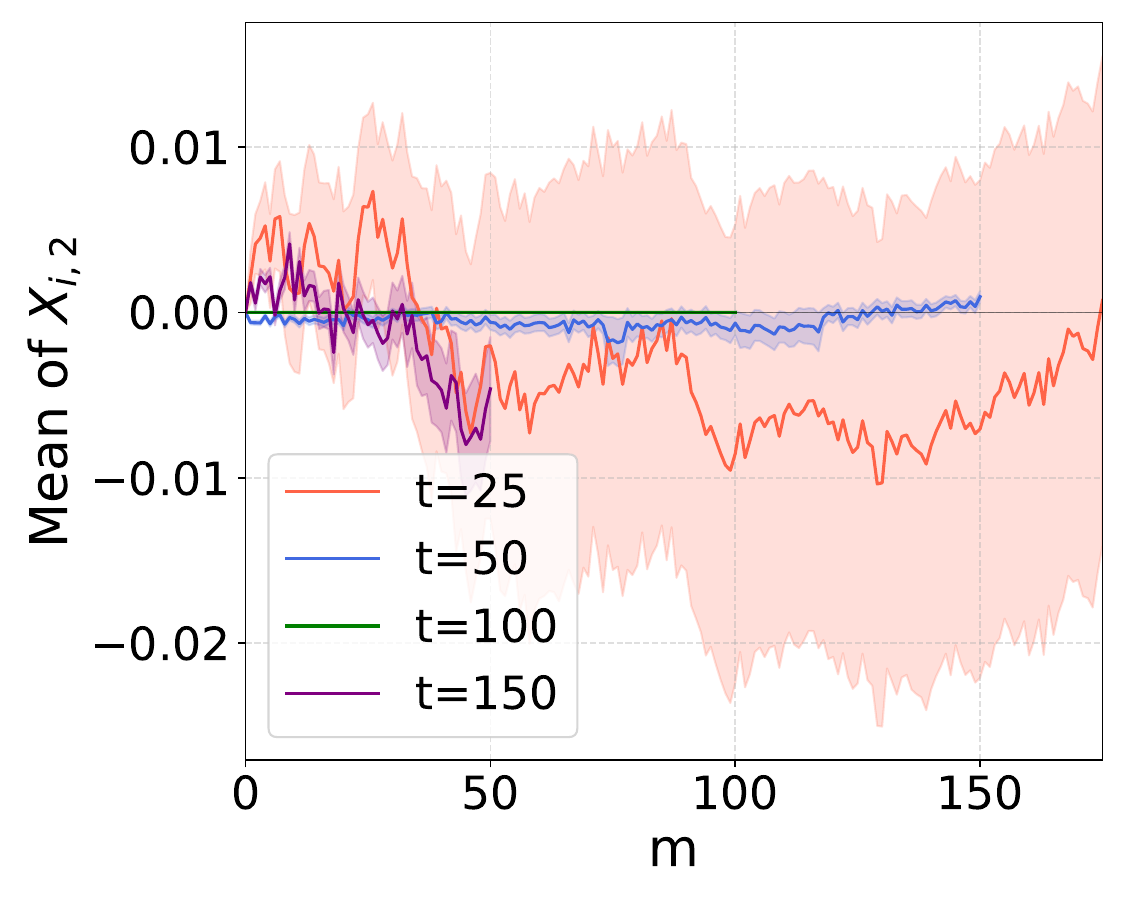}
        \caption{Arm 2}
    \end{subfigure}
    \\
    \begin{subfigure}[b]{0.32\textwidth}
        \centering
        \includegraphics[width=\textwidth]{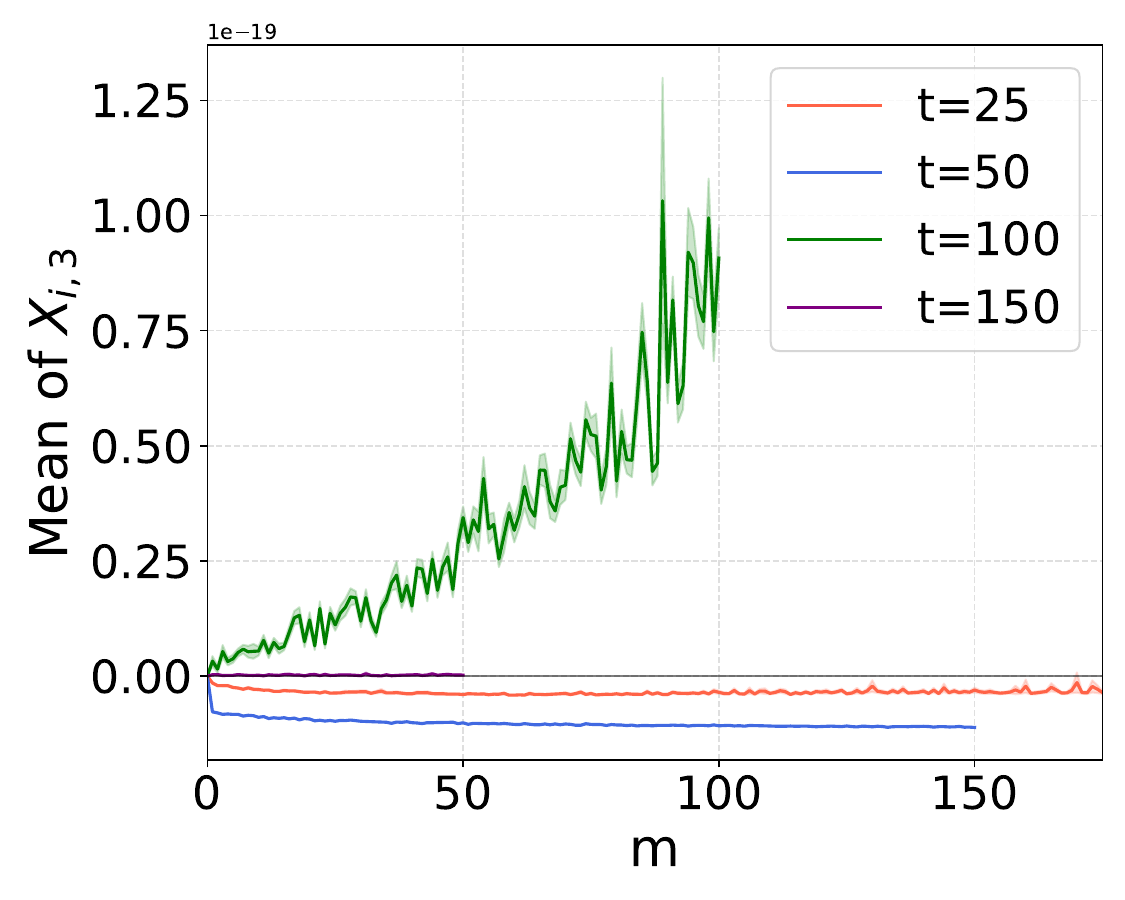}
        \caption{Arm 3}
    \end{subfigure}
    \begin{subfigure}[b]{0.32\textwidth}
        \centering
        \includegraphics[width=\textwidth]{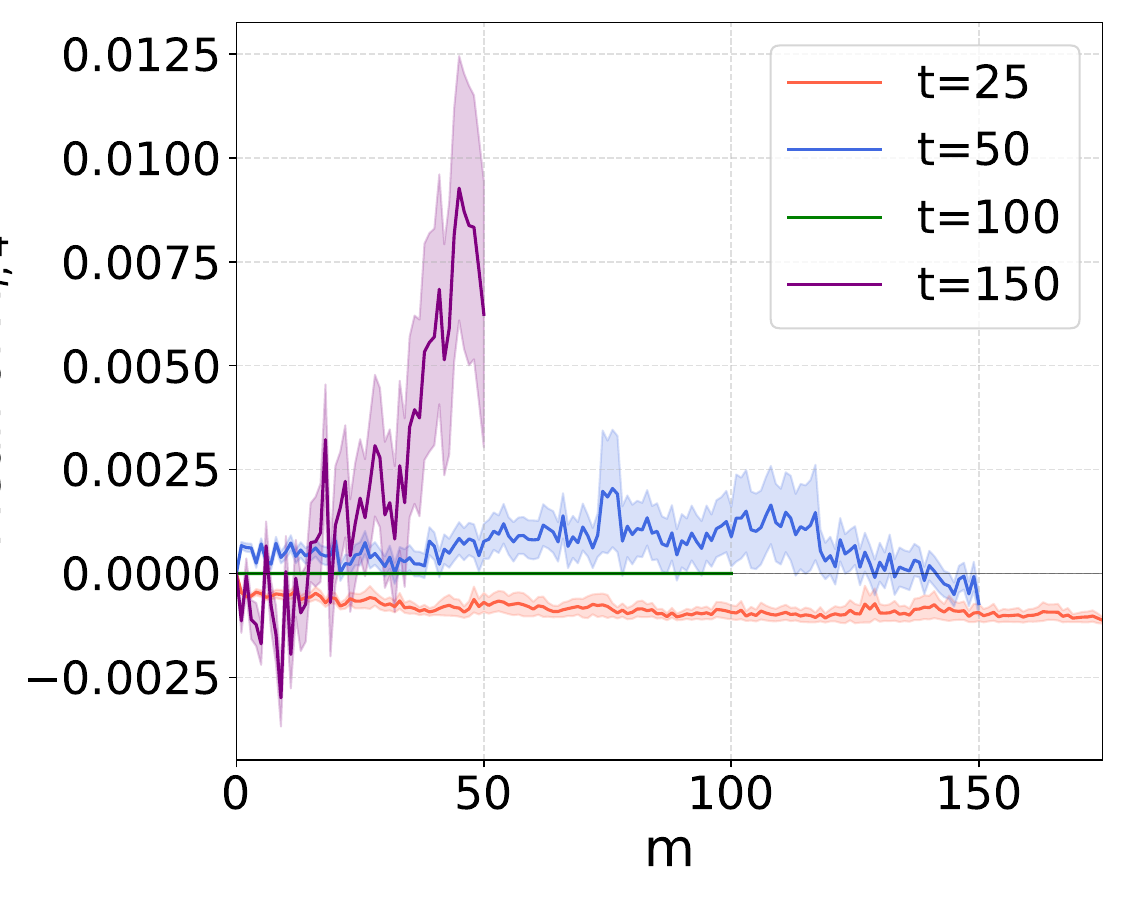}
        \caption{Arm 4}
    \end{subfigure}
    \\
    \begin{subfigure}[b]{0.32\textwidth}
        \centering
        \includegraphics[width=\textwidth]{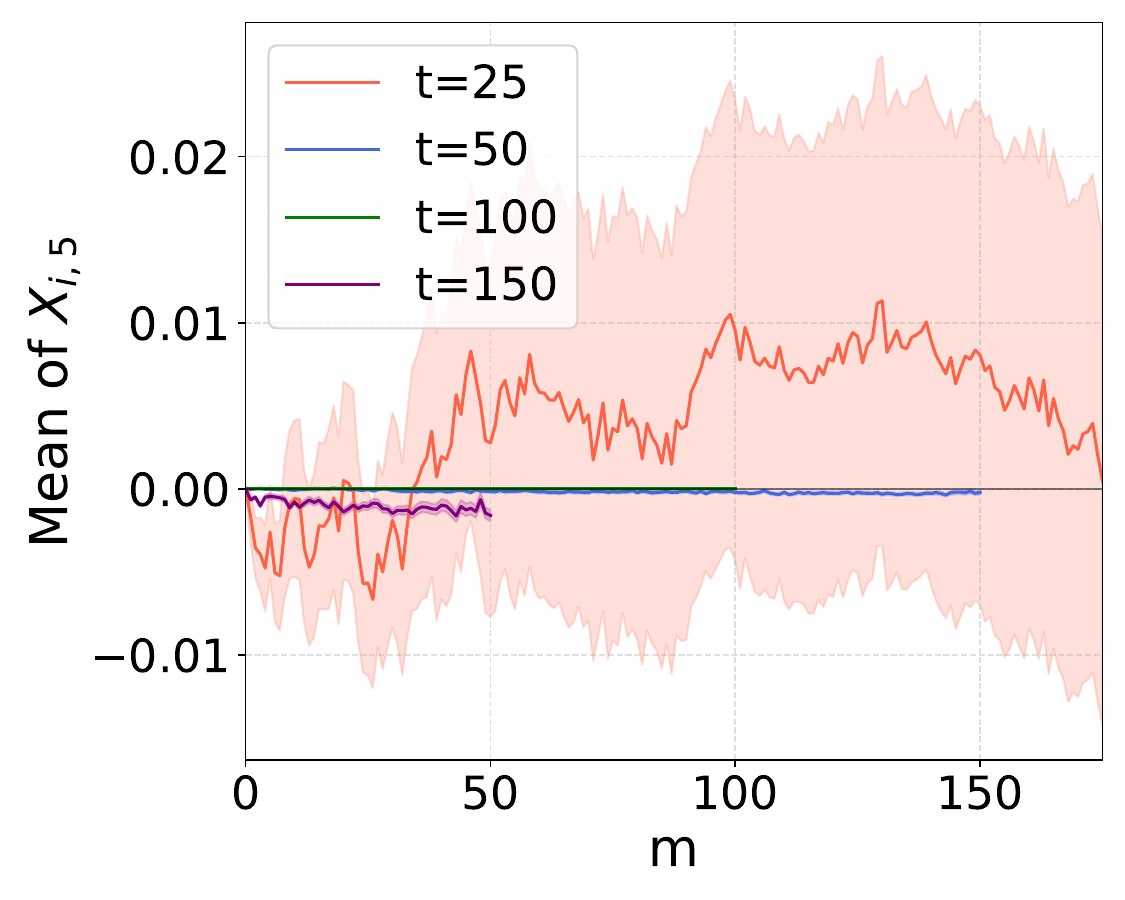}
        \caption{Arm 5}
    \end{subfigure}
    \begin{subfigure}[b]{0.32\textwidth}
        \centering
        \includegraphics[width=\textwidth]{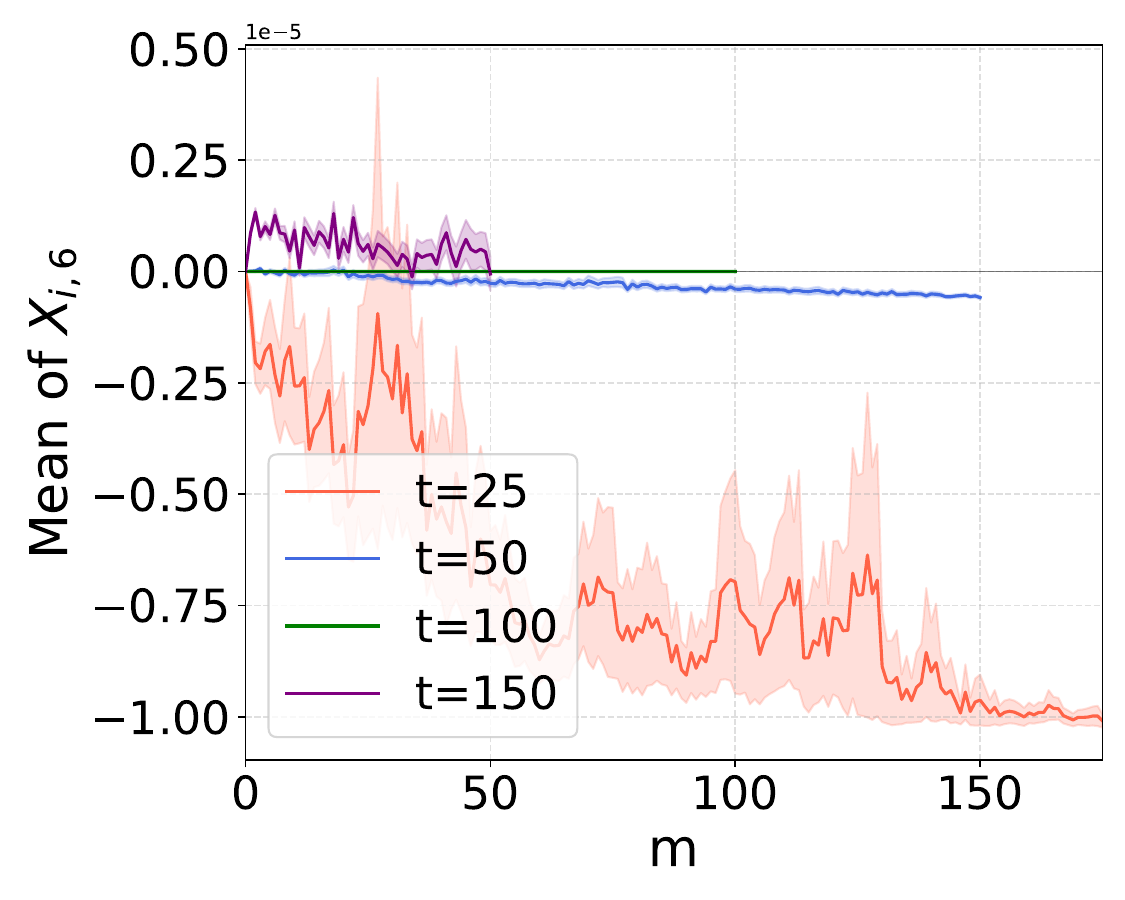}
        \caption{Arm 6}
    \end{subfigure}
    \begin{subfigure}[b]{0.32\textwidth}
        \centering
        \includegraphics[width=\textwidth]{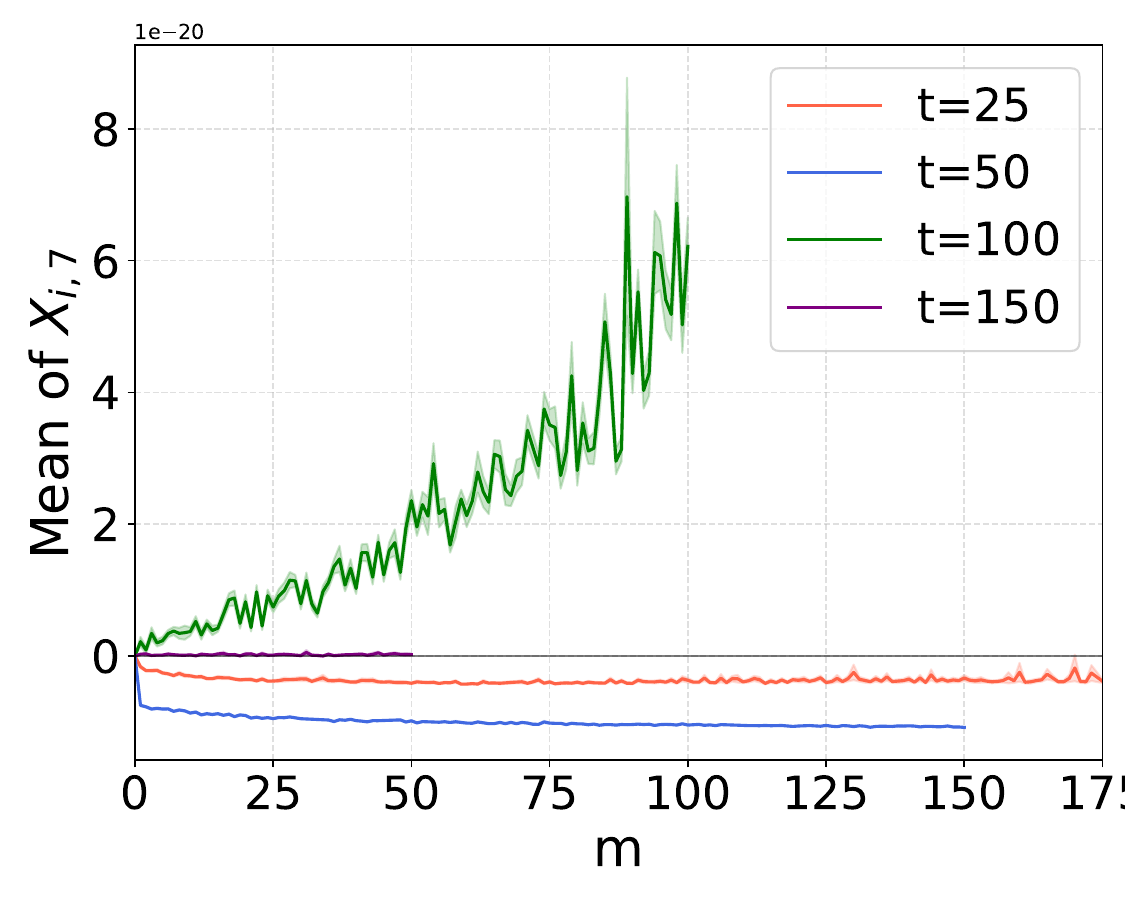}
        \caption{Arm 7}
    \end{subfigure}
    \\
    \begin{subfigure}[b]{0.32\textwidth}
        \centering
        \includegraphics[width=\textwidth]{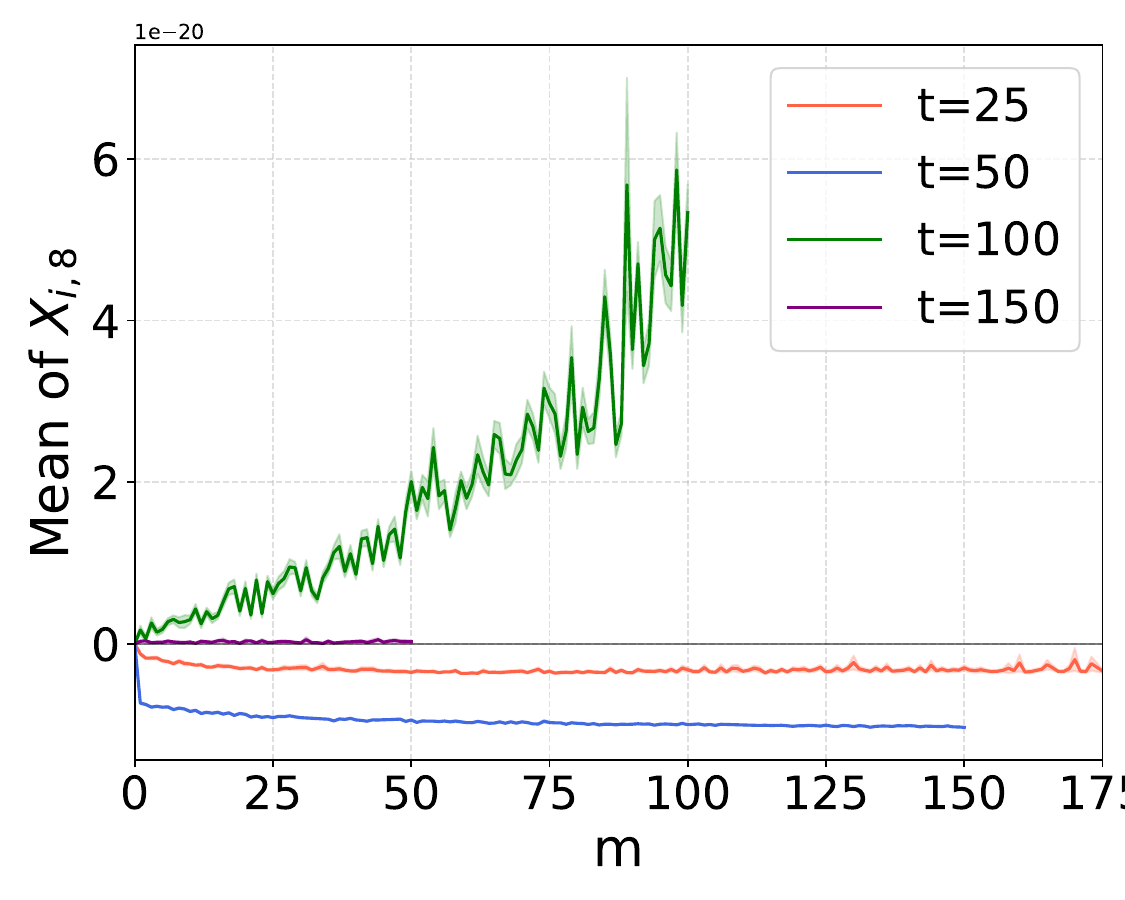}
        \caption{Arm 8}
    \end{subfigure}
    \begin{subfigure}[b]{0.32\textwidth}
        \centering
        \includegraphics[width=\textwidth]{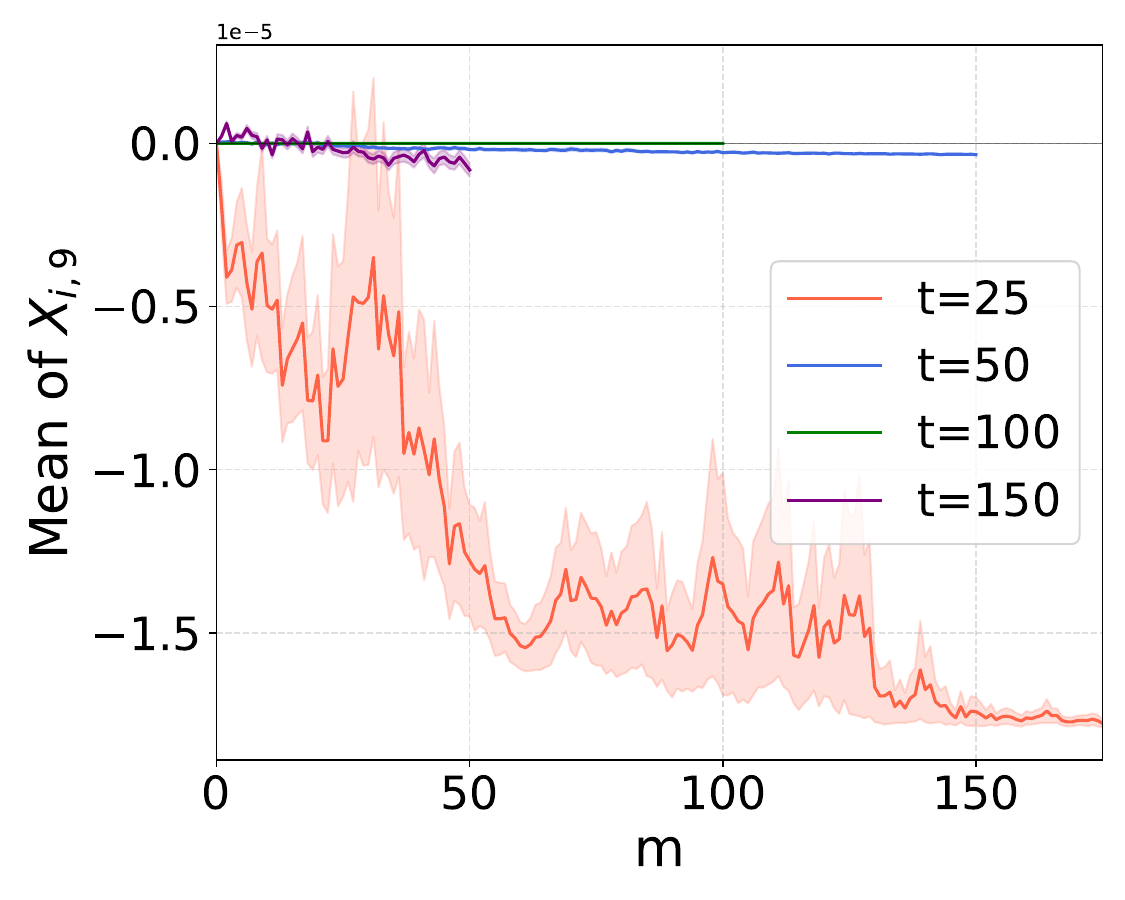}
        \caption{Arm 9}
    \end{subfigure}
    \caption{Per-arm martingale increments for GDPT in linear bandits with offline starting contexts, where $m$ and $t$ denote the lengths of the martingale continuation and the starting contexts, respectively.}
    \label{fig_martingale_increment_offline_linear_GDPT}
\end{figure}

\begin{figure}[H]
    \centering
    \begin{subfigure}[b]{0.32\textwidth}
        \centering
        \includegraphics[width=\textwidth]{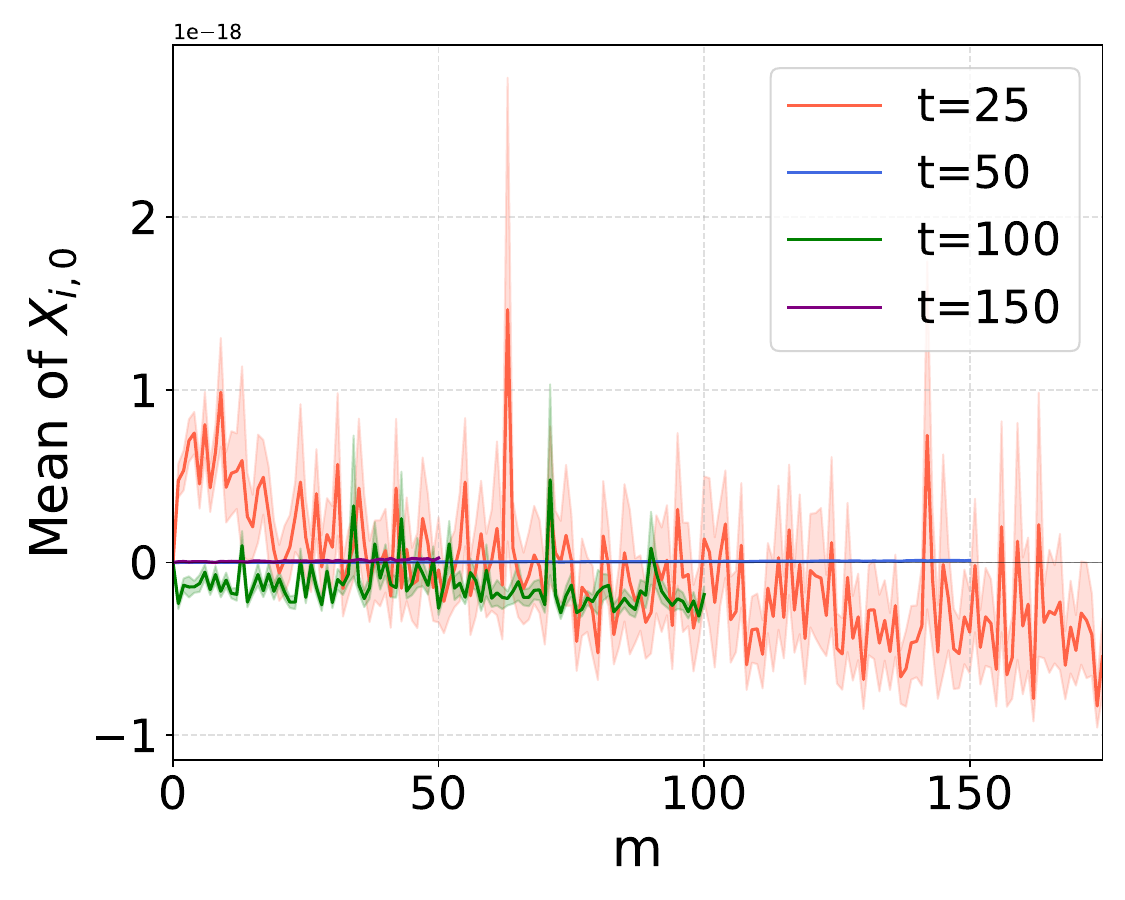}
        \caption{Arm 0}
    \end{subfigure}
    \begin{subfigure}[b]{0.32\textwidth}
        \centering
        \includegraphics[width=\textwidth]{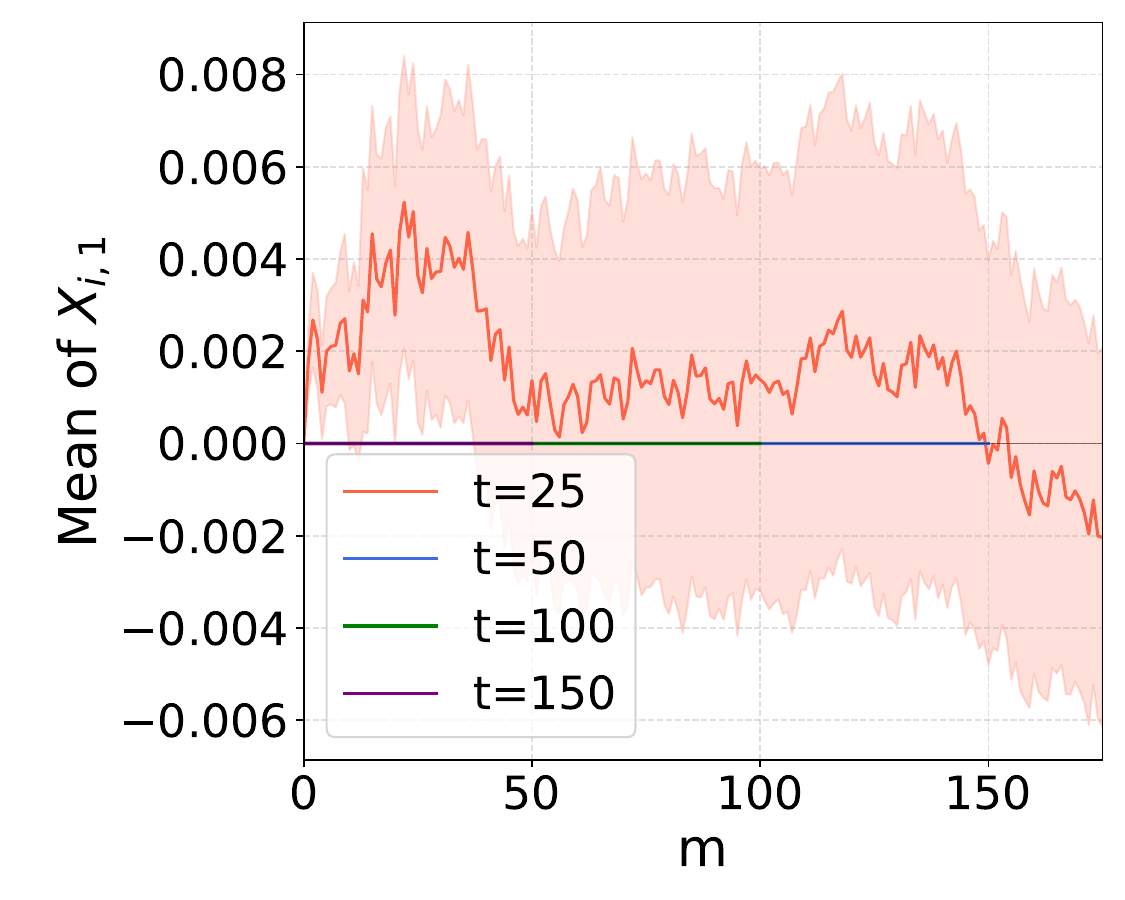}
        \caption{Arm 1}
    \end{subfigure}
    \begin{subfigure}[b]{0.32\textwidth}
        \centering
        \includegraphics[width=\textwidth]{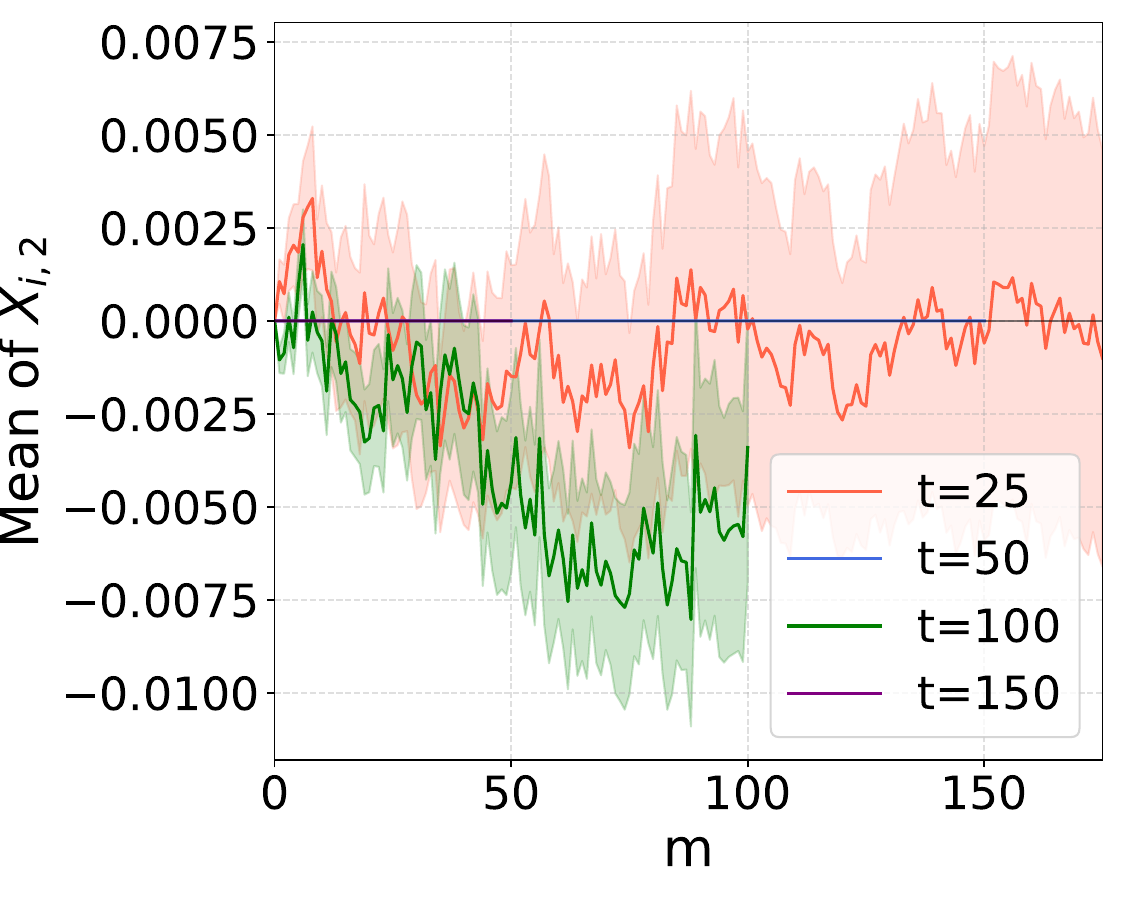}
        \caption{Arm 2}
    \end{subfigure}
    \\
    \begin{subfigure}[b]{0.32\textwidth}
        \centering
        \includegraphics[width=\textwidth]{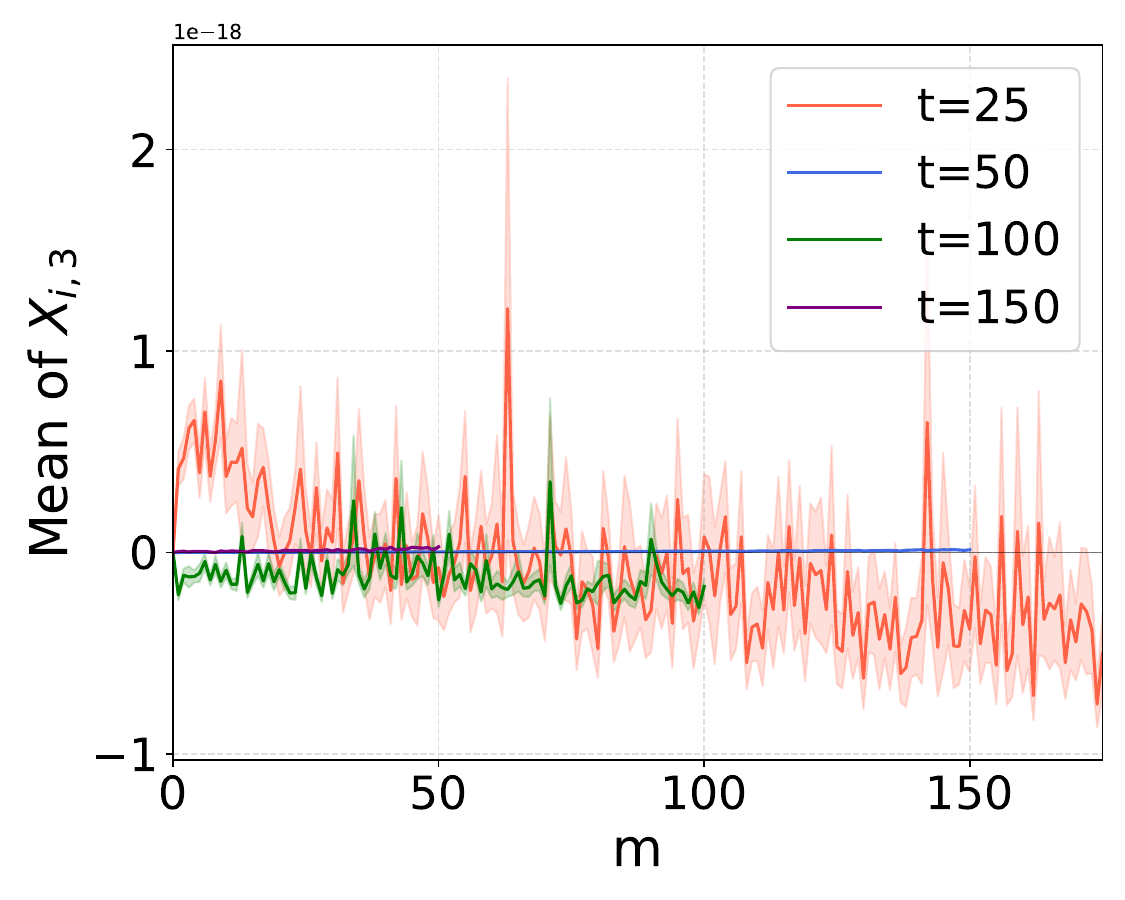}
        \caption{Arm 3}
    \end{subfigure}
    \begin{subfigure}[b]{0.32\textwidth}
        \centering
        \includegraphics[width=\textwidth]{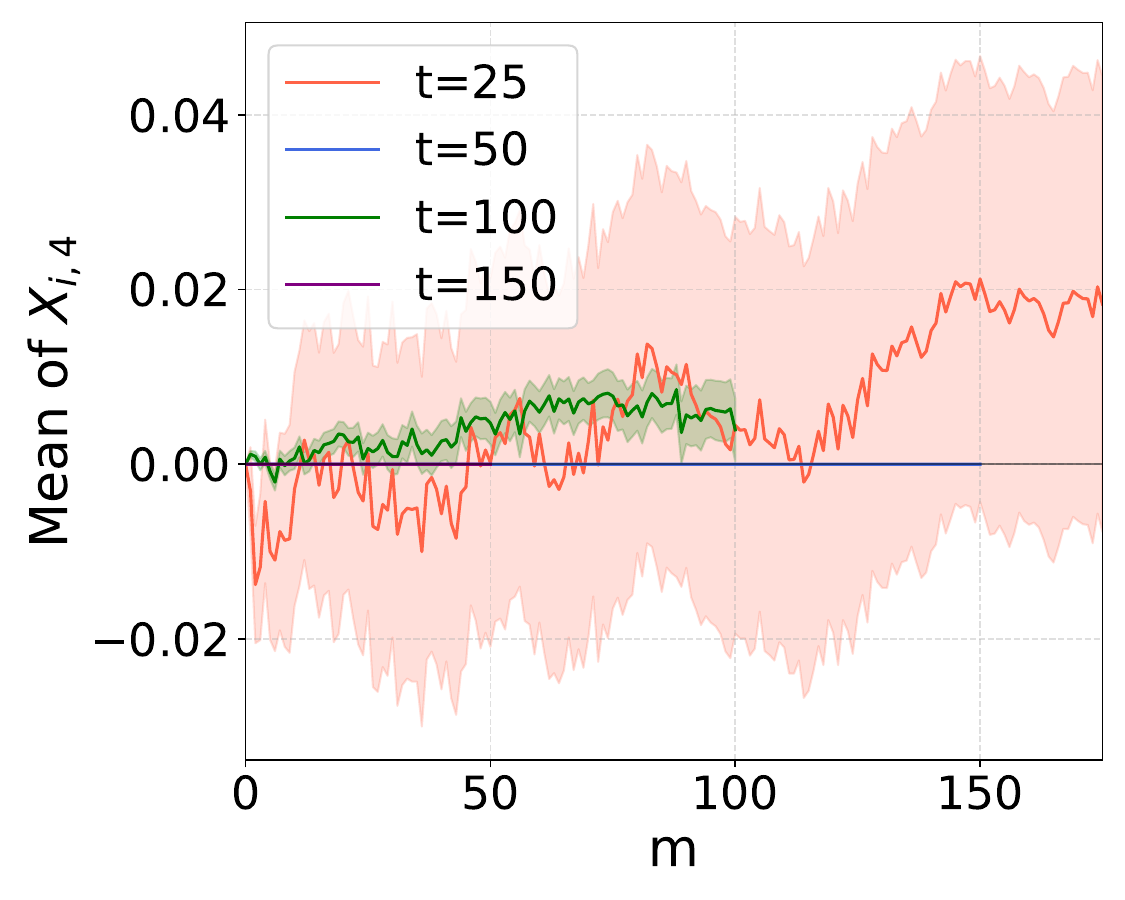}
        \caption{Arm 4}
    \end{subfigure}
    \\
    \begin{subfigure}[b]{0.32\textwidth}
        \centering
        \includegraphics[width=\textwidth]{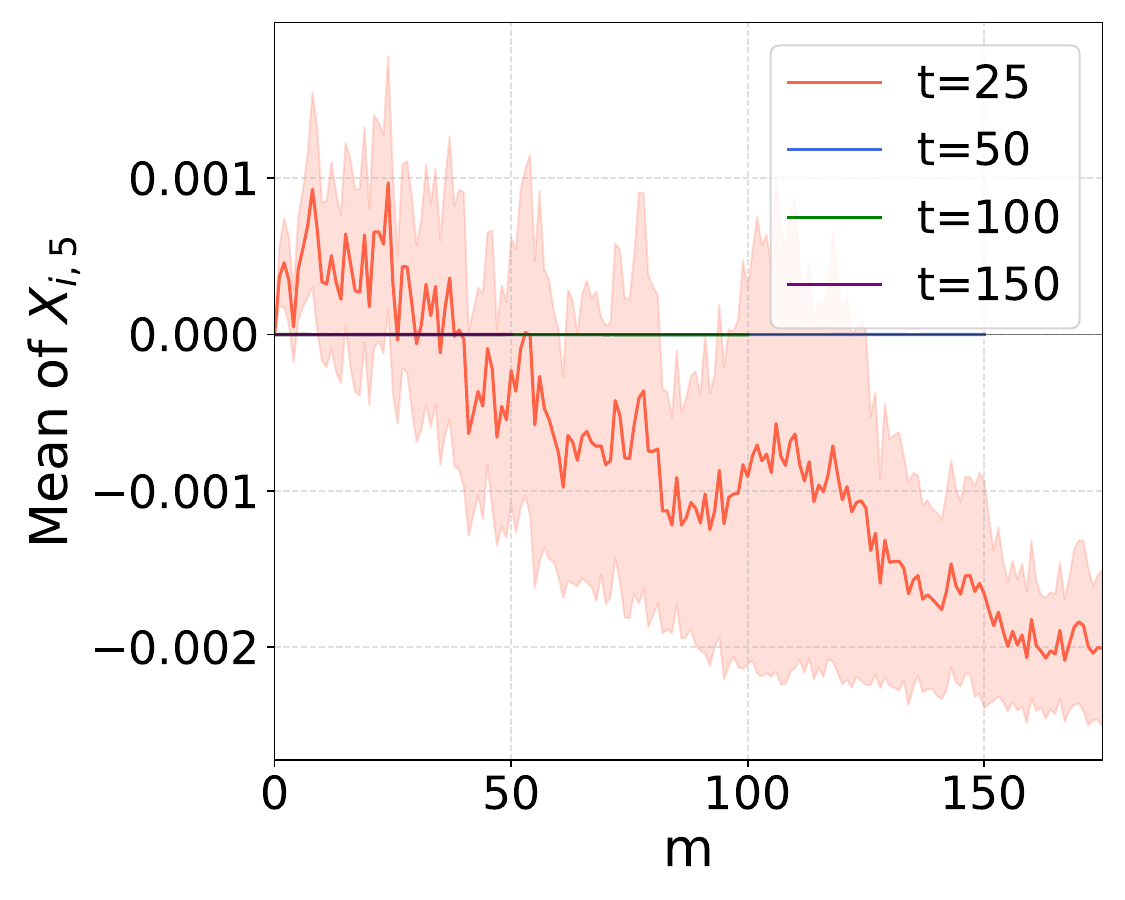}
        \caption{Arm 5}
    \end{subfigure}
    \begin{subfigure}[b]{0.32\textwidth}
        \centering
        \includegraphics[width=\textwidth]{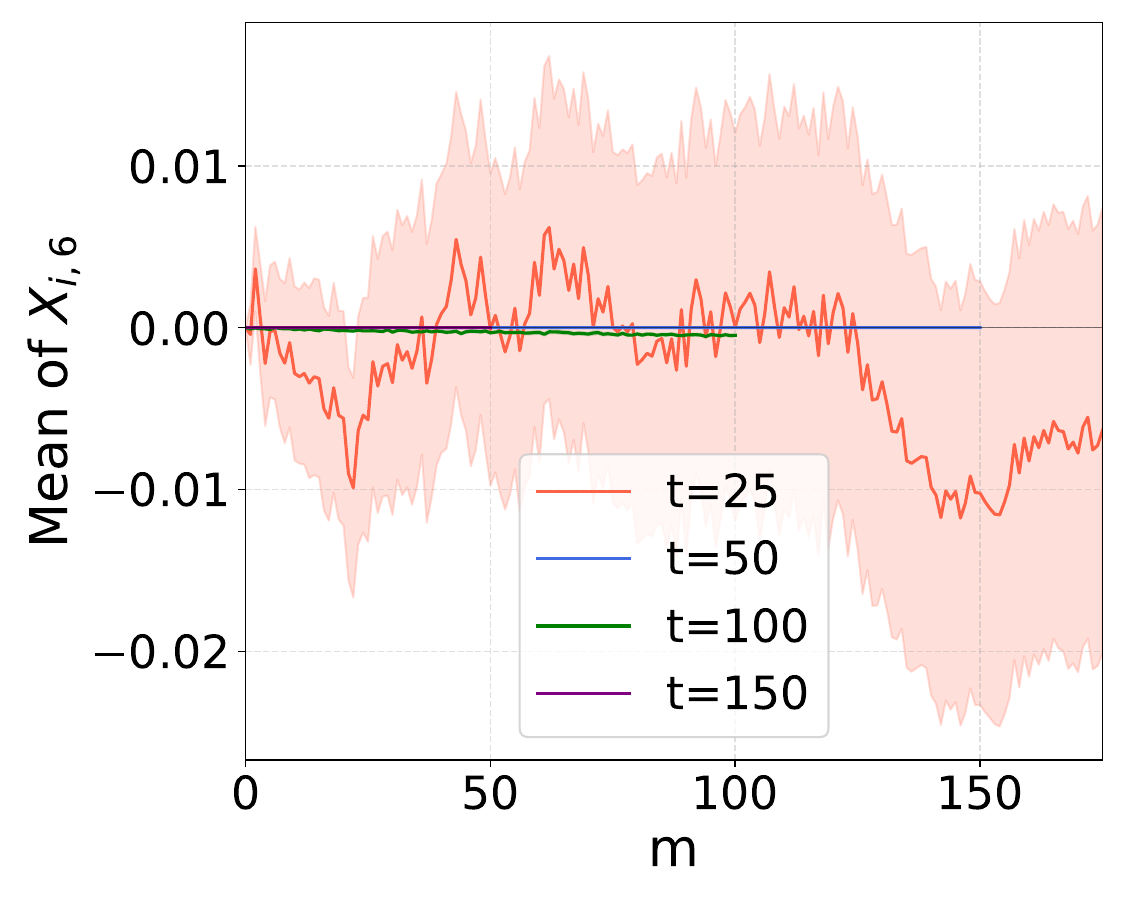}
        \caption{Arm 6}
    \end{subfigure}
    \begin{subfigure}[b]{0.32\textwidth}
        \centering
        \includegraphics[width=\textwidth]{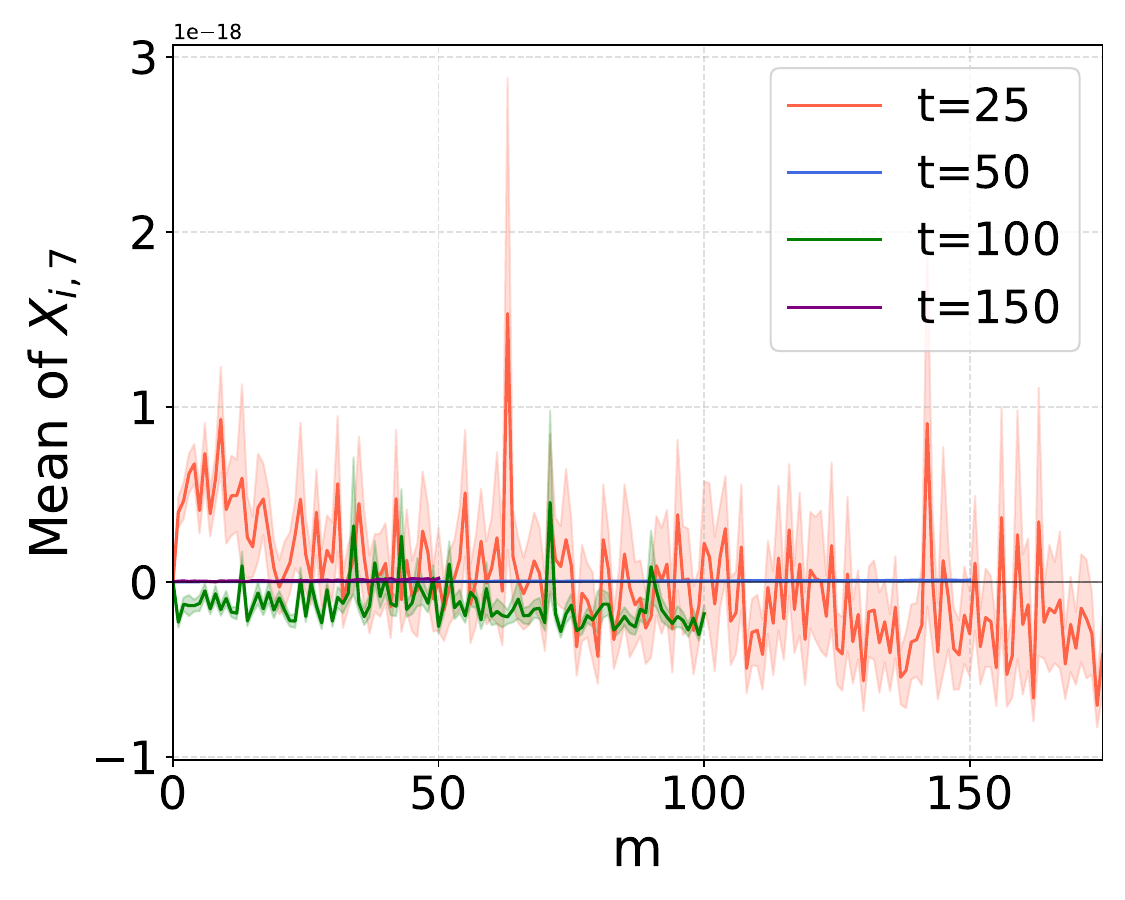}
        \caption{Arm 7}
    \end{subfigure}
    \\
    \begin{subfigure}[b]{0.32\textwidth}
        \centering
        \includegraphics[width=\textwidth]{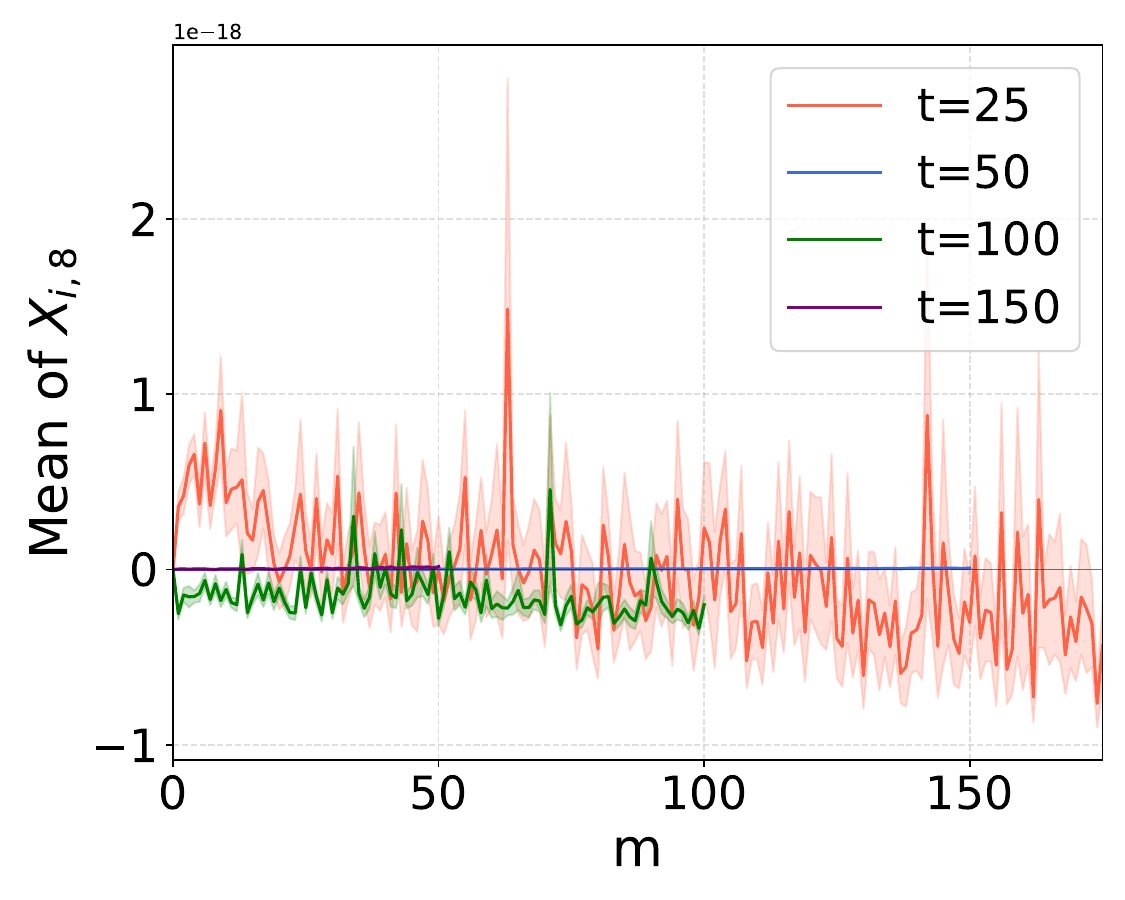}
        \caption{Arm 8}
    \end{subfigure}
    \begin{subfigure}[b]{0.32\textwidth}
        \centering
        \includegraphics[width=\textwidth]{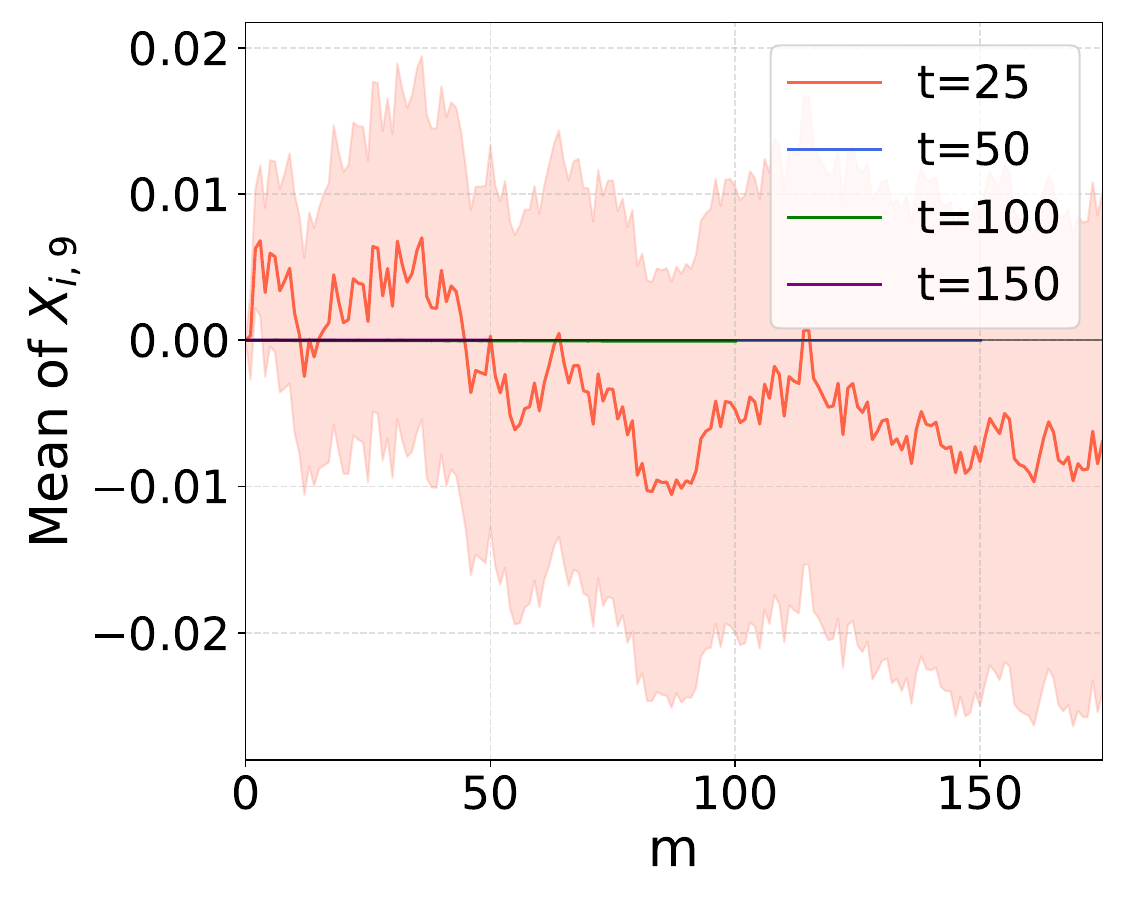}
        \caption{Arm 9}
    \end{subfigure}
    \caption{Per-arm martingale increments for GDPT in linear bandits with online starting contexts, where $m$ and $t$ denote the lengths of the martingale continuation and the starting contexts, respectively.}
    \label{fig_martingale_increment_online_linear_GDPT}
\end{figure}

\subsection{Ablation studies}
\label{sec:ablation_tables_and_plots}
\subsubsection{DPT sufficient-statistics table and evaluation figures}
\begin{table}[H]
\centering
\small
\setlength{\tabcolsep}{8pt}
\renewcommand{\arraystretch}{1.15}
\caption{$p$-values for sufficient-statistic invariance tests on permutation-invariant DPT in Gaussian bandits ($N=5000$, $\alpha=0.05$). ``Equal'' denotes identical action probabilities and no $p$-value is reported.}
\label{tab:ss_ablation_dpt}
\begin{tabular}{@{}r rrrr@{}}
\toprule
& \multicolumn{4}{c}{Gaussian} \\
\cmidrule(lr){2-5}
& \multicolumn{2}{c}{Offline}
& \multicolumn{2}{c}{Online} \\
$n$
& \multicolumn{1}{c}{Sort}
& \multicolumn{1}{c}{Replace}
& \multicolumn{1}{c}{Sort}
& \multicolumn{1}{c}{Replace} \\
\midrule
$50$  & Equal & $6.65\times 10^{-36}$
    & Equal & $6.60\times 10^{-90}$ \\
$100$ & Equal & $2.79\times 10^{-94}$
    & Equal & $2.54\times 10^{-62}$ \\
$200$ & Equal & $5.38\times 10^{-217}$
    & Equal & $1.46\times 10^{-46}$ \\
$300$ & Equal & $8.01\times 10^{-305}$
    & Equal & $2.11\times 10^{-44}$ \\
$400$ & Equal & $<10^{-300}$
    & Equal & $6.13\times 10^{-56}$ \\
$500$ & Equal & $<10^{-300}$
    & Equal & $2.55\times 10^{-67}$ \\
\bottomrule
\end{tabular}
\end{table}

\begin{figure}[H]
\centering
\begin{subfigure}{0.245\textwidth}
\includegraphics[width=\textwidth]{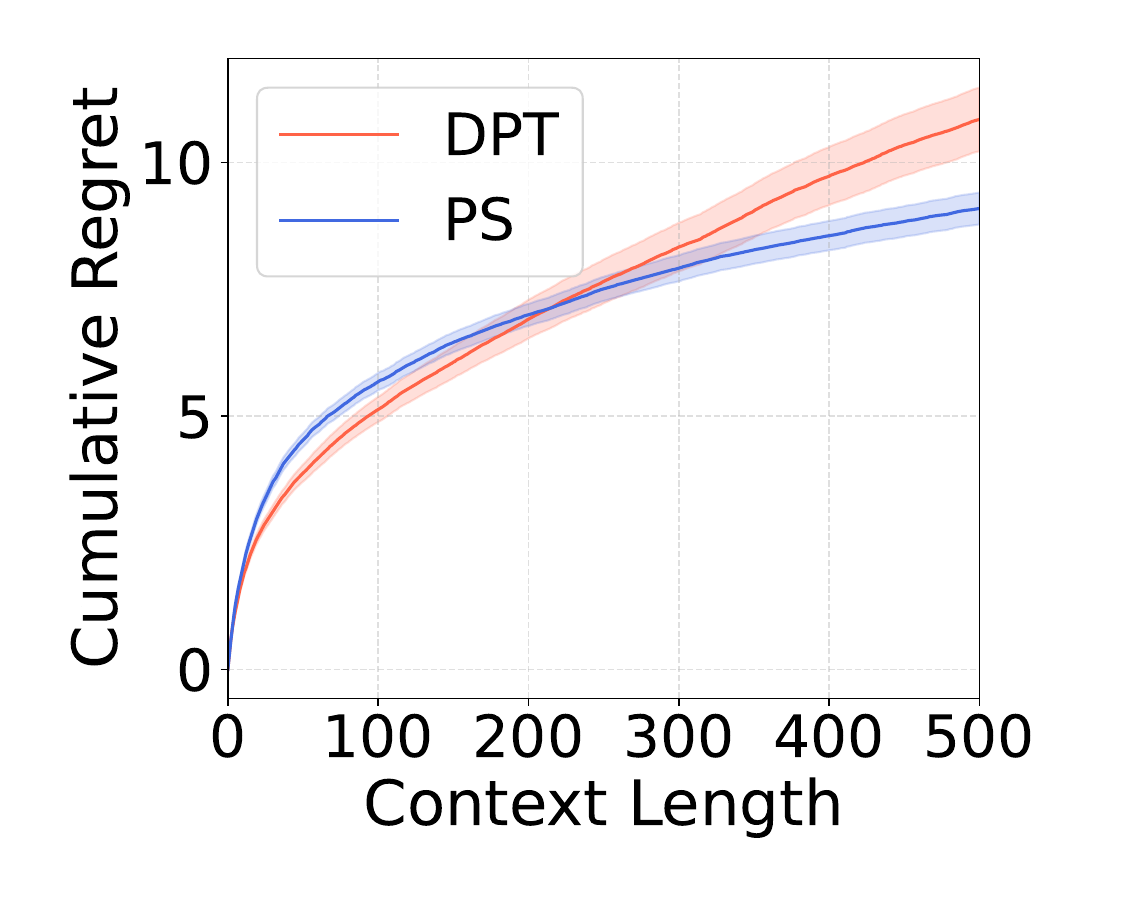}
\caption{Online regret}
\end{subfigure}
\hfill
\begin{subfigure}{0.245\textwidth}
\includegraphics[width=\textwidth]{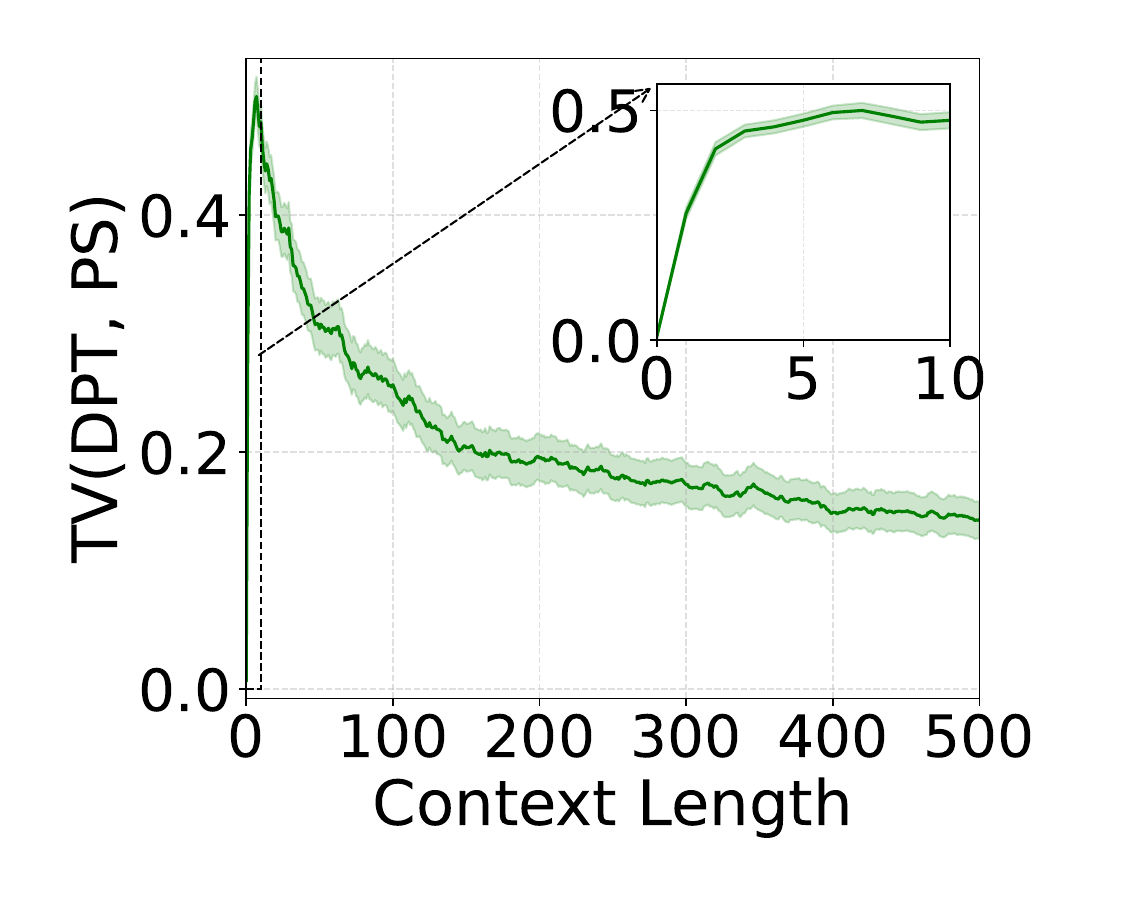}
\caption{Online TV distance}
\end{subfigure}
\hfill
\begin{subfigure}{0.245\textwidth}
\includegraphics[width=\textwidth]{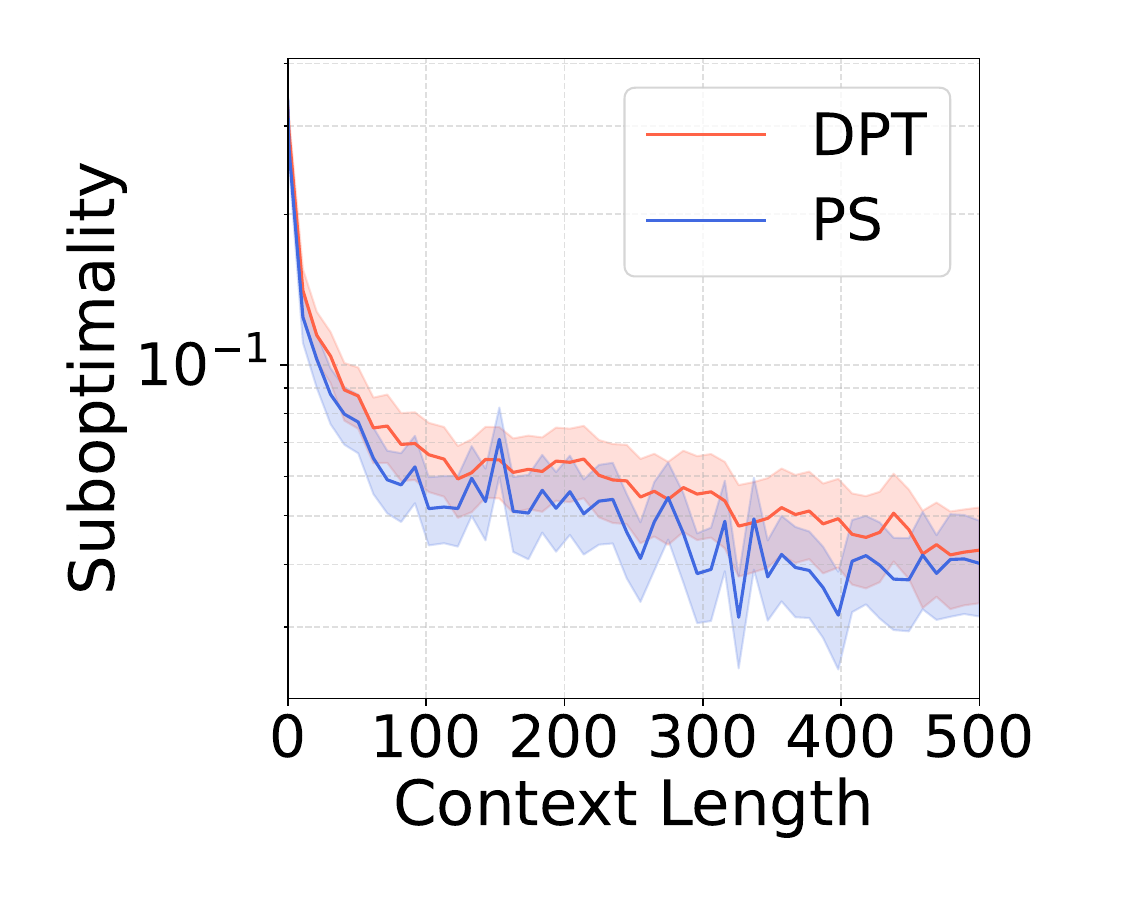}
\caption{Offline suboptimality}
\end{subfigure}
\hfill
\begin{subfigure}{0.245\textwidth}
\includegraphics[width=\textwidth]{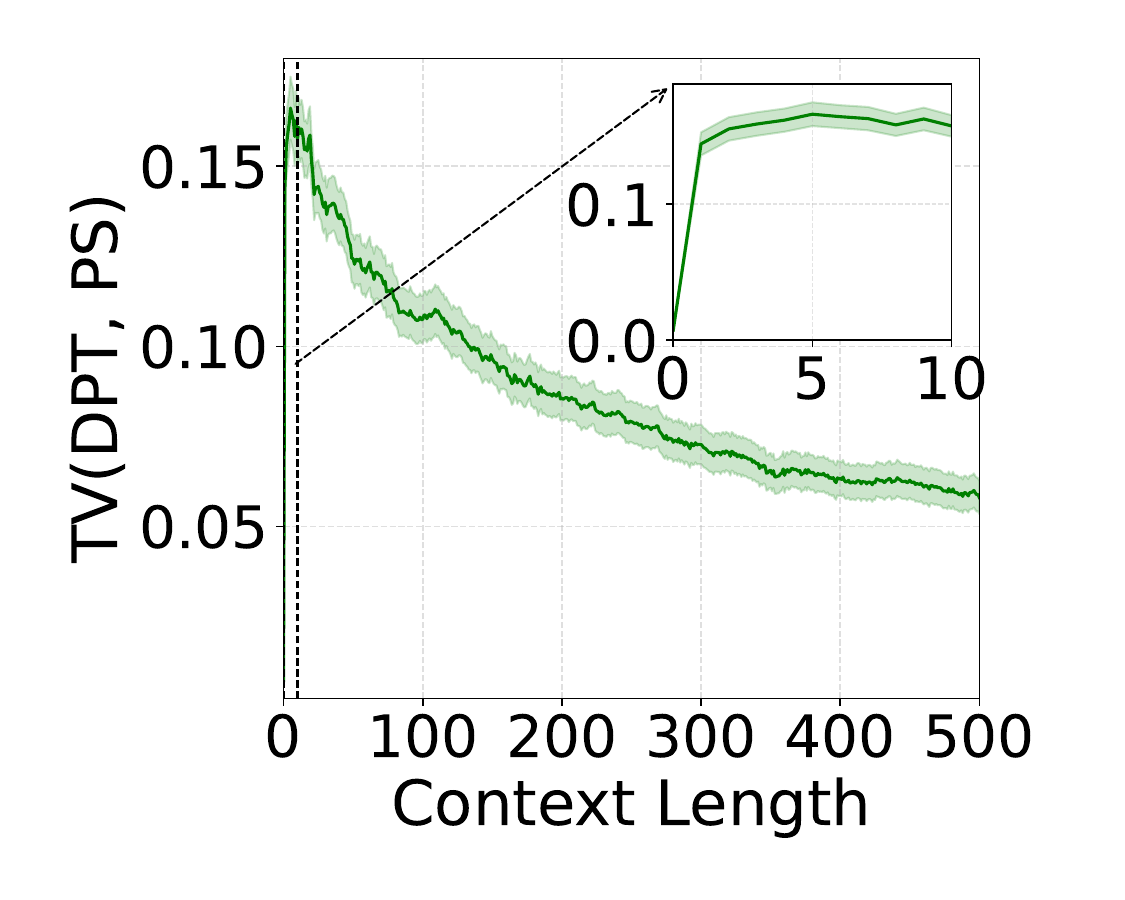}
\caption{Offline TV distance}
\end{subfigure}
\caption{Performance comparison of permutation-invariant DPT and Thompson Sampling in Gaussian bandits under online and offline deployment.}
\label{fig_evaluation_gaussian_bandit_DPT_ablation}
\end{figure}

\subsubsection{DPT\texorpdfstring{$'$}{'} sufficient-statistics and martingale tables and evaluation figures}
\begin{table}[H]
\centering
\small
\setlength{\tabcolsep}{8pt}
\renewcommand{\arraystretch}{1.15}
\caption{$p$-values for sufficient-statistic invariance tests on permutation-invariant DPT$'$ in Gaussian bandits ($N=5000$, $\alpha=0.05$). ``Equal'' denotes identical action probabilities and no $p$-value is reported.}
\label{tab:ss_ablation_dpt_prime}
\begin{tabular}{@{}r rrrr@{}}
\toprule
& \multicolumn{4}{c}{Gaussian} \\
\cmidrule(lr){2-5}
& \multicolumn{2}{c}{Offline}
& \multicolumn{2}{c}{Online} \\
$n$
& \multicolumn{1}{c}{Sort}
& \multicolumn{1}{c}{Replace}
& \multicolumn{1}{c}{Sort}
& \multicolumn{1}{c}{Replace} \\
\midrule
$50$  & Equal & $2.46\times 10^{-233}$
    & Equal & $<10^{-300}$ \\
$100$ & Equal & $1.09\times 10^{-293}$
    & Equal & $<10^{-300}$ \\
$200$ & Equal & $<10^{-300}$
    & Equal & $<10^{-300}$ \\
$300$ & Equal & $<10^{-300}$
    & Equal & $<10^{-300}$ \\
$400$ & Equal & $<10^{-300}$
    & Equal & $<10^{-300}$ \\
$500$ & Equal & $<10^{-300}$
    & Equal & $<10^{-300}$ \\
\bottomrule
\end{tabular}
\end{table}

\begin{table}[H]
\centering
\small
\setlength{\tabcolsep}{8pt}
\renewcommand{\arraystretch}{1.15}
\caption{$p$-values for martingale tests on permutation-invariant DPT$'$ in Gaussian bandits ($N=5000$, $\alpha=0.05$). Here, $n$ and $m$ denote the starting-context and continuation lengths, respectively.}
\label{tab:martingale_ablation_dpt_prime}
\begin{tabular}{@{}c r rr@{}}
\toprule
& & \multicolumn{2}{c}{Gaussian} \\
\cmidrule(lr){3-4}
$n$ & $m$
& \multicolumn{1}{c}{Offline}
& \multicolumn{1}{c}{Online} \\
\midrule
\multirow[c]{5}{*}{$50$}
 & $100$ & $<10^{-300}$ & $<10^{-300}$ \\
 & $200$ & $<10^{-300}$ & $<10^{-300}$ \\
 & $300$ & $<10^{-300}$ & $<10^{-300}$ \\
 & $400$ & $<10^{-300}$ & $<10^{-300}$ \\
 & $450$ & $<10^{-300}$ & $<10^{-300}$ \\
\addlinespace[0.6ex]
\multirow[c]{4}{*}{$100$}
 & $100$ & $<10^{-300}$ & $1.22\times 10^{-37}$ \\
 & $200$ & $<10^{-300}$ & $1.17\times 10^{-159}$ \\
 & $300$ & $<10^{-300}$ & $<10^{-300}$ \\
 & $400$ & $<10^{-300}$ & $<10^{-300}$ \\
\addlinespace[0.6ex]
\multirow[c]{3}{*}{$200$}
 & $100$ & $<10^{-300}$ & $1.16\times 10^{-74}$ \\
 & $200$ & $<10^{-300}$ & $1.53\times 10^{-256}$ \\
 & $300$ & $<10^{-300}$ & $<10^{-300}$ \\
\addlinespace[0.6ex]
\multirow[c]{2}{*}{$300$}
 & $100$ & $<10^{-300}$ & $<10^{-300}$ \\
 & $200$ & $<10^{-300}$ & $1.92\times 10^{-253}$ \\
\addlinespace[0.6ex]
$400$ & $100$ & $<10^{-300}$ & $1.41\times 10^{-236}$ \\
\bottomrule
\end{tabular}
\end{table}

\begin{figure}[H]
\centering
\begin{subfigure}{0.245\textwidth}
\includegraphics[width=\textwidth]{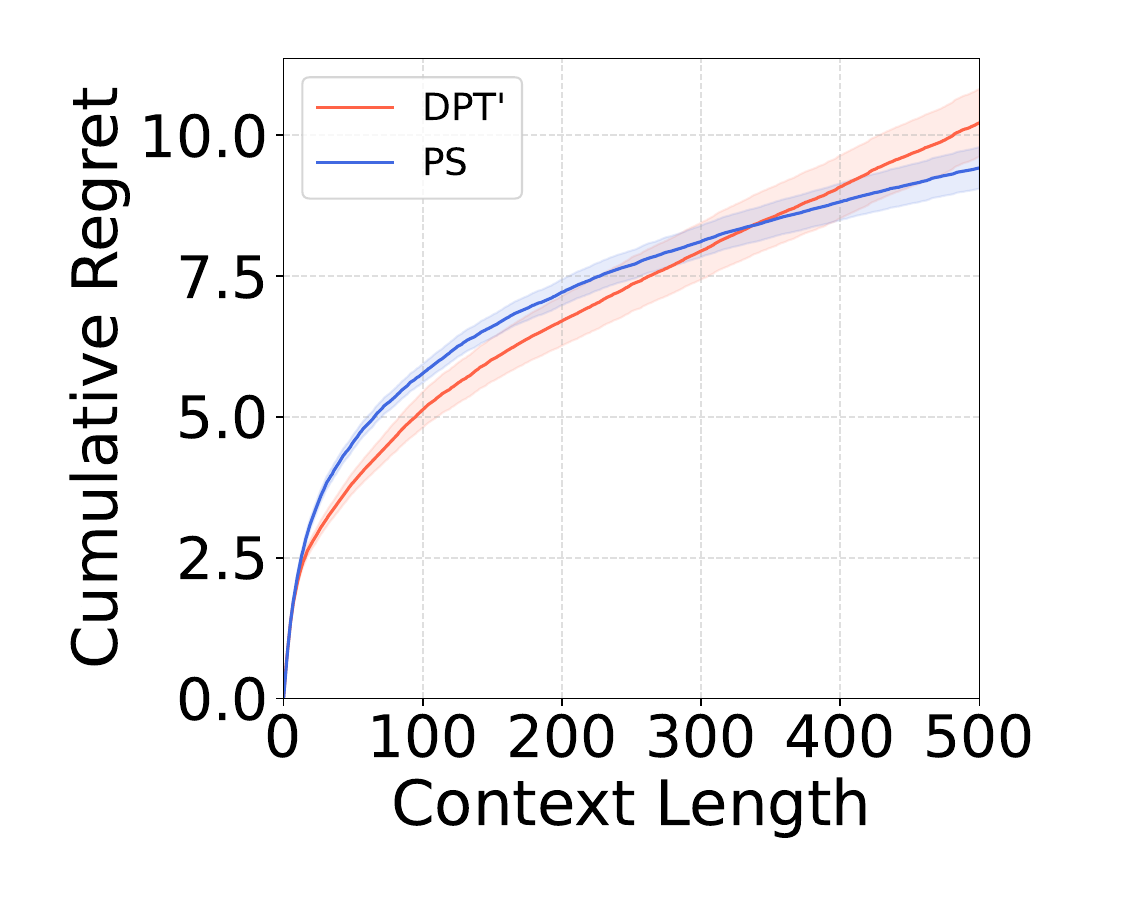}
\caption{Online regret}
\end{subfigure}
\hfill
\begin{subfigure}{0.245\textwidth}
\includegraphics[width=\textwidth]{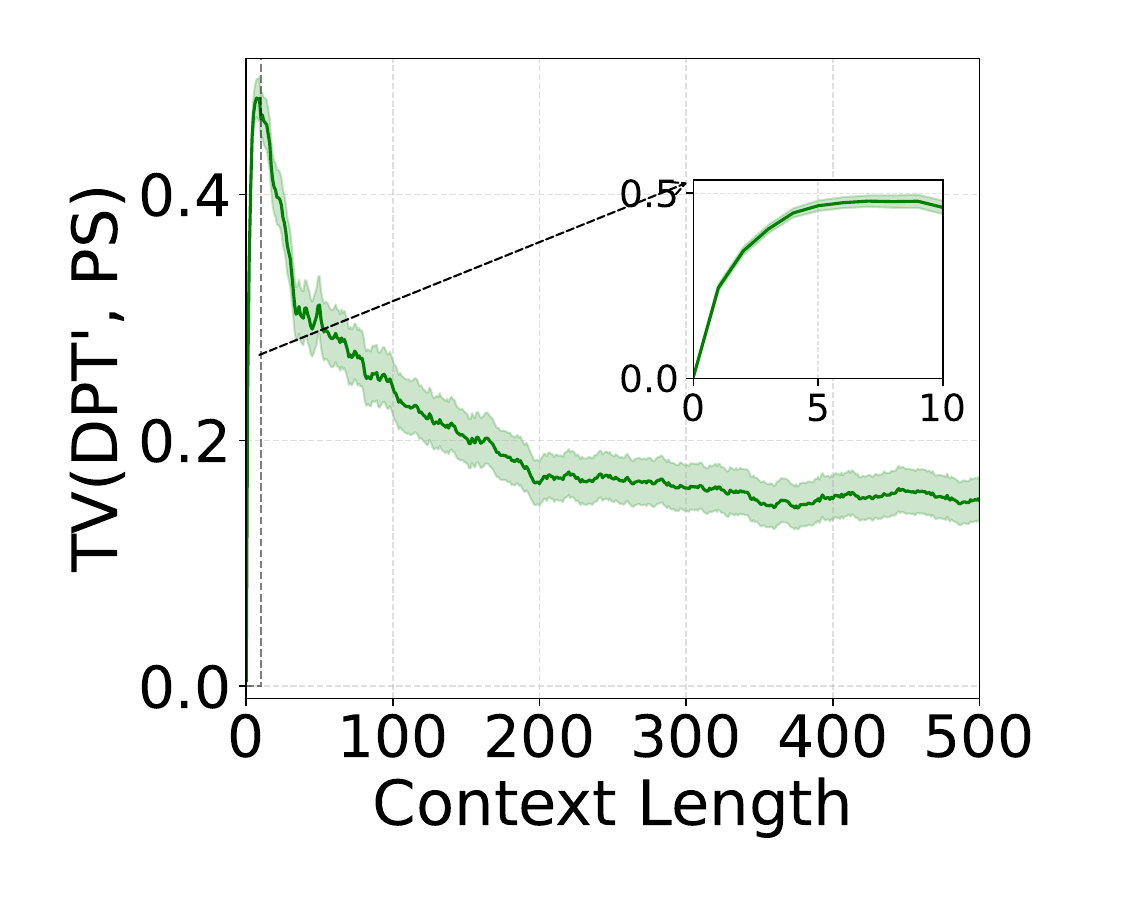}
\caption{Online TV distance}
\end{subfigure}
\hfill
\begin{subfigure}{0.245\textwidth}
\includegraphics[width=\textwidth]{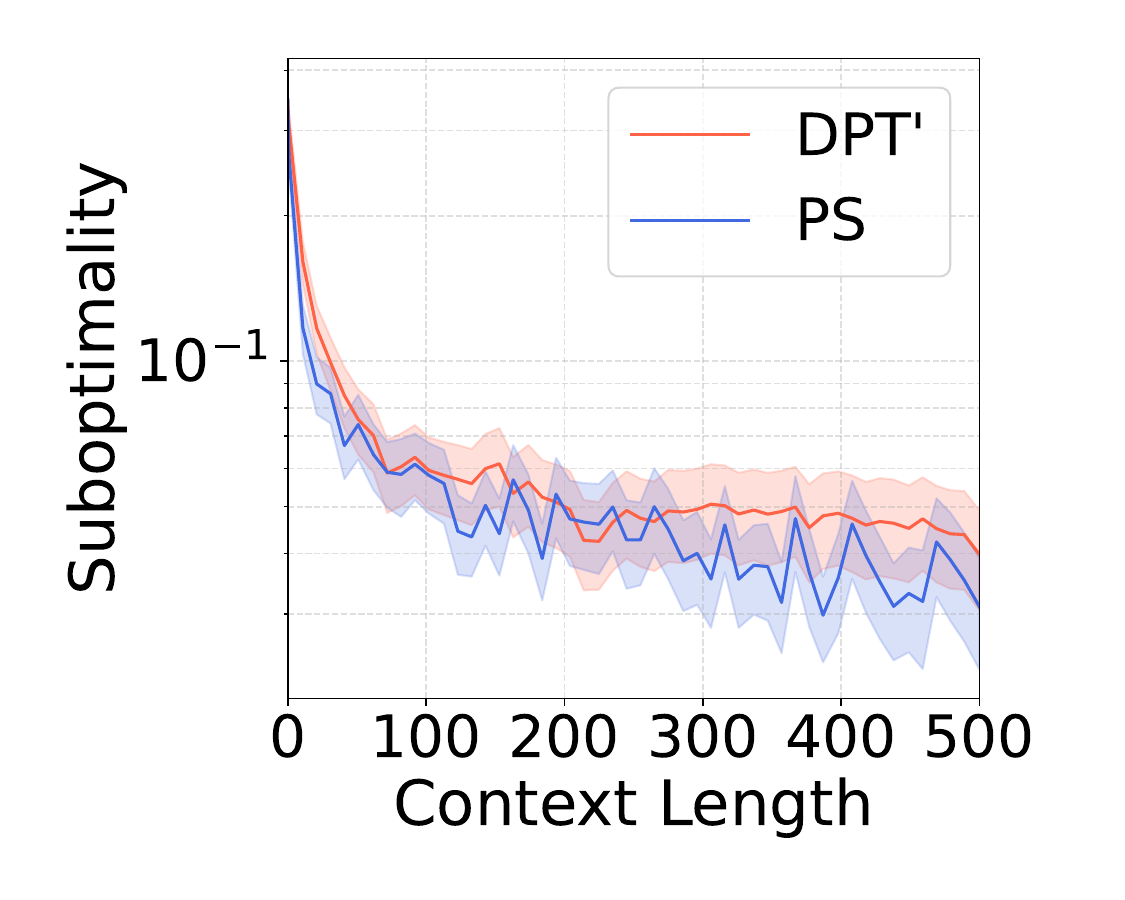}
\caption{Offline suboptimality}
\end{subfigure}
\hfill
\begin{subfigure}{0.245\textwidth}
\includegraphics[width=\textwidth]{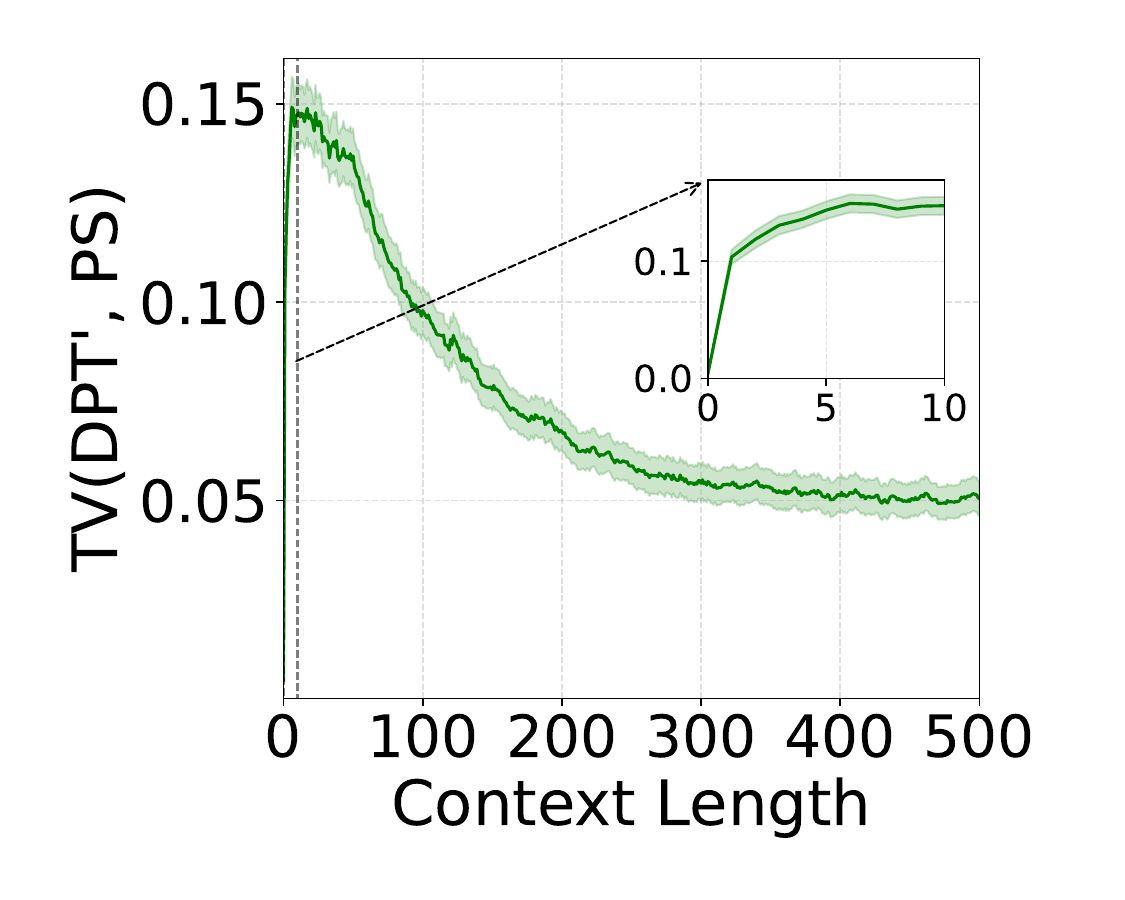}
\caption{Offline TV distance}
\end{subfigure}
\caption{Performance comparison of permutation-invariant DPT$'$ and Thompson Sampling in Gaussian bandits under online and offline deployment.}
\label{fig_evaluation_gaussian_bandit_DPT_prime_ablation}
\end{figure}

\begin{figure}[H]
    \centering
    \begin{subfigure}[b]{0.32\textwidth}
        \centering
        \includegraphics[width=\textwidth]{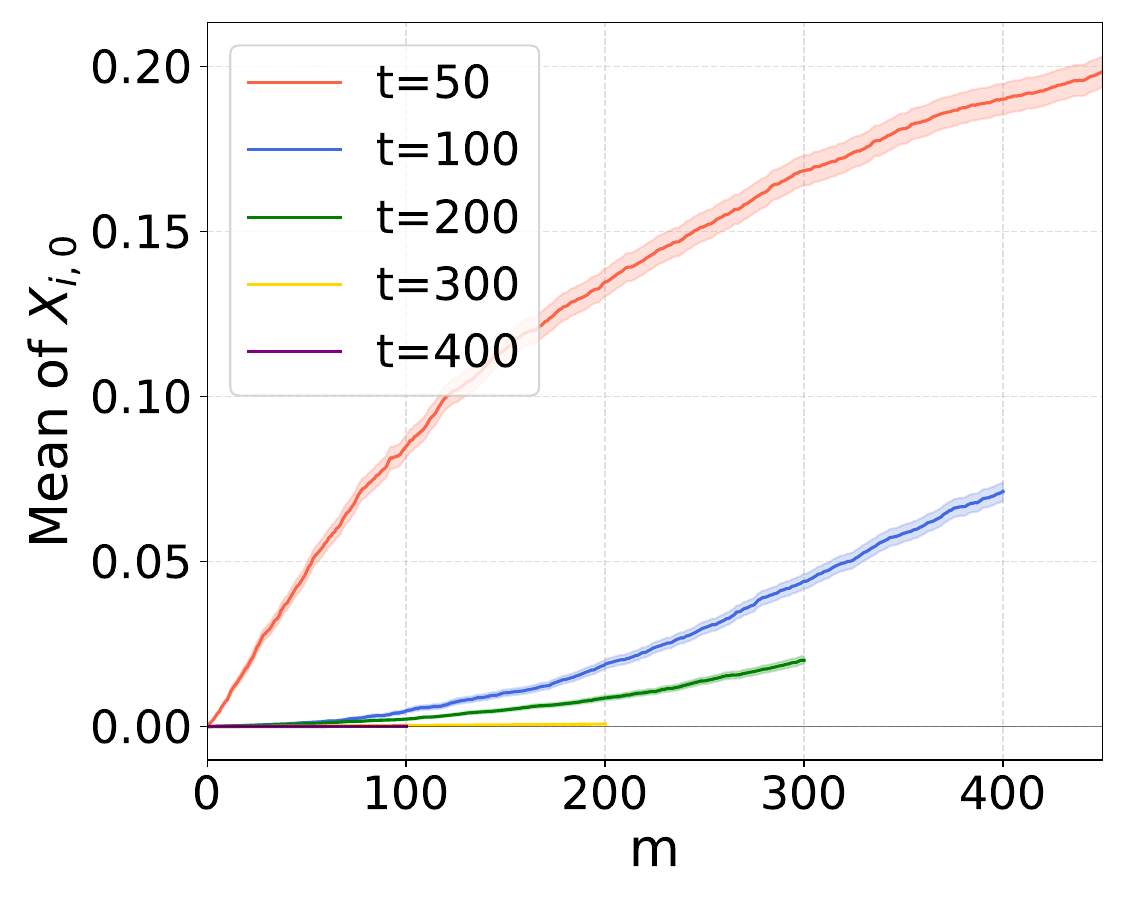}
        \caption{Arm 0}
    \end{subfigure}
    \begin{subfigure}[b]{0.32\textwidth}
        \centering
        \includegraphics[width=\textwidth]{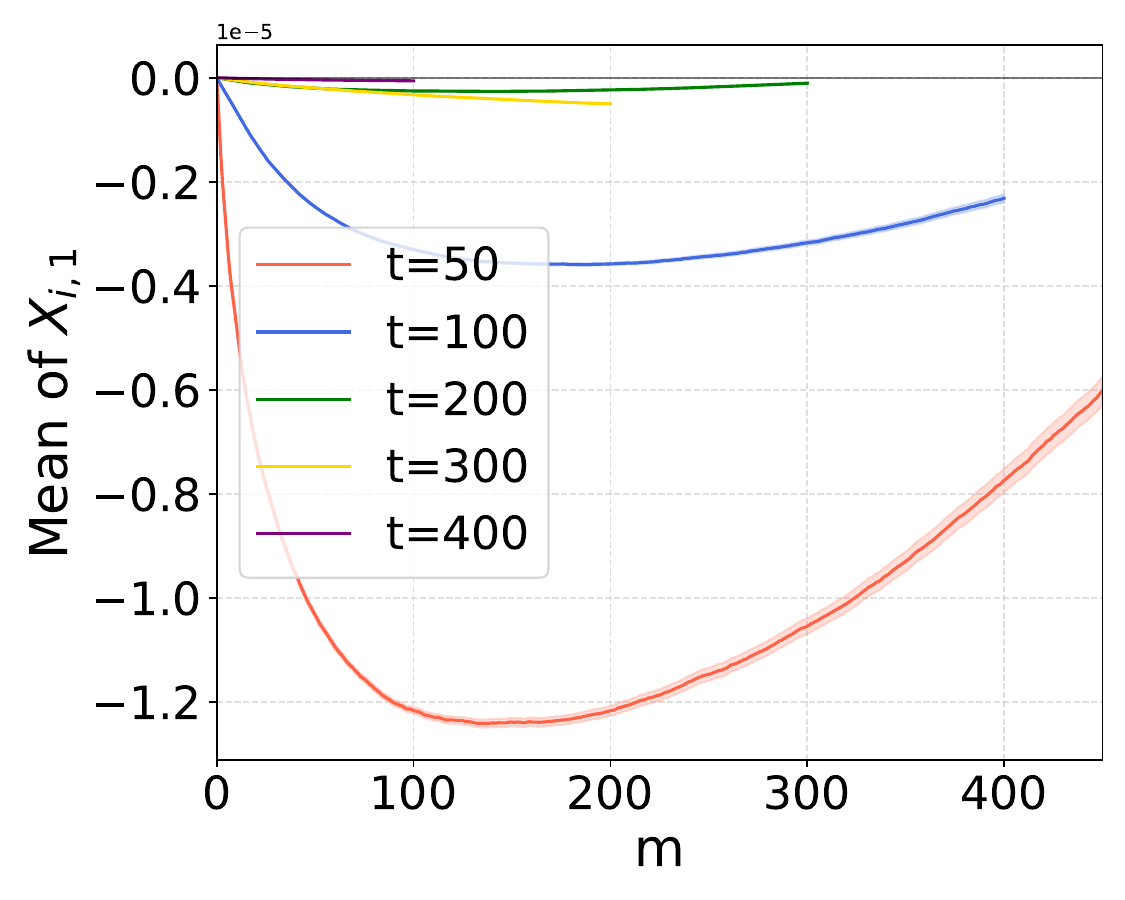}
        \caption{Arm 1}
    \end{subfigure}
    \begin{subfigure}[b]{0.32\textwidth}
        \centering
        \includegraphics[width=\textwidth]{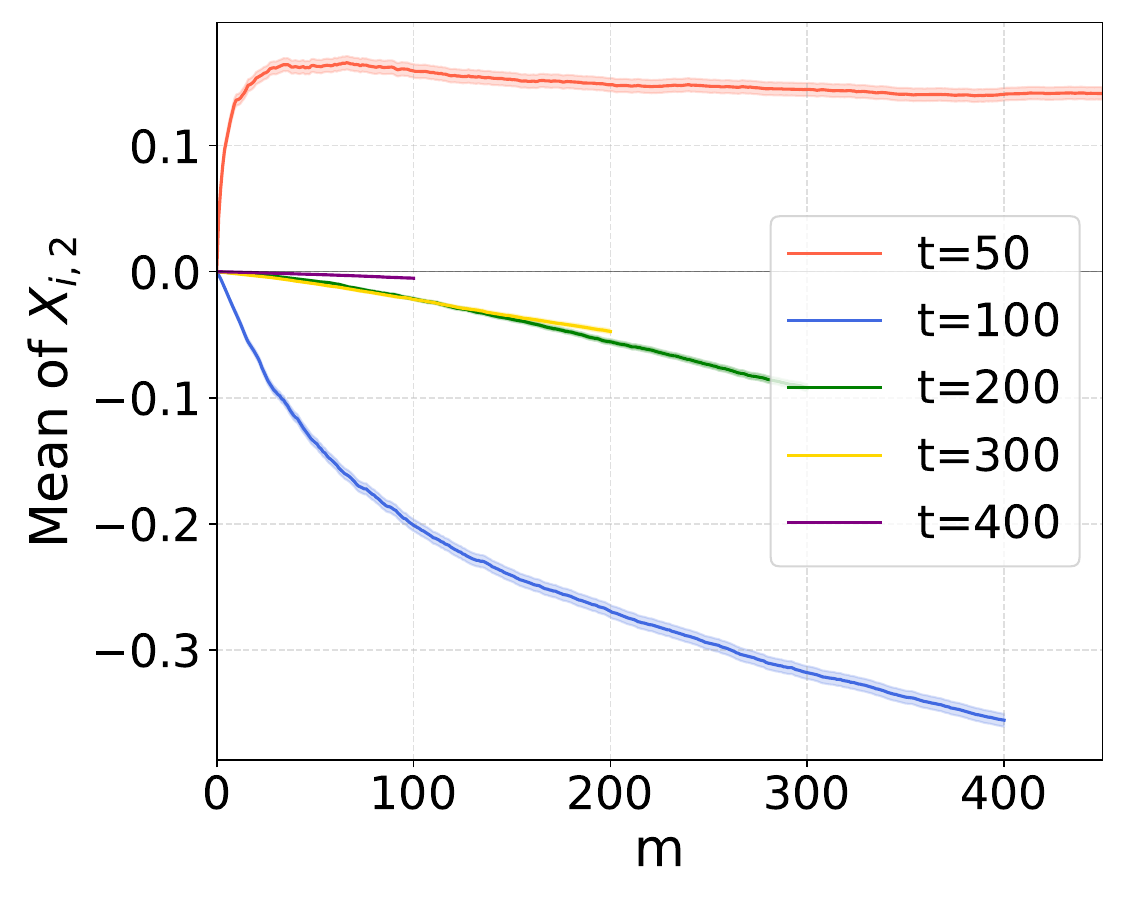}
        \caption{Arm 2}
    \end{subfigure}
    \\
    \begin{subfigure}[b]{0.32\textwidth}
        \centering
        \includegraphics[width=\textwidth]{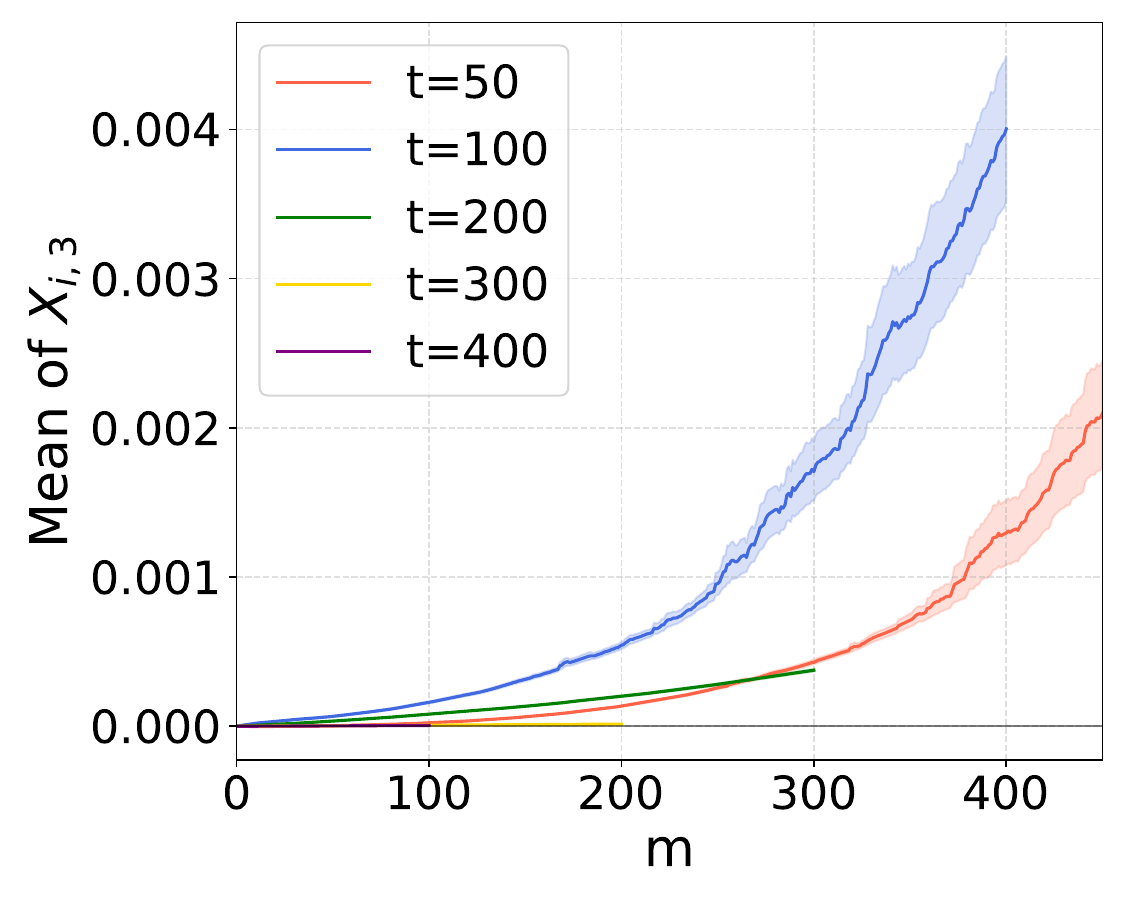}
        \caption{Arm 3}
    \end{subfigure}
    \begin{subfigure}[b]{0.32\textwidth}
        \centering
        \includegraphics[width=\textwidth]{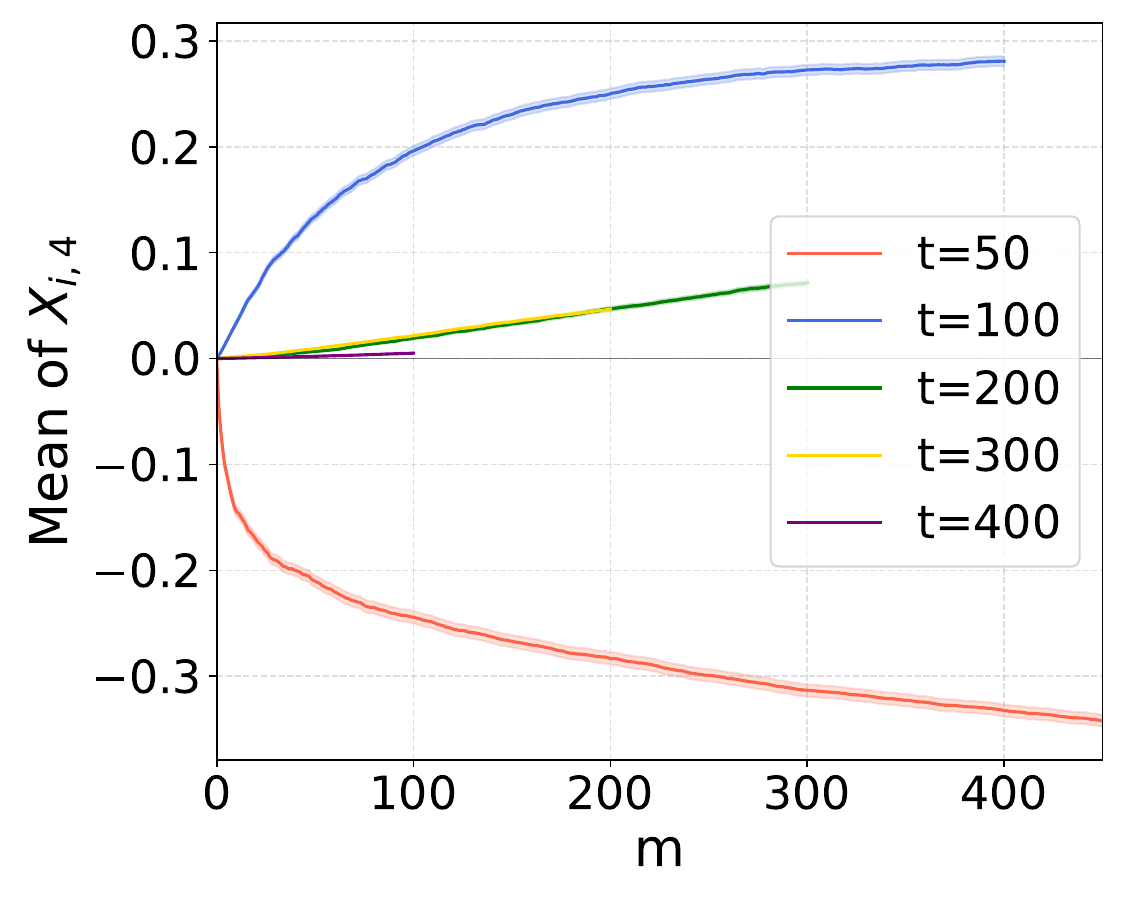}
        \caption{Arm 4}
    \end{subfigure}
    \caption{Per-arm martingale increments for permutation-invariant DPT$'$ in Gaussian bandits with offline starting contexts, where $m$ and $t$ denote the lengths of the martingale continuation and the starting contexts, respectively.}
    \label{fig_martingale_increment_offline_gaussian_DPT_prime_ablation}
\end{figure}

\begin{figure}[H]
    \centering
    \begin{subfigure}[b]{0.32\textwidth}
        \centering
        \includegraphics[width=\textwidth]{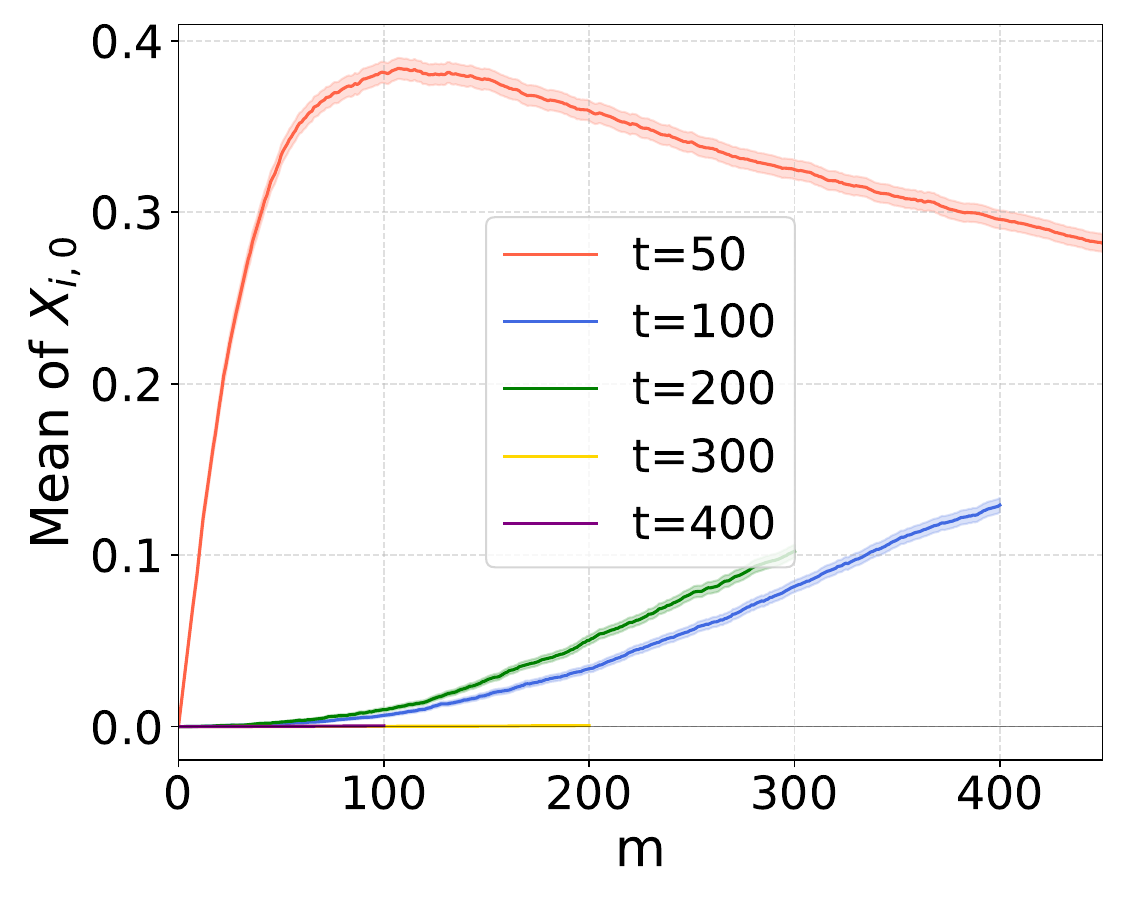}
        \caption{Arm 0}
    \end{subfigure}
    \begin{subfigure}[b]{0.32\textwidth}
        \centering
        \includegraphics[width=\textwidth]{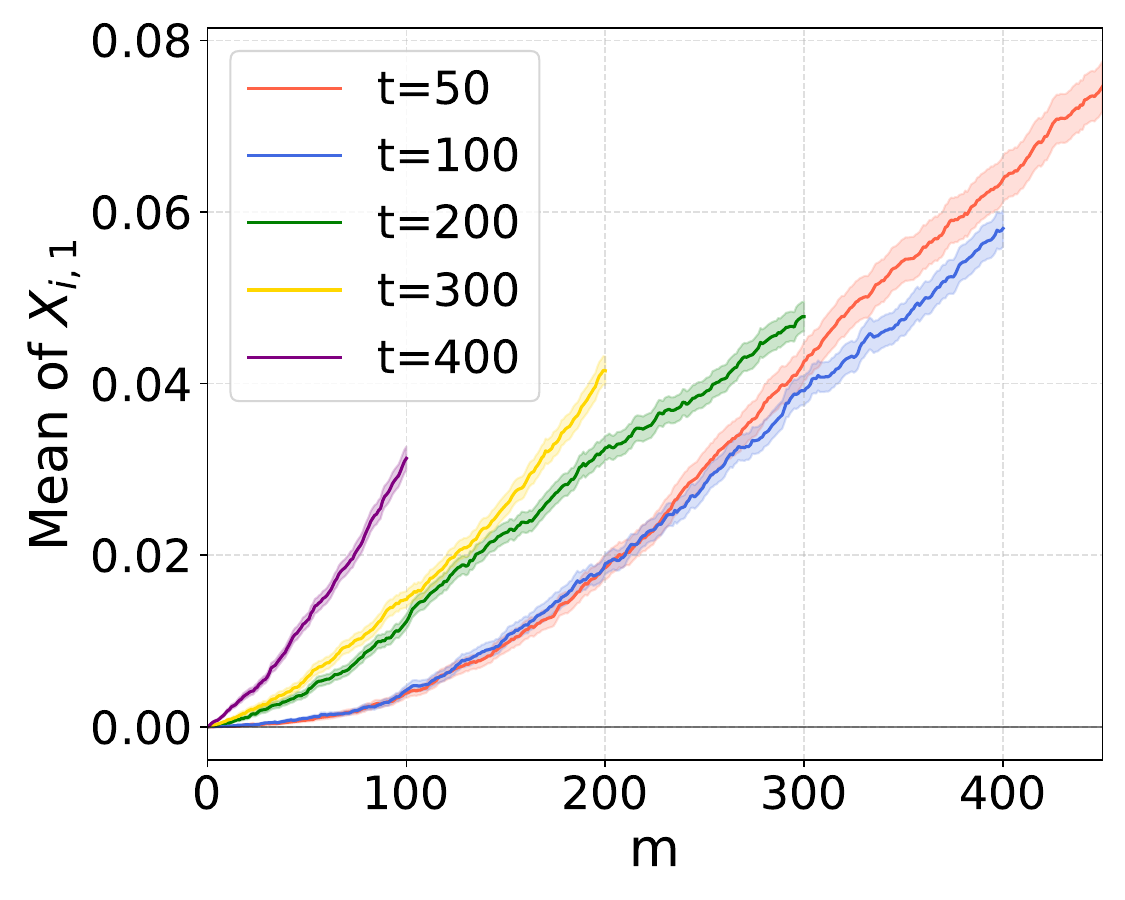}
        \caption{Arm 1}
    \end{subfigure}
    \begin{subfigure}[b]{0.32\textwidth}
        \centering
        \includegraphics[width=\textwidth]{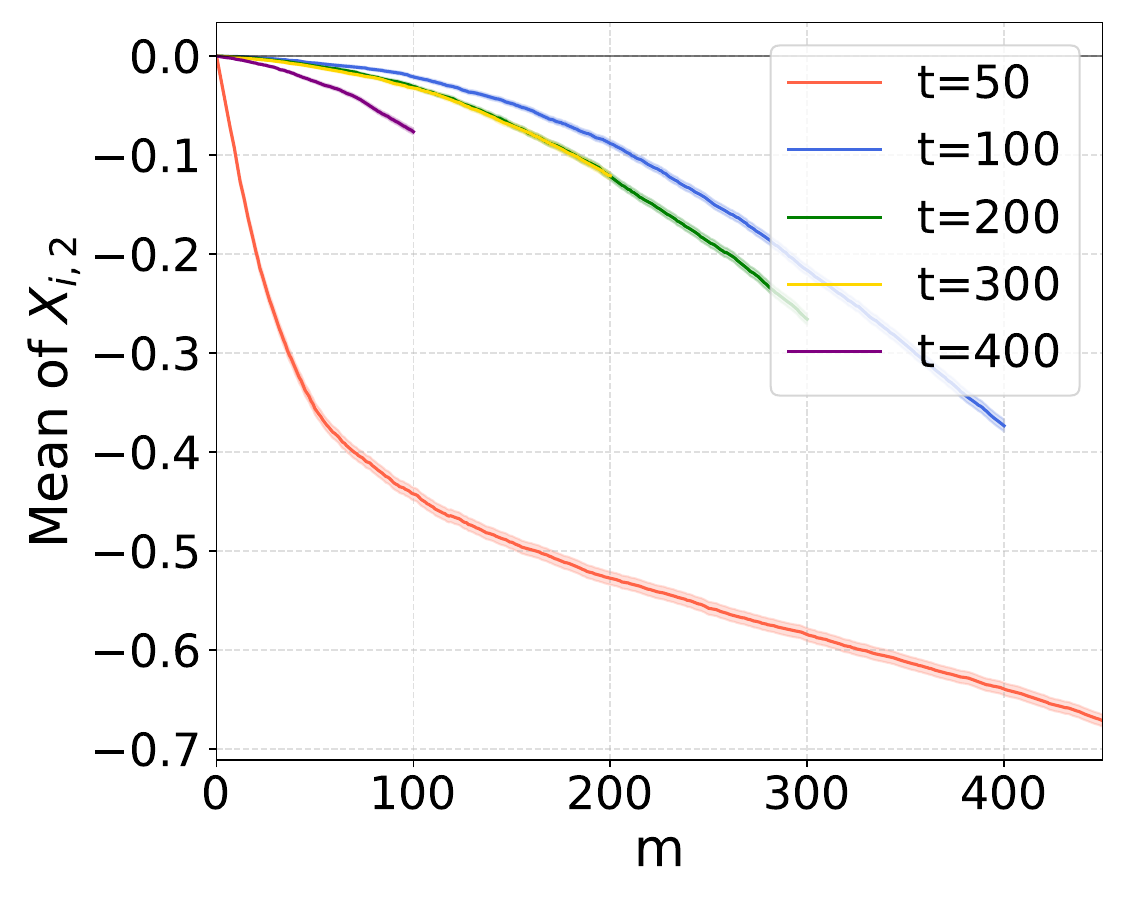}
        \caption{Arm 2}
    \end{subfigure}
    \\
    \begin{subfigure}[b]{0.32\textwidth}
        \centering
        \includegraphics[width=\textwidth]{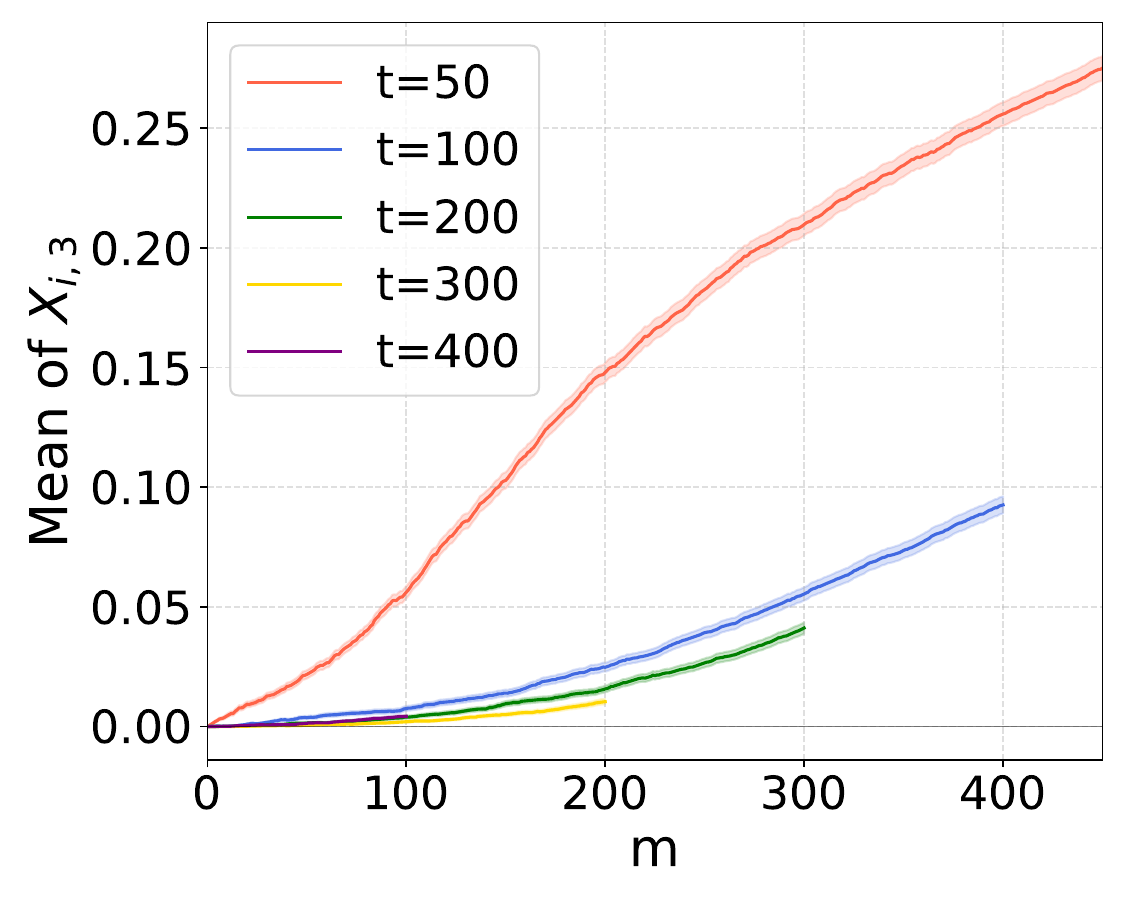}
        \caption{Arm 3}
    \end{subfigure}
    \begin{subfigure}[b]{0.32\textwidth}
        \centering
        \includegraphics[width=\textwidth]{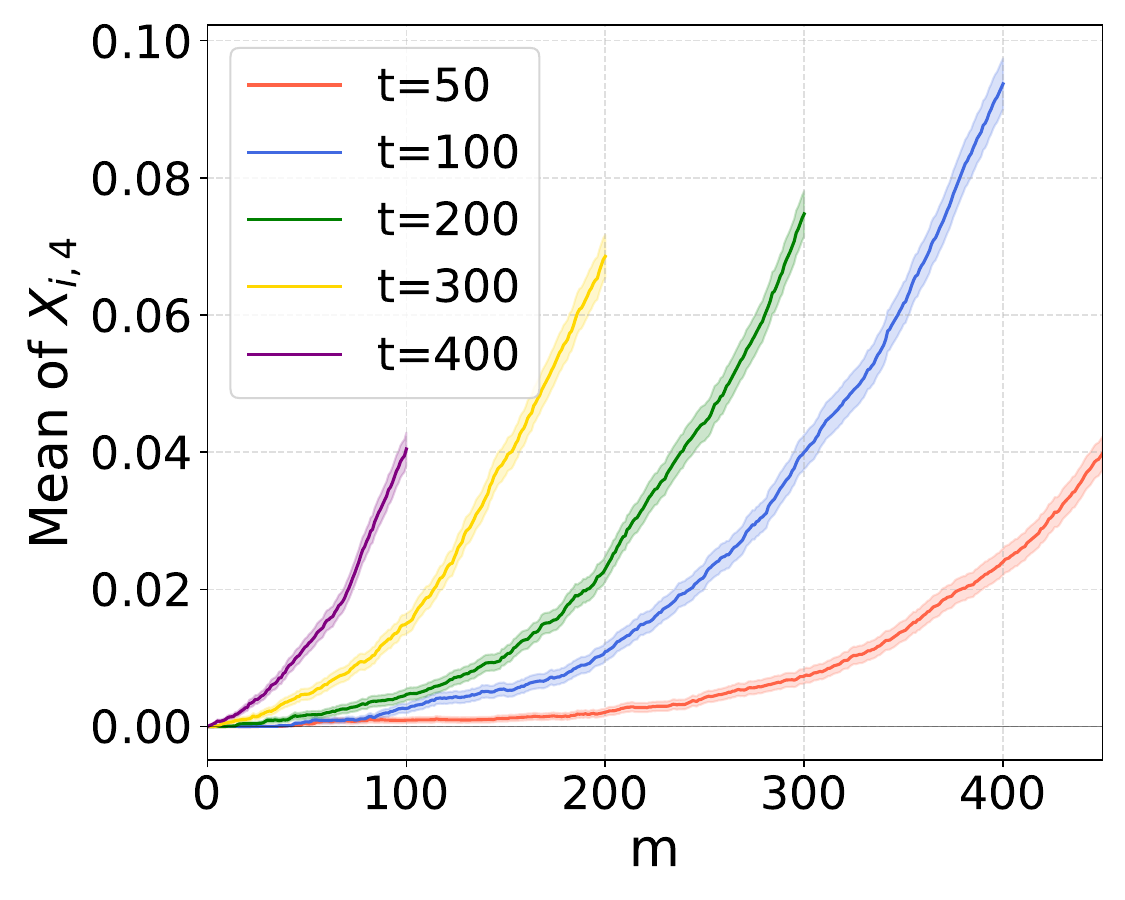}
        \caption{Arm 4}
    \end{subfigure}
    \caption{Per-arm martingale increments for permutation-invariant DPT$'$ in Gaussian bandits with online starting contexts, where $m$ and $t$ denote the lengths of the martingale continuation and the starting contexts, respectively.}
    \label{fig_martingale_increment_online_gaussian_DPT_prime_ablation}
\end{figure}

\end{document}